\setlist{
  listparindent=\parindent,
  parsep=0pt,
}
\setlist[itemize]{leftmargin=*}
\def\b{{\bf b}}
\def\k{{\bf k}}
\def\u{{\bf u}}
\def\v{{\bf v}}
\def\w{{\bf w}}
\def\x{{\bf x}}
\def\y{{\bf y}}
\def\z{{\bf z}}
\def\A{{\cal A}}
\def\B{{\cal B}}
\def\C{{\cal C}}
\def\E{{\cal E}}
\def\F{{\cal F}}
\def\H{{\cal H}}
\def\N{{\cal N}}
\def\P{{\cal P}}
\def\U{{\cal U}}
\def\V{{\cal V}}
\def\X{{\cal X}}
\def\Y{{\cal Y}}
\def\R{{\mathbb R}}
\def\l{\lambda}
\def\arg{{\rm arg}}
\def\nm{\Vert}
\renewcommand{\iff}{\mbox{$\; \; \Longleftrightarrow \; \;$}}
\renewcommand{\and}{\mbox{$\wedge$}}
\newcommand{\bc}{\begin{center}}
\newcommand{\ec}{\end{center}}
\newcommand{\be}{\begin{equation}}
\newcommand{\ee}{\end{equation}}
\newcommand{\bd}{\begin{displaymath}}
\newcommand{\ed}{\end{displaymath}}
\newcommand{\ba}{\begin{array}}
\newcommand{\ea}{\end{array}}
\newcommand{\ben}{\begin{enumerate}}
\newcommand{\een}{\end{enumerate}}
\newcommand{\bit}{\begin{itemize}}
\newcommand{\eit}{\end{itemize}}
\newcommand{\beq}{\begin{eqnarray}}
\newcommand{\eeq}{\end{eqnarray}}
\newcommand{\btab}{\begin{tabular}}
\newcommand{\etab}{\end{tabular}}
\newcommand{\bfig}{\begin{figure}}
\newcommand{\efig}{\end{figure}}
\newcommand{\btp}{\begin{tikzpicture}}
\newcommand{\etp}{\end{tikzpicture}}
\newcommand{\argmin}{\operatornamewithlimits{arg~min}}
\newcommand{\nmm}[1]{ \nm #1 \nm }
\newcommand{\nmL}[1]{ \nm #1 \nm_{\textrm{Lip}} }
\newcommand{\nmeusq}[1]{ \nm #1 \nm_2^2 }
\newcommand{\nmF}[1]{ \nm #1 \nm_F }
\newcommand{\IP}[2]{ \langle #1 , #2 \rangle }
\def\nmsl1{\nm_{{\rm SL1}}}
\def\NC{{\sf NC}}
\def\CP{{\sf C}}
\def\vecc{{\sf vec}}
\definecolor{verm}{rgb}{0.6,0.2,0.2}
\definecolor{purp}{rgb}{0.3,0.1,0.6}
\definecolor{purple}{rgb}{0.4,0.0,0.6}
\definecolor{bggreen}{rgb}{0.1,0.3,0.1}
\definecolor{dgreen}{rgb}{0.1,0.6,0.1}
\definecolor{black}{rgb}{0.0,0.0,0.0}
\definecolor{crim}{rgb}{0.3,0.1,0.1}
\definecolor{dred}{rgb}{0.5,0.1,0.1}
\definecolor{NavyBlue}{HTML}{1f4e79}
\definecolor{DeepTeal}{HTML}{006d77}
\definecolor{DarkSlateGray}{HTML}{2f4f4f}
\definecolor{SlateBlue}{HTML}{6a5acd}
\definecolor{MutedBurgundy}{HTML}{800020}
\newtheorem{corollary}{Corollary}{\bf}{\it}
\newtheorem{definition}{Definition}{\bf}{\it}
{\bf}{\rm}
\newtheorem{lemma}{Lemma}{\bf}{\it}	
\newtheorem{theorem}{Theorem}{\bf}{\it}
{\bf}{\it}
\newtheorem{proposition}{Proposition}{\bf}{\it}
{\bf}{\it}
{\bf}{\rm}
\newtheorem{assumption}{Assumption}{\bf}{\it}
\newlist{todolist}{itemize}{2}
\setlist[todolist]{label=$\square$}
\begin{document}

\runningtitle{A Convex Relaxation Approach to Generalization Analysis}

\runningauthor{Tadipatri, Haeffele, Agterberg, Vidal}

\twocolumn[

\aistatstitle{A Convex Relaxation Approach to Generalization Analysis for Parallel Positively Homogeneous Networks}

\aistatsauthor{ Uday Kiran Reddy Tadipatri \And Benjamin D. Haeffele}

\aistatsaddress{ University of Pennsylvania \And 
University of Pennsylvania}

\aistatsauthor{ Joshua Agterberg \And  Ren\'e Vidal}

\aistatsaddress{ University of Illinois Urbana-Champaign
\And University of Pennsylvania} 
]

\begin{abstract}
We propose a general framework for deriving generalization bounds for parallel positively 
homogeneous neural networks--a class of neural networks whose input-output map decomposes as 
the sum of positively homogeneous maps. Examples of such networks include matrix 
factorization and sensing, single-layer multi-head attention mechanisms, tensor 
factorization, deep linear and ReLU networks, and more. Our general framework is based on 
linking the non-convex empirical risk minimization (ERM) problem to a closely related convex 
optimization problem over prediction functions, which provides a global, achievable lower-bound
to the ERM problem. We exploit this convex lower-bound to perform generalization 
analysis in the convex space while controlling the discrepancy between the convex model and 
its non-convex counterpart. We apply our general framework to a wide variety of models 
ranging from low-rank matrix sensing, to structured matrix sensing, two-layer linear 
networks, two-layer ReLU networks, and single-layer multi-head attention mechanisms, 
achieving generalization bounds with a sample complexity that scales almost linearly with the 
network width. 
\end{abstract}

\section{INTRODUCTION}
Despite significant recent advances in the analysis of deep neural networks (DNNs), key gaps persist in establishing guaranteed performance of such models--particularly regarding theoretical guarantees on unseen data. This lack of performance guarantees is especially concerning for high-stakes applications such as autonomous vehicles, healthcare, or other high-consequence decision-making systems. To ensure the safe and reliable deployment of deep learning models, it is essential that generalization guarantees be established under reasonable data-generating mechanisms.

\textbf{Related work.} 
There is a broad literature on generalization theory. Classical approaches can be categorized along two separate (but related) lines:(i) data-dependent versus data-independent bounds, and (ii) uniform versus non-uniform concentration. Informally, \emph{data-dependent} bounds take into account explicit data-generating assumptions, whereas data-independent bounds hold \emph{regardless} of the underlying data distribution. Similarly, \emph{uniform} concentration guarantees focus on obtaining concentration inequalities \emph{simultaneously} for all functions in some function class (known as the \emph{hypothesis space}). In contrast, non-uniform concentration inequalities focus on particular functions estimated from the data. Classical approaches marry these two separate types of analyses by introducing measures such as the VC-dimension \citep{vapnik-chervonenkis-tpa71} or the Rademacher Complexity \citep{bartlett-medelson-jmlr01}. However,~these classical measures are often difficult to compute and overly pessimistic, especially when applied to DNNs \citep{zhang-et-al-acm21}. Consequently, many classical approaches may fail in the modern, more complex DNN setting.

Modern generalization frameworks for DNNs acknowledge that data often comes from structured distributions (e.g., with an intrinsic dimensionality significantly below that of the ambient space) and that optimization algorithms like 
Stochastic Gradient Descent (SGD) explores only a small portion of the hypothesis space
\citep{neyshabur-et-al-nips17}. As a result, the effective hypothesis space is much smaller
than what classical bounds account for based on the expressivity of the model alone.
Consequently, modern bounds focus on data-dependent, non-uniform approaches. 
For instance, \textit{margin bounds} 
\citep{neyshabur-et-al-iclr18, golowich-et-al-colt18, barron-klusowski-arxiv19}
provide specific generalization error bounds for DNNs trained to minimize max-margin type loss functions
for classification tasks.
Another line of research \citep{dziugaite-roy-arxiv17, arora-et-alicml18, banerjee-et-al-arxiv20}
exploits the sensitivity of the non-convex landscapes around learned weights;
however, this approach requires the estimation of hard quantities like expected sharpness
and KL divergence and questions remain regarding the extent to which quantities such as sharpness explain network generalization \citep{wen2023sharpness,andriushchenko2023modern}.

Recent work has observed that optimization methods such as SGD, even without explicit regularization,
tend to yield solutions that generalize well, a notion known as  \textit{implicit bias} \citep{gunasekar2017implicit, gunasekar2018characterizing, gunasekar2018implicit, soudry-et-al-jmlr18, li-et-al-iclr21, haochen2021shape, vardi-acm23}. This stands in contrast to classical theory, which suggests that explicit regularization is necessary to avoid overfitting. For example, DNNs have 
been shown to converge toward maximum-margin solutions in classification tasks \citep{soudry-et-al-jmlr18}, 
while solutions in regression tasks often exhibit low-rank structures \citep{li-et-al-iclr21} that generalize well. 
Although these analyses provide valuable insights, they are generally limited to specific 
objectives and types of neural network architectures.

A key challenge in understanding the generalization properties of DNNs is their non-convex landscape. Indeed, convex landscapes are better understood, and numerous generalization bounds have already been derived \citep{shalev-et-al-colt09, lugosi-neu-colt22}. We argue that bridging the gap between non-convex and convex landscapes could provide a pathway to understanding generalization better. Our key contribution is to propose a new generalization analysis framework for DNNs based on linking their non-convex landscape to a convex one. Our framework builds upon \cite{haeffele2017global} and \cite{vidal-et-al-madl22}, who connected certain non-convex optimization problems to closely related convex ones. However, their work focuses on characterizing the optimization properties of such problems and does not consider generalization.

\textbf{Paper contributions.}
In this work, we use the idea of analyzing non-convex problems via a closely related convex problem to derive generalization bounds for a broad family of learning models, which take the form of sums of (\textit{slightly generalized}) positively homogeneous functions whose parameters are regularized by sums of positively homogeneous functions of the same degree. This allows for a reinterpretation of the (empirical and expected) non-convex optimization problems
as closely related to carefully constructed convex problems. We then apply concentration of measure techniques to the
convexified version under reasonable data distributions and show that this also implies the concentration of the non-convex problem of interest. More specifically, we extend the finite-dimensional framework of
\cite{haeffele2017global} and \cite{vidal-et-al-madl22} to its infinite-dimensional counterpart, which allows us to derive generalization guarantees from a novel 
viewpoint by exploiting the connection between our problem of interest and a closely related convex problem. We note that other prior work \citep{bach-jmlr17} has also considered similar relationships between convex and non-convex problems for establishing generalization results. However, the generalization guarantees in \cite{bach-jmlr17} largely rely on Rademacher complexities, which results in a sample complexity that grows quadratically with the network width. In contrast, we exploit the relationship between the convex and non-convex problems more directly, which allows us to derive bounds with an improved sample complexity.

To be more precise, our main results can be stated informally as follows. Let $N$ be the number of data points, $R$ be the number of positively homogeneous functions (or the width of the network) whose predictions are summed together to form the output, and ${\sf dim}(\mathcal{W})$ be the dimension of the parameters in one of the functions. When $N \gtrsim \tilde{\mathcal{O}}(R \times {\sf dim}(\mathcal{W}))$, we show that the generalization error can be bounded with high probability by two terms: the first term, dubbed the optimization error, which vanishes at a globally optimal solution, and the second term, dubbed the statistical error,  which depends on the ratio $\frac{R \times {\sf dim}(\mathcal{W})}{N}$, and hence vanishes only asymptotically.

Our results apply to a wide range of signal processing and DNN problems. The derived bounds 
achieve near state-of-the-art sample complexity for non-convex low-rank matrix sensing that match
the lower bound provided by \cite{candes-yaniv-arxiv10} for convex low-rank matrix sensing.
By applying  these general results to 
two-layer linear (and ReLU) neural networks with weight decay and multi-head attention models, a key component of 
transformer architecture \citep{vaswani-et-al-arxiv23}, we obtain novel 
generalization bounds with ``tight''\footnote{
Our notion of ``tight'' bounds 
corresponds to cases where the sample complexity scales linearly or nearly linearly (up to 
logarithmic factors) with the number of model parameters.}
sample complexities for both problems.
\newpage
\textbf{Outline.}
The remainder of this paper is organized as follows. In \textsection\ref{sec:formulation}, we formulate the learning problem and introduce our approach. In \textsection\ref{sec:convex_lower_bound}, we explore how learning problems can be bounded via convex surrogates. In \textsection\ref{sec:main_results}, we present the statistical bounds
through the master theorem that provides generalization error bounds. In \textsection\ref{sec:applications}, we apply the master theorem to
various problems in signal processing and DNNs and compare our derived sample complexities with those in the existing literature. The supplementary material contains detailed proofs of the mathematical statements, validations of our framework’s assumptions through simulations, and an additional survey of related works.

\textbf{Notation.} 
For two random variables $(Z,W)$, drawn from a joint distribution $q$, we define $\langle Z, W \rangle_q = \mathbb{E}[\IP{Z}{W}]$, where the expectation is with respect to the joint probability distribution $q$. For a generic function $f: \mathbb{R}^{d} \to \mathbb{R}$, we denote $\|f\|_{Lip}$ as its Lipschitz constant; i.e., the smallest number $L_f$ such that $|f(x) - f(y)| \leq L_f \| x- y\|$. A function $f$ is said to be integrable with respect to measure $q$, i.e., $f \in L^2(q)$, if $\left(\int_{x \in \X}\nmm{f(x)}^2dq(x)\right)^{1/2} < \infty$.
The inequality $f(x) \gtrsim g(x)$,
means that there exists a constant $c > 0$ such that $f(x) \geq cg(x)$. We define the ReLU function as $[x]_+ = \max(x,0).$
For the 
matrix $U$ the variable $\u_j$
corresponds to $j$th column
of $U$.


\section{PROBLEM FORMULATION}
\label{sec:formulation}
Given a realization of a pair $(X,Y) \in \X \times \Y$ from a distribution $\mu$ with $\X \subset \mathbb{R}^{n_X}$, $\Y \subset \mathbb{R}^{n_Y}$, we consider a (non)parametric regression problem of the form $Y = g(X,\epsilon)$, where $\epsilon$ is a source of additional noise (typically independent from $X$).  
We are interested in approximating $g$ by the sum of $r$ prediction functions, $\phi : \mathcal{W} \times \mathbb{R}^{n_X} \to \mathbb{R}^{n_Y}$, parameterized by $W \in \mathcal{W}$, i.e.,
\be
\hat{Y} = \sum_{j=1}^{r}\phi(W_j)(X) = \Phi_r(\{W_j\})(X).
\ee
We will additionally refer to $\phi(W)(X)$ as
the \textit{factor map/sub-network} depending on the specific problem.

Our goal is to learn the parameters $\{W_j\}$\footnote{We occasionally notate $\{W_j\}_{i=1}^r$ as $\{W_j\}$ 
for brevity of notation, but the dependence on $r$ is always implied.} that minimize the regularized population risk defined as
\be
\begin{split}
\NC_{\mu}(\{W_j\}) &:= \underbrace{
\mathbb{E}_{(X, Y)}\left[\ell(Y, \Phi_r(\{W_j\}_{j=1}^{r})(X))\right]}_{ =: \ell(g, \Phi_r(\{W_j\}_{j=1}^{r}))_{\mu}}\\
&\hspace{20pt} +  \l \Theta_r(\{W_j\}_{j=1}^{r}),
\end{split}
\ee
where $Y = g(X,\epsilon)$ is the target random variable, $\ell(\cdot, \cdot)$ is the \textit{loss function},
typically convex in the second argument, and $\Theta_r(\{W_j\}_{j=1}^{r})$ is an \textit{explicit regularization} function
which helps find structured parameters, 
such as minimum norm or sparse solutions. 
Specifically, the regularization term $\Theta_r(\{W_j\}_{j=1}^{r})$ is defined as
\be
\Theta_r(\{W_j\}_{j=1}^{r}) := \sum_{j=1}^{r}\theta(W_j),
\ee
where $\theta : \mathcal{W} \to \mathbb{R}^+$
is a regularization term for each factor map, and $\l \in \R^{+}$ is a regularization hyperparameter that controls
the trade-off between loss reduction and
inducing structure. 

Notice that we will minimize the population risk $\NC_{\mu}(\{W_j\})$ over both $r$ and $\{W_j\}_{j=1}^r$.  More explicitly, we will allow for problems where, in addition to optimizing over the model parameters, one also optimizes over the number of prediction functions $r$ (e.g., the network width) during training. However, our results will also apply to a value of $r$ that is fixed \textit{a priori}.

Estimating $\NC_{\mu}(\{{W}_j\})$ directly is challenging due to 
(i) the lack of access to the distribution $\mu$,
(ii) the fact that $( \{W_j\})$ (and potentially the number $r$) are random variables dependent on the 
training data $\{(X_i, Y_i)\}$,
and (iii) the non-linearity and potential non-convexity of $\NC_{\mu}$.  We address the first point (as is standard) via empirical minimization of $\NC_{\mu}(\cdot)$ using the \emph{empirical risk} (or \emph{training error}) defined via:
\be
\begin{split}
\NC_{\mu_N}(\{W_j\}) &:= \underbrace{\frac{1}{N}\sum_{i=1}^{N}\ell(Y_i, \Phi_r(\{W_j\}_{j=1}^{r})(X_i))}_{ =: \ell(g, \Phi_r(\{W_j\}_{j=1}^{r}))_{\mu_N}} \\
& \hspace{50pt}+ \l \Theta_r(\{W_j\}_{j=1}^{r}),
\end{split}
\ee
where $\mu_N$ denotes the empirical distribution of the samples $\{X_i,Y_i\}_{i=1}^{N}$.  
We define empirical risk minimization (ERM) via the $\arg\min$ of $\NC_{\mu_N}(\{W_j\})$.
For concreteness, recall we also allow for the minimization over $r$ (provided $r$ is bounded above by some quantity independent of the data), though our results hold for any fixed~$r$. 

Note that if we minimize the objective $\NC_{\mu_N}(\cdot)$, there is no guarantee that we will also minimize $\NC_{\mu}(\cdot)$.
This discrepancy is quantified by the \textit{Generalization Error:}
\be\label{eq:ge}
\begin{split}
&\left|\NC_{\mu}(\{W_j\})
- \NC_{\mu_N}(\{W_j\})\right| \\ 
 &= \left| \ell(g,\Phi_r(\{W_j\}_{j=1}^{r}))_{\mu} - \ell(g,\Phi_r(\{W_j\}_{j=1}^{r}))_{\mu_N} \right|.
\end{split}
\ee
Note that the regularization terms containing $\Theta_r$ are the same between the two objectives, giving the typical difference between the empirical and population losses.

In this work, we compute an upper bound for the generalization error at any stationary point of the empirical problem, $\NC_{\mu_N}(\{W_j\})$, under certain technical assumptions. To build our main results, we relate these non-convex objectives $\NC_{\mu}(\{W_j\})$ and $\NC_{\mu_N}(\{W_j\})$ to closely related \textit{convex} objectives in the prediction space, respectively $\CP_{\mu}(f_{\mu})$ and $\CP_{\mu_N}(f_{\mu_N})$, whose definitions will be introduced in \textsection\ref{sec:convex_lower_bound}. This allows us to decompose the generalization error in \eqref{eq:ge} as:
\be\label{eq:decom}
\begin{split}
&\NC_{\mu}(\{W_j\}) - \NC_{\mu_N}(\{W_j\})\hspace{-2pt}=\hspace{-2pt}
\underbrace{\Big[\NC_{\mu}(\{W_j\}) - \CP_{\mu}(f_{\mu})\Big]}_{\text{Population Gap}} \\
&\hspace{-5pt}- \underbrace{\Big[\NC_{\mu_N}(\{W_j\}) - \CP_{\mu_N}(f_{\mu_N})\Big]}_{\text{Empirical Gap}} 
\hspace{-2pt}+\hspace{-2pt}\underbrace{\Big[\CP_{\mu}(f_{\mu}) - \CP_{\mu_N}(f_{\mu_N})\Big]}_{\text{Convex Generalization Gap}}
\hspace{-4pt}.
\end{split}
\ee
Our Theorem
\ref{thm:thm1} bounds the \textit{Empirical Gap} and the \textit{Population Gap}.
With these bounds, we then apply concentration techniques to bound the \textit{Convex Generalization Gap}
and obtain our main Theorem \ref{thm:master} which gives bounds for the generalization error in \eqref{eq:ge}.

\section{CONVEX BOUNDS FOR LEARNING}\label{sec:convex_lower_bound}
In this section, we present bounds for the \textit{Empirical Gap} and \textit{Population Gap} 
through Theorem \ref{thm:thm1}, linking our learning problem of interest to functions that are 
convex in the space of prediction functions.
To begin, we state 
several requirements for our framework.

\begin{assumption}[Regularization]\label{ass:a1}
The regularization function $\theta$ is \textit{positive semidefinite}; i.e,
$\theta(0) = 0$ and $\theta(W) \geq 0, \forall W \in \mathcal{W}$.
\end{assumption}
This is a mild assumption; it only ensures we do not impose negative regularization on the parameters $\{W_j\}$.
Our next assumption is our main functional assumption on $\phi$ and $\theta$.
\begin{assumption}[Balanced Homogeneity of $\phi$ and $\theta$]\label{ass:a2}
The factor map $\phi$ and the regularization map $\theta$ can be scaled equally by non-negative scaling of (a subset of) the parameters. 
Formally, we assume that there exists sub-parameter spaces
$(\mathcal{K}, \H)$ from the parameter space $\mathcal{W}$
such that $\mathcal{K} \times \H = \mathcal{W}$, $\forall ({\mathbf{k}}, {\mathbf{h}}) \in (\mathcal{K}, \H)$, and $\beta \geq 0$ we have 
$\phi((\beta {\mathbf{k}}, {\mathbf{h}})) = \beta^{p}\phi(({\mathbf{k}}, {\mathbf{h}}))$ and 
$\theta((\beta {\mathbf{k}}, {\mathbf{h}})) = \beta^{p}\theta(({\mathbf{k}}, {\mathbf{h}}))$ for some $p > 0$.
Further, we assume that for bounded input $X$ the set $\{ \phi(W)(X) : \forall W \in \mathcal{W}\text{ s.t. }\theta(W) \leq 1 \}$ is bounded.
\end{assumption}

This is a slight generalization of positive homogeneity, which only requires positive homogeneity in a subset of parameters, provided the image of the factor map for parameters with $\theta(W) \leq 1$ is bounded
\footnote{For example, $\phi(v)(X)$ can take the form 
$v^{a}g(X)$ for $a \geq 0$, and $\theta(v) = |v|^{a}$, where $g: \R^{n_x} \to \R$ is some fixed function. More 
generally, we can choose $\phi(v_1, v_2) = v_1^{a}g_{v_2}(X)$ and $\theta(v_1, v_2) = |v_1|^{a}+\delta_{\V_2}
(v_2)$, where $g_{v_2}: \R^{n_x} \to \R$ is a function parameterized by $v_2 \in \V_2$ and has a bounded range 
for bounded inputs.}.

Our next assumption concerns the loss function $\ell$.

\begin{assumption}[Convex Loss]\label{ass:a4}
The loss $\ell(Y, \hat{Y})$ is second-order differentiable (written $\ell \in \C^2)$,
and $L$-smooth w.r.t. $\hat{Y}$, i.e, for any $Y, \hat{Y} \in \R^{n_Y}$
\be
0 \preceq\nabla^2_{\hat{Y}}\ell(Y, \hat{Y}) \preceq L I_{n_Y}.
\ee
Additionally,
the gradient of the loss is bi-Lipschitz smooth; that is, for all 
$Y_1, Y_2, \hat{Y}_1, \hat{Y}_2 \in \R^{n_Y}$
\be
\begin{split}
\hspace{-10pt}\nmm{\nabla_{\hat{Y}}\ell(Y_2, \hat{Y}_2) - \nabla_{\hat{Y}}\ell(Y_1, \hat{Y}_1)}
\leq L \Big[&\nmm{Y_2 - Y_1}_{2} \\ 
&\hspace{-10pt}+\nmm{\hat{Y}_2 - \hat{Y}_1}_{2}\Big],
\end{split}
\ee
and the loss is constant if both the arguments are the same, i.e., for all $Y_1, Y_2 \in \R^{n_Y}$,
$\ell(Y_1, Y_1) = \ell(Y_2, Y_2)$.
\end{assumption}

This ensures that the loss 
function is convex and smooth. Furthermore, if the
loss is $\alpha$-strongly convex, i.e, 
$0 \prec \alpha I_{n_Y} \preceq \nabla^2_{\hat{Y}}\ell(Y, \hat{Y}) \preceq L I_{n_Y}$ we have derived tighter results (see the Appendix).

We define the \textit{induced regularization} function as
\be
\begin{split}
\Omega(f) &:= \inf_{r, \{W_j\}}\Theta_r(\{W_j\})\\
\text{s.t. }f(X) &= \Phi_r(\{W_j\})(X); \quad \forall X \in \X,
\end{split}
\ee
with the function taking value infinity if $f(X)$ cannot be realized for some choice of the parameters $(r, \{W_j\}_{j=1}^r)$.  Using similar arguments as in \cite{haeffele-vidal-arxiv15} it can be shown that under assumptions \ref{ass:a1}--\ref{ass:a2}, the function $\Omega(f)$ is convex in the space of prediction functions; see Proposition
\ref{prop:ir_cvx} in
the Appendix. Moreover, by Assumption \ref{ass:a4}, the loss function is convex with respect to the model predictions, which allows us to define the following two \textit{convex} optimization problems over the space of prediction functions: 
\be
\CP_{\mu}(f) :=  {\mathbb{E}_{(X, Y)}[\ell(Y, f(X))]} + \l \Omega(f),
\ee
where $f \in L^2(\mu)$, and
\be
\CP_{\mu_N}(f) :=  {\frac{1}{N}\sum_{i=1}^{N}\ell(Y_i, f(X_i))} + \l \Omega(f),
\ee
where $f \in L^2(\mu_N)$.

From the definition of $\Omega(f)$ we have that $\CP_{\mu}$ and $\CP_{\mu_N}$ are always lower bounds of $\NC_{\mu}$ and $\NC_{\mu_N}$, respectively, for any $(f, \{W_j\})$ such that $f(X) = \Phi_r(\{W_j\})(X)$, which becomes a tight bound for any parametrization ($\{W_j\}$) of $f$ which achieves the infimum.  As a result, we can relate solutions of the non-convex problems to the corresponding convex problem via tools from convex analysis, as we establish in the following result.

\begin{theorem}[Convex Bounds for Learning]\label{thm:thm1}
Under assumptions \ref{ass:a1}--\ref{ass:a4},
let $f^*_{\mu_N}$ (or $f^*_{\mu}$) be the 
global minimizer for $\CP_{\mu_N}(\cdot)$ (or $\CP_{\mu}(\cdot))$.
For any
stationary points $(r, \{W_j\})$ 
of the function $\NC_{\mu_{N}}(\cdot)$
and any $f \in L^2(\mu) \cap L^2(\mu_N)$ the
following are true:
\begin{enumerate}
\item[1.] Empirical optimality gap:
\end{enumerate}
\be\label{eq:ncvx_emp_gap}
\begin{split}
\CP_{\mu_{N}}(f^*_{\mu_{N}}) &\leq \NC_{\mu_{N}}(\{W_j\}) \leq 
\CP_{\mu_{N}}(f) \\ 
& \hspace{-40pt} + \l \Omega(f)\left[\Omega_{\mu_{N}}^{\circ}\left(-\frac{1}{\l}\nabla_{\hat{Y}}\ell\left(g, \Phi_r(\{W_j\})\right)\right)-1\right],\\
\end{split}
\ee
\begin{enumerate}
\item[2.] Population optimality gap:
\end{enumerate}
\be\label{eq:ncvx_pop_gap}
\begin{split}
\CP_{\mu}(f^*_{\mu}) &\leq \NC_{\mu}(\{W_j\}) \leq 
\CP_{\mu}(f) \\
&\hspace{-20pt}+ \l \Omega(f)\left[\Omega_{\mu}^{\circ}\left(-\frac{1}{\l}\nabla_{\hat{Y}}\ell\left(g, \Phi_r(\{W_j\})\right)\right)-1\right]\\
&+ \Big[\IP{\nabla_{\hat{Y}}\ell\left(g, \Phi_r(\{W_j\})\right)}{\Phi_r(\{W_j\})}_{\mu} \\
&- \IP{\nabla_{\hat{Y}}\ell\left(g, \Phi_r(\{W_j\})\right)}{\Phi_r(\{W_j\})}_{\mu_N}\Big],
\end{split}
\ee
where $\Omega_{q}^{\circ}(\cdot)$ is
referred to as polar in the measure $q$ defined as
\be\label{eq:polar}
\Omega_{q}^{\circ}(g) := \sup_{\theta(W) \leq 1}\IP{g}{\phi(W)}_{q}.
\ee
\end{theorem}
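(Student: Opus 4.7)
The plan is to prove the four inequalities in the theorem separately: two easy lower bounds from the definition of $\Omega$, and two harder upper bounds that combine convexity of $\ell$, the stationarity condition, and the definition of the polar.

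\textbf{Step 1 (lower bounds).} For any parametrization $(r,\{W_j\})$, the function $f(X) = \Phi_r(\{W_j\})(X)$ is a feasible point for the infimal problem defining $\Omega$, so $\Omega(\Phi_r(\{W_j\})) \leq \Theta_r(\{W_j\})$. Substituting this into $\CP_{\mu_N}(\Phi_r(\{W_j\}))$ yields $\CP_{\mu_N}(\Phi_r(\{W_j\})) \leq \NC_{\mu_N}(\{W_j\})$, and since $f^*_{\mu_N}$ is the global minimizer of $\CP_{\mu_N}$, we get $\CP_{\mu_N}(f^*_{\mu_N}) \leq \NC_{\mu_N}(\{W_j\})$. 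The identical argument with $\mu$ in place of $\mu_N$ gives the population lower bound.

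\textbf{Step 2 (Euler-type stationarity identity).} For the upper bounds, first I would exploit Assumption~\ref{ass:a2}. Consider scaling only the homogeneous sub-parameter $\mathbf{k}_j$ of $W_j = (\mathbf{k}_j, \mathbf{h}_j)$ by $\beta \geq 0$. Because $\phi$ and $\theta$ are both $p$-homogeneous in $\mathbf{k}_j$, stationarity of $\NC_{\mu_N}$ with respect to $\beta$ at $\beta = 1$ gives
\begin{equation}
p\,\langle \nabla_{\hat Y}\ell(g,\Phi_r(\{W_j\})),\, \phi(W_j)\rangle_{\mu_N} + p\lambda\,\theta(W_j) = 0.
\end{equation}
Summing over $j$ yields the key identity $\langle \nabla_{\hat Y}\ell(g,\Phi_r(\{W_j\})),\, \Phi_r(\{W_j\})\rangle_{\mu_N} = -\lambda \Theta_r(\{W_j\})$.

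\textbf{Step 3 (upper bounds via convexity and the polar).} By convexity of $\ell$ in its second argument, for any $f \in L^2(\mu_N)$,
\begin{equation}
\ell(g,\Phi_r)_{\mu_N} \leq \ell(g,f)_{\mu_N} + \langle \nabla_{\hat Y}\ell(g,\Phi_r),\, \Phi_r - f\rangle_{\mu_N}.
\end{equation}
Adding $\lambda \Theta_r(\{W_j\})$ and using the Step~2 identity collapses $\langle \nabla_{\hat Y}\ell,\Phi_r\rangle_{\mu_N} + \lambda \Theta_r = 0$, leaving $\NC_{\mu_N}(\{W_j\}) \leq \ell(g,f)_{\mu_N} - \langle \nabla_{\hat Y}\ell,\,f\rangle_{\mu_N}$. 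To bound the inner product term, for any parametrization $f(X) = \Phi_{r'}(\{W'_j\})(X)$, the homogeneity of $\phi$ and $\theta$ lets me rescale each $W'_j$ so that $\theta(W'_j) \leq 1$, which by the definition of the polar in \eqref{eq:polar} gives
\begin{equation}
\langle -\nabla_{\hat Y}\ell,\, f\rangle_{\mu_N} \leq \lambda \Theta_{r'}(\{W'_j\})\, \Omega_{\mu_N}^{\circ}\!\left(-\tfrac{1}{\lambda}\nabla_{\hat Y}\ell\right).
\end{equation}
Taking the infimum over all such $(r',\{W'_j\})$ replaces $\Theta_{r'}$ with $\Omega(f)$, and rearranging produces the empirical upper bound \eqref{eq:ncvx_emp_gap}.

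\textbf{Step 4 (population upper bound).} Repeat Step~3 but start from convexity with respect to $\mu$ rather than $\mu_N$. The obstacle is that stationarity is still with respect to $\mu_N$, so the cancellation from Step~2 does not occur automatically. I would handle this by adding and subtracting $\langle \nabla_{\hat Y}\ell,\Phi_r\rangle_{\mu_N}$, using the Step~2 identity to eliminate $\lambda \Theta_r$, and leaving the residual $\langle \nabla_{\hat Y}\ell,\Phi_r\rangle_\mu - \langle \nabla_{\hat Y}\ell,\Phi_r\rangle_{\mu_N}$ as the extra term in \eqref{eq:ncvx_pop_gap}. The polar bound on $\langle -\nabla_{\hat Y}\ell,f\rangle_\mu$ proceeds exactly as in Step~3 but with $\Omega_\mu^\circ$.

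\textbf{Main obstacle.} The most delicate step is Step~2: justifying that the partial-scaling derivative yields an Euler-type identity even though only a subset of parameters is homogeneous, and that we are at a legitimate stationary point of this one-dimensional restriction. The scaling must keep $\mathbf{h}_j$ fixed and vary only $\mathbf{k}_j$, which is well defined because Assumption~\ref{ass:a2} guarantees $\mathcal{W} = \mathcal{K} \times \mathcal{H}$ and the boundedness of $\{\phi(W)(X) : \theta(W) \leq 1\}$ controls the image so the polar supremum in \eqref{eq:polar} is finite. Once this identity is in hand, the rest of the proof is a clean combination of convex inequalities and the definitions of $\Omega$ and $\Omega^\circ$.
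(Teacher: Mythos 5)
Your proposal is correct and follows essentially the same route as the paper: the Euler-type stationarity identity (the paper's Proposition~\ref{prop:stat_points}), first-order convexity of $\ell$, and the polar inequality $\langle f, -\tfrac{1}{\lambda}\nabla_{\hat Y}\ell\rangle_q \leq \Omega(f)\,\Omega_q^\circ(-\tfrac{1}{\lambda}\nabla_{\hat Y}\ell)$, applied once with $q=\mu_N$ and once with $q=\mu$, with the Equilibria-Gap residual appearing in the population case exactly because the stationarity identity is available only for $\mu_N$. The one place you do something slightly more explicit than the paper is the polar bound: you derive it directly by parametrizing $f$, rescaling each factor into the unit $\theta$-ball, and summing, which is self-contained, whereas the paper invokes the abstract gauge/polar duality (Proposition~\ref{prop:polar_prop}) without restating the argument.
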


Readers are referred to Appendix \ref{sec:apdx_thm1} for the proof with extensions to strongly convex functions.

The population optimality gap is obtained by an infinite-dimensional extension of 
Proposition 3 in \cite{haeffele-vidal-tpami20}.
The additional term in the population optimality gap \eqref{eq:ncvx_pop_gap} arises from 
the fact that the stationary points of ERM, $\NC_{\mu_N}(\cdot)$, are not necessarily the same 
as those of $\NC_{\mu}(\cdot)$.

From equation \eqref{eq:ge} the goal 
is to bound the difference between the original 
non-convex formulations $\NC_{\mu_N}$ and $\NC_{\mu}$.
By Theorem \ref{thm:thm1}, we established 
the optimality gaps for both empirical and population non-convex optimization 
problems, and by computing the difference between equation \eqref{eq:ncvx_emp_gap} and \eqref{eq:ncvx_pop_gap}, with algebraic manipulation we arrive at the following quantities:
\vspace{-5pt}
\begin{itemize}
    \item \textit{Convex Generalization Gap}: The convex generalization gap is defined as
    $\left|C_{\mu}(f) - C_{\mu_N}(f)\right|$.
    
    \item \textit{Polar Gap}: By virtue of the fact that the loss functions each contain the respective polars, we define the Polar Gap as the quantity $| \Omega_{\mu}^{\circ}(\nabla_{\hat Y}\ell(g,f)) - \Omega_{\mu_N}^{\circ}(\nabla_{\hat Y}\ell(g,f))|$.

    \item \textit{Equilibria Gap}: We define the Equilibria Gap via
    $\bigg| \langle \nabla_{\hat Y} \ell(g,f), f \rangle_{\mu_N} - \langle \nabla_{\hat Y} \ell(g,f), f \rangle_{\mu} \bigg|$.

    \item \textit{Norm Gap}: The final remaining quantity is defined via
    $\bigg| \|f_{\mu}^* - f \|_{\mu_N}^2 - \| f_{\mu}^* - f \|_{\mu}^2 \bigg|$. This quantity applies only to 
    strongly convex functions (see the Appendix).
\end{itemize}
\vspace{-5pt}
A major technical contribution of this paper is to demonstrate that each of these quantities uniformly
concentrates at a rate equal to or smaller than the ``statistical error'' under certain realistic assumptions that are discussed
in \textsection\ref{sec:main_results}.
The only remaining term from Theorem \ref{thm:thm1} is the quantity $\Omega(f_{\mu})[\Omega_{\mu_N}^{\circ}(\cdot) - 1]$, which bounds the sub-optimality (in objective value) of the current stationary point for the empirical optimization problem.
This term approaches zero at the global optimum of $\NC_{\mu_N}$ (see \textsection\ref{sec:apdx_thm1} in the Appendix).

\section{STATISTICAL BOUNDS}\label{sec:main_results}

In Theorem \ref{thm:thm1}, we established bounds for the \textit{Empirical Gap} and \textit{Population Gap}. Building on these
results, we identified key quantities such as the \textit{Convex Generalization Gap}, \textit{Polar Gap}, \textit{Equilibrium
Gap}, and \textit{Norm Gap}, all of which can be controlled under certain general conditions 
(Assumptions \ref{ass:a1}--\ref{ass:a7}, along with Assumption \ref{ass:a9} from the Appendix) that we state momentarily. In this section, we present Theorem \ref{thm:master}, which consolidates these bounds to derive our main generalization error bound. 
For clarity and to minimize technical complexity, we present Theorem \ref{thm:master} with Assumption \ref{ass:a8}, which
a stronger version of Assumption \ref{ass:a9}.

To begin, we state our additional assumptions.
We assume that $\phi$ is Lipschitz.  
\begin{assumption}[Lipschitz Continuity of $\phi$]\label{ass:a3}
Let $\B$ be some compact subset of $\mathcal{W}$, and denote
\be
\F_{\theta} := \left\{W: \theta(W) \leq 1\right\} \cap \B \subseteq \mathbb{B}(r_{\theta}),
\ee
where $\mathbb{B}(r_{\theta})$ is the $L_2$ ball with radius $r_{\theta}$.\footnote{The radius $r_{\theta}$
can depend on the dimension of $W$. For instance, suppose $W \in \R^{n}$ and  $\theta(W) = \nmm{W}_1$, as $\nmm{W}_1 \leq \sqrt{n}\nmm{W}_2$, then $r_{\theta}$ must be at least $\sqrt{n}$. On another instance, suppose $W = (\u \in \R^{m}, \v \in \R^{n})$, and $\theta(W) = \nmm{\u}_2\nmm{\v}_2$; this requires $r_{\theta}$ to be at least $1/2$.} The factor map $\phi$ is Lipschitz continuous with respective to inputs for any choice of
parameters $W \in \F_{\theta}$, i.e,
\begin{align}
L_{\phi} := \sup_{W \in \F_{\theta}}\nmL{\phi(W)} < \infty.
\end{align}
\end{assumption}

Our next assumption imposes tail conditions on the random variables $(X,Y)$.
\begin{assumption}[Data Model]\label{ass:a5}
The input data $X \in \R^{n_X}$ is drawn from the $1$-Lipschitz concentrated sub-Gaussian distribution with a proxy variance $\sigma_{X}^2/n_X$; i.e., for any $1$-Lipschitz continuous function, $h: \R^{n_X} \to \R$ there exists $c> 0$ such that 
\be
P\left(|h(X) - \mathbb{E}_X[h(X)]| \geq \epsilon\right) \leq c\exp\left(-\frac{n_X\epsilon^2}{2\sigma_X^2}\right).
\ee

The target function $Y$ takes the form
$Y = g(X, \epsilon)$,
where $g \in L^2(\mu)$ is bi-Lipschitz in $X$  and $\epsilon$; that is,
\be
\begin{split}
\nmm{g(X_2, \epsilon_2)-g(X_1, \epsilon_1)}_{2} \leq 
\nmL{g}\Big[&\nmm{X_2-X_1}_{2} \\
+ &\nmm{\epsilon_2 - \epsilon_1}_{2}\Big],
\end{split}
\ee
and $\epsilon \sim \N(0, (\sigma_{Y|X}^2/n_E)I)$ in $\R^{n_E}$.
\end{assumption}
We note that the above assumption is mild. While extending our framework to 
heavy-tailed distributions are likely possible; it would require a more intricate analysis
and may result in worse error rates and larger sample complexities.

Our next assumption concerns the possible functions learned via empirical risk minimization.
\begin{assumption}[Hypothesis class]\label{ass:a7}
Stationary points of $NC_{\mu_N}(\cdot)$ have bounded regularization and bounded width, $r \leq R$, almost surely.
The input-output map, $\Phi_r(\{W_j\})$ has Lipschitz constant at most $\gamma$,
and the parameters are bounded. Let $\B_R \subseteq \mathcal{W}^R$ be some compact set; then the hypothesis
class is defined as
\be
\hspace{-5pt}\F_{\mathcal{W}} := \Big\{\{W_j\}_{j=1}^{r}: \nmL{\Phi_r(\{W_j\})} \leq \gamma\Big\} \cap \B_R.
\ee
\end{assumption}
\vspace{-5pt}
In words, the set of maps learned through ERM are essentially Lipschitz in the parameters $\{W_j\}$, and, furthermore, the $\{W_j\}$ are bounded (almost surely).  Moreover, the assumption that $r \leq R$ ensures that at most $R$ individual functions $\{W_j\}$ are needed, which implicitly imposes a ``low-complexity'' constraint on the learned function.  Finally, note that we assume that $\gamma$ does not depend on the width of the network. 
In practice, our empirical observations show that the
Lipschitz constant does not increase with width{\color{blue},}
making it a realistic assumption. For
further details, refer to the numerical simulations in \textsection\ref{sec:apdx_ms} of Appendix.

Our general master theorem, Theorem \ref{thm:gen_master} in the Appendix, requires only
Assumptions \ref{ass:a1}--\ref{ass:a7} and \ref{ass:a9} (in the Appendix).
For the sake of notational brevity{\color{blue},} we state our main results with the slightly stronger Assumption 
\ref{ass:a8} instead of Assumption \ref{ass:a9}.

\begin{assumption}[Boundedness] \label{ass:a8}
For all $(X, Y) \in \X \times \Y$, and $\{W_j\} \in \F_{\mathcal{W}}$, the predictions, and gradients are bounded; i.e,
\be
\nmm{\Phi_r(\{W_j\})(X)}\hspace{-2pt}\leq\hspace{-2pt}B_{\Phi}\text{, }
\nmm{\nabla_{\hat{Y}}\ell(Y, \Phi_r(\{W_j\})(X))}\hspace{-2pt}\leq\hspace{-2pt}B_{\ell}.
\ee
Further, for any $(X, Y)$ $\in$ $\X \times \Y$, for any $\{W_j\}, \{\tilde{W}_j'\} \in \F_{\mathcal{W}}$, $W, \tilde{W} \in \F_{\theta}$,
the network, $\phi$ and $\Phi_r$, are Lipschitz in the parameters; i.e,
\be
\begin{split}
\|\Phi_r(\{W_j\})(X) &- \Phi_r(\{\tilde{W}_j\})(X)\|_2\\
&\leq \tilde{L}_{\Phi}\max_{j}\nmm{W_j - \tilde{W}_j}_2, \text{ and }
\end{split}
\ee
\vspace{-10pt}
\be
\nmm{\phi(W)(X) - \phi(\tilde{W})(X)}_2
\leq \tilde{L}_{\phi}\nmm{W - \tilde{W}}_2.
\ee

\end{assumption}

Assumption \ref{ass:a8} ensures that predictions and its gradients are bounded while the network being Lipschitz continuous on the parameter space for any inputs.
Assumption \ref{ass:a8} implicitly indicates that
either the data points are uniformly bounded or
the search space for the parameters is of small dimension, which can restrict the potential applications.
However, as we demonstrate in the more general version (Theorem \ref{thm:gen_master}) in the Appendix, it suffices that the conditions above hold only for some convex set $\C$, though this extension requires significantly more notation and discussion, so we do not include it here.

\begin{theorem}[Master Theorem]\label{thm:master}
Suppose Assumptions \ref{ass:a1}--\ref{ass:a8} hold.  
Let $\delta \in (0, 1]$ be fixed,
and let $f_{\mu}^*$ be the global optimum of $\CP_{\mu}$. Suppose that $\gamma \geq \Omega(f_{\mu}^*)L_{\phi}$, and define
\be
\epsilon_1 = 16\gamma^2\sigma_X^2\max\left\{1,
\frac{L}{4}\hspace{-2pt}\left[1 + \frac{\nmL{g}^2}{\gamma^2}\left(1 + \frac{\sigma_{Y|X}^2}{\sigma_X^2}\right)\hspace{-2pt}\right]\hspace{-2pt}\right\}
\hspace{-2pt};
\ee
\be
\begin{split}
\epsilon_2 = 4\tilde{L}_{\Phi}B_{\Phi}\max\Big\{&1, 2L + 2B_{\ell}/B_{\Phi}, \\
& \hspace{-40pt} 8\Omega(f_{\mu}^*)(B_{\ell}\tilde{L}_{\phi})/(\tilde{L}_{\Phi}B_{\Phi}), 8L\Omega(f_{\mu}^*)\Big\}.
\end{split}
\ee
Let $\{W_j\}$ denote any stationary point of
$\NC_{\mu_N}(\cdot)$.
Then with probability at least $1 - \delta$, it holds that 
\begin{align}
&\frac{1}{n_Y}\left|\NC_{\mu}(\{W_j\}) - \NC_{\mu_N}(\{W_j\})\right| \lesssim\\
\nonumber &\hspace{20pt}  \underbrace{\frac{\l}{n_Y} \Omega(f_{\mu}^*)\left[\Omega_{\mu_N}^{\circ}\left(-\frac{1}{\l}\nabla_{\hat{Y}}\ell\left(g, \Phi_r(\{W_j\})\right)_{\mu_N}\right)-1\right] }_\text{Optimization Error} \\ 
\nonumber &+\hspace{-2pt}\left.\underbrace{\epsilon_1\sqrt{\frac{R\cdot{\sf dim}(\mathcal{W})
\log\left(\frac{\gamma \epsilon_2r_{\theta}}{L_{\phi}}\right)\log(N)\hspace{-2pt}+\hspace{-2pt}\log\left(\frac{1}{\delta}\right)}{N}}}_\text{Statistical Error}\right\}.
\end{align}
\end{theorem}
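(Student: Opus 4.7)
The plan is to start from the decomposition of the generalization error in~\eqref{eq:decom}, combine it with the empirical and population optimality bounds of Theorem~\ref{thm:thm1}, and then uniformly control each of the resulting stochastic quantities over the hypothesis class. I would take the reference function $f$ in Theorem~\ref{thm:thm1} to be the population convex optimum $f_\mu^*$ (which is admissible because the hypothesis $\gamma \ge \Omega(f_\mu^*) L_\phi$ implies $f_\mu^*$ admits a parametrization lying inside $\F_{\mathcal{W}}$), so that~\eqref{eq:ncvx_emp_gap}--\eqref{eq:ncvx_pop_gap} together with~\eqref{eq:decom} bound $|\NC_\mu(\{W_j\}) - \NC_{\mu_N}(\{W_j\})|$ by the Optimization Error (the empirical polar term, which is deterministic given $\{W_j\}$ and vanishes at any global minimum of $\NC_{\mu_N}$, as noted after Theorem~\ref{thm:thm1}) plus three stochastic ``gap'' quantities: the Convex Generalization Gap $|\CP_\mu(f_\mu^*) - \CP_{\mu_N}(f_\mu^*)|$, the Polar Gap, and the Equilibria Gap.

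The strategy for each of these three gaps is the same two-step scheme: pointwise sub-Gaussian concentration, followed by a parameter-space net. Pointwise in $\{W_j\} \in \F_{\mathcal{W}}$ (and in $W \in \F_\theta$ for the polar), each gap is the difference between a population and an empirical average of a function $h(X,\epsilon)$ that, by Assumptions~\ref{ass:a4},~\ref{ass:a5} and~\ref{ass:a7}, is Lipschitz in $(X,\epsilon)$ with a constant of order the combination of $\gamma$, $L$ and $\nmL{g}$ that is packaged into $\epsilon_1$; the Lipschitz concentration in Assumption~\ref{ass:a5} then delivers a sub-Gaussian tail with variance proxy of order $\epsilon_1^2/N$. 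To make this uniform, I would build a max-norm $\varepsilon$-net of the compact set $\B_R \subseteq \mathcal{W}^R$ of cardinality $\exp(R \cdot {\sf dim}(\mathcal{W}) \log(1/\varepsilon))$ and push it through $\Phi_r$ (and $\phi$) to obtain a function-level net using the Lipschitz-in-parameters constants $\tilde L_\Phi$ and $\tilde L_\phi$ from Assumption~\ref{ass:a8}. Choosing $\varepsilon$ of order $L_\phi/(\gamma \epsilon_2 r_\theta)$ balances the discretization error against the log-entropy and produces the logarithmic factor $\log(\gamma \epsilon_2 r_\theta / L_\phi)$; a union bound at confidence $\delta/3$ across the three gaps and rescaling by $1/n_Y$ then assemble into the Statistical Error.

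The main obstacle is really a bookkeeping one: each of the three gaps comes with a distinct Lipschitz-in-parameters constant, and taking their maximum is precisely what produces the somewhat intricate definition of $\epsilon_2$. The Convex Generalization Gap contributes a factor $L$ from the smoothness of $\ell$; the Equilibria Gap, obtained by differentiating the product $\IP{\nabla \ell}{\Phi}$ in the parameters, contributes the $2L + 2B_\ell/B_\Phi$ term; and the Polar Gap, after absorbing the $\l \Omega(f_\mu^*)$ prefactor inherited from~\eqref{eq:ncvx_pop_gap}, yields the $8\Omega(f_\mu^*) B_\ell \tilde L_\phi / (\tilde L_\Phi B_\Phi)$ and $8 L \Omega(f_\mu^*)$ terms. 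The polar additionally requires a separate cover of $\F_\theta \subseteq \mathbb{B}(r_\theta)$ at cardinality $\exp({\sf dim}(\mathcal{W}) \log(r_\theta / \varepsilon))$, but this adds only additively to the log-entropy and is dominated by the $R \cdot {\sf dim}(\mathcal{W})$ cover of $\F_{\mathcal{W}}$. Carefully threading every constant through this Lipschitz--cover--sub-Gaussian chain and collecting the three tail bounds completes the proof.
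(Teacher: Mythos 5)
Your decomposition, choice of reference function $f = f_\mu^*$, net-over-parameters strategy, and union-bound assembly all match the paper's approach (Theorem~\ref{thm:gen_master} with $\alpha = 0$ and $\C = {\sf conv}(\X)$). But there is a genuine error in the concentration step. You assert that each gap quantity, viewed as a function of $(X,\epsilon)$, is \emph{Lipschitz} in the data with a constant controlled by $\gamma$, $L$, $\nmL{g}$, so that the Lipschitz concentration hypothesis in Assumption~\ref{ass:a5} delivers a sub-Gaussian tail directly. This is false, and the failure is not cosmetic. The Convex Generalization Gap involves $\ell(g(X,\epsilon), \Phi_r(\{W_j\})(X))$, which after the second-order Taylor expansion around $g$ (used precisely because $\nabla_{\hat Y}\ell(Y,Y)=0$) is a quadratic form $\langle H, (f_\zeta - g)(f_\zeta - g)^\top\rangle$ in the Lipschitz image of the sub-Gaussian data; similarly the Equilibria Gap and Polar Gap are inner products of two such Lipschitz images. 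A quadratic (or product) of sub-Gaussian variables is sub-\emph{exponential}, not sub-Gaussian, and the function $x \mapsto \|x\|_2^2$ is not Lipschitz on $\R^n$, so Assumption~\ref{ass:a5}'s Lipschitz concentration cannot be invoked. The paper handles this with the sub-exponential Bernstein-type bound (Lemmas~\ref{lemma:cnc_cvx_func}, \ref{lemma:cnc_eql}, \ref{lemma:cnc_plr}, built on Proposition~\ref{prop:sg_se} and Proposition~\ref{prop:bisubg}), and then deliberately restricts the deviation to $\epsilon \in [0, K]$ so that the sub-exponential tail $\exp(-cN\min\{\epsilon^2/K^2, \epsilon/K\})$ reduces to the quadratic regime $\exp(-cN\epsilon^2/K^2)$. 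That restriction, with $K$ of order $\epsilon_1$, is exactly where the operative range $\epsilon \le \epsilon_0$ and the form of the Statistical Error come from; omitting the sub-exponential step leaves no justification for either.

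A second, smaller omission: the decomposition actually produces a fifth term, $\CP_\mu(f_{\mu_N}^*) - \CP_{\mu_N}(f_{\mu_N}^*)$, which cannot be absorbed into the Convex Generalization Gap at the fixed point $f_\mu^*$ because $f_{\mu_N}^*$ is a \emph{random} function depending on the sample. The paper first uses global optimality of $f_\mu^*$ to replace $\CP_\mu(f_\mu^*)$ by $\CP_\mu(f_{\mu_N}^*)$ on the upper-bound side, and then controls $\CP_\mu(f_{\mu_N}^*) - \CP_{\mu_N}(f_{\mu_N}^*)$ via the same \emph{uniform} concentration event $\E_{cvx}$ that you would need anyway; your sketch treats the convex gap only at the deterministic $f_\mu^*$, which is not enough. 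With the sub-exponential concentration machinery substituted in and the random-minimizer term handled uniformly, the rest of your bookkeeping (parameter net via $\tilde L_\Phi, \tilde L_\phi$, extra $\F_\theta$ cover for the polar, constant collection into $\epsilon_2$) matches the paper's.
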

\vspace{-6pt}
\textbf{Remarks}: The generalization error is upper
bounded by two terms:  
\vspace{-6pt}
\begin{itemize}
    \item the \textit{Optimization Error}, which quantifies 
the distance to the globally optimal solution, and
\item the \emph{Statistical Error}, or the intrinsic error that depends on the sample complexity and the noise. 
\end{itemize}
\vspace{-5pt}
The optimization error diminishes as we approach a global optimum of the ERM problem $\NC_{\mu_N}$ and vanishes at a global optimum, whereas the statistical error diminishes as the sample size increases relative to the intrinsic dimension, i.e., when $N \gtrsim R \times {{\sf dim}(\mathcal{W})}$ (ignoring logarithmic factors).  By a naive counting argument, there are $R \times {{\sf dim}(\mathcal{W})}$ many parameters in the underlying network.  As we will see in subsequent sections, this sample complexity turns out to be optimal or nearly optimal for a number of reasonable statistical settings.
The implicit constants appearing in the result are universal and are not problem dependent.

\section{APPLICATIONS}\label{sec:applications}

In this section, we present applications of the Theorem \ref{thm:master}
for low-rank matrix sensing, two-layer ReLU neural networks, and single-layer multi-head attention. To apply Theorem \ref{thm:master}, we must compute the problem-specific quantities $\Omega(f_{\mu}^*)$, $\Omega^{\circ}_{\mu_N}(\cdot)$, $L$, $\nmL{g}$, $\sigma_X$, $\sigma_{Y|X}$, 
$\epsilon_1$, $\epsilon_2$, $r_{\theta}$, $\gamma$, $L_{\phi}$. For each application, we 
have estimated these quantities, with further details provided in the proofs located in 
Appendix \ref{sec:apdx_low_mat}, \ref{sec:apdx_2rln}, and \ref{sec:apdx_mha}, respectively.
We summarize and compare 
the obtained sample complexities for the various applications with their state-of-the-art bounds in Table \ref{tab:app}.  The additional applications to structured matrix sensing 
and two-layer linear neural networks can be found in  Appendix \ref{sec:apdx_sms} and Appendix \ref{sec:apdx_2lnn}, respectively.

\begin{table*}[ht!]
\caption{Comparisons with the state-of-the-art sample complexities. $N$ represents the number of data points.}
\label{tab:app}
\begin{center}
\renewcommand{\arraystretch}{1.05} 
\resizebox{\textwidth}{!}{
\begin{tabular}{|c|c|c|}
\hline
\rowcolor{gray!20}
\large{Application} & \large{Our work, $N \gtrsim$} & \large{State-of-the-art, \large $N \gtrsim$} \\
\hline
\hline
Low rank matrix sensing & \multirow{5}{*}{$\colorbox{green!30}{${\tilde{\mathcal{O}}(R(m + n))}$}$} & ${\color{red}R^*}(m+n)$, \citep{stoger-zhu-arxiv24} {\color{red}(\textrm{no regularization})} \\
\cline{1-1} \cline{3-3}
Structured matrix sensing & & -- \\
\cline{1-1} \cline{3-3}
2-Layer linear NN & & $R(m+n)$ \citep{kakade-et-al-nips08} {\color{red}(bounded data-points)} \\
\cline{1-1} \cline{3-3}
2-Layer ReLU NN & & $R(m+n)\log(R(m+n))$, \citep{bartlett-et-al-jmlr19} \\
\cline{1-1} \cline{3-3}
Multi-head attention & & $R(m + n)$, \citep{trauger-tewari-aistats23} {\color{red}(bounded data-points)} \\
\hline
\end{tabular}
}
\end{center}
\end{table*}

\textbf{Low-rank matrix sensing:} We first consider 
low-rank matrix sensing \citep{candes-yaniv-arxiv10}, which is a well-studied problem in the signal processing and statistics literature.
Given a few linear measurements of an unknown low-rank matrix, the goal is to estimate the low-rank matrix in the presence of noise. 
One potential strategy is to define a convex program via nuclear-norm regularization \citep{candes-recht-fcm09}. While recovery guarantees for this convex program are well-studied, solving it is a computationally intensive procedure involving computing a full singular value decomposition at each iteration.
To address this issue, several authors have considered a non-convex variant that reparameterizes the low-rank matrix into its underlying left and right factors, which is known as the Burer-Monteiro factorization \citep{burer-monterio-mp03}.  While the new optimization
problem runs faster in practice, it is also non-convex, and its properties can be difficult to analyze theoretically.  
Corollary \eqref{crl:matsen}
provides the bounds on the 
generalization error for this non-convex program.

\begin{corollary}[Low-Rank Matrix Sensing]\label{crl:matsen}
Consider the true model for $(X,y)$, where $X\in\R^{m\times n}$ is a random matrix with i.i.d. entries $X_{lk} \sim \N(0, \frac{1}{mn})$ and $y = \IP{M^*}{X} + \epsilon$, where $M^* \in \R^{m \times n}$ and $\epsilon \sim \N(0, \sigma^2)$ is independent from $X$. For all $i \in [N]$, let $(X_i,y_i)$ be i.i.d. samples from this true model. Consider the estimator $\hat{y} = \IP{UV^T}{X}$, where $U \in \R^{m \times R}$ and $V \in \R^{n \times R}$. 
Let $\delta \in (0,1]$ be fixed.  Define the non-convex problem 
\be
\begin{split}
\NC^{\sf MS}_{\mu_N}(( U,V)) &:= 
\frac{1}{2N}\sum_{i=1}^{N} \big( y_i - \IP{UV^T}{X_i} \big)^2 \\
& \hspace{30pt} + \l \sum_{j=1}^{R}{\nmm{\u_j}_2\nmm{\v_j}_2},
\end{split}
\ee
and define $\NC^{MS}_{\mu}((U,V))$ similarly with the sum over $i$ replaced by expectation taken over $(X,y)$. 

Let $(\hat{U},\hat{V})$ be a stationary point of $\NC^{\sf MS}_{\mu_N}(\cdot)$.  
Suppose there exists $C_{UV}, B_u, B_v > 0$ such that $\nmm{\hat{U}\hat{V}^T}_2 \leq C_{UV} \nmm{M^*}_*$, and for 
all $j \in [R]$, $\nmm{\hat{\u}_j}_2 \leq B_u$, $\nmm{\hat{\v}_j}_2 \leq B_v$.
Then with probability at least $1 - \delta$, it holds that
\begin{align}
&\bigg|\NC_{\mu}^{\sf MS}((\hat{U}, \hat{V})) - \NC_{\mu_N}^{\sf MS}((\hat{U}, \hat{V}))\bigg|
\lesssim \\
\nonumber  & \nmm{M^*}_*\left[\nmm{\frac{1}{N}\sum_{i=1}^{N}(y_i - \IP{\hat{U}\hat{V}^T}{X_i})X_i}_2-\l\right] \\
\nonumber  &+ C_{UV}^2\nmm{M^*}_*^2 \times \\
\nonumber &\sqrt{\frac{R
\log\left(R(C_{UV}\hspace{-2pt}+\hspace{-2pt}B_{u}B_v)\right)(m+n)\log(N)\hspace{-2pt}+\hspace{-2pt}\log(1/\delta)}{N}}.
\end{align}
\end{corollary}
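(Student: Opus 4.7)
The plan is to apply the Master Theorem (Theorem \ref{thm:master}) to the matrix-sensing instantiation $\phi(\u,\v)(X) = \IP{\u\v^T}{X}$ with $\theta(\u,\v) = \nmm{\u}_2\nmm{\v}_2$, so that $\Phi_R(\{(\u_j,\v_j)\})(X) = \IP{UV^T}{X}$ and $\dim(\mathcal{W}) = m+n$. First I would verify Assumptions \ref{ass:a1}--\ref{ass:a8}: Assumption \ref{ass:a1} is immediate; Assumption \ref{ass:a2} holds with degree $p=1$ by rescaling $\u \mapsto \beta\u$ (with $\v$ in the role of $\h$); Assumption \ref{ass:a4} holds for squared loss with $L=1$; Assumption \ref{ass:a5} holds because $X$ with i.i.d.\ $\N(0,1/(mn))$ entries is $1$-Lipschitz concentrated with $\sigma_X^2=1$ by standard Gaussian concentration, $\epsilon$ is univariate Gaussian with $\sigma_{Y|X}^2 = \sigma^2$, and $g(X,\epsilon) = \IP{M^*}{X} + \epsilon$ is bi-Lipschitz with $\nmL{g} \leq \max\{1,\nmm{M^*}_F\}$; Assumptions \ref{ass:a7}--\ref{ass:a8} follow from the stated bounds $\nmm{\hat U\hat V^T}_2 \leq C_{UV}\nmm{M^*}_*$, $\nmm{\hat{\u}_j}_2 \leq B_u$, $\nmm{\hat{\v}_j}_2 \leq B_v$.

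Next I would identify the induced regularization and its polar. The classical variational form of the nuclear norm,
\[
\Omega(M) \;=\; \min_{R,\{(\u_j,\v_j)\}:\, M = \sum_j \u_j\v_j^T}\; \sum_{j=1}^R \nmm{\u_j}_2\nmm{\v_j}_2 \;=\; \nmm{M}_*,
\]
implies $\Omega(f_\mu^*) \leq \nmm{M^*}_*$ since $M^*$ is feasible for the population convex program. A duality calculation shows that for any scalar-valued $Z = Z(X,Y)$,
\[
\Omega^\circ_{\mu_N}(Z) \;=\; \sup_{\nmm{\u}_2\nmm{\v}_2 \leq 1} \u^T\!\Big(\tfrac{1}{N}\sum_{i=1}^N Z_i X_i\Big)\v \;=\; \Big\|\tfrac{1}{N}\sum_{i=1}^N Z_i X_i\Big\|_{op}.
\]
Taking $Z_i = \tfrac{1}{\l}(y_i - \IP{\hat U\hat V^T}{X_i})$ (i.e., $-\tfrac{1}{\l}\nabla_{\hat Y}\ell$) and multiplying by $\l\,\Omega(f_\mu^*)$ reproduces the first (optimization-error) term of the Corollary, with the Corollary's $\nmm{\cdot}_2$ interpreted as the matrix operator norm.

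Finally I would feed the remaining constants into $\epsilon_1$, $\epsilon_2$, and the logarithm. From $|\IP{UV^T}{X_1-X_2}| \leq \nmm{UV^T}_F \nmm{X_1-X_2}_F$ combined with the A7 bound, $\gamma \lesssim C_{UV}\nmm{M^*}_*$; analogously $B_\Phi$ and $B_\ell$ are controlled by $C_{UV}\nmm{M^*}_*$, while $L_\phi$, $\tilde{L}_\phi$, $\tilde{L}_\Phi$ are polynomial in $B_u, B_v$ via H\"older's inequality on $\IP{\u\v^T}{X}$, and $r_\theta$ is an order-one constant. Substituting $\sigma_X = 1$, $L=1$, $\gamma \lesssim C_{UV}\nmm{M^*}_*$ into $\epsilon_1 = 16\gamma^2\sigma_X^2\max\{\cdots\}$ produces the prefactor $C_{UV}^2\nmm{M^*}_*^2$, and $\gamma\epsilon_2 r_\theta/L_\phi$ reduces to a polynomial in $R(C_{UV}+B_uB_v)$, which supplies the $\log(R(C_{UV}+B_uB_v))$ factor inside the root. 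With $\dim(\mathcal{W}) = m+n$, Theorem \ref{thm:master} then yields the claimed statistical error. The main obstacle is purely arithmetic: propagating Frobenius, spectral, and nuclear norm conversions through $\epsilon_1$ and $\epsilon_2$ without introducing hidden dimension factors, and confirming that Assumption \ref{ass:a5}'s Gaussian concentration enters with $\sigma_X^2 = 1$ rather than a dimension-dependent constant that would contaminate the final sample complexity.
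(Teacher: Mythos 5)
Your outline follows the paper's intended route in broad strokes (identifying $\phi,\theta$, recognizing $\Omega(\cdot) = \nmm{\cdot}_*$, computing the polar as the spectral norm of the residual-weighted sample average, and plugging constants into the master theorem), and the polar and $\Omega(f_\mu^*)$ calculations are correct. However, there is one structural gap.

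You propose to invoke Theorem~\ref{thm:master}, which requires Assumption~\ref{ass:a8}: for \emph{all} $(X,Y) \in \X \times \Y$, the predictions $\|\Phi_r(\{W_j\})(X)\| \leq B_\Phi$ and gradients $\|\nabla_{\hat Y}\ell\| \leq B_\ell$ must be uniformly bounded, and $\phi,\Phi_r$ must be Lipschitz in the parameters for all inputs. Your claim that "Assumptions~\ref{ass:a7}--\ref{ass:a8} follow from the stated bounds $\nmm{\hat U\hat V^T}_2 \leq C_{UV}\nmm{M^*}_*$, $\nmm{\hat{\u}_j}_2 \leq B_u$, $\nmm{\hat{\v}_j}_2 \leq B_v$" is false: these control only the parameters, not the prediction $\IP{\hat U\hat V^T}{X}$, which is an \emph{unbounded} random variable because $X$ has unbounded Gaussian entries. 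Assumption~\ref{ass:a8} simply does not hold here, and Theorem~\ref{thm:master} cannot be applied directly. This is why the paper instead invokes Theorem~\ref{thm:gen_master} (the general version) together with Assumption~\ref{ass:a9}: one must choose a truncation set $\C = \mathbb{B}(g)$, estimate the exceedance probability $\delta_\C$ via Bernstein, estimate the truncation bias $B(\C)$ (which for matrix sensing requires Lemma~\ref{lemma:proj_gauss} on projections of Gaussian outer products), and finally choose the radius $g$ growing like $1 + \mathcal{O}(\sqrt{\log(\sqrt{N R^{100}}) + \log(1/\delta)})$ so that $B(\C)$ decays faster than the statistical error while $\delta_\C \lesssim \delta$. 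Your plan contains none of this machinery, and it is not merely arithmetic overhead: without it the quantities $B_\Phi$, $B_\ell$, $\tilde L_\Phi$, $\tilde L_\phi$ in $\epsilon_2$ are infinite. The "main obstacle" you anticipate (norm conversions in $\epsilon_1,\epsilon_2$) is real but secondary; the primary missing step is the passage from the bounded to the probabilistically bounded version of the theorem.
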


\textbf{Remarks}: Observe that at a global minimum, the right side tends to zero when $R(m + n)/N \to 0$, ignoring logarithmic terms.
Existing literature on non-convex noisy low-rank matrix sensing typically requires knowledge of true $\text{rank}(M^*)=R^*$, and the state-of-the-art sample complexity for this setting is of order $R^*(m+n)$ in the un-regularized setting \citep{stoger-zhu-arxiv24}. In contrast, Corollary \ref{crl:matsen} does not require knowledge of the true rank.  However, if the estimated rank $R$ is too small ($R < R^*$), then the optimization error still persists. In contrast, if $(R \geq R^*)$ then optimization error can vanish subject to the ability of the algorithm utilized to reach stationary points, 
\cite{haeffele-vidal-arxiv15} provides such guarantees.

\textbf{Two-layer ReLU Networks:} Next, we move on to two-layer ReLU networks, which introduce an additional nonlinearity with
respect to the inputs. ReLU networks are widely used and proven to be universal approximators \citep{huang-nc20}. Prior work on generalization analysis for ReLU networks is based on
classical measures, such as Rademacher complexity \citep{bartlett-et-al-jmlr19}. The following result circumvents the difficulty in the estimate of such classical measures.
\begin{corollary}[Two-Layer ReLU Neural Network] \label{crl:2rlnn} Consider the true model for $(\x,\y)$, where $\x \sim \N(0, (1/n)I_{n}) \in \R^{n}$, $\y = U^*[{V^*}^T\x]_+ + \epsilon$, where $U^* \in \R^{m \times {R^*}}$, $V^* \in \R^{n \times {R^*}}$, and $\epsilon \sim \N(0, (\sigma^2/m)I_{m}) \in \R^{m}$ independent from $\x$. For all $i\in [N]$, let $(\x_i,\y_i)$ be i.i.d. samples from this true model. Consider the estimator $\hat{\y} = U[V^T\x]_+$, where $U \in \R^{m \times R}, V \in \R^{n \times R}$. Let $\delta \in (0,1]$ be fixed.  Define the non-convex problem
\be
\begin{split}
\NC_{\mu_N}^{\sf {ReLU}}( (U,V)) &:= \frac{1}{2N}\sum_{i=1}^{N}\nmeusq{\y_i - U[V^T\x_i]_+}\\
&+ \frac{\l}{2}\left(\nmF{U}^2 + \nmF{V}^2\right),
\end{split}
\ee
and define $\NC^{\sf {ReLU}}_{\mu}((U,V))$ similarly with the sum over $i$ replaced by expectation taken over $(\x, \y)$. 

Let $(\hat{U},\hat{V})$ be a stationary point of $\NC^{\sf {ReLU}}_{\mu_N}(\cdot)$. 
Suppose there exists $C_{UV}, B_u, B_v > 0$ such that $\nmm{\hat{U}\hat{V}^T}_2 \leq C_{UV}\left[\nmF{U^*}^2 + \nmF{V^*}^2\right]$, and for 
all $j \in [R]$, $\nmm{\hat{\u}_j}_2 \leq B_u$, $\nmm{\hat{\v}_j}_2 \leq B_v$.
Then with probability at least $1 - \delta$, it holds that
\begin{align}
&\frac{1}{m}\left|\NC_{\mu}^{\sf {ReLU}}((\hat{U}, \hat{V})) - \NC_{\mu_N}^{\sf {ReLU}}((\hat{U}, \hat{V}))\right| \lesssim\\
\nonumber &\frac{1}{2m}\left[\nmF{U^*}^2+\nmF{V^*}^2\right]\left[\frac{1}{N}\sum_{i=1}^{N}\nmm{\y_i - \hat{\y}_i}_2\nmm{\x_i}_2\hspace{-1pt}-\hspace{-1pt}\l\right] \\
\nonumber &+ C_{UV}^2\left[\nmF{U^*}^2 + \nmF{V^*}^2\right] \times \\
\nonumber & \left[\hspace{-2pt}\frac{R(m+n)
\log\left(R(m\hspace{-2pt}+\hspace{-2pt}n)(C_{UV}\hspace{-2pt}+\hspace{-2pt}B_u^2\hspace{-2pt}+\hspace{-2pt}B_v^2)\right)\log(N)}{N}\right. \\
\nonumber & \hspace{50pt}+\left. \frac{\log(1/\delta)}{N}\right]^{1/2}.
\end{align}
\end{corollary}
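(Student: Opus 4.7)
The plan is to derive Corollary \ref{crl:2rlnn} as a direct application of the Master Theorem \ref{thm:master}, by instantiating the general framework with $\phi((\u,\v))(\x)=\u[\v^T\x]_+$ and $\theta((\u,\v))=\tfrac{1}{2}(\|\u\|_2^2+\|\v\|_2^2)$, then substituting problem-specific estimates into the abstract bound. First I would verify each of Assumptions \ref{ass:a1}--\ref{ass:a8} in this setting. Assumption \ref{ass:a1} is immediate. For Assumption \ref{ass:a2}, I would use the standard rebalancing trick: since $\tfrac{1}{2}(\|\u\|^2+\|\v\|^2)\geq \|\u\|\|\v\|$ with equality under rescaling $(\u,\v)\mapsto(\u\sqrt{\|\v\|/\|\u\|},\v\sqrt{\|\u\|/\|\v\|})$ that leaves $\phi$ unchanged, the effective regularizer is $\|\u\|\|\v\|$, which together with $\phi$ is degree-$2$ homogeneous jointly in $(\u,\v)$. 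Assumption \ref{ass:a4} (squared loss with $L=1$) is standard, and \ref{ass:a5} holds for Gaussian $\x,\epsilon$ with the given sub-Gaussian proxy variances $\sigma_X^2=1$, $\sigma_{Y|X}^2=\sigma^2$, $\|g\|_{\mathrm{Lip}}\lesssim \|U^*\|\|V^*\|$.

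Next I would compute the quantities appearing in Theorem \ref{thm:master}. The polar in the effective parameterization is
\[
\Omega^{\circ}_{\mu_N}(g) = \sup_{\|\u\|_2 \|\v\|_2 \le 1} \frac{1}{N}\sum_{i=1}^{N} \u^T g_i [\v^T \x_i]_+ \le \frac{1}{N}\sum_{i=1}^{N}\|g_i\|_2\|\x_i\|_2,
\]
which produces the term $\frac{1}{N}\sum_i\|\y_i-\hat\y_i\|_2\|\x_i\|_2$ inside the optimization error after substituting $g_i=-\tfrac{1}{\lambda}(\y_i-\hat\y_i)$. For the induced norm $\Omega(f_\mu^*)$, I would upper bound it using the true factorization as a feasible point, giving $\Omega(f_\mu^*)\leq \tfrac{1}{2}(\|U^*\|_F^2+\|V^*\|_F^2)$, which accounts for the leading factor outside the brackets. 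From Assumption \ref{ass:a3}, the Lipschitz constant of $\phi$ in $\x$ is $\|\u\|\|\v\|\leq 1$ on $\F_\theta$, so $L_\phi=O(1)$, and $r_\theta=O(1)$ as well. For Assumption \ref{ass:a7}, the hypothesis class bound $\gamma$ on $\nmL{\Phi_R}$ follows from $\gamma\leq \sum_j\|\hat\u_j\|\|\hat\v_j\|\leq C_{UV}(\|U^*\|_F^2+\|V^*\|_F^2)$.

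For Assumption \ref{ass:a8} I would bound the prediction and gradient norms on the compact set where $\|\hat\u_j\|\le B_u$, $\|\hat\v_j\|\le B_v$: under the sub-Gaussian input model one obtains $B_\Phi, B_\ell=O(C_{UV}(\|U^*\|_F^2+\|V^*\|_F^2))$ times polylogarithmic factors in $N$ (absorbed into the $\log(N)$ factor in the statistical error), and the parameter-Lipschitz constants $\tilde{L}_\phi,\tilde{L}_{\Phi}$ follow from the decomposition $\phi(\u_1,\v_1)-\phi(\u_2,\v_2)=(\u_1-\u_2)[\v_1^T\x]_+ + \u_2([\v_1^T\x]_+-[\v_2^T\x]_+)$, using the $1$-Lipschitz property of $[\cdot]_+$, giving $\tilde{L}_\phi\lesssim (B_u+B_v)\|\x\|$. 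Plugging these estimates into the $\epsilon_1,\epsilon_2$ expressions of Theorem \ref{thm:master} and noting $\mathrm{dim}(\mathcal{W})=m+n$ and $R$ prediction functions yields the stated statistical rate (with an extra $1/m$ from the normalization in front of the loss).

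The main technical obstacle I expect is controlling $\Omega(f_\mu^*)$ and the Lipschitz constant $\gamma$ simultaneously in a way that recovers the explicit $\|U^*\|_F^2+\|V^*\|_F^2$ scaling while allowing $R$ to be chosen freely (and possibly $R>R^*$). The upper bound $\Omega(f_\mu^*)\leq \tfrac{1}{2}(\|U^*\|_F^2+\|V^*\|_F^2)$ is constructive, but care is needed because the infimum defining $\Omega$ is over \emph{all} parameterizations, so the bound is only an inequality; nonetheless, since Theorem \ref{thm:master} requires only $\gamma\geq \Omega(f_\mu^*)L_\phi$, the constructive bound suffices. A secondary subtlety is the non-smoothness of ReLU, which makes Assumption \ref{ass:a8}'s parameter-Lipschitz property require the piecewise-linear identity above rather than direct gradient bounds, but this is routine. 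Once these pieces are in place, the rest is algebraic substitution into the Master Theorem.
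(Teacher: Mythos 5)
Your plan is correct in outline and matches the paper's general strategy—compute $\Omega(f_\mu^*)$, the polar, $L_\phi$, $r_\theta$, $\gamma$, $\tilde L_\Phi$, $\tilde L_\phi$, then substitute—but there is one genuine gap: you cannot apply Theorem~\ref{thm:master} directly, because Assumption~\ref{ass:a8} requires deterministic bounds $\|\Phi_r(\{W_j\})(X)\| \le B_\Phi$ and $\|\nabla_{\hat Y}\ell\| \le B_\ell$ to hold for \emph{all} $(X,Y) \in \X\times\Y$. With $\x \sim \N(0,(1/n)I_n)$ unbounded, no such finite $B_\Phi, B_\ell$ exist. Your proposed fix—"absorbed into the $\log(N)$ factor"—does not rescue Theorem~\ref{thm:master}, because the theorem as stated simply does not hold under Gaussian inputs without truncation.

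The paper's actual proof goes through Theorem~\ref{thm:gen_master} under the weaker Assumption~\ref{ass:a9}, restricting to a truncation set $\C = \mathbb{B}(g)$ that captures the data with probability $1-\delta_\C$. This introduces a bias term $B(\C)$ (the difference between expectations of the truncated and untruncated quantities) that must be shown to decay at least as fast as the statistical error for an appropriate choice of $g$. For the linear case this follows from Lemma~\ref{lemma:proj_gauss}, but for ReLU the presence of the indicator $\mathbf{1}_{\v^T\x \ge 0}$ inside the quadratic forms means $f_\mu^* - \Phi_r(\{W_j\})$ is not linear in $\x$, and the paper needs the dedicated Lemmas~\ref{lemma:proj_relu_1} and \ref{lemma:proj_relu_2} to bound the truncation bias $T_1, T_2, T_3$. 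These lemmas exploit the scale-invariance of the ReLU indicator ($\mathbf{1}_{\v^T\x>0} = \mathbf{1}_{\v^T\P_\C(\x)>0}$) to reduce each term back to a projected Gaussian quadratic form, yielding an exponentially-decaying $B(\C) \lesssim g e^{-g^2/2}$; choosing $g = 1 + O(\sqrt{\log(NR) + \log(1/\delta)})$ then makes $B(\C)$ subdominant. Your sketch omits both the switch to Theorem~\ref{thm:gen_master} and the ReLU-specific projection analysis, and as written it would stall at the point where Assumption~\ref{ass:a8} needs to be verified.

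Two smaller remarks. Your rebalancing argument for Assumption~\ref{ass:a2} is sound but not needed: the paper verifies balanced homogeneity directly, since both $\phi((\beta\u, \beta\v)) = \beta^2\phi((\u,\v))$ and $\theta((\beta\u, \beta\v)) = \beta^2\theta((\u,\v))$ hold under joint scaling, which is all Assumption~\ref{ass:a2} requires. And your polar estimate and $\Omega(f_\mu^*)$ bound do match the paper's, so that part of the substitution is correct.
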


\textbf{Remarks}: Analogous to matrix sensing, when $ R (m + n)/N \to 0$, the right side tends to zero at global optimality (ignoring logarithmic terms).  Furthermore, Corollary \ref{crl:2rlnn} recovers the state-of-the-art
result by \cite{bartlett-et-al-jmlr19}.

\textbf{Transformers:} Finally, we move on to our last application (though of course, the applications are in fact myriad in principle) to a single layer
multi-head attention, which are backbones for transformer-style architecture \citep{vaswani-et-al-arxiv23}.
In practice, transformers are shown to have remarkable generalization capabilities \citep{zhou-arxiv2024}.
However, there is a lack of intensive theoretical analysis for this architecture. Few attempts
on estimating the capacities of the attention mechanisms have been made in \cite{edelman-arxiv2022} and 
\cite{trauger-tewari-aistats23}, among others.  For our analysis, we consider the case where the output of the model is one particular token within the input (e.g., transformers use a dedicated class token for the output initialized as a constant vector).  The output for one attention head is modeled as $V X \sigma((K X)^\top Q \mathbf{x}_{out})$ where $\mathbf{x}_{out}$ is the column of $X$ corresponding to the transformer output.  We then reparameterize $K^T Q \mathbf{x}_{out} = \mathbf{z}$ and present the following result.

\begin{corollary}[Transformers]\label{crl:tf}
Consider the true model for $(X,\y)$, where $X\in\R^{n \times T}$ is a random matrix with i.i.d. entries $X_{lk} \sim \N(0, {1}/{(nT)})$ and $\y = A^*X\b^* + \epsilon$, where $A^* \in \R^{m \times n}$, $\b^* \in \mathbb{S}^{T-1}$ and $\epsilon \sim \N(0, (\sigma^2/m)I_{m})$ is independent from $X$. For all $i \in [N]$, let $(X_i, \y_i)$ be i.i.d. samples from this true model. Consider the estimator
$\hat{\y} = \sum_{j=1}^{R}V_jX\sigma(X^T\z_j)$, 
$V_j \in \R^{n}, \z_j \in \R^{n}$.  Let $\delta \in (0,1]$ be fixed.  Define the non-convex problem
%
\begin{align}
\nonumber \NC_{\mu_N}^{\sf TF}\hspace{-2pt}(\{(V_j,\z_j)\})\hspace{-3pt}&:=\hspace{-2pt}\frac{1}{2N}\hspace{-2pt}\sum_{i=1}^{N}\nmeusq{\y_i\hspace{-2pt}-\hspace{-2pt}\sum_{j=1}^{R}V_jX_i\sigma_t(X_i^T\z_j)}\\
&\hspace{-40pt}+ \l \sum_{j=1}^{R}\left[\nmF{V_j} + \delta_{\{\z: \nmm{\z}_2 \leq 1\}}(\z_j)\right],
\end{align}
%
where, $\sigma_t(\cdot)$ is softmax function with temperature $t$, for $k \in [T]$ defined $\sigma_t(\u)_k := \exp(tu_k)/\sum_{l=1}^{T}\exp(tu_l)$
and define $\NC_{\mu}^{\sf TF}( \{(V_j,\z_j)\})$ similarly with the sum over $i$ replaced by expectation taken over $(X, \y)$. 

Let $\{(\hat{V}_j,\hat{\z}_j)\}$ be a stationary point of $\NC^{\sf TF}_{\mu_N}(\cdot)$. 
Suppose there exists $C_{V}, B_V > 0$ such that $\sum_{j=1}^{R}\nmF{\hat{V}_j} \leq C_{V}\nmF{A^*}$, and for 
all $j \in [R]$, $\nmF{\hat{V}_j} \leq B_V$.
Then with probability at least $1 - \delta$ it holds that
\begin{align}\label{eq:gen_tf}
&\frac{1}{m}\left|\NC_{\mu}^{\sf TF}( \{(\hat{V}_j,\hat{\z}_j)\}) - \NC_{\mu_N}^{\sf TF}( \{(\hat{V}_j,\hat{\z}_j)\})\right| \lesssim\\
\nonumber &\frac{1}{2m}\nmF{A^*}\left[\frac{1}{N}\sum_{i=1}^{N}\nmm{\y_i - \hat{\y}_i}_2\nmm{X_i}_2-\l\right] \\
\nonumber & + C_{V}^2\nmF{A^*}^2\times \\
\nonumber & \sqrt{\hspace{-2pt}\frac{R(m\hspace{-2pt}+\hspace{-2pt}n)
\hspace{-2pt}\log\hspace{-2pt}\left(R(m\hspace{-2pt}+\hspace{-2pt}n)(C_{V}\hspace{-2pt}+\hspace{-2pt}B_V)\right)\hspace{-2pt}\log(N)\hspace{-2pt}+\hspace{-2pt}\log(1/\delta)}{N}}.
\end{align}
\end{corollary}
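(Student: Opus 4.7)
The plan is to apply Theorem \ref{thm:master} directly, identifying the factor map $\phi((V,\z))(X) = V X \sigma_t(X^T \z)$ with regularizer $\theta((V,\z)) = \|V\|_F + \delta_{\{\|\z\|_2 \leq 1\}}(\z)$, and then verifying the assumptions and computing the problem-specific constants. Assumptions \ref{ass:a1}--\ref{ass:a4} follow quickly: $\theta \geq 0$ with $\theta(0)=0$; the pair $(\phi,\theta)$ is positively homogeneous of degree $p=1$ in the $V$-block with $\z$ confined to the compact unit $L_2$ ball (so the image $\{\phi(W)(X):\theta(W)\leq 1\}$ is bounded because softmax outputs lie in the probability simplex); and the squared loss is $L=1$ smooth with gradient bi-Lipschitz. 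The Gaussian model for $X$ and $\epsilon$ verifies Assumption \ref{ass:a5}, and $g(X,\epsilon) = A^* X\b^* + \epsilon$ is bi-Lipschitz with $\|g\|_{Lip}$ controlled by $\|A^*\|_{op}$ and $\|\b^*\|_2 = 1$. Assumption \ref{ass:a8} follows from the stated hypotheses $\sum_j \|\hat V_j\|_F \leq C_V \|A^*\|_F$, $\|\hat V_j\|_F \leq B_V$, together with the simplex bound on $\sigma_t$ and Gaussian concentration of $\|X\|_{op}$.

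Next I would compute the master-theorem constants. With $\|V\|_F \leq 1$ and $\|\z\|_2 \leq 1$ the sub-level set radius is $r_\theta = \sqrt 2$. Using that $\sigma_t$ maps into the simplex and is $O(t)$-Lipschitz on any bounded set, direct chain-rule calculation yields bounds on $L_\phi$, $\tilde L_\phi$, $\tilde L_\Phi$, $B_\Phi$, and $B_\ell$ that are polynomial in $\|A^*\|_F$, $C_V$, $B_V$, $\sigma$, $t$, and $\|X\|_{op}$ (the latter controlled on the high-probability concentration event via Assumption \ref{ass:a5}). The overall Lipschitz constant $\gamma$ in Assumption \ref{ass:a7} can be taken proportional to $C_V\|A^*\|_F$.

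For the polar, dualizing the $V$-block via Frobenius-norm duality $\sup_{\|V\|_F\leq 1}\langle V, M\rangle_F = \|M\|_F$ gives the closed form
\begin{align*}
\Omega_{\mu_N}^\circ(G) = \sup_{\|\z\|_2\leq 1}\left\|\tfrac{1}{N}\sum_{i=1}^N G_i\,\sigma_t(X_i^T\z)^T X_i^T\right\|_F,
\end{align*}
and instantiating at $G_i = -\nabla_{\hat Y}\ell(\y_i,\hat{\y}_i)/\l = (\y_i-\hat{\y}_i)/\l$ produces the first (optimization-error) term in \eqref{eq:gen_tf}. For the induced regularization at the population optimum, I would construct an explicit parameterization of (an approximation of) $A^* X\b^*$ by $R$ attention heads whose softmax outputs approximate the direction $\b^*\in\mathbb{S}^{T-1}$ on the simplex, yielding $\Omega(f_\mu^*) \lesssim \|A^*\|_F$.

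The hardest step will be the bound $\Omega(f_\mu^*) \lesssim \|A^*\|_F$: unlike the nuclear-norm closed form for matrix sensing, the induced regularization for attention lacks such a form, and one must argue constructively that softmax convex combinations are rich enough to represent the target $A^* X\b^*$ with controlled total Frobenius cost, handling the approximation error due to the non-exact match between softmax outputs and $\b^*$ (leveraging the temperature $t$ and Gaussian concentration of $X$). A secondary difficulty is uniform control of the Lipschitz constants of the softmax-weighted map across all $(V,\z)\in\F_\theta$ and data in a high-probability set. With these ingredients, plugging into Theorem \ref{thm:master} with per-head parameter dimension $\dim(\mathcal W) = O(m+n)$ and absorbing logarithmic factors produces the stated bound \eqref{eq:gen_tf}.
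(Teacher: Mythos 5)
Your overall recipe (apply the master theorem, identify $\phi$ and $\theta$, verify the assumptions, compute the polar, bound $\Omega(f_\mu^*)$) matches the paper's plan, but two parts of the execution differ in ways that matter.

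The central gap is that you propose to apply Theorem~\ref{thm:master} directly, which requires Assumption~\ref{ass:a8} to hold \emph{uniformly} over all $(X,Y)\in\X\times\Y$. Because $X$ is Gaussian, the quantities $B_\Phi$, $B_\ell$, $\tilde L_\Phi$, $\tilde L_\phi$ are all unbounded over $\X\times\Y$, so Assumption~\ref{ass:a8} fails. The paper instead invokes Theorem~\ref{thm:gen_master} with the probabilistic boundedness Assumption~\ref{ass:a9}: one fixes a convex set $\C$ (here, a product of per-column Euclidean balls), shows $P(\cap_i X_i\in\C)\geq 1-\delta_\C$ via Bernstein, and must then bound the projection bias $B(\C)$, i.e., the difference between population expectations of the relevant quadratic/bilinear forms evaluated at $X$ versus at $\P_\C(X)$. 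You gesture at ``the high-probability concentration event,'' but you never carry out the key step: controlling $B(\C)$ for the softmax map. Unlike the linear applications, the softmax attention map does not reduce to a rank-one quadratic form composed with projection, so a dedicated argument is needed---the paper's Lemma~\ref{lemma:prog_sftmax} (Gaussian Softmax Projection) gives a decay $B(\C)\lesssim m^2\|M\|_F\, g\exp(-c m g^2)$ by separating the softmax weights (which are bounded in $[0,1]$ and invariant under radial projection) from the $\x_i\x_j^\top$ factors and applying tail bounds. Without this, the ``statistical error'' term cannot be established with the claimed rate.

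The other divergence is in the $\Omega(f_\mu^*)\lesssim\|A^*\|_F$ step, which you flag as the hardest. The paper does not construct an explicit softmax approximation of $A^*X\b^*$; it simply cites Proposition~\ref{prop:regular} to conclude $\Omega(f_\mu^*)\leq\|A^*\|_F$. Your instinct that this requires care is not unreasonable---Proposition~\ref{prop:regular} assumes the teacher is realizable by the network, and the ground truth $A^*X\b^*$ with $\b^*\in\mathbb{S}^{T-1}$ is not literally expressible as a finite sum of softmax-weighted heads---but the intended route is the direct invocation, not an explicit construction. Conversely, the step you treat as routine (bounding $B(\C)$ and applying the master theorem under unbounded inputs) is where the paper spends the real technical effort.
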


\textbf{Remarks:} 
The dependence on $\b^*$ is not explicitly reflected in
Equation \eqref{eq:gen_tf} because the ground truth model is bilinear. Consequently,
assuming $\b^*$ is unit-norm without loss of generality, as its norm can be
absorbed into $A^*$. Thus, the dependence on $\b^*$ is implicitly captured by the norm of
$A^*$ in Equation \eqref{eq:gen_tf}.
As in the previous two applications, we can achieve consistency at global optimality when $N \gtrsim R (m + n)$,
ignoring logarithmic terms. Note
that the sample complexity has no dependency on the number of tokens, $T$, which suggests an explanation for the 
success behind the prediction capabilities of transformers for longer length inputs 
\citep{zhou-arxiv2024}. Our sample complexity matches the state-of-the-art bounds
on the transformers by \cite{trauger-tewari-aistats23}.

\section{CONCLUSIONS}
In this work, we provide generalization bounds for non-convex problems
of the form of sums of (\textit{slightly generalized}) positively homogeneous functions with a general objective. Our bounds
provide sample complexities that are near-optimal and applicable to various problems, such as low-rank matrix
sensing, two-layer neural networks, and single-layer multi-head attention. 
The sample complexity of our bounds
grows almost linear with the total number of parameters in the model, and for matrix sensing, this sample complexity is optimal, as demonstrated in \cite{candes-yaniv-arxiv10}. Our proofs are based on analyzing closely related convex programs in the prediction space; this perspective enabled us to provide near-optimal sample complexities due to existing results on generalization properties for
convex functions.
In future work, it would be interesting to sharpen the dependence of our bounds on all the relevant parameters and apply our techniques to other machine learning problems.

\subsubsection*{Acknowledgments}

UKRT gratefully acknowledges Pratik Chaudhari, Hancheng Min,
Kyle Poe and Ziqing Xu for their valuable discussions and 
constructive feedback.
His research was supported by the Leggett Family 
Fellowship and the Dean’s Fellowship programs. Other authors acknowledge the support of the Research Collaboration on the Mathematical and Scientific Foundations of Deep Learning (NSF grant 2031985 and Simons Foundation grant 814201).

\bibliography{refs}

\begin{thebibliography}{}

\bibitem[Allen-Zhu et~al., 2019]{allen-zhu-colt20}
Allen-Zhu, Z., Li, Y., and Liang, Y. (2019).
\newblock Learning and generalization in overparameterized neural networks, going beyond two layers.
\newblock {\em Advances in neural information processing systems}, 32.

\bibitem[Andriushchenko et~al., 2023]{andriushchenko2023modern}
Andriushchenko, M., Croce, F., M{\"u}ller, M., Hein, M., and Flammarion, N. (2023).
\newblock A modern look at the relationship between sharpness and generalization.
\newblock {\em International Conference on Learning Representations (ICLR)}.

\bibitem[Arora et~al., 2019]{arora-et-al-arxiv18}
Arora, S., Cohen, N., Golowich, N., and Hu, W. (2019).
\newblock A convergence analysis of gradient descent for deep linear neural networks.
\newblock In {\em International Conference on Learning Representations}.

\bibitem[Arora et~al., 2018]{arora-et-alicml18}
Arora, S., Ge, R., Neyshabur, B., and Zhang, Y. (2018).
\newblock Stronger generalization bounds for deep nets via a compression approach.
\newblock In {\em International conference on machine learning}, pages 254--263. PMLR.

\bibitem[Bach, 2013]{bach-arxiv13}
Bach, F. (2013).
\newblock Convex relaxations of structured matrix factorizations.
\newblock {\em arXiv preprint {arXiv}:1309.311}.

\bibitem[Bach, 2017]{bach-jmlr17}
Bach, F. (2017).
\newblock Breaking the curse of dimensionality with convex neural networks.
\newblock {\em Journal of Machine Learning Research}, 18(19):1--53.

\bibitem[Banerjee et~al., 2020]{banerjee-et-al-arxiv20}
Banerjee, A., Chen, T., and Zhou, Y. (2020).
\newblock De-randomized pac-bayes margin bounds: Applications to non-convex and non-smooth predictors.
\newblock {\em arXiv preprint arXiv:2002.09956}.

\bibitem[Barron and Klusowski, 2019]{barron-klusowski-arxiv19}
Barron, A.~R. and Klusowski, J.~M. (2019).
\newblock Complexity, statistical risk, and metric entropy of deep nets using total path variation.
\newblock {\em arXiv preprint arXiv:1902.00800}.

\bibitem[Bartlett et~al., 2019]{bartlett-et-al-jmlr19}
Bartlett, P.~L., Harvey, N., Liaw, C., and Mehrabian, A. (2019).
\newblock Nearly-tight vc-dimension and pseudodimension bounds for piecewise linear neural networks.
\newblock {\em Journal of Machine Learning Research}, 20(63):1--17.

\bibitem[Bartlett and Mendelson, 2001]{bartlett-medelson-jmlr01}
Bartlett, P.~L. and Mendelson, S. (2001).
\newblock Rademacher and gaussian complexities: Risk bounds and structural results.
\newblock In Helmbold, D. and Williamson, B., editors, {\em Computational Learning Theory}, volume 2111, pages 224--240. Springer Berlin Heidelberg.
\newblock Series Title: Lecture Notes in Computer Science.

\bibitem[Bordelon et~al., 2024]{bordlen-arxiv24}
Bordelon, B., Chaudhry, H.~T., and Pehlevan, C. (2024).
\newblock Infinite limits of multi-head transformer dynamics.
\newblock {\em arXiv preprint {arXiv}:2405.15712}.

\bibitem[Burer and Monteiro, 2003]{burer-monterio-mp03}
Burer, S. and Monteiro, R.~D. (2003).
\newblock A nonlinear programming algorithm for solving semidefinite programs via low-rank factorization.
\newblock {\em Mathematical Programming}, 95(2):329--357.

\bibitem[Cand{\`e}s and Plan, 2011]{candes-yaniv-arxiv10}
Cand{\`e}s, E.~J. and Plan, Y. (2011).
\newblock Tight oracle inequalities for low-rank matrix recovery from a minimal number of noisy random measurements.
\newblock {\em IEEE Transactions on Information Theory}, 57(4):2342--2359.

\bibitem[Cand\`{e}s and Recht, 2009]{candes-recht-fcm09}
Cand\`{e}s, E.~J. and Recht, B. (2009).
\newblock Exact matrix completion via convex optimization.
\newblock {\em Foundations of Computational Mathematics}, 9(6):717--772.

\bibitem[Chandrasekaran et~al., 2012]{chandrasekaran-et-al-fcm12}
Chandrasekaran, V., Recht, B., Parrilo, P.~A., and Willsky, A.~S. (2012).
\newblock The convex geometry of linear inverse problems.
\newblock {\em Foundations of Computational Mathematics}, 12(6):805--849.

\bibitem[Deora et~al., 2024]{deora-arxiv2023}
Deora, P., Ghaderi, R., Taheri, H., and Thrampoulidis, C. (2024).
\newblock On the optimization and generalization of multi-head attention.
\newblock {\em Transactions on Machine Learning Research}.

\bibitem[Dziugaite and Roy, 2017]{dziugaite-roy-arxiv17}
Dziugaite, G.~K. and Roy, D.~M. (2017).
\newblock Computing nonvacuous generalization bounds for deep (stochastic) neural networks with many more parameters than training data.
\newblock In {\em Conference on Uncertainty in Artificial Intelligence}.

\bibitem[Edelman et~al., 2022]{edelman-arxiv2022}
Edelman, B.~L., Goel, S., Kakade, S., and Zhang, C. (2022).
\newblock Inductive biases and variable creation in self-attention mechanisms.
\newblock In {\em International Conference on Machine Learning}, pages 5793--5831. PMLR.

\bibitem[Feldman and Vondrak, 2019]{feldman-vonrdrak-arxiv19}
Feldman, V. and Vondrak, J. (2019).
\newblock High probability generalization bounds for uniformly stable algorithms with nearly optimal rate.
\newblock In {\em Conference on Learning Theory}, pages 1270--1279. PMLR.

\bibitem[Ge et~al., 2017]{ge-et-al-nips17}
Ge, R., Jin, C., and Zheng, Y. (2017).
\newblock No spurious local minima in nonconvex low rank problems: a unified geometric analysis.
\newblock In {\em Proceedings of the 34th International Conference on Machine Learning - Volume 70}, {ICML}'17, pages 1233--1242. {JMLR}.org.

\bibitem[Giampouras et~al., 2020]{paris-et-al-nips20}
Giampouras, P., Vidal, R., Rontogiannis, A., and Haeffele, B.~D. (2020).
\newblock A novel variational form of the schatten-p quasi-norm.
\newblock In {\em Proceedings of the 34th International Conference on Neural Information Processing Systems}, NIPS '20, Red Hook, NY, USA. Curran Associates Inc.

\bibitem[Golowich et~al., 2018]{golowich-et-al-colt18}
Golowich, N., Rakhlin, A., and Shamir, O. (2018).
\newblock Size-independent sample complexity of neural networks.
\newblock In {\em Conference On Learning Theory}, pages 297--299. PMLR.

\bibitem[Gunasekar et~al., 2018a]{gunasekar2018characterizing}
Gunasekar, S., Lee, J., Soudry, D., and Srebro, N. (2018a).
\newblock Characterizing implicit bias in terms of optimization geometry.
\newblock In {\em International Conference on Machine Learning}, pages 1832--1841. PMLR.

\bibitem[Gunasekar et~al., 2018b]{gunasekar2018implicit}
Gunasekar, S., Lee, J.~D., Soudry, D., and Srebro, N. (2018b).
\newblock Implicit bias of gradient descent on linear convolutional networks.
\newblock {\em Advances in neural information processing systems}, 31.

\bibitem[Gunasekar et~al., 2017]{gunasekar2017implicit}
Gunasekar, S., Woodworth, B.~E., Bhojanapalli, S., Neyshabur, B., and Srebro, N. (2017).
\newblock Implicit regularization in matrix factorization.
\newblock {\em Advances in neural information processing systems}, 30.

\bibitem[Haeffele and Vidal, 2015]{haeffele-vidal-arxiv15}
Haeffele, B.~D. and Vidal, R. (2015).
\newblock Global optimality in tensor factorization, deep learning, and beyond.
\newblock {\em arXiv preprint arXiv:1506.07540}.

\bibitem[Haeffele and Vidal, 2017]{haeffele2017global}
Haeffele, B.~D. and Vidal, R. (2017).
\newblock Global optimality in neural network training.
\newblock In {\em Proceedings of the IEEE Conference on Computer Vision and Pattern Recognition}, pages 7331--7339.

\bibitem[Haeffele and Vidal, 2020]{haeffele-vidal-tpami20}
Haeffele, B.~D. and Vidal, R. (2020).
\newblock Structured low-rank matrix factorization: Global optimality, algorithms, and applications.
\newblock {\em IEEE Transactions on Pattern Analysis and Machine Intelligence}, 42(6):1468--1482.

\bibitem[HaoChen et~al., 2021]{haochen2021shape}
HaoChen, J.~Z., Wei, C., Lee, J., and Ma, T. (2021).
\newblock Shape matters: Understanding the implicit bias of the noise covariance.
\newblock In {\em Conference on Learning Theory}, pages 2315--2357. PMLR.

\bibitem[Hendrickx and Olshevsky, 2010]{hendrickx-arxiv10}
Hendrickx, J. and Olshevsky, A. (2010).
\newblock Matrix p-norms are np-hard to approximate if $p \neq 1, 2, \infty$.
\newblock {\em SIAM Journal on Matrix Analysis and Applications}, 31(5):2802.

\bibitem[Huang, 2020]{huang-nc20}
Huang, C. (2020).
\newblock {ReLU} networks are universal approximators via piecewise linear or constant functions.
\newblock {\em Neural Computation}, 32(11):2249--2278.

\bibitem[Imaizumi and Schmidt-Hieber, 2023]{imaizumi-schmidt-hieber-tit23}
Imaizumi, M. and Schmidt-Hieber, J. (2023).
\newblock On generalization bounds for deep networks based on loss surface implicit regularization.
\newblock {\em {IEEE} Transactions on Information Theory}, 69(2):1203--1223.

\bibitem[Jia et~al., 2023]{jia-et-al-nips23}
Jia, X., Wang, H., Peng, J., Feng, X., and Meng, D. (2023).
\newblock Preconditioning matters: Fast global convergence of non-convex matrix factorization via scaled gradient descent.
\newblock In Oh, A., Naumann, T., Globerson, A., Saenko, K., Hardt, M., and Levine, S., editors, {\em Advances in Neural Information Processing Systems}, volume~36, pages 76202--76213. Curran Associates, Inc.

\bibitem[Jin et~al., 2023]{jin-et-al-icml23}
Jin, J., Li, Z., Lyu, K., Du, S.~S., and Lee, J.~D. (2023).
\newblock Understanding incremental learning of gradient descent: A fine-grained analysis of matrix sensing.
\newblock In {\em Proceedings of the 40th International Conference on Machine Learning}, pages 15200--15238. {PMLR}.
\newblock {ISSN}: 2640-3498.

\bibitem[Kakade et~al., 2008]{kakade-et-al-nips08}
Kakade, S.~M., Sridharan, K., and Tewari, A. (2008).
\newblock On the complexity of linear prediction: Risk bounds, margin bounds, and regularization.
\newblock In {\em Advances in Neural Information Processing Systems}, volume~21. Curran Associates, Inc.

\bibitem[Koltchinskii et~al., 2011]{koltchinskii-et-al-annals11}
Koltchinskii, V., Lounici, K., and Tsybakov, A.~B. (2011).
\newblock Nuclear-norm penalization and optimal rates for noisy low-rank matrix completion.
\newblock {\em The Annals of Statistics}, 39(5):2302--2329.
\newblock Publisher: Institute of Mathematical Statistics.

\bibitem[Li and Wei, 2023]{li-wei-arxiv23}
Li, G. and Wei, Y. (2023).
\newblock A non-asymptotic framework for approximate message passing in spiked models.
\newblock {\em arXiv preprint {arXiv}:2208.03313}.

\bibitem[Li et~al., 2023]{li-arxiv23b}
Li, Y., Ildiz, M.~E., Papailiopoulos, D., and Oymak, S. (2023).
\newblock Transformers as algorithms: Generalization and stability in in-context learning.
\newblock In {\em Proceedings of the 40th International Conference on Machine Learning}, pages 19565--19594. {PMLR}.

\bibitem[Li et~al., 2020]{li-et-al-iclr21}
Li, Z., Luo, Y., and Lyu, K. (2020).
\newblock Towards resolving the implicit bias of gradient descent for matrix factorization: Greedy low-rank learning.
\newblock In {\em International Conference on Learning Representations}.

\bibitem[Lugosi and Neu, 2022]{lugosi-neu-colt22}
Lugosi, G. and Neu, G. (2022).
\newblock Generalization bounds via convex analysis.
\newblock In {\em Conference on Learning Theory}, pages 3524--3546. PMLR.

\bibitem[Ma et~al., 2020]{ma-et-al-fcm20}
Ma, C., Wang, K., Chi, Y., and Chen, Y. (2020).
\newblock Implicit regularization in nonconvex statistical estimation: Gradient descent converges linearly for phase retrieval, matrix completion, and blind deconvolution.
\newblock {\em Foundations of Computational Mathematics}, 20(3):451--632.

\bibitem[{McAllester}, 1999]{mcallester-colt99}
{McAllester}, D.~A. (1999).
\newblock {PAC}-bayesian model averaging.
\newblock In {\em Proceedings of the twelfth annual conference on Computational learning theory}, pages 164--170. {ACM}.

\bibitem[Muthukumar and Sulam, 2023]{muthukumar-sulam-colt23}
Muthukumar, R. and Sulam, J. (2023).
\newblock Sparsity-aware generalization theory for deep neural networks.
\newblock In {\em Proceedings of Thirty Sixth Conference on Learning Theory}, pages 5311--5342. {PMLR}.
\newblock {ISSN}: 2640-3498.

\bibitem[Negahban and Wainwright, 2011]{negahban-wainwright-arxiv09}
Negahban, S. and Wainwright, M.~J. (2011).
\newblock {Estimation of (near) low-rank matrices with noise and high-dimensional scaling}.
\newblock {\em The Annals of Statistics}, 39(2):1069 -- 1097.

\bibitem[Neyshabur et~al., 2017]{neyshabur-et-al-nips17}
Neyshabur, B., Bhojanapalli, S., Mcallester, D., and Srebro, N. (2017).
\newblock Exploring generalization in deep learning.
\newblock In {\em Advances in Neural Information Processing Systems}, volume~30. Curran Associates, Inc.

\bibitem[Neyshabur et~al., 2018]{neyshabur-et-al-iclr18}
Neyshabur, B., Bhojanapalli, S., and Srebro, N. (2018).
\newblock A pac-bayesian approach to spectrally-normalized margin bounds for neural networks.
\newblock In {\em International Conference on Learning Representations}.

\bibitem[Nichani et~al., 2024]{nichani-arxiv24}
Nichani, E., Damian, A., and Lee, J.~D. (2024).
\newblock How transformers learn causal structure with gradient descent.
\newblock In Salakhutdinov, R., Kolter, Z., Heller, K., Weller, A., Oliver, N., Scarlett, J., and Berkenkamp, F., editors, {\em Proceedings of the 41st International Conference on Machine Learning}, volume 235 of {\em Proceedings of Machine Learning Research}, pages 38018--38070. PMLR.

\bibitem[Oymak and Soltanolkotabi, 2019]{oymak-mahdi-icml19}
Oymak, S. and Soltanolkotabi, M. (2019).
\newblock Overparameterized nonlinear learning: Gradient descent takes the shortest path?
\newblock In {\em Proceedings of the 36th International Conference on Machine Learning}, pages 4951--4960. {PMLR}.
\newblock {ISSN}: 2640-3498.

\bibitem[Recht et~al., 2008]{recht-et-al-cdc08}
Recht, B., Xu, W., and Hassibi, B. (2008).
\newblock Necessary and sufficient conditions for success of the nuclear norm heuristic for rank minimization.
\newblock In {\em 2008 47th IEEE Conference on Decision and Control}. IEEE.

\bibitem[Reddy and Vidyasagar, 2023]{reddy-arxiv23}
Reddy, T. U.~K. and Vidyasagar, M. (2023).
\newblock Convergence of momentum-based heavy ball method with batch updating and/or approximate gradients.
\newblock In {\em 2023 Ninth Indian Control Conference (ICC)}, pages 182--187.

\bibitem[Rockafellar, 1970]{rockafellar_convex_1970}
Rockafellar, R.~T. (1970).
\newblock {\em Convex Analysis}.
\newblock Princeton University Press, Princeton.

\bibitem[Shalev-Shwartz et~al., 2009]{shalev-et-al-colt09}
Shalev-Shwartz, S., Shamir, O., Srebro, N., and Sridharan, K. (2009).
\newblock Stochastic convex optimization.
\newblock In {\em COLT}, volume~2, page~5.

\bibitem[Singh, 2023]{singh-arxiv23}
Singh, S.~S. (2023).
\newblock Analyzing transformer dynamics as movement through embedding space.
\newblock {\em arXiv preprint {arXiv}:2308.10874}.

\bibitem[Soudry et~al., 2018]{soudry-et-al-jmlr18}
Soudry, D., Hoffer, E., Nacson, M.~S., Gunasekar, S., and Srebro, N. (2018).
\newblock The implicit bias of gradient descent on separable data.
\newblock {\em Journal of Machine Learning Research}, 19(70):1--57.

\bibitem[St{\"o}ger and Soltanolkotabi, 2021]{stoger-soltanolkotabi-arxiv22}
St{\"o}ger, D. and Soltanolkotabi, M. (2021).
\newblock Small random initialization is akin to spectral learning: Optimization and generalization guarantees for overparameterized low-rank matrix reconstruction.
\newblock {\em Advances in Neural Information Processing Systems}, 34:23831--23843.

\bibitem[Stöger and Zhu, 2024]{stoger-zhu-arxiv24}
Stöger, D. and Zhu, Y. (2024).
\newblock Non-convex matrix sensing: Breaking the quadratic rank barrier in the sample complexity.
\newblock {\em arXiv preprint arXiv:2408.13276}.

\bibitem[Team, 2024]{gemini}
Team, G. (2024).
\newblock Gemini: A family of highly capable multimodal models.
\newblock {\em arXiv preprint arXiv:2312.11805}.

\bibitem[Tian et~al., 2023]{tian-arxiv23}
Tian, Y., Wang, Y., Chen, B., and Du, S.~S. (2023).
\newblock Scan and snap: Understanding training dynamics and token composition in 1-layer transformer.
\newblock {\em Advances in Neural Information Processing Systems}, 36:71911--71947.

\bibitem[Trauger and Tewari, 2024]{trauger-tewari-aistats23}
Trauger, J. and Tewari, A. (2024).
\newblock Sequence length independent norm-based generalization bounds for transformers.
\newblock In {\em International Conference on Artificial Intelligence and Statistics}, pages 1405--1413. PMLR.

\bibitem[Vapnik, 2000]{vapnik-nature00}
Vapnik, V.~N. (2000).
\newblock {\em The Nature of Statistical Learning Theory}.
\newblock Springer.

\bibitem[Vapnik and Chervonenkis, 1968]{vapnik-chervonenkis-tpa71}
Vapnik, V.~N. and Chervonenkis, A.~Y. (1968).
\newblock The uniform convergence of frequencies of the appearance of events to their probabilities.
\newblock {\em Doklady Akademii Nauk SSSR}, 181(4):781--783.

\bibitem[Vardi, 2023]{vardi-acm23}
Vardi, G. (2023).
\newblock On the implicit bias in deep-learning algorithms.
\newblock {\em Commun. ACM}, 66(6):86–93.

\bibitem[Vaswani et~al., 2017]{vaswani-et-al-arxiv23}
Vaswani, A., Shazeer, N., Parmar, N., Uszkoreit, J., Jones, L., Gomez, A.~N., Kaiser, L., and Polosukhin, I. (2017).
\newblock Attention is all you need.
\newblock In Guyon, I., Luxburg, U.~V., Bengio, S., Wallach, H., Fergus, R., Vishwanathan, S., and Garnett, R., editors, {\em Advances in Neural Information Processing Systems}, volume~30. Curran Associates, Inc.

\bibitem[Vershynin, 2018]{vershynin_high-dimensional_2018}
Vershynin, R. (2018).
\newblock {\em High-Dimensional Probability: An Introduction with Applications in Data Science}.
\newblock Cambridge Series in Statistical and Probabilistic Mathematics. Cambridge University Press.

\bibitem[Vidal et~al., 2022]{vidal-et-al-madl22}
Vidal, R., Zhu, Z., and Haeffele, B.~D. (2022).
\newblock {\em Optimization Landscape of Neural Networks}, page 200–228.
\newblock Cambridge University Press.

\bibitem[Wen et~al., 2023]{wen2023sharpness}
Wen, K., Li, Z., and Ma, T. (2023).
\newblock Sharpness minimization algorithms do not only minimize sharpness to achieve better generalization.
\newblock In {\em Thirty-seventh Conference on Neural Information Processing Systems}.

\bibitem[Yang et~al., 2022]{yang-arxiv22}
Yang, Y., Wipf, D.~P., et~al. (2022).
\newblock Transformers from an optimization perspective.
\newblock {\em Advances in Neural Information Processing Systems}, 35:36958--36971.

\bibitem[Zhang et~al., 2021]{zhang-et-al-acm21}
Zhang, C., Bengio, S., Hardt, M., Recht, B., and Vinyals, O. (2021).
\newblock Understanding deep learning (still) requires rethinking generalization.
\newblock {\em Commun. {ACM}}, 64(3):107--115.

\bibitem[Zhang et~al., 2024]{zhang-arxiv23}
Zhang, R., Frei, S., and Bartlett, P.~L. (2024).
\newblock Trained transformers learn linear models in-context.
\newblock {\em Journal of Machine Learning Research}, 25(49):1--55.

\bibitem[Zhang et~al., 2020]{zhang-et-alel20}
Zhang, Z., Fang, J., Lin, J., Zhao, S., Xiao, F., and Wen, J. (2020).
\newblock Improved upper bound on the complementary error function.
\newblock {\em Electronics Letters}, 56(13):663--665.

\bibitem[Zhou et~al., 2024]{zhou-arxiv2024}
Zhou, Y., Alon, U., Chen, X., Wang, X., Agarwal, R., and Zhou, D. (2024).
\newblock Transformers can achieve length generalization but not robustly.
\newblock In {\em International Conference on Learning Representations}.

\end{thebibliography}


\section*{Checklist}

 \begin{enumerate}

 \item For all models and algorithms presented, check if you include:
 \begin{enumerate}
   \item A clear description of the mathematical setting, assumptions, algorithm, and/or model. [\textbf{Yes}/No/Not Applicable]
   \item An analysis of the properties and complexity (time, space, sample size) of any algorithm. [\textbf{Yes}/No/Not Applicable]
   \item (Optional) Anonymized source code, with specification of all dependencies, including external libraries. [Yes/No/\textbf{Not Applicable}]
 \end{enumerate}

 \item For any theoretical claim, check if you include:
 \begin{enumerate}
   \item Statements of the full set of assumptions of all theoretical results. [\textbf{Yes}/No/Not Applicable]
   \item Complete proofs of all theoretical results. [\textbf{Yes}/No/Not Applicable]
   \item Clear explanations of any assumptions. [\textbf{Yes}/No/Not Applicable]     
 \end{enumerate}

 \item For all figures and tables that present empirical results, check if you include:
 \begin{enumerate}
   \item The code, data, and instructions needed to reproduce the main experimental results (either in the supplemental material or as a URL). [Yes/No/\textbf{Not Applicable}]
   \item All the training details (e.g., data splits, hyperparameters, how they were chosen). [{Yes}/No/\textbf{Not Applicable}]
         \item A clear definition of the specific measure or statistics and error bars (e.g., with respect to the random seed after running experiments multiple times). [{Yes}/No/\textbf{Not Applicable}]
         \item A description of the computing infrastructure used. (e.g., type of GPUs, internal cluster, or cloud provider). [Yes/{No}/\textbf{Not Applicable}]
 \end{enumerate}

 \item If you are using existing assets (e.g., code, data, models) or curating/releasing new assets, check if you include:
 \begin{enumerate}
   \item Citations of the creator If your work uses existing assets. [Yes/No/\textbf{Not Applicable}]
   \item The license information of the assets, if applicable. [Yes/No/\textbf{Not Applicable}]
   \item New assets either in the supplemental material or as a URL, if applicable. [Yes/No/\textbf{Not Applicable}]
   \item Information about consent from data providers/curators. [Yes/No/\textbf{Not Applicable}]
   \item Discussion of sensible content if applicable, e.g., personally identifiable information or offensive content. [Yes/No/\textbf{Not Applicable}]
 \end{enumerate}

 \item If you used crowdsourcing or conducted research with human subjects, check if you include:
 \begin{enumerate}
   \item The full text of instructions given to participants and screenshots. [Yes/No/\textbf{Not Applicable}]
   \item Descriptions of potential participant risks, with links to Institutional Review Board (IRB) approvals if applicable. [Yes/No/\textbf{Not Applicable}]
   \item The estimated hourly wage paid to participants and the total amount spent on participant compensation. [Yes/No/\textbf{Not Applicable}]
 \end{enumerate}

 \end{enumerate}

\newpage

\onecolumn
\appendix

\aistatstitle{\vspace{-0.3cm}A Convex Relaxation Approach to Generalization Analysis for Parallel Positively Homogeneous Networks: \\
Supplementary Materials\vspace{-0.3cm}}
\vspace{-2cm}

In this supplementary material, we provide a detailed discussion of the rigorous technical aspects omitted from the main text.
Additionally, we present a comprehensive review of related works and include a few numerical experiments. 
Below is the table of contents for this appendix/supplementary material.
{\small\tableofcontents}

\section{CONVEX BOUNDS FOR LEARNING}\label{sec:apdx_thm1}

In this section, we discuss the proof for Theorem \ref{thm:thm1} that establishes the optimality
gaps in the empirical and population landscapes. First, we analyze the convexity of the induced regularizer, $\Omega(\cdot)$
and properties of the stationary points in
non-convex landscape. These are the key components of
our proof for Theorem \ref{thm:thm1}. We 
state a more general version of Assumption \ref{ass:a2}
by having the flexibility of the loss being strongly
convex to derive tighter results.
\renewcommand{\theassumption}{2'}
\begin{assumption}[Convex Loss]\label{ass:apdx_a2}
The loss $\ell(Y, \hat{Y})$ is second-order differentiable (written $\ell \in \C^2)$,
$\alpha$-strong and $L$-smooth w.r.t. $\hat{Y}$, i.e, for any $Y, \hat{Y} \in \R^{n_Y}$
\be
0 \preceq \alpha I_{n_Y} \preceq \nabla^2_{\hat{Y}}\ell(Y, \hat{Y}) \preceq L I_{n_Y}.
\ee
Additionally,
the gradient of the loss is bi-Lipschitz; that is, for all 
$Y_1, Y_2, \hat{Y}_1, \hat{Y}_2 \in \R^{n_Y}$
\be
\nmm{\nabla_{\hat{Y}}\ell(Y_2, \hat{Y}_2) - \nabla_{\hat{Y}}\ell(Y_1, \hat{Y}_1)}
\leq L \Big[\nmm{Y_2 - Y_1}_{2}
+\nmm{\hat{Y}_2 - \hat{Y}_1}_{2}\Big],
\ee
and the loss is constant if both the arguments are the same, i.e., for all $Y_1, Y_2 \in \R^{n_Y}$,
$\ell(Y_1, Y_1) = \ell(Y_2, Y_2)$.
\end{assumption}
Note that we allow $\alpha=0$, in which case we recover Assumption \ref{ass:a2}.  

\subsection{Induced Regularizer in Convex Space}

First, we show that the induced regularizer is convex in the function spaces through Proposition \ref{prop:ir_cvx}.

\begin{proposition}[Convexity of induced regularizer]\label{prop:ir_cvx}
Suppose assumptions \ref{ass:a1}-\ref{ass:apdx_a2} hold. Then $\Omega(f)$ is convex in $f$
in the space of functions
$\R^{n_X} \to \R^{n_Y}$.
\end{proposition}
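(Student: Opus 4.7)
The plan is to establish convexity directly from the definition by exploiting the ``balanced homogeneity'' from Assumption~\ref{ass:a2}, which allows non-negative scaling of $\phi$ and $\theta$ by the same factor. Because $\Omega$ is defined as an infimum over parametrizations, the natural strategy is: take near-optimal parametrizations of $f_1$ and $f_2$, build a single parametrization of the convex combination $\alpha f_1 + (1-\alpha)f_2$ by concatenating appropriately rescaled factors, and then bound $\Omega(\alpha f_1+(1-\alpha)f_2)$ by the resulting value of $\Theta_r$.

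More concretely, I would fix $f_1, f_2$ with $\Omega(f_1),\Omega(f_2)<\infty$ (the infinite case is trivial since the inequality to be proved becomes vacuous) and $\alpha\in[0,1]$. For any $\epsilon>0$, choose widths $r_1,r_2$ and parameters $\{W_j^{(1)}\}_{j=1}^{r_1}$, $\{W_j^{(2)}\}_{j=1}^{r_2}$ with $f_i(X)=\Phi_{r_i}(\{W_j^{(i)}\})(X)$ for all $X\in\X$ and $\Theta_{r_i}(\{W_j^{(i)}\})\leq \Omega(f_i)+\epsilon$. Using Assumption~\ref{ass:a2}, decompose each $W_j^{(i)}=(k_j^{(i)},h_j^{(i)})\in\mathcal{K}\times\mathcal{H}$, and define the rescaled parameters
\begin{equation}
\widetilde W_j^{(1)} := (\alpha^{1/p}k_j^{(1)},h_j^{(1)}),\qquad \widetilde W_j^{(2)} := ((1-\alpha)^{1/p}k_j^{(2)},h_j^{(2)}).
\end{equation}
Since $\alpha^{1/p},(1-\alpha)^{1/p}\geq 0$, the balanced homogeneity gives $\phi(\widetilde W_j^{(1)})=\alpha\,\phi(W_j^{(1)})$ and $\theta(\widetilde W_j^{(1)})=\alpha\,\theta(W_j^{(1)})$, and analogously with $1-\alpha$ for the second family.

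The next step is to concatenate these two families into a width-$(r_1+r_2)$ parametrization of $\alpha f_1+(1-\alpha)f_2$: by construction, summing the factor maps yields $\alpha\Phi_{r_1}(\{W_j^{(1)}\})+(1-\alpha)\Phi_{r_2}(\{W_j^{(2)}\})=\alpha f_1+(1-\alpha)f_2$ pointwise on $\X$, while the regularizer sums to $\alpha\,\Theta_{r_1}(\{W_j^{(1)}\})+(1-\alpha)\,\Theta_{r_2}(\{W_j^{(2)}\})$. By the definition of $\Omega$ as an infimum, this realization yields
\begin{equation}
\Omega(\alpha f_1+(1-\alpha)f_2)\;\leq\;\alpha\,\Theta_{r_1}(\{W_j^{(1)}\})+(1-\alpha)\,\Theta_{r_2}(\{W_j^{(2)}\})\;\leq\;\alpha\,\Omega(f_1)+(1-\alpha)\,\Omega(f_2)+\epsilon.
\end{equation}
Since $\epsilon>0$ was arbitrary, letting $\epsilon\downarrow 0$ concludes convexity.

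I do not anticipate a deep obstacle here; the proof is essentially a bookkeeping exercise that turns balanced $p$-homogeneity into a convexity statement after the $\beta=\alpha^{1/p}$ substitution. The only mild subtleties are (i) handling the endpoints $\alpha\in\{0,1\}$ (where the rescaling factor is $0$ and one needs $\phi((0\cdot k,h))=0$ and $\theta((0\cdot k,h))=0$, which follow by taking $\beta=0$ in Assumption~\ref{ass:a2} together with $\theta(0)=0$ from Assumption~\ref{ass:a1}), and (ii) the fact that the infimum defining $\Omega$ may not be attained, which is why the proof works with $\epsilon$-suboptimal parametrizations. No other structure (such as the Lipschitz or loss assumptions) is needed for this step; it is a purely algebraic consequence of Assumptions~\ref{ass:a1}--\ref{ass:a2}.
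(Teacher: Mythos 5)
Your proof is correct and uses the same underlying machinery as the paper's: the $\beta \mapsto \beta^{1/p}$ substitution in Assumption~\ref{ass:a2} to obtain degree-one scaling of $\phi$ and $\theta$, together with concatenation of $\epsilon$-suboptimal parametrizations. The only (cosmetic) difference is organizational: the paper first proves positive $1$-homogeneity of $\Omega$ and then sub-additivity, and deduces convexity from these two properties, whereas you merge both steps into a single direct verification of the Jensen inequality $\Omega(\alpha f_1+(1-\alpha)f_2)\le\alpha\Omega(f_1)+(1-\alpha)\Omega(f_2)$.
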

\begin{proof}
This proof is infinite dimensional
extension of \cite{haeffele-vidal-arxiv15}.
Recall the definition of induced
regularizer:
\be
\Omega(f) := \inf_{r, \{W_j\}}\Theta_r(\{W_j\})\text{ such that }f(X) = \Phi_r(\{W_j\}); \forall X \in \X.
\ee

Define the function class
\be
\F_{\Phi} := \{\Phi_r(\{W_j\}): r \in \mathbb{N}, W_j \in \mathcal{W}\}.
\ee
By definition if $f \notin \F_{\Phi}$ 
then $\Omega(f)$ evaluates to infinity.
Now suppose that
$\beta \geq 0$ and for any $f \in \F_{\Phi}$,

\be
\Omega(\beta f) = \inf_{r, \{W_j\}}\Theta_r(\{W_j\})\text{ such that }\beta f(X) = \Phi_r(\{W_j\}); \forall X \in \X,
\ee
Now by
Assumption \ref{ass:apdx_a2}, there exists $\hat{\beta}$ such that
$\beta \Phi_r(\{W_j\}) = \Phi_r(\{\hat{\beta} W_j\})$, and $\beta \Theta_r(\{W_j\}) = \Theta_r(\{\hat{\beta} W_j\})$ (throughout note that this scaling is applied only to the $\mathcal{W}_p$ subset of parameters from Assumption \ref{ass:a2}, but we do not notate this explicitly for brevity of notation). Now 
we perform a change of variables in the induced regularizer, obtaining
\be
\Omega(\beta f) = \inf_{r,  \{\hat{\beta} W_j\}}\Theta_r( \{\hat{\beta} W_j\})\text{ such that } \beta f(X) = \Phi_r( \{\hat{\beta} W_j\}); \forall X \in \X.
\ee
Then we have that
\be
\Omega(\beta f) = \inf_{r,  \{ W_j\}}\beta\Theta_r( \{W_j\})\text{ such that }  \beta f(X) = \beta \Phi_r( \{ W_j\}); \forall X \in \X
= \beta \Omega(f).
\ee
We have established
that the function $\Omega(\cdot)$
is 1-degree homogeneous. Now we prove
that the function $\Omega(\cdot)$
is sub-additive. Choose any $f_1, f_2 \in \F_{\Phi}$, because the case when either of
them is not in $\F_{\Phi}$ is trivially
sub-additive. Recall
\be
\Omega(f_1)=\inf_{r,  \{W_j\}}\Theta_r( \{W_j\})\text{ such that }  f_1(X) = \Phi_r( \{ W_j\}); \forall X \in \X,
\ee
\be
\Omega(f_2)=\inf_{r,  \{W_j\}}\Theta_r( \{ W_j\})\text{ such that }  f_2(X) = \Phi_r( \{ W_j\}); \forall X \in \X,
\ee
\be
\Omega(f_1 + f_2)=\inf_{r,  \{W_j\}}\Theta_r( \{ W_j\})\text{ such that }  f_1(X) + f_2(X) = \Phi_r( \{ W_j\}); \forall X \in \X.
\ee

For any $\epsilon > 0$ let $(r_1,\{W_j^1\})$ and $(r_2,\{W_j^2\})$ be parameters which come within $\epsilon$ of the infimum in the optimization problems for $\Omega(f_1)$ and $\Omega(f_2)$ respectively. Then note that 
\be
\Omega(f_1 + f_2) \leq \Theta_{r_1}(\{W_j^1\}) + \Theta_{r_2}(\{W_j^2\}) \leq \Omega(f_1) + \Omega(f_2) + 2\epsilon.
\ee
Letting $\epsilon \rightarrow 0$ gives that $\Omega(f_1 + f_2) \leq \Omega(f_1) + \Omega(f_2)$. Thus, as $\Omega(\cdot)$ is both positively homogenous with degree one
and sub-additive, it is convex.
\end{proof}

From the above proposition we have that $\Omega(\cdot)$ is a convex function, therefore we have that $C_{\cdot}(\cdot)$
is indeed a convex function in the prediction functions space. Our results primarily depend upon the optimal
regularization of the globally optimal solution of a convex function, $C_{\cdot}(\cdot)$. 
As we operate in the space of functions, it is very unlikely that we have the knowledge of
the global optima. Nevertheless, by exploiting the convexity of $C_{\cdot}(\cdot)$ we can
upper bound the optimal regularization. Proposition \ref{prop:regular} establishes
the upper bound for the optimal regularization for regression loss.

\begin{proposition}\label{prop:regular}
Consider $\ell(Y_1, Y_2) = \frac{1}{2}\nmm{Y_1-Y_2}_2^2$,
$\{W_j\} \in \F_{\mathcal{W}}$. Suppose
$X \sim \mu$, $\epsilon$ is random variable
such that $\mathbb{E}[\epsilon] = 0$ and independent
from $x$. Let $Y = \Phi_{r}(\{W_j\})(X) + \epsilon$, and suppose $f_{\mu}^*$ is the global optimal
solution of $C_{\mu}(\cdot)$. Then we have
\be
\Omega(\Phi_{r}(\{W_j\})) \geq \Omega(f_{\mu}^*).
\ee
\end{proposition}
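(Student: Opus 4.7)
The plan is to exploit the global optimality of $f_{\mu}^*$ for the convex objective $C_{\mu}$ by comparing its value to $C_{\mu}$ evaluated at the ground-truth predictor $\Phi_r(\{W_j\})$, and then use the noise assumptions to peel off a nonnegative term.

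First I would write down the defining optimality inequality: since $f_{\mu}^*$ is the global minimizer of $C_{\mu}(\cdot)$ over $L^2(\mu)$ and $\Phi_r(\{W_j\})$ is a competitor in that space (it lies in $\mathcal{F}_{\Phi}$, so $\Omega(\Phi_r(\{W_j\})) < \infty$), we have
\begin{equation}
C_{\mu}(f_{\mu}^*) \;\leq\; C_{\mu}\bigl(\Phi_r(\{W_j\})\bigr).
\end{equation}
Substituting $Y = \Phi_r(\{W_j\})(X) + \epsilon$ into the right-hand side, the squared loss collapses to $\tfrac{1}{2}\mathbb{E}[\|\epsilon\|_2^2]$, giving $C_{\mu}(\Phi_r(\{W_j\})) = \tfrac{1}{2}\mathbb{E}[\|\epsilon\|_2^2] + \lambda\,\Omega(\Phi_r(\{W_j\}))$.

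Next I would expand the left-hand side. Writing $Y - f_{\mu}^*(X) = [\Phi_r(\{W_j\})(X) - f_{\mu}^*(X)] + \epsilon$ and taking the squared norm, the cross term vanishes because $\mathbb{E}[\epsilon] = 0$ and $\epsilon \perp X$:
\begin{equation}
\mathbb{E}\|Y - f_{\mu}^*(X)\|_2^2 \;=\; \mathbb{E}\|\Phi_r(\{W_j\})(X) - f_{\mu}^*(X)\|_2^2 + \mathbb{E}\|\epsilon\|_2^2.
\end{equation}
Hence $C_{\mu}(f_{\mu}^*) = \tfrac{1}{2}\mathbb{E}\|\Phi_r(\{W_j\})(X) - f_{\mu}^*(X)\|_2^2 + \tfrac{1}{2}\mathbb{E}\|\epsilon\|_2^2 + \lambda\,\Omega(f_{\mu}^*)$.

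Finally, substituting both expressions into the optimality inequality, the noise variance terms $\tfrac{1}{2}\mathbb{E}\|\epsilon\|_2^2$ cancel, leaving
\begin{equation}
\tfrac{1}{2}\mathbb{E}\|\Phi_r(\{W_j\})(X) - f_{\mu}^*(X)\|_2^2 + \lambda\,\Omega(f_{\mu}^*) \;\leq\; \lambda\,\Omega(\Phi_r(\{W_j\})).
\end{equation}
Dropping the nonnegative squared-distance term and dividing by $\lambda > 0$ yields the claim. There is no real obstacle here; the only subtlety is making sure the independence and zero-mean of $\epsilon$ are invoked precisely to eliminate the cross term in the expansion, and that $\Phi_r(\{W_j\})$ is indeed a feasible point in the domain of $C_{\mu}$ so the optimality comparison is valid.
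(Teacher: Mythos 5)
Your proof is correct and takes essentially the same route as the paper's: compare $C_\mu(f_\mu^*)$ to $C_\mu(\Phi_r(\{W_j\}))$ via global optimality, expand the squared loss, kill the cross term using $\mathbb{E}[\epsilon]=0$ and independence, cancel the noise variance, and drop the nonnegative distance term. If anything, your version is cleaner: the paper's "rearranging" step silently discards the nonnegative $\tfrac{1}{2}\mathbb{E}\|\Phi_r(\{W_j\})(X)-f_\mu^*(X)\|^2$ term (and has a minor coefficient slip on the cross term), whereas you make both the expansion and the final dropped term explicit.
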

\begin{proof}
As $f_{\mu}^*$ is the global optimal solution, we have that
\be
\begin{split}
\mathbb{E}\left[\frac{1}{2}\nmm{\Phi_r(\{W_j\})(X)+\epsilon-f_{\mu}^*(X)}_2^2\right] + \l \Omega(f_{\mu})
&\leq \mathbb{E}\left[\frac{1}{2}\nmm{\Phi_r(\{W_j\})(X)
+\epsilon-\Phi_r(\{W_j\})(X)}_2^2\right]\\
&\quad + \l \Omega(\Phi_{r}(\{W_j\}))
\end{split}
\ee
Now, by re-arranging the terms we obtain
\be
\mathbb{E}\left[\frac{1}{2}\IP{\Phi_r(\{W_j\})(X)-f_{\mu}^*(X)}{\epsilon}\right] 
+ \l \Omega(f_{\mu})
\leq 
\l \Omega(\Phi_{r^*}(\{W_j\})).
\ee
As $\epsilon$ is independent of $X$,
\be
\frac{1}{2}\IP{\mathbb{E}_X\left[\Phi_r(\{W_j\})(X)-f_{\mu}^*(X)\right]}{\mathbb{E}_{\epsilon}\left[\epsilon\right]}
+ \l \Omega(f_{\mu})
\leq 
\l \Omega(\Phi_{r^*}(\{W_j\})).
\ee
Then we have
\be
\Omega(f_{\mu}) \leq \Omega(\Phi_{r}(\{W_j\})).
\ee
\end{proof}

\subsection{Proof of Theorem \ref{thm:thm1}}

Optimization algorithms used to optimize DNNs try to find the set of parameters that 
are first-order optimal. However, we do not have a guarantee that these points are saddle/local minima/global minima.
In proposition \ref{prop:stat_points}, we provide properties that
any first-order optimal satisfies for positively homogeneous networks.

\begin{proposition}[Stationary Points]\label{prop:stat_points}
Under assumption \ref{ass:a2}, if $\{W_j\}$ are stationary points of $\NC_{\mu}(\cdot)$, then for all $j \in [r],$
\be
\IP{-\frac{1}{\l}\nabla_{\hat{Y}}\ell(g, \Phi_r(\{W_j\}))}{\phi(W_j)} = \theta(W_j).
\ee
\end{proposition}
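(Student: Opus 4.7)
The plan is to exploit the balanced positive homogeneity of $\phi$ and $\theta$ in the $\mathcal{K}$-subset of parameters (Assumption \ref{ass:a2}) by testing stationarity against the natural one-parameter scaling family. Decompose $W_j = (\mathbf{k}_j, \mathbf{h}_j) \in \mathcal{K}\times \mathcal{H}$ and, for a fixed index $j$, consider the univariate auxiliary function
\begin{equation}
F_j(\beta) := \NC_{\mu}\bigl(\{W_i\}_{i\neq j}\,\cup\,\{(\beta\mathbf{k}_j, \mathbf{h}_j)\}\bigr), \qquad \beta \geq 0.
\end{equation}
Since $\{W_j\}$ is stationary for $\NC_\mu$, the one-sided derivative of $F_j$ at $\beta = 1$ must vanish.

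Next, I would expand $F_j$ using Assumption \ref{ass:a2}: by balanced $p$-homogeneity we have $\phi((\beta\mathbf{k}_j,\mathbf{h}_j)) = \beta^p \phi(W_j)$ and $\theta((\beta\mathbf{k}_j,\mathbf{h}_j)) = \beta^p \theta(W_j)$, so
\begin{equation}
F_j(\beta) = \mathbb{E}_{(X,Y)}\!\left[\ell\!\left(Y,\ \textstyle\sum_{i\neq j}\phi(W_i)(X) + \beta^p \phi(W_j)(X)\right)\right] + \lambda \sum_{i\neq j}\theta(W_i) + \lambda\beta^p \theta(W_j).
\end{equation}
Differentiating in $\beta$ and evaluating at $\beta = 1$ (using $\ell \in \mathcal{C}^2$ from Assumption \ref{ass:a4} and the chain rule, with dominated convergence for interchanging expectation and derivative) yields
\begin{equation}
F_j'(1) = p\,\bigl\langle \nabla_{\hat Y}\ell\bigl(g, \Phi_r(\{W_j\})\bigr),\, \phi(W_j)\bigr\rangle_{\mu} + p\lambda\,\theta(W_j) = 0.
\end{equation}
Dividing by $p\lambda > 0$ and rearranging gives exactly the claimed identity.

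The main potential obstacle is the justification that stationarity of $\NC_\mu$ at $\{W_j\}$ implies $F_j'(1) = 0$, since the reparametrization $\beta \mapsto (\beta\mathbf{k}_j,\mathbf{h}_j)$ is only a one-dimensional slice of the full parameter space. This is essentially an application of Euler's identity for positively homogeneous maps: if $\nabla_{\mathbf{k}_j}\NC_\mu = 0$, then in particular the directional derivative along $\mathbf{k}_j$ itself vanishes, and by homogeneity this directional derivative equals $p$ times the left-hand side of the claim. A secondary technicality is the interchange of $\mathbb{E}$ and $\tfrac{d}{d\beta}$, which is immediate under the smoothness and boundedness conditions of Assumptions \ref{ass:a4} and the ambient regularity of $\phi$; once these are in place, the computation above closes the proof with no further ingredients.
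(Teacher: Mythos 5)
Your proposal is correct and takes essentially the same approach as the paper: both exploit the balanced positive homogeneity of $\phi$ and $\theta$ together with stationarity, differing only in presentation — you construct the one-parameter scaling family $\beta \mapsto (\beta\mathbf{k}_j, \mathbf{h}_j)$ and differentiate, while the paper invokes Euler's identity $\langle \mathbf{w}_i, \partial_{\mathbf{w}_i}\theta\rangle = p_i\theta$ directly, but these are equivalent derivations of the same radial stationarity condition.
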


\begin{proof}
This proof is similar to that of Proposition 2 in \cite{haeffele-vidal-tpami20} but applied to a general class of
(\textit{slightly}) positively homogeneous functions (see assumption \ref{ass:a2}).

From assumption \ref{ass:a2}, there exists a subset of parameters where both $\theta$ and $\phi$ are positively homogeneous.  Let $\w_i$ be the subset of parameters in $\mathcal{W}_p$ from assumption \ref{ass:a2}.  Then we have
\be
\begin{split}
\IP{\w_i}{\partial_{\w_i}\theta(\w_1, \dots, \w_i, \dots, \w_n)}
= \lim_{\epsilon \to 0}&\left[\frac{\theta(\w_1, \dots, (1+\epsilon)\w_i, \dots, \w_n)}{\epsilon} 
-\frac{\theta(\w_1, \dots, \w_i, \dots, \w_n)}{\epsilon}\right].
\end{split}
\ee

Let $p_i$ be the homogeneous degree of the parameters $\w_i$.
Note that $\partial_{\w_i}\theta(\w_1, \dots, \w_i, \dots, \w_n) \in \R^{dim(\w_i) \times 1}$,
$\partial_{\w_i}\phi(\w_1, \dots, \w_i, \dots, \w_n) \in \R^{dim(\w_i) \times n_Y}$. 
Then

\begin{align}
\IP{\w_i}{\partial_{\w_i}\theta(\w_1, \dots, \w_i, \dots, \w_n)} &= \theta(\w_1, \dots, \w_i, \dots, \w_n)\lim_{\epsilon \to 0}\frac{(1+\epsilon)^{p_i}-1}{\epsilon}, \\
&= p_i\theta(\w_1, \dots, \w_i, \dots, \w_n). \label{derivative_expression}
\end{align}
Similarly, following a similar argument for $\phi$ we obtain 
\begin{align}
\IP{\partial_{\w_i}\phi(\w_1, \dots, \w_i, \dots, \w_n)}{\w_i} = p_i\phi(\w_1, \dots, \w_i, \dots, \w_n).
\end{align}
As $W_j$ are the stationary points we have that
\begin{align}
0 \in \partial_{W_j}\Phi_r(\{W_j\})\nabla_{\hat{Y}}\ell(g, \Phi_r(\{W_j\}))_{\mu} + \l \partial_{W_j}\Theta_r(\{W_j\}).
\end{align}
Since $\Phi_r(\{W_j\}) = \sum_{j=1}^{r}\phi(W_j)$, we have that $ \partial_{W_j}\Phi_r(\{W_j\}) = \partial_{W_j}\phi(W_j)$. Similarly, $\partial_{W_j}\Theta_r(\{W_j\}) = \partial_{W_j}\theta(W_j)$ holds true.  Consequently, 
\begin{align}
0 \in \partial_{W_j}\phi(W_j)\nabla_{\hat{Y}}\ell(g, \Phi_r(\{W_j\}))_{\mu} + \l \partial_{W_j}\theta(W_j).
\end{align}
Letting $W_j = \begin{bmatrix}
\w_1 & \dots & \w_n
\end{bmatrix}$, for all $\w_i$ it holds that  
\begin{align}
0 \in \partial_{\w_i}\phi(W_j)\nabla_{\hat{Y}}\ell(g, \Phi_r(\{W_j\}))_{\mu} + \l \partial_{\w_i}\theta(W_j).
\end{align}
Taking the inner product of the above equation with $\w_i$ , when $p_i \neq 0$ we have that
\begin{align}
0 \in {\w_i}^T\partial_{\w_i}\phi(W_j)\nabla_{\hat{Y}}\ell(g, \Phi_r(\{W_j\}))_{\mu} + \l {\w_i}^T\partial_{\w_i}\theta(W_j).
\end{align}
From \eqref{derivative_expression} we have that
\begin{align}
0 = p_i\phi(W_j)^T\nabla_{\hat{Y}}\ell(g, \Phi_r(\{W_j\}))_{\mu} + \l p_i \theta(W_j).
\end{align}
Rearranging, we obtain
\begin{align}
\IP{-\frac{1}{\l}\nabla_{\hat{Y}}\ell(g, \Phi_r(\{W_j\}))}{\phi(W_j)} = \theta(W_j), 
\end{align}
which holds for all $j \in [r]$.
\end{proof}

Proposition \ref{prop:stat_points} establishes that at any
stationary point, the inner product between the prediction errors and the
predictions equates to the the current regularization. Next, we exploit
this property of stationary points that enable us to tie the non-convex
landscape to its convex counterpart. Lemma \ref{lemma:opt_gap}
establishes the difference between the non-convex and convex objective
values at stationary points.

\begin{lemma}[Optimality Gap]\label{lemma:opt_gap}
Let $\ell(\cdot,\cdot)$ denote any $L$-smooth, and $\alpha$-strongly convex loss function, let $q$ be some measure, and suppose that $\{W_j\}$ is a stationary point of $\NC_{q}(\{W_j\})$.  Let $f^{*}_{q}$ denote the global minimizer of $\CP_{q}(\cdot)$.  
Then for any $f \in L^2(q)$, we have
that
\begin{align}\label{eq:ncvx_cvx}
\CP_{q}(f^*_{q}) \leq \NC_{q}(\{W_j\}) \leq 
\CP_{q}(f) &+ \l \Omega_{q}(f)\left[\Omega_{q}^{\circ}\left(-\frac{1}{\l}\nabla_{\hat{Y}}\ell\left(g, \Phi_r(\{W_j\})\right)\right)-1\right]
 - \frac{\alpha}{2}\nmm{f-\Phi_r(\{W_j\})}_{q}^2
\end{align}

\end{lemma}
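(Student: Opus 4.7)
I would prove the two inequalities separately. The left inequality is essentially a direct consequence of the fact that $\CP_q$ is a convex lower bound on $\NC_q$ together with Proposition \ref{prop:ir_cvx}; the right inequality is where the work lies and combines the stationary-point identity of Proposition \ref{prop:stat_points}, the $\alpha$-strong convexity of $\ell$, and a polar-duality inequality. Throughout, set $\hat f := \Phi_r(\{W_j\})$. For the left inequality, $\Omega(\hat f) \leq \Theta_r(\{W_j\})$ by the definition of $\Omega$, so $\CP_q(\hat f) \leq \NC_q(\{W_j\})$; and since $f^*_q$ globally minimizes $\CP_q$, $\CP_q(f^*_q) \leq \CP_q(\hat f) \leq \NC_q(\{W_j\})$.

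For the right inequality, it suffices to consider $f$ with $\Omega(f) < \infty$ (otherwise $\CP_q(f) = \infty$ and the bound is trivial). Summing the per-factor stationarity identity from Proposition \ref{prop:stat_points} over $j \in [r]$ yields $\l\Theta_r(\{W_j\}) = -\langle \nabla_{\hat Y}\ell(g,\hat f), \hat f\rangle_q$, so $\NC_q(\{W_j\}) = \langle \ell(g,\hat f)\rangle_q - \langle \nabla_{\hat Y}\ell(g,\hat f), \hat f\rangle_q$. Applying the pointwise $\alpha$-strong convexity bound $\ell(g,f) \geq \ell(g,\hat f) + \langle \nabla_{\hat Y}\ell(g,\hat f), f-\hat f\rangle + \frac{\alpha}{2}\nmm{f-\hat f}^2$ and integrating against $q$, then substituting and cancelling the $\langle \nabla_{\hat Y}\ell, \hat f\rangle_q$ terms, leads to
\[
\NC_q(\{W_j\}) \leq \langle \ell(g,f)\rangle_q - \langle \nabla_{\hat Y}\ell(g,\hat f), f\rangle_q - \frac{\alpha}{2}\nmm{f-\hat f}_q^2.
\]
The final step is to bound the linear term via $-\langle \nabla_{\hat Y}\ell(g,\hat f), f\rangle_q \leq \l\, \Omega_q^\circ(-\nabla_{\hat Y}\ell(g,\hat f)/\l)\,\Omega(f)$, then add and subtract $\l\Omega(f)$ so as to recombine $\langle \ell(g,f)\rangle_q + \l\Omega(f) = \CP_q(f)$ and obtain the stated bound.

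The main technical obstacle is the polar inequality used in the last step, which has to be derived in the prediction-space setting from the parametric definition $\Omega_q^\circ(v) = \sup_{\theta(W) \leq 1} \langle v, \phi(W)\rangle_q$. Given any parametrization $\{W_j\}$ of $f$, I would use the $p$-homogeneity in Assumption \ref{ass:apdx_a2} to normalize each factor via $\beta_j = \theta(W_j)^{-1/p}$ so that $\theta$ of the rescaled parameter equals $1$; this yields $\langle v, \phi(W_j)\rangle_q \leq \theta(W_j)\,\Omega_q^\circ(v)$. Summing over $j$ and taking the infimum over all parametrizations produces $\langle v, f\rangle_q \leq \Omega_q^\circ(v)\,\Omega(f)$. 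The edge case $\theta(W_j) = 0$ is handled by the boundedness clause of Assumption \ref{ass:apdx_a2}, which forces $\phi(W_j) = 0$ and hence zero contribution. With the polar inequality in hand, everything else is a routine assembly, and the strong-convexity step is precisely what produces the additional $-\frac{\alpha}{2}\nmm{f-\hat f}_q^2$ term over the purely convex ($\alpha = 0$) case.
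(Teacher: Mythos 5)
Your proof is correct and the right-hand inequality follows exactly the paper's route: pointwise $\alpha$-strong convexity integrated against $q$, the stationarity identity of Proposition \ref{prop:stat_points} summed over $j$, the polar inequality $\langle v, f\rangle_q \leq \Omega(f)\,\Omega_q^{\circ}(v)$, and finally adding and subtracting $\lambda\Omega(f)$ to expose $\CP_q(f)$. Your left-hand inequality is a genuinely shorter route: you use the definitional bound $\Omega(\Phi_r(\{W_j\})) \leq \Theta_r(\{W_j\})$ to conclude $\CP_q(\Phi_r(\{W_j\})) \leq \NC_q(\{W_j\})$ directly for the given stationary point, whereas the paper detours through the unconstrained non-convex minimizer $(r^*,\{W_j^*\})$ and chains $\CP_q(f^*_q) \leq \CP_q(\Phi_{r^*}(\{W_j^*\})) \leq \NC_q(\{W_j^*\}) \leq \NC_q(\{W_j\})$; both are valid, yours is tighter bookkeeping. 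You also supply the homogeneity-normalization derivation of the polar inequality, including the $\theta(W_j)=0$ edge case, whereas the paper defers this step to Proposition \ref{prop:polar_prop} (``Polar Properties''), whose statement body is left empty in the supplementary material; writing that argument out is the right call.
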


\begin{proof}
The loss $\ell(Y, \hat{Y})$ is $(L, \l)$-convex in $\hat{Y}$. Therefore,
for any functions $g: \X \times E \to \Y$, and $f_1, f_2  \in L^2(q)$ we have 
that
\begin{align}
\ell\left(g(X, \epsilon), f(X)\right) &\geq 
\ell\left(g(X, \epsilon), \Phi_r(\{W_j\})(X)\right) \\
&+ \IP{\nabla_{\hat{Y}}\ell\left(g(X, \epsilon), \Phi_r(\{W_j\})(X)\right)}{f(X)-\Phi_r(\{W_j\})(X)}_{\Y} \\
&+ \frac{\alpha}{2}\nmm{f(X)-\Phi_r(\{W_j\})(X)}_{\Y}^2.
\end{align}
Taking expectations of both sides with respect to the probability measure $q$, we have that
\begin{align}
\ell(g, f)_{q} \geq \ell\left(g, \Phi_r(\{W_j\})\right)_{q} + \IP{\nabla_{\hat{Y}}\ell\left(g, \Phi_r(\{W_j\})\right)}{f-\Phi_r(\{W_j\})}_{q}
+ \frac{\alpha}{2}\nmm{f-\Phi_r(\{W_j\})}_{q}^2. \label{eq:popineq}
\end{align}

As the $\{W_j\}$ are the stationary points of $\NC_{q}(\{W_j\})$ from Proposition \ref{prop:stat_points}  we have that for all $j \in [r]$
\begin{align}
\IP{-\frac{1}{\l}\nabla_{\hat{Y}}\ell\left(g, \Phi_r(\{W_j\})\right)}{\phi(W_j)}_{q} = \theta(W_j).
\end{align}
Summing the above identity up overall $j$, it holds that
\begin{align}
 \IP{-\frac{1}{\l}\nabla_{\hat{Y}}\ell\left(g, \Phi_r(\{W_j\})\right)}{\Phi_r(\{W_j\})}_{q} = \Theta_r(\{W_j\}).
\end{align}
Therefore, plugging this identity into the inequality \eqref{eq:popineq}, we have that
\begin{align}
\ell(g, f)_{q} \geq \underbrace{\ell\left(g, \Phi_r(\{W_j\})\right)_{q} + \l \Theta_r(\{W_j\})}_{\NC_{q}(\{W_j\})} + \IP{\nabla_{\hat{Y}}\ell\left(g, \Phi_r(\{W_j\})\right)}{f}_{q} + \frac{\alpha}{2}\nmm{f-\Phi_r(\{W_j\})}_{q}^2,
\end{align}
which implies that 
\begin{align}
\ell(g, f)_{q} +
\l \IP{-\frac{1}{\l}\nabla_{\hat{Y}}\ell\left(g, \Phi_r(\{W_j\})\right)}{f}_{q} \geq {NC_{q}(\{W_j\})}
+ \frac{\alpha}{2}\nmm{f-\Phi_r(\{W_j\})}_{q}^2.
\end{align}

We have established from Proposition \ref{prop:ir_cvx} that $\Omega$ is a convex function. As a well-known result from convex
analysis (see Proposition \ref{prop:polar_prop}) we have
that for any convex function $\Omega$ and any $f,g \in L^2(q)$, it holds that $\IP{f}{g}_{q} \leq \Omega_{q}(f) \Omega_{q}^{\circ}(g)$. Consequently,
\begin{align}
\ell(g, f)_{q} + \l \Omega_{q}(f)\Omega_{q}^{\circ}\left(-\frac{1}{\l}\nabla_{\hat{Y}}\ell\left(g, \Phi_r(\{W_j\})\right)\right)
\geq {NC_{q}(\{W_j\})}
+ \frac{\alpha}{2}\nmm{f-\Phi_r(\{W_j\})}_{q}^2.
\end{align}
Therefore, rearranging,
\begin{align}
\underbrace{\ell(g, f)_{q} + \l \Omega_{q}(f)}_{\CP_{q}(f)}  + \l \Omega_{q}(f)\left[\Omega_{q}^{\circ}\left(-\frac{1}{\l}\nabla_{\hat{Y}}\ell\left(g, \Phi_r(\{W_j\})\right)\right)-1\right]
\geq {NC_{q}(\{W_j\})}
+ \frac{\alpha}{2}\nmm{f-\Phi_r(\{W_j\})}_{q}^2,
\end{align}
and, as a result,
\be\label{eq:ncvx_upper}
\NC_{q}(\{W_j\}) \leq \CP_{q}(f) + \l \Omega_{q}(f)\left[\Omega_{q}^{\circ}\left(-\frac{1}{\l}\nabla_{\hat{Y}}\ell\left(g, \Phi_r(\{W_j\})\right)\right)-1\right]
- \frac{\alpha}{2}\nmm{f-\Phi_r(\{W_j\})}_{q}^2.
\ee
Let $f^*_{q} = \arg\min_{f}\CP_{q}(f)$,  and 
$(r^*, \{W_j^*\}) = \argmin_{r, \{W_j\}}NC_{q}(\{W_j\})$.  Since $f_{q}^{*}$ is the minimizer of $C_{q}(f)$, it holds that
\begin{align}
\CP_{q}(f^*_{q}) \leq \CP_{q}(\Phi_{r^*}(\{W_j^*\}))
= \ell(g, \Phi_{r^*}(\{W_j^*\}))_{q}
+ \l \Omega_{q}(\Phi_{r^*}(\{W_j^*\})).
\end{align}
Therefore, we obtain
\begin{align}
\CP_{q}(f^*_{q}) &\leq
\ell(g, \Phi_{r^*}(\{W_j^*\}))_{q}
+ \l \Omega_{q}(\Phi_{r^*}(\{W_j^*\})) \\
&\leq \ell(g, \Phi_{r^*}(\{W_j^*\}))_{q} + \l \Theta_{r^*}(\{W_j^*\}) \\
&= \NC_{q}(\{W_j^*\}) \\
&\leq \NC_{q}(\{W_j\}). \label{eq:min_cvx}
\end{align}
Therefore, combining Equations \eqref{eq:min_cvx} and \eqref{eq:ncvx_upper}, we obtain the bound
\begin{align}
\CP_{q}(f^*_{q}) \leq \NC_{q}(\{W_j\}) \leq 
\CP_{q}(f) 
&+ \l \Omega_{q}(f)\left[\Omega_{q}^{\circ}\left(-\frac{1}{\l}\nabla_{\hat{Y}}\ell\left(g, \Phi_r(\{W_j\})\right)\right)-1\right]
- \frac{\alpha}{2}\nmm{f-\Phi_r(\{W_j\})}_{q}^2.
\end{align}
\end{proof}

Lemma \ref{lemma:opt_gap} has established that
the non-convex objective, $\NC_{q}(\cdot)$ is both upper and lower bounded
by the convex function, $\CP_{q}(\cdot)$. Now, we utilize this
result to compute the empirical gap with the measure,
$\mu_N$ for the stationary
points obtained from the ERM. On these
stationary points, we bound the optimality
gap by changing the measure to $\mu$,
i.e., the behavior of ERM's first-order points
on population landscape.

\begin{theorem}[Global Optimality]
Under assumptions \ref{ass:a1}, \ref{ass:apdx_a2}, \ref{ass:a4}.
Let $f^*_{\mu_N}$ (or $f^*_{\mu}$) be the 
global minimizer for $\CP_{\mu_N}(\cdot)$ (or $\CP_{\mu}(\cdot))$.
For any
stationary points, $(r, \{W_j\})$ 
of the function $\NC_{\mu_{N}}(\cdot)$
and any $f \in L^2(\mu) \cap L^2(\mu_N)$ the
following items are true:
\begin{enumerate}
\item[1.] Empirical optimality gap:
\end{enumerate}
\be \label{eq:apdx_ncvx_emp_gap}
\CP_{\mu_{N}}(f^*_{\mu_{N}}) \leq \NC_{\mu_{N}}(\{W_j\}) \leq 
\CP_{\mu_{N}}(f) + \l \Omega(f)\left[\Omega_{\mu_{N}}^{\circ}\left(-\frac{1}{\l}\nabla_{\hat{Y}}\ell\left(g, \Phi_r(\{W_j\})\right)\right)-1\right]
- \frac{\alpha}{2}\nmm{f-\Phi_r(\{W_j\})}_{\mu_{N}}^2,
\ee
\begin{enumerate}
\item[2.] Population optimality gap:
\end{enumerate}
\bd
\CP_{\mu}(f^*_{\mu}) \leq \NC_{\mu}(\{W_j\}) \leq 
\CP_{\mu}(f) + \l \Omega(f)\left[\Omega_{\mu}^{\circ}\left(-\frac{1}{\l}\nabla_{\hat{Y}}\ell\left(g, \Phi_r(\{W_j\})\right)\right)-1\right]
- \frac{\alpha}{2}\nmm{f-\Phi_r(\{W_j\})}_{\mu}^2
\ed
\be 
+ \left[\IP{\nabla_{\hat{Y}}\ell\left(g, \Phi_r(\{W_j\})\right)}{\Phi_r(\{W_j\})}_{\mu}
- \IP{\nabla_{\hat{Y}}\ell\left(g, \Phi_r(\{W_j\})\right)}{\Phi_r(\{W_j\})}_{\mu_N}\right].
\ee
where $\Omega_{q}^{\circ}(\cdot)$ is
 the polar in the measure $q$ defined as
\be 
\Omega_{q}^{\circ}(g) := \sup_{\theta(W) \leq 1}\IP{g}{\phi(W)}_{q}
\ee
\end{theorem}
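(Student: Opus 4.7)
The plan is to obtain the empirical bound as a direct application of Lemma \ref{lemma:opt_gap} with $q=\mu_N$, and to adapt that lemma's argument for the population bound, where the key subtlety is that $\{W_j\}$ is stationary only for $\NC_{\mu_N}$, not for $\NC_\mu$.

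For Part 1, I would simply set $q=\mu_N$ in Lemma \ref{lemma:opt_gap}: since $\{W_j\}$ is stationary for $\NC_{\mu_N}$ and $f^*_{\mu_N}$ minimizes $\CP_{\mu_N}$, the empirical bound \eqref{eq:apdx_ncvx_emp_gap} is exactly the lemma's conclusion, no further work required.

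For Part 2, the lower bound $\CP_\mu(f^*_\mu)\leq \NC_\mu(\{W_j\})$ is immediate: by definition of $\Omega$ as an infimum, $\CP_\mu(\Phi_r(\{W_j\}))\leq \NC_\mu(\{W_j\})$, and $f^*_\mu$ minimizes $\CP_\mu$. For the upper bound, I would begin with the $\alpha$-strongly convex, $L$-smooth expansion of $\ell(g,f)(X)$ around $\hat Y=\Phi_r(\{W_j\})(X)$ and take expectation under $\mu$ (rather than under $\mu_N$ as in the lemma), yielding
\begin{align*}
\ell(g,f)_\mu \geq \ell(g,\Phi_r(\{W_j\}))_\mu &+ \IP{\nabla_{\hat Y}\ell(g,\Phi_r(\{W_j\}))}{f - \Phi_r(\{W_j\})}_\mu \\
&+ \tfrac{\alpha}{2}\nmm{f - \Phi_r(\{W_j\})}_\mu^2.
\end{align*}
The stationary identity from Proposition \ref{prop:stat_points} provides $\IP{-\nabla_{\hat Y}\ell(g,\Phi_r(\{W_j\}))/\l}{\Phi_r(\{W_j\})}_{\mu_N} = \Theta_r(\{W_j\})$, but only under $\mu_N$. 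So I would split the $\mu$-inner product with $\Phi_r(\{W_j\})$ appearing in the linearization as its $\mu_N$ counterpart plus a measure-discrepancy term, substitute $-\l\Theta_r(\{W_j\})$ for the $\mu_N$ piece, and then apply the polar inequality $\IP{-\nabla_{\hat Y}\ell/\l}{f}_\mu \leq \Omega(f)\,\Omega^{\circ}_\mu(-\nabla_{\hat Y}\ell/\l)$ to bound the remaining cross term in $f$. Rearranging as in Lemma \ref{lemma:opt_gap} recovers $\CP_\mu(f)$, the polar term with the $-1$ offset, the $\alpha$-quadratic remainder, and precisely the leftover measure-discrepancy term appearing in the theorem statement.

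The main obstacle is bookkeeping the measure changes and signs cleanly: the stationary-point identity lives on $\mu_N$ while every other integration is over $\mu$, so the polar bound and the $\alpha$-quadratic term must both be tracked on $\mu$ while the $\Theta_r$ cancellation is engineered through the $\mu_N$ identity. Everything else is the same algebraic manipulation as in Lemma \ref{lemma:opt_gap}; the novelty is purely that the add-and-subtract trick produces the extra inner-product gap $\IP{\nabla_{\hat Y}\ell(g,\Phi_r(\{W_j\}))}{\Phi_r(\{W_j\})}_\mu - \IP{\nabla_{\hat Y}\ell(g,\Phi_r(\{W_j\}))}{\Phi_r(\{W_j\})}_{\mu_N}$, which is then controlled via concentration in the subsequent theorems.
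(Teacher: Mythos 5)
Your proposal is correct and follows essentially the same route as the paper's proof: Part 1 is a direct invocation of Lemma \ref{lemma:opt_gap} with $q=\mu_N$, and Part 2 applies the strong-convexity linearization under $\mu$, invokes the $\mu_N$-stationarity identity from Proposition \ref{prop:stat_points} (which forces the measure-discrepancy term to appear), applies the polar inequality $\IP{f}{-\tfrac{1}{\l}\nabla_{\hat Y}\ell}_\mu\leq\Omega(f)\,\Omega^{\circ}_\mu(-\tfrac{1}{\l}\nabla_{\hat Y}\ell)$, and finishes with the add-and-subtract of $\Omega(f)$. The only cosmetic difference is the order of bookkeeping: the paper adds $\l\Theta_r$ to both sides and then converts it to an inner product under $\mu_N$, whereas you split the $\mu$-inner-product with $\Phi_r(\{W_j\})$ into its $\mu_N$ part plus a discrepancy and then substitute $-\l\Theta_r$; these are algebraically identical.
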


\textbf{Remarks}: Setting $f = f_{\mu_N}^*$ in
Equation \ref{eq:apdx_ncvx_emp_gap} and taking $(r,\{W_j\})$ to be any stationary point of $\NC_{\mu_N}(\cdot)$ gives a means to verify if $\{W_j\}$ is a globally optimal solution.  We see that it suffices to check if 
$\Phi_r(\{W_j\})$ is
a first-order stationary point of $\CP_{\mu_N}(\cdot)$, 
which is a necessary condition for a local minimum of convex functions.

From convex analysis,
if a function $f \in L^2(\mu_N)$ is a first-order
solution of $\CP_{\mu_N}$ then we have that
$0$ belongs to the sub-gradient of $\CP_{\mu}(\cdot)$ at $f$.
 As the loss $\ell$ is
first-order differentiable (by Assumption \ref{ass:a4} or \ref{ass:apdx_a2}) we have that
\begin{align}
0 \in \partial \CP_{\mu_N}(f)
\iff -\frac{1}{\l}\nabla_{\hat{Y}}\ell(g, f)_{\mu_N}
\in \partial \Omega(f),
\end{align}
where $\partial \CP_{\mu_N}(f)$ denotes the subgradient of $\CP$ (viewed as a function of $f$).  The above condition for $f$
can also be verified by a dual notion
known as the polar
condition, Definition \ref{def:polar_func} 
\citep{rockafellar_convex_1970}. The sub-gradient
of a convex function can be defined through
the notion of it's polar via
\begin{align}
\partial \Omega_{\mu_N}(f) 
= \left\{g \in L_{2}(\mu_N): \IP{g}{f}_{\mu_N} = \Omega_{\mu_N}(f), \Omega^{\circ}_{\mu_N}(g) \leq 1\right\}.
\end{align}
From Lemma 1 in the supplement of \cite{haeffele2017global}  the following statements
are equivalent:
\begin{enumerate}
\item $\{W_j\}$ is an optimal factorization of $f$; i.e, $\Theta_r(\{W_j\}) = \Omega_{\mu_N}(f)$.
\item $\exists h \in L^2(\mu_N)$ such that $\Omega^{\circ}_{\mu_N}(h) \leq 1$ and
$\IP{h}{\Phi_r(\{W_j\})}_{\mu_N} = \Theta_r(\{W_j\})$.
\item $\exists h \in L^2(\mu_N)$ such that $\Omega^{\circ}_{\mu_N}(h) \leq 1$ and
$\IP{h}{\phi(W_j)}_{\mu_N} = \theta(W_j); \forall i \in [r]$.
\end{enumerate}
Further, if (2) or (3) above is satisfied then we have that $h \in \partial \Omega_{\mu_N}(f)$.
From Proposition \ref{prop:stat_points} we have that for
any stationary point $(r, \{W_j\})$ of 
$\NC_{\mu_N}$,  
\begin{align}
\IP{-\frac{1}{\l}\nabla \ell_{\mu_N}(g, \Phi_r(\{W_j\}))}{\Phi_r(\{W_j\})}_{\mu_N} = \Theta_r(\{W_j\}).
\end{align}
Consequently, to check if a stationary point is globally optimal, it then suffices to check whether the polar condition
$\Omega^{\circ}_{\mu_N}\left(-\frac{1}{\lambda}\nabla \ell(g, \Phi_r(\{W_j\}))_{\mu_N}\right) \leq  1$ holds 
at the stationary point, $(r, \{W_j\})$.
In the case when the polar condition holds true,
the upper bound evaluates 
to $\CP_{\mu_N}(f_{\mu}^*)$ matching the lower bound
of $\NC_{\mu_N}(\cdot)$, which in turn  implies global optimality.

Then, we can claim the following:
\begin{quote}
    At a stationary point  $\{W_j\}$, if $\Omega^{\circ}_{\mu_N}\left(-\frac{1}{\lambda}\nabla \ell(g, \Phi_r(\{W_j\}))_{\mu_N}\right) \leq 1$, then $\{W_j\}$  is globally optimal.
\end{quote}

Now we prove Theorem \ref{thm:thm1}.

\begin{proof}
The proof sketch is similar to Proposition 4 from \cite{haeffele-vidal-tpami20}.
Equation \eqref{eq:ncvx_emp_gap} can be obtained from the Lemma \ref{lemma:opt_gap},
for any stationary points, $(r, \{W_j\})$ of $\NC_{\mu_N}(\cdot)$.

Since, $f \in L^2(\mu) \cap L^2(\mu_N) \subseteq L^2(\mu_n)$, and the parameters satisfy the equality in Lemma
\ref{lemma:opt_gap}, we can conclude that Equation \eqref{eq:ncvx_emp_gap} holds. The local minima of
$\NC_{\mu_N}(\cdot)$ need not be local minima of $\NC_{\mu}(\cdot)$, therefore we shall obtain an
discrepency term. From the fact that $\ell$ is a $\alpha$-strongly convex function we have the inequality
\begin{align}
\ell(g, f)_{\mu} \geq \ell_{\mu}\left(g, \Phi_r(\{W_j\})\right) + \IP{\nabla_{\hat{Y}}\ell\left(g, \Phi_r(\{W_j\})\right)}{f-\Phi_r(\{W_j\})}_{\mu}
+ \frac{\alpha}{2}\nmm{f-\Phi_r(\{W_j\})}_{\mu}^2.
\end{align}
Adding $\l \Theta_r(\{W_j\})$ on both sides we obtain the inequality
\begin{align}
\ell(g, f)_{\mu} + \l \Theta_r(\{W_j\}) &\geq {\ell_{\mu}\left(g, \Phi_r(\{W_j\})\right) + \l \Theta_r(\{W_j\})}
\\
&\qquad + \IP{\nabla_{\hat{Y}}\ell\left(g, \Phi_r(\{W_j\})\right)}{f-\Phi_r(\{W_j\})}_{\mu}
+ \frac{\alpha}{2}\nmm{f-\Phi_r(\{W_j\})}_{\mu}^2.
\end{align}
Now replacing the first term term on the side with $\NC_{\mu}(\{W_j\})$ we obtain
\begin{align}
\ell(g, f)_{\mu} + \l \Theta_r(\{W_j\}) &\geq \NC_{\mu}(\{W_j\})
+ \IP{\nabla_{\hat{Y}}\ell\left(g, \Phi_r(\{W_j\})\right)}{f-\Phi_r(\{W_j\})}_{\mu}
+ \frac{\alpha}{2}\nmm{f-\Phi_r(\{W_j\})}_{\mu}^2.
\end{align}

From Proposition \ref{prop:stat_points} we have that for stationary points $\{W_j\}$, it holds that $\Theta_r(\{W_j\}) = \IP{-\frac{1}{\l}\nabla_{\hat{Y}} \ell_{\mu_N}(g, \Phi_r(\{W_j\}))}{\Phi_r(\{W_j\})}_{\mu_N}$.
Therefore, by plugging this into the inequality above, we obtain that
\begin{align}
\ell(g, f)_{\mu} + & \l \IP{-\frac{1}{\l}\nabla_{\hat{Y}} \ell_{\mu_N}(g, \Phi_r(\{W_j\}))_{\mu_N}}{\Phi_r(\{W_j\})} \\
&\geq \NC_{\mu}(\{W_j\})
+ \IP{\nabla_{\hat{Y}}\ell\left(g, \Phi_r(\{W_j\})\right)}{f-\Phi_r(\{W_j\})}_{\mu}
+ \frac{\alpha}{2}\nmm{f-\Phi_r(\{W_j\})}_{\mu}^2. \label{eq:penultimateeq}
\end{align}
Rearranging the terms we have
\begin{align}
\ell(g, f)_{\mu} + &  \l \IP{-\frac{1}{\l}\nabla_{\hat{Y}}\ell\left(g, \Phi_r(\{W_j\})\right)}{f}_{\mu} \\
&+  \IP{\nabla_{\hat{Y}} \ell_{\mu}(g, \Phi_r(\{W_j\}))}{\Phi_r(\{W_j\})}_{\mu}-\IP{\nabla_{\hat{Y}} \ell_{\mu_N}(g, \Phi_r(\{W_j\}))}{\Phi_r(\{W_j\})}_{\mu_N} \\
&\geq \NC_{\mu}(\{W_j\})
+ \frac{\alpha}{2}\nmm{f-\Phi_r(\{W_j\})}_{\mu}^2. \label{eq:penultimateeq}
\end{align}
Next, the following inequality always holds:
\begin{align}
\IP{f}{-\frac{1}{\l}\nabla_{\hat{Y}}\ell\left(g, \Phi_r(\{W_j\})\right)}_{\mu} \leq \Omega_{\mu}(f)\Omega_{\mu}^{\circ}\left(-\frac{1}{\l}\nabla_{\hat{Y}}\ell\left(g, \Phi_r(\{W_j\})\right)\right). \label{polarinequality}
\end{align}

Rearranging \eqref{eq:penultimateeq} and plugging in \eqref{polarinequality}, we obtain the inequality
\begin{align}
\ell(g, f)_{\mu} + &  \l \Omega_{\mu}(f)\Omega_{\mu}^{\circ}\left(-\frac{1}{\l}\nabla_{\hat{Y}}\ell\left(g, \Phi_r(\{W_j\})\right)\right) \\
&+  \IP{\nabla_{\hat{Y}} \ell_{\mu}(g, \Phi_r(\{W_j\}))}{\Phi_r(\{W_j\})}_{\mu}-\IP{\nabla_{\hat{Y}} \ell_{\mu_N}(g, \Phi_r(\{W_j\}))}{\Phi_r(\{W_j\})}_{\mu_N} \\
&\geq \NC_{\mu}(\{W_j\})
+ \frac{\alpha}{2}\nmm{f-\Phi_r(\{W_j\})}_{\mu}^2. \label{eq:penultimateeq}
\end{align}
We add and subtract $\Omega(f)$ to obtain
\begin{align}
\ell(g, f)_{\mu} + \l \Omega_{\mu}(f) + &  \l \Omega_{\mu}(f)\left[\Omega_{\mu}^{\circ}\left(-\frac{1}{\l}\nabla_{\hat{Y}}\ell\left(g, \Phi_r(\{W_j\})\right)\right) - 1\right] \\
&+  \IP{\nabla_{\hat{Y}} \ell_{\mu}(g, \Phi_r(\{W_j\}))}{\Phi_r(\{W_j\})}_{\mu}-\IP{\nabla_{\hat{Y}} \ell_{\mu_N}(g, \Phi_r(\{W_j\}))}{\Phi_r(\{W_j\})}_{\mu_N} \\
&\geq \NC_{\mu}(\{W_j\})
+ \frac{\alpha}{2}\nmm{f-\Phi_r(\{W_j\})}_{\mu}^2. \label{eq:penultimateeq}
\end{align}
Rearranging the right most term and using the definition of $C_{\mu}(f)$ we obtain,
\begin{align}
\NC_{\mu}(\{W_j\}) &\leq \CP_{\mu}(f) + \l \Omega_{\mu}(f)\left[\Omega_{\mu}^{\circ}\left(-\frac{1}{\l}\nabla_{\hat{Y}}\ell\left(g, \Phi_r(\{W_j\})\right)\right)-1\right]
- \frac{\alpha}{2}\nmm{f-\Phi_r(\{W_j\})}_{\mu}^2 \\
&+
\left[\IP{\nabla_{\hat{Y}} \ell(g, \Phi_r(\{W_j\}))_{\mu_N}}{\Phi_r(\{W_j\})}_{\mu} - \IP{\nabla_{\hat{Y}} \ell(g, \Phi_r(\{W_j\}))_{\mu_N}}{\Phi_r(\{W_j\})}_{\mu_N}\right].
\end{align}
This yields the right hand side of \eqref{eq:ncvx_pop_gap}.
As for the left hand side, by definition we have that for any $(r,\{W_j\})$, $\CP_{\mu}(f^*_{\mu}) \leq \NC_{\mu}(\{W_j\})$. This completes the proof.
\end{proof}

Theorem \ref{thm:thm1} provides the behavior of ERM
solutions in the population landscape. This paves a path
to bound the empirical and population objectives
at these stationary points.

\section{STATISTICAL BOUNDS} \label{sec:appdx_master_theorem}

This section provides a more general version of Theorem \ref{thm:master}
that does not need Assumption \ref{ass:a8} to hold uniformly
for all the data points, $(X, Y)$. Rather, we relax the assumption to the following.

\renewcommand{\theassumption}{7'}
\begin{assumption}[Probabilistic boundedness] \label{ass:a9} 
There exists a convex set, $\C \subseteq \R^{n_X} \times \R^{n_Y}$ such that
\be
P(\cap_{i=1}^{N}(X_i, \epsilon_i) \in \C) \geq 1 - \delta_{\C}.
\ee
For all $(X, \epsilon) \in \C$, and $\{W_j\} \in \F_{\mathcal{W}}$ 
the predictions and gradients are bounded; i.e.,
\be\label{eq:bounded_apdx}
\nmm{\Phi_r(\{W_j\})(X)} \leq B_{\Phi}\text{, }
\nmm{\nabla_{\hat{Y}}\ell(g(X, \epsilon), \Phi_r(\{W_j\})(X)} \leq B_{\ell}.
\ee
Further, for any $\{W_j\}, \{\tilde{W}_j'\} \in \F_{\mathcal{W}}$, $W, \tilde{W} \in \F_{\theta}$, $(X, \epsilon) \in \C$,
the network $\phi$ and $\Phi_r$ are Lipschitz in the parameters; i.e,
\be\label{eq:lip_apdx}
\nmm{\Phi_r(\{W_j\})(X) - \Phi_r(\{\tilde{W}_j\})(X)}
\leq \tilde{L}_{\Phi}\max_{j}\nmm{W_j - \tilde{W}_j}_2,
\ee
and
\be\label{eq:lip_1_apdx}
\nmm{\phi(W)(X) - \phi(\tilde{W})(X)}
\leq \tilde{L}_{\phi}\nmm{W - \tilde{W}}_2.
\ee
Additionally, define the quantity
\begin{align}
    B(\C) &:= \Big[(1+\alpha)\sup_{\{W_j\} \in \F_{\mathcal{W}}}\left|\nmm{f_{\mu}^* \circ \P_{\C} - \Phi_r(\{W_j\}) \circ \P_{\C}}_{\mu}^2
-\nmm{f_{\mu}^* - \Phi_r(\{W_j\})}_{\mu}^2\right| \\
&+ {\scalebox{0.9}{$\sup_{\{W_j\} \in \F_{\mathcal{W}}, W' \in \F_{\theta}}\left|\IP{\nabla_{\hat{Y}}\ell\left(g \circ \P_{\C}, \Phi_r(\{W_j\}) \circ \P_{\C}\right)}{\phi(W') \circ \P_{\C}}_{\mu}
- \IP{\nabla_{\hat{Y}}\ell\left(g, \Phi_r(\{W_j\})\right)}{\phi(W')}_{\mu}\right|$}}\\
&+ {\scalebox{0.9}{$\sup_{\{W_j\} \in \F_{\mathcal{W}}}\left|\IP{\nabla_{\hat{Y}}\ell\left(g \circ \P_{\C}, \Phi_r(\{W_j\}) \circ \P_{\C}\right)}{\Phi_r(\{W_j\}) \circ \P_{\C}}_{\mu}
- \IP{\nabla_{\hat{Y}}\ell\left(g, \Phi_r(\{W_j\})\right)}{\Phi_r(\{W_j\})}_{\mu}\right|$}}\Big],
\end{align}
where $\P_{\C}(\cdot)$ is the Euclidean projection to the set $\C$. 
\end{assumption}

\textbf{Comparison with Assumption \ref{ass:a8}:}
Unlike in Assumption \ref{ass:a8}, we do not require the
equations \eqref{eq:bounded_apdx}, \eqref{eq:lip_apdx},
and \eqref{eq:lip_1_apdx} to hold for all
the inputs. However, we relax this restriction
by assuming that there exists a convex set, $\C$
which consists of the data points with probability at least $1-\delta_{\C}$.
For well-behaved probability distributions like sub-Gaussian distributions
(see Assumption \ref{ass:apdx_a2}), such a convex exists
with very high probability; i.e., very small $\delta_{\C}$.

Now we state the general master theorem that relies on Assumptions \ref{ass:a1}, \ref{ass:apdx_a2},
\ref{ass:a4}, \ref{ass:a3}, \ref{ass:a5}, \ref{ass:a7} and \ref{ass:a9} (but not on Assumption \ref{ass:a8}).

\begin{theorem}[General Master Theorem]\label{thm:gen_master}
Suppose Assumptions \ref{ass:a1}, \ref{ass:apdx_a2},
\ref{ass:a4}, \ref{ass:a3}, \ref{ass:a5}, \ref{ass:a7} and \ref{ass:a9} hold.  
Let $\delta \in (0, 1]$ be fixed,
and let $f_{\mu}^*$ be the global optimum of $\CP_{\mu}$. Suppose that $\gamma \geq \Omega(f_{\mu}^*)L_{\phi}$, and define
\begin{align}
    \epsilon_1 &= 16\gamma^2\sigma_X^2\max\left\{1,
\frac{L}{4}\left[1 + \frac{\nmL{g}^2}{\gamma^2}\left(1 + \frac{\sigma_{Y|X}^2}{\sigma_X^2}\right)\right]\right\}; \\
\epsilon_2 &= 4\tilde{L}_{\Phi}B_{\Phi}\max\left\{1, 2L + \frac{2B_{\ell}}{B_{\Phi}}, 8\Omega(f_{\mu}^*)\frac{B_{\ell}\tilde{L}_{\phi}}{\tilde{L}_{\Phi}B_{\Phi}}, 8L\Omega(f_{\mu}^*)\right\}.
\end{align}
Let $\{W_j\}$ denote any stationary point of
$\NC_{\mu_N}(\cdot)$.
Then with probability at least $1 - (\delta + \delta_{\C})$, it holds that 
\begin{align}
\frac{1}{n_Y}&\left|\NC_{\mu}(\{W_j\}) - \NC_{\mu_N}(\{W_j\})\right| \\
&\lesssim
\frac{\l}{n_Y} \Omega(f_{\mu}^*)\left[\Omega_{\mu_N}^{\circ}\left(-\frac{1}{\l}\nabla_{\hat{Y}}\ell\left(g, \Phi_r(\{W_j\})\right)\right)-1\right] 
- \frac{\alpha}{2n_Y}\nmm{f_{\mu}^*-\Phi_r(\{W_j\})}_{\mu_N}^2
\\
&+ \frac{B(\C)}{n_Y} + (1+\alpha)\epsilon_1\sqrt{\frac{R\cdot{\sf dim}(\mathcal{W})
\log\left({\gamma \epsilon_2r_{\theta}}/{L_{\phi}}\right)\log(N)+\log(1/\delta)}{N}}.
\end{align}
\end{theorem}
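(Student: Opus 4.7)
The plan is to start from the decomposition in equation \eqref{eq:decom}, apply Theorem \ref{thm:thm1} with the test function $f = f_\mu^*$ to both the empirical and population sides, and then uniformly concentrate each residual term. Taking the upper bound of $\NC_\mu(\{W_j\})$ and the lower bound $\CP_{\mu_N}(f_{\mu_N}^*) \leq \NC_{\mu_N}(\{W_j\})$ (which itself is $\leq \CP_{\mu_N}(f_\mu^*)$), I would express $\NC_\mu(\{W_j\}) - \NC_{\mu_N}(\{W_j\})$ as the sum of: (i) an \emph{optimization error} consisting of the polar term $\l\Omega(f_\mu^*)[\Omega_{\mu_N}^{\circ}(-\nabla_{\hat Y}\ell/\l) - 1]$ evaluated at the empirical stationary point, (ii) a \emph{convex generalization gap} $\CP_\mu(f_\mu^*) - \CP_{\mu_N}(f_\mu^*)$, (iii) an \emph{equilibrium gap} comparing $\IP{\nabla_{\hat Y}\ell(g,\Phi_r(\{W_j\}))}{\Phi_r(\{W_j\})}$ under $\mu$ versus $\mu_N$, (iv) a \emph{polar gap} $|\Omega_\mu^{\circ}(\cdot) - \Omega_{\mu_N}^{\circ}(\cdot)|$ applied to the empirical gradient, and (v) a \emph{norm gap} $|\nmm{f_\mu^*-\Phi_r(\{W_j\})}_\mu^2 - \nmm{f_\mu^*-\Phi_r(\{W_j\})}_{\mu_N}^2|$ coming from the strong convexity term when $\alpha>0$.

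Next, I would restrict to the high-probability event $E_\C := \bigcap_{i=1}^N \{(X_i,\epsilon_i)\in\C\}$, which occurs with probability at least $1-\delta_\C$ by Assumption \ref{ass:a9}. On $E_\C$ the empirical averages are unchanged by composing with the Euclidean projection $\P_\C$, whereas the population averages change by exactly the three supremum terms bundled into $B(\C)$. After this truncation step, all predictions, gradients, and factor maps satisfy the boundedness/Lipschitz regularity in \eqref{eq:bounded_apdx}--\eqref{eq:lip_1_apdx} uniformly, which is what subsequent chaining requires.

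The heart of the proof is uniformly concentrating gaps (ii)--(v) over the hypothesis classes $\F_\mathcal{W}$ and $\F_\theta$. For each gap I would: first show that the integrand (viewed as a function of $(X,\epsilon)$) is Lipschitz with constant of order $\gamma$ (and $\gamma L$, $L$ when gradients of $\ell$ appear), so that Assumption \ref{ass:a5} together with a derived sub-Gaussian concentration inequality for $1$-Lipschitz functionals of $(X,\epsilon)$ yields pointwise tail bounds with proxy variance captured by $\epsilon_1$; then reduce from a supremum to a finite maximum via an $\epsilon$-net of $\F_\mathcal{W}$ of log-cardinality $R\cdot {\sf dim}(\mathcal{W}) \log(C/\epsilon)$ (and, for the polar gap, a second $\epsilon$-net of $\F_\theta \subseteq \mathbb{B}(r_\theta)$ contributing $\dim(\mathcal{W})\log(r_\theta/\epsilon)$, which is absorbed into the same rate); and finally discretize the Lipschitz perturbation by bundling the effective Lipschitz-in-parameters constants ($\tilde L_\Phi, \tilde L_\phi, B_\Phi, B_\ell, L, \Omega(f_\mu^*)$) into the single constant $\epsilon_2$ to pick the net resolution. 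A union bound over the $O(\log N)$ different discretization scales (Dudley-style chaining) yields the stated rate $\epsilon_1\sqrt{(R\cdot{\sf dim}(\mathcal{W})\log(\gamma\epsilon_2 r_\theta/L_\phi)\log N + \log(1/\delta))/N}$.

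The main obstacle I anticipate is keeping all four separate uniform bounds synchronized under the \emph{single} pair of constants $(\epsilon_1,\epsilon_2)$ stated in the theorem. The equilibrium and polar gaps involve $\nabla_{\hat Y}\ell(g(X,\epsilon),\Phi_r(\{W_j\})(X))$, which is a composition of the stochastic $(X,\epsilon)$, the data-dependent $\{W_j\}$, and an additional factor $\phi(W)(X)$ for the polar. To obtain a Lipschitz constant of the right form, I would use that $\ell$ is bi-Lipschitz-smooth (Assumption \ref{ass:a4}) combined with the $\nmL{g}$ bound in Assumption \ref{ass:a5}, producing the precise $L(1 + \nmL{g}^2/\gamma^2(1+\sigma_{Y|X}^2/\sigma_X^2))$ factor inside $\epsilon_1$; the $\gamma^2\sigma_X^2$ prefactor then arises from squaring the overall Lipschitz constant of the integrand in $X$ and invoking the sub-Gaussian concentration. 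A secondary bookkeeping difficulty is that the covering on $\F_\mathcal{W}$ must simultaneously control the predictor appearing inside the gradient of $\ell$ \emph{and} the sub-network $\phi(W)$ in the polar supremum, which forces a product cover whose log-size is still $O(R\cdot {\sf dim}(\mathcal{W})\log(\cdot))$; verifying this is where the assumption $\gamma \geq \Omega(f_\mu^*)L_\phi$ is used, allowing the $\phi(W)$-cover to be absorbed without inflating the rate.
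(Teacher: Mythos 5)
Your plan --- decompose via the optimality gaps of Theorem~\ref{thm:thm1}, truncate to the event $\bigcap_i(X_i,\epsilon_i)\in\C$ from Assumption~\ref{ass:a9}, then uniformly concentrate each residual gap with an $\epsilon$-net and package the constants into $\epsilon_1,\epsilon_2$ --- mirrors the paper's proof. There is, however, a gap in the decomposition step. For the upper bound on $\NC_\mu(\{W_j\}) - \NC_{\mu_N}(\{W_j\})$ you combine $\NC_\mu \leq \CP_\mu(f_\mu^*) + [\cdots]_\mu$ with $\NC_{\mu_N} \geq \CP_{\mu_N}(f_{\mu_N}^*)$, leaving the residual $\CP_\mu(f_\mu^*) - \CP_{\mu_N}(f_{\mu_N}^*)$. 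The inequality you invoke, $\CP_{\mu_N}(f_{\mu_N}^*) \leq \CP_{\mu_N}(f_\mu^*)$, flips sign after negation and therefore cannot produce your item~(ii), $\CP_\mu(f_\mu^*) - \CP_{\mu_N}(f_\mu^*)$, as a valid upper bound. The paper uses the \emph{other} minimizer's optimality, $\CP_\mu(f_\mu^*) \leq \CP_\mu(f_{\mu_N}^*)$, which yields $\CP_\mu(f_{\mu_N}^*) - \CP_{\mu_N}(f_{\mu_N}^*)$ as a \emph{fifth} residual evaluated at the \emph{random} empirical convex minimizer $f_{\mu_N}^*$; this is exactly what forces the event $\E_{cvx}$ to be stated uniformly over $\zeta\in\F_\mathcal{W}$ rather than pointwise at the deterministic $f_\mu^*$. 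Your term $\CP_\mu(f_\mu^*) - \CP_{\mu_N}(f_\mu^*)$ does appear, but only in the lower bound direction; without the extra term the upper direction does not close.

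Two minor calibrations. The paper does not chain over $O(\log N)$ scales: a single $\nu$-net at resolution $\nu\asymp L_\phi\epsilon/(\gamma\epsilon_2)$ suffices, and the $\log N$ factor arises from $\log(1/\nu)$ once $\epsilon$ is set to the square-root rate. And $\gamma\geq\Omega(f_\mu^*)L_\phi$ is not used to absorb the $\F_\theta$-cover into the $\F_\mathcal{W}$-cover (that absorption is Lemma~\ref{lemma:cn_bigclass}); it is used to nest the sub-exponential proxy ranges $B_2\geq B_3\geq B_4$ of the four concentration lemmas so that a single $\epsilon_1$ and a common operating interval $[0,\epsilon_0]$ governs all of them.
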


\textbf{Additional Remarks:} In addition to the discussion
in Section \ref{sec:main_results}, the general version mentioned
above (i)  takes into account unbounded
sub-Gaussian distributions, and (ii) imposes a weaker notion of
Lipschitz continuity on the parameters. For sub-Gaussian inputs,
one may choose the convex set $\C$ to be a ball with radius
$\mathbb{B}(g)$. As we grow $g$, the term $B(\C)$ decays exponentially,
and $\epsilon_2$ only grows in the order of polynomial. 
This fast decay allows us to keep the statistical error
under control while pertaining to the optimal sample complexity.
Theorem \ref{thm:gen_master} reduces to Theorem \ref{thm:master} by setting $\alpha = 0$ 
and $\C = \mathsf{conv}(\X)$. Under this choice, Assumption \ref{ass:a9} coincides with Assumption 
\ref{ass:a8}.

We discuss the proof in section \ref{sec:apdx_gen_master_proof}.
Before diving into the proof, we discuss a few preliminaries on the covering number essential to estimate the capacity of the hypotheses class. 

\subsection{Computing Function Class Capacities}

\begin{lemma}[Covering number of $\F_{\mathcal{W}}$]\label{lemma:cn_bigclass}
Under assumption \ref{ass:a7}, and \ref{ass:a9} 
the $\nu$-net covering number of the set $\F_{\mathcal{W}}$ on the metric, $\nmm{.}_{\infty, d}$
is upper bounded via
\be
\C_{\F_{\mathcal{W}}}(\nu) \leq \left(\C_{\theta}(L_{\phi}\nu/\gamma)\right)^{R},
\ee
where $\C_{\theta}(\nu) := \N(\{W: \theta(W) \leq 1\}, d(., .), \nu)$.
\end{lemma}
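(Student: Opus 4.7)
The strategy is to exploit the product structure of $\F_{\mathcal{W}}$, viewed as tuples of factor parameters, combined with the balanced homogeneity from Assumption \ref{ass:a2}, so as to reduce covering $\F_{\mathcal{W}}$ to covering the unit $\theta$-ball. First, since the metric $\nmm{\cdot}_{\infty,d}$ is the $\ell_\infty$ aggregation of coordinate-wise $d$-distances, a Cartesian product of $\nu$-nets for each coordinate yields a $\nu$-net for the full tuple. Padding tuples of length $r<R$ with zero factors (leaving $\Phi_r$ unchanged since $\phi(0)=0$ by Assumption \ref{ass:a2} taken at $\beta=0$) embeds $\F_{\mathcal{W}}$ into $\mathcal{W}^R$, and thus
$$\C_{\F_{\mathcal{W}}}(\nu) \;\leq\; \prod_{j=1}^R \C_{\pi_j \F_{\mathcal{W}}}(\nu),$$
where $\pi_j$ denotes projection onto the $j$-th coordinate.

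Next I would argue that each projection $\pi_j \F_{\mathcal{W}}$ is contained in a rescaled $\theta$-ball. The Lipschitz constraint $\nmL{\Phi_r(\{W_j\})}\leq\gamma$, together with the definition $L_\phi=\sup_{\theta(W)\leq 1}\nmL{\phi(W)}$, implies (after passing to a canonical factorization afforded by Assumption \ref{ass:a2}) that each individual factor satisfies $\nmL{\phi(W_j)}\leq\gamma$, which by the $p$-homogeneity of $\phi$ and $\theta$ is equivalent to $\theta(W_j)\leq (\gamma/L_\phi)^p$. Applying the $p$-homogeneity of $\theta$ once more, the set $\{W : \theta(W)\leq (\gamma/L_\phi)^p\}$ is the image of the unit $\theta$-ball under the dilation $(k,h)\mapsto((\gamma/L_\phi)k,h)$. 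Since $d$ is linearly homogeneous in the $k$-coordinate, this dilation transports an $(L_\phi\nu/\gamma)$-cover of the unit $\theta$-ball to a $\nu$-cover of the dilated set, giving $\C_{\pi_j \F_{\mathcal{W}}}(\nu)\leq\C_\theta(L_\phi\nu/\gamma)$. Combining with the product bound completes the argument.

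The main obstacle is the per-factor normalization in the middle step: the Lipschitz budget $\gamma$ is imposed only on the sum $\Phi_r$, and potential cancellation between summands prevents the per-factor bound $\nmL{\phi(W_j)}\leq\gamma$ from following directly from the triangle inequality. The resolution is to pass to a canonical factorization, in which the individual scales $\theta(W_j)^{1/p}$ are equalized under the gauge freedom granted by balanced homogeneity; any tuple that differs only by such a rescaling maps to the same $\Phi_r$, so covering canonical representatives yields a cover of every equivalence class. Compactness of $\B_R$ from Assumption \ref{ass:a7} ensures a measurable selection of canonical tuples exists and that the per-factor $\theta$-bound holds uniformly. Justifying this canonicalization rigorously, and verifying that the $\nmm{\cdot}_{\infty,d}$ cover is invariant under the gauge, is the technically delicate point but does not affect the rate in $\nu$.
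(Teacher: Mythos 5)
Your overall route is the same as the paper's: treat $\nmm{\cdot}_{\infty,d}$ as an $\ell_\infty$ product metric so the covering number factorizes over the $R$ coordinates, argue that each coordinate projection lies in a dilated $\theta$-ball, and then rescale $\N(\tfrac{\gamma}{L_\phi}\F_\theta,d,\nu)=\N(\F_\theta,d,\tfrac{L_\phi\nu}{\gamma})$. The paper's own proof simply asserts the per-coordinate containment in a single line; you correctly notice that this step does not follow from the definition of $\F_{\mathcal{W}}$ in Assumption \ref{ass:a7}, because the Lipschitz budget $\gamma$ constrains the \emph{sum} $\Phi_r=\sum_j\phi(W_j)$ and not each summand. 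That is a genuine observation about a gap the paper glosses over.

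However, the patch you propose does not close it. A ``canonical factorization'' that equalizes the $\theta$-scales of the factors still cannot prevent cancellation between the $\phi(W_j)$: in the matrix-sensing instance, $\Phi_r=\IP{UV^\top}{\cdot}$ can have small Lipschitz constant $\|UV^\top\|_2$ even when every rank-one summand $\u_j\v_j^\top$ has large norm, and no gauge freedom removes that. So the chain ``$\nmL{\Phi_r}\leq\gamma \Rightarrow \nmL{\phi(W_j)}\leq\gamma \Rightarrow \theta(W_j)\leq\gamma/L_\phi$'' is false as a per-factor implication, canonicalization or not. The paper in fact relies on additional per-factor control that is not visible in the main-text definition of $\F_{\mathcal{W}}$: in Appendix D (``Good Events'') the hypothesis class is re-specified to include $\Theta_R(\{W_j\})\leq\gamma/L_\phi$, and in each application the intersection with $\B_R$ supplies explicit per-factor norm bounds $\nmm{\u_j}\leq B_u,\nmm{\v_j}\leq B_v$. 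Either of these yields $\theta(W_j)$ bounded by an explicit constant, after which your Steps~1 and~3 go through with that constant (possibly different from $\gamma/L_\phi$) in the dilation; only the argument of the log in the master theorem's statistical term changes, not the rate. If you want a self-contained proof, replace the canonicalization step with an explicit invocation of the per-factor bound coming from $\B_R$ (or the $\Theta_R$ constraint), and track the resulting radius honestly rather than forcing it to equal $\gamma/L_\phi$.
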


\begin{proof}
Recall that
\begin{align}
\C_{\F_{\mathcal{W}}}(\nu) &:= \N(\F_{\mathcal{W}}, \max_{i}d(., .), \nu); \\
\C_{\theta}(\nu) &:= \N(\F_{\theta}, d(., .), \nu).
\end{align}
By the definition of $\nu$ covering number,
\begin{align}
\N(\F_{\mathcal{W}}, \nmm{.}_{\infty, d}, \nu)
&:= \inf\left|\left\{\{W_j^0\} \in \F_{\mathcal{W}}: \forall \{W_j\} \in \F_{\mathcal{W}}: \nmm{\{W_j\} - \{W_j^0\}}_{\infty, d} = \max_{j}d(W_j, W_j^0) \leq \nu \right\}\right|;\\
\N(\F_{\theta}, d(\cdot, \cdot), \nu)
&:= \inf\left|\left\{W^0 \in \F_{\theta}: \forall W \in \F_{\theta}: d(W, W^0) \leq \nu \right\}\right|.
\end{align}
Therefore we can upper bound $\N(\F_{\mathcal{W}}, \nmm{.}_{\infty, d}, \nu)$ 
witht he product of $\N(\F_{\theta}, d(\cdot, \cdot), \nu)$ $R$ times. We have
\begin{align}
\N(\F_{\mathcal{W}}, \nmm{.}_{\infty, d}, \nu)
\leq \left[\N\left(\frac{\gamma}{L_{\phi}}\F_{\theta}, d(., .), \nu\right)\right]^{R},
\end{align}
Rewriting the above for appropriately chosen $\nu$ we get
\begin{align}
\N(\F_{\mathcal{W}}, \nmm{.}_{\infty, d}, \nu)
\leq \left[\N\left(\F_{\theta}, d(., .), \frac{L_{\phi}\nu}{\gamma}\right)\right]^{R}.
\end{align}
This concludes our proof.
\end{proof}

\begin{lemma}[Bounding covering number]\label{lemma:
bounding_cn}
Consider a metric space, $(\mathcal{W} \subseteq \R^{n}, \nmm{\cdot}_2)$
and a 
compact set, $\F_{\mathcal{W}} \subseteq \mathcal{W}$.
Suppose that there exist $r < \infty$ such that
$\F_{\mathcal{W}} \subseteq \mathbb{B}(r)$.
Then we have
\be
\N(\F_{\mathcal{W}}, \nmm{\cdot}_2, \nu)
\leq \left(1 + \frac{2r}{\nu}\right)^{n}.
\ee
\end{lemma}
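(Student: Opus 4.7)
The plan is to prove this via the standard volume-comparison argument for covering numbers in Euclidean space, translated through the well-known packing/covering duality. The key observation is that since $\F_{\mathcal{W}}$ sits inside a Euclidean ball of radius $r$, its metric entropy is controlled by comparing volumes.

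First, I would introduce a maximal $\nu$-packing of $\F_{\mathcal{W}}$: a set $\{w_1, \ldots, w_P\} \subseteq \F_{\mathcal{W}}$ with $\|w_i - w_j\|_2 > \nu$ for all $i \neq j$, chosen to be of maximum cardinality. The first key step is to argue that any maximal $\nu$-packing automatically gives a $\nu$-covering: if some $w \in \F_{\mathcal{W}}$ were at distance strictly greater than $\nu$ from every $w_i$, then $\{w_1, \ldots, w_P, w\}$ would still be a $\nu$-packing, contradicting maximality. Hence $\N(\F_{\mathcal{W}}, \|\cdot\|_2, \nu) \leq P$, so it suffices to upper bound $P$.

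Next, I would use volume comparison in $\R^n$. The open balls $\mathbb{B}(w_i, \nu/2)$ are pairwise disjoint by the packing property, and since $\F_{\mathcal{W}} \subseteq \mathbb{B}(r)$, every such ball is contained in $\mathbb{B}(r + \nu/2)$. Taking Lebesgue measure on both sides of the inclusion $\bigsqcup_{i=1}^P \mathbb{B}(w_i, \nu/2) \subseteq \mathbb{B}(r + \nu/2)$ and using that the Euclidean ball of radius $R$ in $\R^n$ has volume proportional to $R^n$, I obtain
\begin{equation}
P \cdot (\nu/2)^n \leq (r + \nu/2)^n,
\end{equation}
which rearranges to $P \leq (1 + 2r/\nu)^n$, completing the bound.

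There is essentially no obstacle here: the argument is a textbook packing/covering duality combined with a volume ratio, and the geometry of $\F_{\mathcal{W}}$ beyond being contained in $\mathbb{B}(r)$ is irrelevant. The only minor care needed is to note that the disjoint-balls-union argument uses the open balls (or equivalently strict packing inequality), which causes no issue, and that compactness of $\F_{\mathcal{W}}$ guarantees a maximal packing is finite so $P < \infty$.
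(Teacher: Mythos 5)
Your proof is correct, and it is substantively the same argument the paper relies on -- the only difference is packaging. The paper reduces $\N(\F_{\mathcal{W}},\nmm{\cdot}_2,\nu)$ to $\N(\mathbb{B}(r),\nmm{\cdot}_2,\nu)$ by monotonicity of covering numbers and then cites Corollary 4.2.13 of Vershynin; that corollary is itself proved exactly by the maximal-packing-plus-volume-comparison argument you spelled out. So you have given a self-contained derivation of the cited fact rather than a genuinely different proof. One small advantage of your direct route is that it sidesteps the monotonicity step (which, taken literally, requires the external-covering convention where centers may lie outside the set) because your packing centers live in $\F_{\mathcal{W}}$ itself, giving the internal covering number directly. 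All the individual steps -- maximal packing is a covering, open $\nu/2$-balls around packing points are disjoint, and are contained in $\mathbb{B}(r+\nu/2)$, then take Lebesgue measure -- are correct.
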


\begin{proof}
We have that
$\F_{\mathcal{W}} \subseteq \mathbb{B}(r)$. By
monotonicity of covering numbers, we have that
\be
\N(\F_{\mathcal{W}}, \nmm{\cdot}_2, \nu)
\leq \N(\mathbb{B}(r), \nmm{\cdot}_2, \nu).
\ee
From Corollary 4.2.13 in
\cite{vershynin_high-dimensional_2018}
we have that,
\be
\N(\F_{\mathcal{W}}, \nmm{\cdot}_2, \nu)
\leq \N(\mathbb{B}(r), \nmm{\cdot}_2, \nu)
\leq \left(1 + \frac{2r}{\nu}\right)^{n}.
\ee
\end{proof}

\subsection{Proof of Theorem \ref{thm:master}} \label{sec:apdx_gen_master_proof}

This section discusses the proof of Theorem \ref{thm:gen_master}.
We extensively use concentration results from Section \ref{sec:good_events} that are preliminaries
for the upcoming technical details.

\begin{proof}
First, we recall the definition of generalization error:
\begin{align}
\text{Generalization Error} := \left|\NC_{\mu}(\{W_j\}) - \NC_{\mu_N}(\{W_j\})\right|.
\end{align}
We can bound the above from the optimality gaps obtained in Theorem \ref{thm:thm1} via the following decomposition:
\begin{align}
\NC_{\mu}(\{W_j\}) - \NC_{\mu_N}(\{W_j\}) &=
\underbrace{\left[\NC_{\mu}(\{W_j\}) - \CP_{\mu}(f_{\mu})\right]}_{\text{Population Gap}} -
\underbrace{\left[\NC_{\mu_N}(\{W_j\}) - \CP_{\mu_N}(f_{\mu_N})\right]}_{\text{Empirical Gap}}
\\
&\qquad + \underbrace{\left[\CP_{\mu}(f_{\mu}) - \CP_{\mu_N}(f_{\mu_N})\right]}_{\text{Convex Gap}}.
\end{align}

From Theorem \ref{thm:thm1} we have that
for any $f_{\mu}, f_{\mu_N} \in L^2(\mu) \cap L^2(\mu_N)$ and stationary points $(r, \{W_j\})$, the empirical
gap is bounded by
\begin{align}
    \CP_{\mu_N}(f_{\mu_N}^*) - \CP_{\mu_{N}}(f_{\mu_N}) &\leq
\NC_{\mu_N}(\{W_j\}) - \CP_{\mu_N}(f_{\mu_N}) \\
&\leq 
\l \Omega(f_{\mu_N})\left[\Omega_{\mu_{N}}^{\circ}\left(-\frac{1}{\l}\nabla_{\hat{Y}}\ell\left(g, \Phi_r(\{W_j\})\right)\right)-1\right]
- \frac{\alpha}{2}\nmm{f_{\mu_N}-\Phi_r(\{W_j\})}_{\mu_{N}}^2,
\end{align}
and the population gap is bounded by
\begin{align}
\CP_{\mu}(f_{\mu}^*) - \CP_{\mu}(f_{\mu})
&\leq \NC_{\mu}(\{W_j\}) - \CP_{\mu}(f_{\mu}) \\
&\leq 
\l \Omega(f_{\mu})\left[\Omega_{\mu}^{\circ}\left(-\frac{1}{\l}\nabla_{\hat{Y}}\ell\left(g, \Phi_r(\{W_j\})\right)\right)-1\right]
- \frac{\alpha}{2}\nmm{f_{\mu}-\Phi_r(\{W_j\})}_{\mu}^2
\\
&\quad + \left[\IP{\nabla_{\hat{Y}}\ell\left(g, \Phi_r(\{W_j\})\right)}{\Phi_r(\{W_j\})}_{\mu}
- \IP{\nabla_{\hat{Y}}\ell\left(g, \Phi_r(\{W_j\})\right)}{\Phi_r(\{W_j\})}_{\mu_N}\right].
\end{align}
For any $f_{\mu}, f_{\mu_N}$, subtracting the above two equations we obtain
\begin{align}
\CP_{\mu}(f_{\mu}^*) &- \CP_{\mu_N}(f_{\mu_N}) - \l\Omega(f_{\mu_N})\left[\Omega_{\mu_{N}}^{\circ}\left(-\frac{1}{\l}\nabla_{\hat{Y}}\ell\left(g, \Phi_r(\{W_j\})\right)\right)-1\right]
+ \frac{\alpha}{2}\nmm{f_{\mu_N}-\Phi_r(\{W_j\})}_{\mu_{N}}^2 \\
&\leq
\NC_{\mu}(\{W_j\}) - \NC_{\mu_N}(\{W_j\}) \\
&\leq
\l \Omega(f_{\mu})\left[\Omega_{\mu}^{\circ}\left(-\frac{1}{\l}\nabla_{\hat{Y}}\ell\left(g, \Phi_r(\{W_j\})\right)\right)-1\right]
- \frac{\alpha}{2}\nmm{f_{\mu}-\Phi_r(\{W_j\})}_{\mu}^2 \\
&\quad + \left[\IP{\nabla_{\hat{Y}}\ell\left(g, \Phi_r(\{W_j\})\right)}{\Phi_r(\{W_j\})}_{\mu}
- \IP{\nabla_{\hat{Y}}\ell\left(g, \Phi_r(\{W_j\})\right)}{\Phi_r(\{W_j\})}_{\mu_N}\right]
\\
&+ \left[\CP_{\mu}(f_{\mu}) - \CP_{\mu_N}(f_{\mu_N}^*)\right].
\end{align}
By choosing $f_{\mu} = f_{\mu_N} = f_{\mu}^*$ (as $f_{\mu}^* \in L^2(\mu) \cap L^2(\mu_N)$) and 
noting that $f_{\mu}^*$ is not a random variable unlike $f_{\mu_N}^*$ (which depends on the data points) we get
\begin{align}
C_{\mu}(f_{\mu}^*) -& C_{\mu_N}(f_{\mu}^*) - \l \Omega(f_{\mu}^*)\left[\Omega_{\mu_{N}}^{\circ}\left(-\frac{1}{\l}\nabla_{\hat{Y}}\ell\left(g, \Phi_r(\{W_j\})\right)\right)-1\right]
+ \frac{\alpha}{2}\nmm{f_{\mu}^*-\Phi_r(\{W_j\})}_{\mu_{N}}^2 \\
&\leq
\NC_{\mu}(\{W_j\}) - \NC_{\mu_N}(\{W_j\}) \\
&\leq
\l \Omega(f_{\mu}^*)\left[\Omega_{\mu}^{\circ}\left(-\frac{1}{\l}\nabla_{\hat{Y}}\ell\left(g, \Phi_r(\{W_j\})\right)\right)-1\right]
- \frac{\alpha}{2}\nmm{f_{\mu}^*-\Phi_r(\{W_j\})}_{\mu}^2
\\
&\quad + \left[\IP{\nabla_{\hat{Y}}\ell\left(g, \Phi_r(\{W_j\})\right)}{\Phi_r(\{W_j\})}_{\mu}
- \IP{\nabla_{\hat{Y}}\ell\left(g, \Phi_r(\{W_j\})\right)}{\Phi_r(\{W_j\})}_{\mu_N}\right] 
\\
&\quad +
\left[\CP_{\mu}(f_{\mu}^*) - \CP_{\mu_N}(f_{\mu_N}^*)\right].
\end{align}
Since $f_{\mu}^*$ is the global minimizer of $C_{\mu}(\cdot)$, it always holds  that $\CP_{\mu}(f_{\mu}^*) \le C_{\mu}(f_{\mu_N}^*)$.
We use this fact to upper bound the right side term, upon which we obtain the bound
\begin{align}
\CP_{\mu}(f_{\mu}^*) &- \CP_{\mu_N}(f_{\mu}^*) - \l \Omega(f_{\mu}^*)\left[\Omega_{\mu_{N}}^{\circ}\left(-\frac{1}{\l}\nabla_{\hat{Y}}\ell\left(g, \Phi_r(\{W_j\})\right)\right)-1\right]
+ \frac{\alpha}{2}\nmm{f_{\mu}^*-\Phi_r(\{W_j\})}_{\mu_{N}}^2\\
&\leq
\NC_{\mu}(\{W_j\}) - \NC_{\mu_N}(\{W_j\}) 
\\
&\leq
\l \Omega(f_{\mu}^*)\left[\Omega_{\mu}^{\circ}\left(-\frac{1}{\l}\nabla_{\hat{Y}}\ell\left(g, \Phi_r(\{W_j\})\right)\right)-1\right]
- \frac{\alpha}{2}\nmm{f_{\mu}^*-\Phi_r(\{W_j\})}_{\mu}^2 \\
&\quad 
+ \left[\IP{\nabla_{\hat{Y}}\ell\left(g, \Phi_r(\{W_j\})\right)}{\Phi_r(\{W_j\})}_{\mu}
- \IP{\nabla_{\hat{Y}}\ell\left(g, \Phi_r(\{W_j\})\right)}{\Phi_r(\{W_j\})}_{\mu_N}\right]\\
&\quad + \left[\CP_{\mu}(f_{\mu_N}^*) - \CP_{\mu_N}(f_{\mu_N}^*)\right].
\end{align}
Now we add and subtract $\Omega_{\mu}^{\circ}(\cdot)$\footnote{We are ignoring the input arguments for brevity.} and  
$\frac{\alpha}{2}\nmm{f - \Phi_r(\{W_j\})}_{\mu}^2$ on the right side.  We then have that 
\begin{align}
{\underbrace{\CP_{\mu}(f_{\mu}^*) - \CP_{\mu_N}(f_{\mu}^*)}_{=:T_1}}& - \l \Omega(f_{\mu}^*)\left[\Omega_{\mu_{N}}^{\circ}\left(-\frac{1}{\l}\nabla_{\hat{Y}}\ell\left(g, \Phi_r(\{W_j\})\right)\right)-1\right]
\\
&+ \frac{\alpha}{2}\nmm{f_{\mu}^*-\Phi_r(\{W_j\})}_{\mu_N}^2
\\
&\leq
\NC_{\mu}(\{W_j\}) - \NC_{\mu_N}(\{W_j\}) \\
&\leq
\l \Omega(f_{\mu}^*)\left[\Omega_{\mu_N}^{\circ}\left(-\frac{1}{\l}\nabla_{\hat{Y}}\ell\left(g, \Phi_r(\{W_j\})\right)\right)-1\right]
- \frac{\alpha}{2}\nmm{f_{\mu}^*-\Phi_r(\{W_j\})}_{\mu_N}^2
\\
&\quad + \frac{\alpha}{2}\left[{\underbrace{\nmm{f_{\mu}^*-\Phi_r(\{W_j\})}_{\mu_N}^2 - \nmm{f_{\mu}^*-\Phi_r(\{W_j\})}_{\mu}^2}_{=:T_2}}\right]\\
&+ \l \Omega(f_{\mu}^*)\left[{\underbrace{\Omega_{\mu}^{\circ}\left(-\frac{1}{\l}\nabla_{\hat{Y}}\ell\left(g, \Phi_r(\{W_j\})\right)\right)
- \Omega_{\mu_N}^{\circ}\left(-\frac{1}{\l}\nabla_{\hat{Y}}\ell\left(g, \Phi_r(\{W_j\})\right)\right)}_{=:T_3}}\right]
\\
&\quad 
+ \left[{\underbrace{\IP{\nabla_{\hat{Y}}\ell\left(g, \Phi_r(\{W_j\})\right)}{\Phi_r(\{W_j\})}_{\mu}
- \IP{\nabla_{\hat{Y}}\ell\left(g, \Phi_r(\{W_j\})\right)}{\Phi_r(\{W_j\})}_{\mu_N}}_{=:T_5}}\right] 
\\
& \label{eq:opt_form}+ \left[{\underbrace{\CP_{\mu}(f_{\mu_N}^*) - \CP_{\mu_N}(f_{\mu_N}^*)}_{=:T_5}}\right].
\end{align}
Now we apply uniform concentration on the
quantities $T_1, T_2, T_3, T_4$, and $T_5$ to get bound the statistical error terms.

From assumption \ref{ass:a9} we assume that $\C$ is some convex set in $\R^{n_X} \times \R^{n_Y}$ such that the following hold true:
\begin{enumerate}
\item For any i.i.d. samples $\{X_i, \epsilon_i\}$ the $P(\bigcap_{i=1}^{N}(X_i, \epsilon_i) \in \C) \geq 1 - \delta_{\C}$.
\item For all $(X, \epsilon) \in \C$ and $\forall \zeta \in \F_{\mathcal{W}}: \nmm{f_{\zeta}(X)} \leq B_{\Phi}$.
\item For all $(X, \epsilon) \in \C$ we have $\forall \zeta \in \F_{\mathcal{W}}:
\nmm{\nabla_{\hat{Y}}\ell(g(X, \epsilon), f_{\zeta}(X))} \leq B_{\ell}$.
\item For all $(X, \epsilon) \in \C$ and  $\forall \zeta, \zeta' \in \F_{\mathcal{W}}:
\nmm{f_{\zeta}(X)-f_{\zeta'}(X)} \leq \tilde{L}_{\Phi}\nmm{\zeta- \zeta'}_{\infty, 2}$.
\item For all $(X, \epsilon) \in \C$ and  $\forall \zeta, \zeta' \in \F_{\theta}:
\nmm{f_{\zeta}(X)-f_{\zeta'}(Z)} \leq \tilde{L}_{\phi}\nmm{\zeta- \zeta'}_2$.
\item For any $\hat{Y}_1, \hat{Y}_2 \in \R^{n_Y}$ we have  $\nmm{\nabla_{\hat{Y}}\ell(Y, \hat{Y}_1)
-\nabla_{\hat{Y}}\ell(Y, \hat{Y}_2)} \leq L\nmm{\hat{Y}_1 - \hat{Y}_2}$.
\item $B_{nrm}(\C) := \sup_{\zeta \in \F_{\mathcal{W}}}\left|\nmm{f_{\mu}^* \circ \P_{\C} - f_{\zeta} \circ \P_{\C}}_{\mu}^2
-\nmm{f_{\mu}^* - f_{\zeta}}_{\mu}^2\right| < \infty$.
\item $
B_{plr}(\C) := \sup_{\zeta \in \F_{\mathcal{W}}, \zeta' \in \F_{\theta}}\left|\IP{\nabla_{\hat{Y}}\ell\left(g \circ \P_{\C}, f_{\zeta} \circ \P_{\C}\right)}{f_{\zeta'} \circ \P_{\C}}_{\mu}
- \IP{\nabla_{\hat{Y}}\ell\left(g, f_{\zeta}\right)}{f_{\zeta'}}_{\mu}\right| < \infty.
$
\item 
$B_{eql}(\C) := \sup_{\zeta \in \F_{\mathcal{W}}}\left|\IP{\nabla_{\hat{Y}}\ell\left(g \circ \P_{\C}, f_{\zeta} \circ \P_{\C}\right)}{f_{\zeta} \circ \P_{\C}}_{\mu}
- \IP{\nabla_{\hat{Y}}\ell\left(g, f_{\zeta}\right)}{f_{\zeta}}_{\mu}\right| < \infty.$

\end{enumerate}

Next, we define the events
\begin{align*}
    \E_{cvx}(\epsilon) &:= \{\forall \zeta \in \F_{\mathcal{W}}: \left|\CP_{\mu_N}(f_{\zeta}) - \CP_{\mu}(f_{\zeta})\right| \leq \epsilon
+ B_{nrm}(\C)\}; \\
\E_{eql}(\epsilon) &:= \{\forall \zeta \in \F_{\mathcal{W}}: \left|\IP{\nabla_{\hat{Y}}\ell\left(g, f_{\zeta}\right)}{f_{\zeta}}_{\mu}
- \IP{\nabla_{\hat{Y}}\ell\left(g, f_{\zeta}\right)}{f_{\zeta}}_{\mu_N}\right| \leq \epsilon + B_{eql}(\C)\}.
\end{align*}
Since $\Omega^{\circ}(\cdot)$ is positively homogeneous function we can ignore the scalar $-\frac{1}{\l}$ while defining the events below:
\begin{align}
\E_{plr}(\epsilon) &:= \{\forall \zeta \in \F_{\mathcal{W}}: \left|\Omega_{\mu_N}^{\circ}\left(\nabla_{\hat{Y}}\ell\left(g, f_{\zeta}\right)\right)
- \Omega_{\mu}^{\circ}\left(\nabla_{\hat{Y}}\ell\left(g, f_{\zeta}\right)\right)\right| \leq \epsilon
+ B_{plr}(\C)\}; \\
\E_{nrm}(\epsilon) &:= \{\forall \zeta \in \F_{\mathcal{W}}: \left|\nmm{f_{\mu}^*-f_{\zeta}}_{\mu_N}^2 - \nmm{f_{\mu}^*-f_{\zeta}}_{\mu}^2\right| \leq \epsilon
+ B_{nrm}(\C)\}.
\end{align}
Finally, define the following good event:
\begin{align}
\E_{good}(\epsilon) := \E_{cvx}\left(\frac{\epsilon}{4}\right) \cap \E_{eql}\left(\frac{\epsilon}{4}\right) \cap \E_{plr}\left(\frac{\epsilon}{4\Omega(f_{\mu}^*)}\right)
\cap \E_{nrm}\left(\frac{\epsilon}{2}\right).
\end{align}
When the event $\E_{good}(\epsilon)$ holds then we obtain following from the inequality \eqref{eq:opt_form},
\begin{align}
{-\epsilon/4 - B_{cvx}(\C)} - &\l \Omega(f_{\mu}^*)\left[\Omega_{\mu_{N}}^{\circ}\left(-\frac{1}{\l}\nabla_{\hat{Y}}\ell\left(g, \Phi_r(\{W_j\})\right)\right)-1\right] + \frac{\alpha}{2}\nmm{f_{\mu}^*-\Phi_r(\{W_j\})}_{\mu_N}^2 \\
&\leq
\NC_{\mu}(\{W_j\}) - \NC_{\mu_N}(\{W_j\}) \\
&\leq
\l \Omega(f_{\mu}^*)\left[\Omega_{\mu_N}^{\circ}\left(-\frac{1}{\l}\nabla_{\hat{Y}}\ell\left(g, \Phi_r(\{W_j\})\right)\right)-1\right]
- \frac{\alpha}{2}\nmm{f_{\mu}^*-\Phi_r(\{W_j\})}_{\mu_N}^2  \\
&+ \frac{\alpha}{2}\left[{\epsilon/2} + B_{nrm}(\C)\right] + \l \Omega(f_{\mu}^*)\left[{\epsilon/(4\l\Omega(f_{\mu}^*)) + B_{plr}(\C)}\right] \\
&+ \left[{\epsilon/4 + B_{eql}(\C)}\right] + \left[{\epsilon/4 + B_{nrm}(\C)}\right].
\end{align}
For $\alpha \geq 0$, these inequalities  imply that
\begin{align}
\left|NC_{\mu}(\{W_j\}) - NC_{\mu_N}(\{W_j\})\right| &\leq
\l \Omega(f_{\mu}^*)\left[\Omega_{\mu_N}^{\circ}\left(-\frac{1}{\l}\nabla_{\hat{Y}}\ell\left(g, \Phi_r(\{W_j\})\right)\right)-1\right] \\
\label{eq:final_genb}
&\quad - \frac{\alpha}{2} \nmm{f_{\mu}^*-\Phi_r(\{W_j\})}_{\mu_N}^2 + (1+\alpha)\epsilon
+ (1 + \alpha )B_{nrm}(\C) + B_{eql}(\C) + \l\Omega(f_{\mu}^*)B_{plr}(\C).
\end{align}
Equation \eqref{eq:final_genb} holds with probability $\mathbb{P}(\E_{good}(\epsilon))$. We can bound
the good event with union bound via
\begin{align}\label{eq:e_good}
\mathbb{P}(\E_{good}(\epsilon)) \geq 1 
- \underbrace{\mathbb{P}\left(\E_{nrm}^{c}\left(\frac{\epsilon}{2}\right)\right)}_{\text{Lemma \ref{lemma:cnc_nrms}}}
- \underbrace{\mathbb{P}\left(\E_{cvx}^{c}\left(\frac{\epsilon}{4}\right)\right)}_{\text{Lemma \ref{lemma:cnc_cvx_func}}}
- \underbrace{\mathbb{P}\left(\E_{eql}^{c}\left(\frac{\epsilon}{4}\right)\right)}_{\text{Lemma \ref{lemma:cnc_eql}}}
- \underbrace{\mathbb{P}\left(\E_{plr}^{c}\left(\frac{\epsilon}{4\l\Omega(f_{\mu}^*)}\right)\right)}_{\text{Lemma \ref{lemma:cnc_plr}}}
\end{align}
Under Assumptions \ref{ass:a1}-\ref{ass:a7} and \ref{ass:a9} we can apply 
Lemma \ref{lemma:cnc_nrms} , \ref{lemma:cnc_cvx_func}, \ref{lemma:cnc_eql}, and \ref{lemma:cnc_plr}
to bound the probability of the occurrence of the events, $\E_{cvx}(\cdot), \E_{eql}(\cdot), \E_{plr}(\cdot)$,
and $\E_{nrm}(\cdot)$.

Define the constants
\begin{align}
    B_1 &:= 4n_YL\left[(\gamma^2 + \nmL{g}^2)\sigma_X^2 + \nmL{g}^2\sigma_{Y|X}^2\right]; \\
B_2 &:= 16n_y\gamma\nmL{\nabla_{\hat{Y}}\ell}\sigma_X\sqrt{(\gamma^2 + \nmL{g}^2)\sigma_X^2 + \nmL{g}^2\sigma_{E|X}^2}; \\
B_3 &:= 16n_y\Omega(f_{\mu}^*)L_{\phi}\nmL{\nabla_{\hat{Y}}\ell}\sigma_X\sqrt{(\gamma^2 + \nmL{g}^2)\sigma_X^2 + \nmL{g}^2\sigma_{E|X}^2}; \\
B_4 &:= 128n_Y\gamma^2\sigma_X^2; \\
\epsilon_0 &:= \max\{B_1, B_2, B_3, B_4\}; \\
\epsilon_1 &:= \max\{B_1, B_2, B_3, B_4\}; \\
b_1 &:= 8B_{\ell}\tilde{L}_{\Phi}; \\
b_2 &:= 8\tilde{L}_{\Phi}\left[B_{\ell} + B_{\Phi}L\right]; \\
b_3 &:= 32\Omega(f_{\mu}^*)\max\{\tilde{L}_{\phi}B_{\ell}, L\tilde{L}_{\Phi}B_{\Phi}\}; \\
b_4 &:= 4\tilde{L}_{\Phi}B_{\Phi}; \\
\epsilon_2 &:= \max\{b_1, b_2, b_3, b_4\}.
\end{align}

Under the above conditions by Lemma \ref{lemma:cnc_nrms}, for any $\epsilon \in [0, B_4]$ we have that
\begin{align}\label{eq:cnc_nrms_p}
\mathbb{P}\left(\E_{nrm}^{c}\left(\frac{\epsilon}{2}\right)\right) \leq
\delta_{\C} + c_4\exp\left(\log\left(\C_{\F_{\mathcal{W}}}\left(\frac{\epsilon}{b_4}\right)\right)-N\left(\frac{\epsilon}{B_4}\right)^2\right).
\end{align}

By Lemma \ref{lemma:cnc_cvx_func}, for 
any $\epsilon \in [0, B_1]$, we have
\begin{align}\label{eq:cnc_cvx_func_p}
\mathbb{P}\left(\E_{cvx}^{c}\left(\frac{\epsilon}{4}\right)\right) \leq 
\delta_{\C} + 2\exp\left(\log\left(C_{\F_{\mathcal{W}}}\left(\frac{\epsilon}{b_1}\right)\right)-c_1N\left(\frac{\epsilon}{B_1}\right)^2\right),
\end{align}
for some positive constant, $c_1$.

Additionally by Lemma \ref{lemma:cnc_eql}, for any $\epsilon \in [0, B_2]$ we have that
\begin{align}\label{eq:cnc_eql_p}
\mathbb{P}\left(\E_{eql}^{c}\left(\frac{\epsilon}{4}\right)\right) \leq
\delta_{\C} + c_1\exp\left(\log\left(\C_{\F_{\mathcal{W}}}\left(\frac{\epsilon}{b_2}\right)\right)-N\left(\frac{\epsilon}{B_2}\right)^2\right),
\end{align}
for some positive constant, $c_2$. Furthermore, by Lemma \ref{lemma:cnc_plr}, for any $\epsilon \in [0, B_3]$ we have that
\begin{align}\label{eq:cnc_plr_p}
\mathbb{P}\left(\E_{plr}^{c}\left(\frac{\epsilon}{4\Omega(f_{\mu}^*)}\right)\right) \leq
\delta_{\C} + c_3\exp\left(\log\left(\C_{\F_{\mathcal{W}}}\left(\frac{\epsilon}{b_3}\right)\right) + \log\left(\C_{\F_{\theta}}\left(\frac{\epsilon}{b_3}\right)\right) -N\left(\frac{\epsilon}{B_3}\right)^2\right),
\end{align}
for some positive constant, $c_3$. 

For the inequalities \eqref{eq:cnc_cvx_func_p}, \eqref{eq:cnc_eql_p}, \eqref{eq:cnc_plr_p}, and \eqref{eq:cnc_nrms_p}
to all hold we choose $\epsilon \in [0, \epsilon_0]$ and 
we upper bound the covering numbers $C_{\F_{\mathcal{W}}}(\nu)$ as they are
strictly decreasing in $\nu$ by definition. Therefore, we have that
\begin{align}
\scalebox{0.9}{$
\max\bigg\{\log\left(C_{\F_{\mathcal{W}}}\left(\frac{\epsilon}{b_1}\right)\right),
\log\left(\C_{\F_{\mathcal{W}}}\left(\frac{\epsilon}{b_2}\right)\right),
\log\left(\C_{\F_{\mathcal{W}}}\left(\frac{\epsilon}{b_3}\right)\right),\log\left(\C_{\F_{\mathcal{W}}}\left(\frac{\epsilon}{b_4}\right)\right) \bigg\}
\leq \log\left(C_{\F_{\mathcal{W}}}\left(\frac{\epsilon}{\epsilon_2}\right)\right)$},
\end{align}
and
\be
\log\left(\C_{\F_{\theta}}\left(\frac{\epsilon}{b_3}\right)\right)
\leq \log\left(C_{\F_{\theta}}\left(\frac{\epsilon}{\epsilon_2}\right)\right).
\ee
Now we plug in inequalities \eqref{eq:cnc_cvx_func_p}, \eqref{eq:cnc_eql_p}, \eqref{eq:cnc_plr_p}, and \eqref{eq:cnc_nrms_p}
in the inequality \eqref{eq:e_good}. Denote
\be
c_5 := \max\{2, c_2, c_3, c_4\}.
\ee
Then
\begin{align}
\mathbb{P}(\E_{good}(\epsilon)) &\geq 1 -
c_5\exp(\log(\C_{\F_{\mathcal{W}}}(\epsilon/\epsilon_3))) \times
\left[\exp\left(-c_1N\left(\frac{\epsilon}{B_1}\right)^2\right)
+ \exp\left(-N\left(\frac{\epsilon}{B_2}\right)^2\right)\right. \\
&\left. + \exp\left(\log\left(C_{\F_{\theta}}\left(\frac{\epsilon}{\epsilon_3}\right)\right)-N\left(\frac{\epsilon}{B_3}\right)^2\right)
+ \exp\left(-N\left(\frac{\epsilon}{B_4}\right)^2\right)\right]-4\delta_{\C}.
\end{align}
Now we lower bound the right side by replacing $B_1, B_2, B_3, B_4$ with the upper bound $\epsilon_1$ yielding
\bd
\mathbb{P}(\E_{good}(\epsilon)) \geq 1 -
c_6\exp\left(\log\left(\C_{\F_{\mathcal{W}}}\left(\epsilon/\epsilon_2\right)\right)
+ \log\left(C_{\F_{\theta}}\left(\frac{\epsilon}{\epsilon_3}\right)\right) - c_7N\left(\frac{\epsilon}{\epsilon_1}\right)^2\right)
-4\delta_{\C},
\ed
for some positive constants, $c_6, c_7$.

From Lemma \ref{lemma:cn_bigclass} we have that for any $\nu > 0$ it holds that \begin{align}
\log(\C_{\F_{\mathcal{W}}}(\nu))
\leq R\log(\C_{\F_{\theta}}\left({L_{\phi}\nu}/{\gamma}\right)).
\end{align}
Then we obtain
\be\label{eq:final_p_wr}
\mathbb{P}(\E_{good}(\epsilon)) \geq 1 -
c_8\exp\left(R\log\left(C_{\F_{\theta}}\left(\frac{L_{\phi}\epsilon}{\gamma\epsilon_2}\right)\right) - c_9N\left(\frac{\epsilon}{\epsilon_1}\right)^2\right)
-4\delta_{\C},
\ee
for some positive constants $c_8, c_9$.

From inequality \eqref{eq:final_genb}, and \eqref{eq:final_p_wr} for any $\epsilon \in [0, \epsilon_0]$
we have that
\bd
\mathbb{P}\left(
\left|NC_{\mu}(\{W_j\}) - NC_{\mu_N}(\{W_j\})\right| \geq
\l \Omega(f_{\mu}^*)\left[\Omega_{\mu_N}^{\circ}\left(-\frac{1}{\l}\nabla_{\hat{Y}}\ell\left(g, \Phi_r(\{W_j\})\right)\right)-1\right]  \right.
\ed
\bd
\left. 
- \frac{\alpha}{2}\nmm{f_{\mu}^*-\Phi_r(\{W_j\})}_{\mu_N}^2
+ (1+\alpha)\epsilon + B_{eql}(\C) + B_{plr}(\C) + (1+\alpha)B_{nrm}(\C)
\right)
\ed
\be
\leq c_8\exp\left(R\log\left(C_{\F_{\theta}}\left(\frac{L_{\phi}\epsilon}{\gamma\epsilon_2}\right)\right) - c_9N\left(\frac{\epsilon}{\epsilon_1}\right)^2\right)
+ 4\delta_{\C}.
\ee

Next, we derive the operation conditions for $\epsilon$ in terms of $B_2$, $B_3,$ and $B_4$.
\begin{itemize}
\item $B_2 \geq B_3$: observe that 
\begin{align}\label{eq:item_1}
B_2 \geq B_3 \iff \gamma \geq \Omega(f_{\mu}^*)L_{\phi},
\end{align}
which establishes an upper bound on the regularization parameter. 




\item To establish a lower bound on regularization, we will require that $\min\{B_2, B_3\} \geq B_4$: We have that 
\begin{align}\label{eq:item_2}
\min\{B_2, B_3\} &\geq B_4 \iff\\
&\left(4\min\left\{1, \frac{ \Omega(f_{\mu}^*) L_{\phi}}{\gamma}\right\}\right) 4\gamma^2\sigma_X^2\sqrt{\left(1+\nmL{g}^2/\gamma^2\right) + {\frac{\nmL{g}^2\sigma_{Y|X}^2}{\gamma^2\sigma_X^2}}} \geq 16\gamma^2 \sigma_X^2.
\end{align}
It is sufficient to have the below inequality to hold:
\begin{align}
\min\left\{1, \frac{\Omega(f_{\mu}^*) L_{\phi}}{\gamma}\right\} \geq 
\frac{\gamma}{\sqrt{\gamma^2 + \nmL{g}^2}} \implies \min\{B_2, B_3\} \geq B_4.
\end{align}
\end{itemize}


Therefore, 
$\gamma \geq \Omega(f_{\mu}^*)L_{\phi}$ 
is sufficient condition for $B_2 \geq B_3 \geq B_4$.

Then we have that
\begin{align}
\epsilon_0 = \min\{B_1, B_4\} = 16n_Y\gamma^2\sigma_X^2\min\left\{1,
\frac{L}{4}\left[1 + \frac{\nmL{g}^2}{\gamma^2}\left(1 + \frac{\sigma_{Y|X}^2}{\sigma_X^2}\right)\right]\right\},
\end{align}
and
\begin{align}
\epsilon_1 = 16n_Y\gamma^2\sigma_X^2\max\left\{1,
\frac{L}{4}\left[1 + \frac{\nmL{g}^2}{\gamma^2}\left(1 + \frac{\sigma_{Y|X}^2}{\sigma_X^2}\right)\right]\right\}.
\end{align}

Now rescale the quantities $\epsilon \gets n_Y\frac{\epsilon}{(1+\alpha)}$,
$\epsilon_0 \gets n_Y\epsilon_0$, and $\epsilon_1 \gets n_Y\epsilon_1$. Then we have
\begin{align}
\epsilon_0 &= 16\gamma^2\sigma_X^2\min\left\{1,
\frac{L}{4}\left[1 + \frac{\nmL{g}^2}{\gamma^2}\left(1 + \frac{\sigma_{Y|X}^2}{\sigma_X^2}\right)\right]\right\};\\
\epsilon_1&= 16\gamma^2\sigma_X^2\max\left\{1,
\frac{L}{4}\left[1 + \frac{\nmL{g}^2}{\gamma^2}\left(1 + \frac{\sigma_{Y|X}^2}{\sigma_X^2}\right)\right]\right\}; \\
\epsilon_2 &= \max\{8B_{\ell}\tilde{L}_{\Phi}, 8[\tilde{L}_{\Phi}B_{\ell} + \tilde{L}_{\Phi}B_{\Phi}L], 32\Omega(f_{\mu}^*)\tilde{L}_{\Phi}\max\{\tilde{L}_{\phi}B_{\ell}/\tilde{L}_{\Phi}, LB_{\Phi}\}, 4\tilde{L}_{\Phi}B_{\Phi}\},
\end{align}
and
\bd
\mathbb{P}\left(
\frac{1}{n_Y}\left|\NC_{\mu}(\{W_j\}) - \NC_{\mu_N}(\{W_j\})\right| \geq
\frac{\l}{n_Y} \Omega(f_{\mu}^*)\left[\Omega_{\mu_N}^{\circ}\left(-\frac{1}{\l}\nabla_{\hat{Y}}\ell\left(g, \Phi_r(\{W_j\})\right)\right)-1\right]  \right.
\ed
\bd
\left. 
- \frac{\alpha}{2n_Y}\nmm{f_{\mu}^*-\Phi_r(\{W_j\})}_{\mu_N}^2
+  \frac{1}{n_Y}\left[B_{eql}(\C) + \l \Omega(f_{\mu}^*)B_{plr}(\C) + (1 + \alpha) B_{nrm}(\C)\right] + \epsilon
\right)
\ed
\be
\leq c_8\exp\left(R\log\left(C_{\F_{\theta}}\left(\frac{L_{\phi}\epsilon}{\gamma\epsilon_2}\right)\right) - c_9N\left(\frac{\epsilon}{(1+\alpha)\epsilon_1}\right)^2\right) + 4\delta_{\C}.
\ee

Now we bound the covering number
under Assumption \ref{ass:a3} and Lemma \ref{lemma:cn_bigclass} via
\be
\log\left(C_{\F_{\theta}}\left(\frac{L_{\phi}\epsilon}{\gamma\epsilon_2}\right)\right)
\leq {\sf dim}(\mathcal{W})\log(1 + 2\gamma\epsilon_2r_{\theta}/(L_{\phi}\epsilon)) \leq 
c_{11}{\sf dim}(\mathcal{W})\log(\gamma\epsilon_2r_{\theta}/(L_{\phi}\epsilon))
\ee
for some positive constant $c_{11}$.

Define
\be
B(\C) := B_{eql}(\C) + \l \Omega(f_{\mu}^*)B_{plr}(\C) + (1 + \alpha) B_{nrm}(\C).
\ee
Then we have that

\bd
\mathbb{P}\left(
\frac{1}{n_Y}\left|\NC_{\mu}(\{W_j\}) - \NC_{\mu_N}(\{W_j\})\right| \geq
\frac{\l}{n_Y} \Omega(f_{\mu}^*)\left[\Omega_{\mu_N}^{\circ}\left(-\frac{1}{\l}\nabla_{\hat{Y}}\ell\left(g, \Phi_r(\{W_j\})\right)\right)-1\right]  \right.
\ed
\bd
\left. 
- \frac{\alpha}{2n_Y}\nmm{f_{\mu}^*-\Phi_r(\{W_j\})}_{\mu_N}^2
+  \frac{1}{n_Y}B(\C) + \epsilon
\right)
\ed
\be\label{eq:p_mt}
\leq c_{12}\exp\left(c_{11}R{\sf dim}(\mathcal{W})\log(\gamma\epsilon_2r_{\theta}/(L_{\phi}\epsilon)) - c_{12}N\left(\frac{\epsilon}{(1+\alpha)\epsilon_1}\right)^2\right) + 4\delta_{\C}.
\ee

For some fixed $\delta \in (0, 1]$ choose
\be
\epsilon
= \tilde{\Theta}\left(
(1+\alpha)\epsilon_1\sqrt{\frac{R{\sf dim}(\mathcal{W})\log(\gamma \epsilon_2 r_{\theta}/L_{\phi})\log(N) + \log(1/\delta)}{N}}\right).
\ee

Then the right side term of inequality \eqref{eq:p_mt} will be

\be
\exp\left(R{\sf dim}(\mathcal{W})\log(\gamma\epsilon_2r_{\theta}/(L_{\phi}\epsilon)) - c_{12}N\left(\frac{\epsilon}{(1+\alpha)\epsilon_1}\right)^2\right)
= \tilde{\mathcal{O}}(\delta).
\ee
Rewriting the equation \eqref{eq:p_mt}, we have

\bd
\mathbb{P}\left(
\frac{1}{n_Y}\left|\NC_{\mu}(\{W_j\}) - \NC_{\mu_N}(\{W_j\})\right| \gtrsim
\frac{\l}{n_Y} \Omega(f_{\mu}^*)\left[\Omega_{\mu_N}^{\circ}\left(-\frac{1}{\l}\nabla_{\hat{Y}}\ell\left(g, \Phi_r(\{W_j\})\right)\right)-1\right]  \right.
\ed
\bd
\left. 
- \frac{\alpha}{2n_Y}\nmm{f_{\mu}^*-\Phi_r(\{W_j\})}_{\mu_N}^2
+  \frac{1}{n_Y}B(\C) \right.
\ed
\be\label{eq:p_mt_final}
\left. +(1+\alpha)\epsilon_1\sqrt{\frac{R{\sf dim}(\mathcal{W})\log(\gamma \epsilon_2 r_{\theta}/L_{\phi})\log(N) + \log(1/\delta)}{N}}
\right)
\lesssim \delta + \delta_{\C}.
\ee
\end{proof}

Theorem \ref{thm:gen_master} has established generalization error for a generic parallel positively
homogeneous network. Theorem \ref{thm:master} mentioned in the main text is a special case of Theorem \ref{thm:gen_master},
with the choice of convex set $\C = {\sf conv}(\X \times \R^{n_Y})$
by changing Assumption \ref{ass:a9} to Assumption \ref{ass:a8}.
Further, $B(\X \times \R^{n_Y})$ will evaluate to $0$, as $\P_{{\sf conv}(\X \times \R^{n_Y})}(\cdot)$ is just an identity operator.

\section{APPLICATIONS}

In this section, we apply our Theorem \ref{thm:gen_master} for various applications. We apply our general theorem to low-rank matrix sensing, structured matrix sensing,
two-layer linear neural network, two-layer ReLU neural network, and multi-head attention.

\subsection{Low-Rank Matrix Sensing}\label{sec:apdx_low_mat}

In this section, we state the corollary and its proof for matrix sensing, which is a direct consequence of Theorem \ref{thm:gen_master}.
Firstly, we need to choose a convex set, $\C$, such that the Assumption \ref{ass:a9} is satisfied. For matrix
sensing we choose, $\C = \{(X, \epsilon): \nmF{X} \leq g, \nmF{\epsilon} \leq g\}$ to verify Assumption \ref{ass:a9}. We need
to compute, $B(\C)$. This involves computing the expectation over the projection. Lemma \ref{lemma:proj_gauss}
is pivotal for estimating $B(\C)$ in all the applications that are going to be discussed here.

\begin{lemma}[Projection of Gaussian vector on balls]\label{lemma:proj_gauss}
Consider a $n$-dimensional Gaussian vector $\x \sim \N(0, (1/n)I_{n})$.
Let $M$ be a fixed matrix in $\R^{n \times n}$ and $\A$ be any set. Then
\begin{eqnarray}
\left|\IP{M}{\mathbb{E}\left[\left[\x\x^T - \P_{\B(g)}(\x)\P_{\B(g)}(\x)^T\right]\right]{\mathbf{1}}_{\A}(\x)}\right|
\leq \begin{cases}
ge^{-g^2/2}\nmm{M}_2 & \text{if }g \geq 1\\
\frac{1}{g}e^{-g^2/2}\nmm{M}_2 & \text{otherwise}
\end{cases}
\end{eqnarray}
where $\P_{\mathbb{B}(g)}(\cdot)$ is Euclidean projection onto the ball $\mathbb{B}(g) := \{\x: \nmm{\x}_2 \leq g\}$.
\end{lemma}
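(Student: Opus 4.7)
The first move is to exploit the explicit form of the Euclidean projection: $\P_{\B(g)}(\x) = \x$ on $\{\|\x\|_2 \leq g\}$ and $\P_{\B(g)}(\x) = (g/\|\x\|_2)\x$ otherwise. Consequently the matrix
\[
\x\x^T - \P_{\B(g)}(\x)\P_{\B(g)}(\x)^T
\]
vanishes on $\{\|\x\|_2 \leq g\}$ and equals $(1 - g^2/\|\x\|_2^2)\x\x^T$ on the tail event $\{\|\x\|_2 > g\}$. Pairing with $M$ and using the standard bound $|\x^T M \x| \leq \|M\|_2 \|\x\|_2^2$, together with $\mathbf{1}_{\A}(\x) \leq 1$ and $|\mathbb{E}[\cdot]| \leq \mathbb{E}[|\cdot|]$, I get
\[
\left|\IP{M}{\mathbb{E}[\cdots]}\right| \leq \|M\|_2 \cdot \mathbb{E}\bigl[(\|\x\|_2^2 - g^2)\mathbf{1}_{\|\x\|_2 > g}\bigr].
\]

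The remaining task is to bound the scalar tail expectation on the right. The key distributional fact is that $n\|\x\|_2^2$ is $\chi^2_n$, so its moment generating function is $\mathbb{E}[e^{\lambda \|\x\|_2^2}] = (1 - 2\lambda/n)^{-n/2}$ for $\lambda < n/2$. I would pick $\lambda = 1/2$, which is valid for all $n \geq 2$ and for which $(1 - 1/n)^{-n/2}$ is bounded by an absolute constant; Markov's inequality then yields $P(\|\x\|_2^2 > t) \lesssim e^{-t/2}$ uniformly in $n$. A layer-cake identity
\[
\mathbb{E}\bigl[(\|\x\|_2^2 - g^2)\mathbf{1}_{\|\x\|_2 > g}\bigr] = \int_{g^2}^\infty P(\|\x\|_2^2 > t)\,dt
\]
combined with this Chernoff bound immediately produces the $e^{-g^2/2}$ exponential decay.

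To pull out the precise prefactors $g$ and $1/g$ in the two regimes, I would split. For $g \geq 1$, I would refine by writing $\mathbb{E}[(\|\x\|_2^2 - g^2)\mathbf{1}_{\|\x\|_2 > g}] = \mathbb{E}[\|\x\|_2^2 \mathbf{1}_{\|\x\|_2 > g}] - g^2 P(\|\x\|_2 > g)$ and applying Cauchy--Schwarz with the fourth-moment bound $\mathbb{E}[\|\x\|_2^4] = 1 + 2/n \leq 3$, which gives the $g e^{-g^2/2}$ shape after matching the leading terms against the MGF tail. For $g < 1$, one instead bounds $\mathbb{E}[(\|\x\|_2^2 - g^2)^+] \leq \mathbb{E}[\|\x\|_2^2] / g^2 \cdot g^2 e^{-g^2/2}$-style, using that $e^{-g^2/2}$ is of constant order on $(0,1]$ so that dividing by $g$ only inflates a trivial $O(1)$ estimate.

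The main obstacle will be keeping the bound uniform in $n$: the standard chi-squared tail estimates (Laurent--Massart, Gaussian concentration in the $\sqrt{n}$-scaling of $\|\z\|_2$) all introduce exponents of the form $e^{-n(\ldots)^2/2}$, and the challenge is to trade off that sharpness against the desired $n$-free form $e^{-g^2/2}$. The Chernoff optimization must therefore be done at the sub-optimal $\lambda = 1/2$ rather than the MGF-optimal $\lambda = n(1 - 1/g^2)/2$, which is the one subtlety of the proof.
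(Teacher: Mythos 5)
Your opening moves---writing out the closed-form projection, the Cauchy--Schwarz bound $|\x^T M\x|\le\|M\|_2\|\x\|_2^2$, and the reduction to the scalar tail expectation $\mathbb{E}\bigl[(\|\x\|_2^2-g^2)\mathbf{1}_{\|\x\|_2>g}\bigr]$---coincide with the paper's steps (a)--(e). After that the routes diverge, and yours is the more defensible. The paper's step (f) evaluates the remaining expectation by integrating $|\|\x\|_2^2-g^2|$ against $\frac{1}{\sqrt{2\pi}}e^{-\|\x\|_2^2/2}\,d\x$ and then invokes $\mathrm{erfc}$ bounds; but $\frac{1}{\sqrt{2\pi}}e^{-\|\x\|_2^2/2}$ is a one-dimensional normal density, not the density of $\x\sim\mathcal{N}(0,(1/n)I_n)$ (for which $\|\x\|_2$ is chi-distributed with density $\propto s^{n-1}e^{-ns^2/2}$), so the paper's closed-form expression is only valid for $n=1$. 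Your Chernoff-plus-layer-cake route on the $\chi^2_n$ variable $n\|\x\|_2^2$ is dimension-uniform by construction: at the sub-optimal $\lambda=1/2$ the MGF factor $(1-1/n)^{-n/2}$ is at most $2$ for $n\ge 2$ (treat $n=1$ separately via the direct Gaussian tail $P(\x^2>t)\le 2e^{-t/2}$), so the layer-cake identity gives $\mathbb{E}[(\|\x\|_2^2-g^2)^+]\le 4e^{-g^2/2}$ uniformly in $n$ and $g$. That is what the lemma's downstream corollaries actually use, since they invoke it only for the $e^{-g^2/2}$ decay up to constant and polynomial-in-$g$ factors.

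The one genuine gap is your refinement for $g\ge 1$. Cauchy--Schwarz applied to $\mathbb{E}[\|\x\|_2^2\mathbf{1}_{\|\x\|_2>g}]\le\sqrt{\mathbb{E}[\|\x\|_2^4]\,P(\|\x\|_2>g)}$ halves the exponent to $e^{-g^2/4}$, so no amount of ``matching leading terms'' will recover the stated $ge^{-g^2/2}$ from this route. If you want that split to survive, differentiate the MGF instead: on $\{\|\x\|_2^2>g^2\}$ one has $1\le e^{(\|\x\|_2^2-g^2)/2}$, so $\mathbb{E}[\|\x\|_2^2\mathbf{1}_{\|\x\|_2>g}]\le e^{-g^2/2}\,\mathbb{E}[\|\x\|_2^2 e^{\|\x\|_2^2/2}]=e^{-g^2/2}(1-1/n)^{-n/2-1}$, again an absolute constant times $e^{-g^2/2}$. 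In practice you should drop the refinement altogether: the crude $4e^{-g^2/2}$ already falls below $ge^{-g^2/2}$ once $g\ge 4$; your $g<1$ estimate $\mathbb{E}[(\|\x\|_2^2-g^2)^+]\le\mathbb{E}[\|\x\|_2^2]=1\le\tfrac{2}{g}e^{-g^2/2}$ handles the small-$g$ range; and the intermediate range only costs an absolute multiplicative constant, which is no tighter a guarantee than the paper's own $\mathrm{erfc}$ manipulations actually deliver.
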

\begin{proof}
Define an event $\E := \{\x \in \B(g)\}$.  When $\E$ holds the function evaluates to zero,
\begin{equation}
    {\mathbb{E}\left[\left[\x\x^T - \P_{\B(g)}(\x)\P_{\B(g)}(\x)^T\right]{\mathbf{1}}_{\A}(\x)\right]}
=  {\mathbb{E}\left[\left(\x\x^T - \P_{\B(g)}(\x)\P_{\B(g)}(\x)^T\right) \mathbf{1}_{\E^{c}}(\x){\mathbf{1}}_{\A}(\x)\right]},
\end{equation}
so it suffices to consider the complement of the event $\E$.
Now we take the inner product with $M$ yielding
\begin{align*}
\left|\IP{M}{\mathbb{E}\left[\left[\x\x^T - \P_{\B(g)}(\x)\P_{\B(g)}(\x)^T\right]{\mathbf{1}}_{\A}(\x)\right]}\right|
&= \left|\IP{M}{\mathbb{E}\left[\left(\x\x^T - \P_{\B(g)}(\x)\P_{\B(g)}(\x)^T\right) \mathbf{1}_{\E^{c}}(\x){\mathbf{1}}_{\A}(\x)\right]}\right|\\
&\overset{(a)}{\leq} \nmm{M}_2\nmm{\mathbb{E}\left[\left(\x\x^T - \P_{\B(g)}(\x)\P_{\B(g)}(\x)^T\right) \mathbf{1}_{\E^{c}}(\x){\mathbf{1}}_{\A}(\x)\right]}\\
&\overset{(b)}{\leq} \nmm{M}_2\mathbb{E}\left[\nmm{\left(\x\x^T - \P_{\B(g)}(\x)\P_{\B(g)}(\x)^T\right) \mathbf{1}_{\E^{c}}(\x)}{\mathbf{1}}_{\A}(\x)\right]\\
&\overset{(c)}{=} \nmm{M}_2\mathbb{E}\left[\nmm{\x}_2^2 - g^2 | \mathbf{1}_{\E^{c}}(\x){\mathbf{1}}_{\A}(\x)\right]\\
&\overset{(d)}{=} \nmm{M}_2\mathbb{E}\left[\left|\nmm{\x}_2^2 - g^2\right| | \mathbf{1}_{\E^{c}}(\x){\mathbf{1}}_{\A}(\x)\right]\\
&\overset{(e)}{\leq}\nmm{M}_2\mathbb{E}\left[\left|\nmm{\x}_2^2 - g^2\right| | \mathbf{1}_{\E^{c}}(\x)\right]\\
&\overset{(f)}{=}\nmm{M}_2 \int_{\x \in \E^{c}}\left|\nmm{\x}_2^2 - g^2\right|\frac{1}{\sqrt{2\pi}}e^{-\frac{\nmm{\x}_2^2}{2}}d\x\\
&\overset{(g)}{=} \nmm{M}_2\left[ge^{-g^2/2} - \sqrt{\frac{\pi}{2}}(g^2-1){\sf erfc}(g/\sqrt{2})\right].
\end{align*}
The aforementioned computations involves (a) Cauchy-Schwartz inequality, (b) Jensen's inequality,
(c) the norm of $\P_{\B(g)}(\x)$ when $\x in \E^{c}$ is $g$, (d) conditioning on indicator functions,
(e) removing the conditioning increases the expectation over non-negative terms,
(f) we apply the density of Gaussian, (g) standard normal integral.
As a consequence of Theorem 1 from \cite{zhang-et-alel20} we bound the complement error function,
\be
\frac{e^{-z^2}}{\sqrt{\pi}z} \geq {\sf erfc}(z) \geq \frac{2}{\sqrt{\pi}}\frac{e^{-z^2}}{z + \sqrt{z^2 + 2}}.
\ee

Then we have that
\begin{eqnarray}
\left|\IP{M}{\mathbb{E}\left[\left[\x\x^T - \P_{\B(g)}(\x)\P_{\B(g)}(\x)^T\right]{\mathbf{1}}_{\A}(\x)\right]}\right|
\leq \begin{cases}
ge^{-g^2/2}\nmm{M}_2 & \text{if }g \geq 1\\
\frac{1}{g}e^{-g^2/2}\nmm{M}_2 & \text{otherwise}.
\end{cases}
\end{eqnarray}
\end{proof}

Now, we state the generalization bound for the low-rank matrix sensing followed by its proof.

\begin{corollary}[Low-Rank Matrix Sensing]
Consider the true model for $(X,y)$, where $X\in\R^{m\times n}$ is a random matrix with i.i.d. entries $X_{lk} \sim \N(0, \frac{1}{mn})$ and $y = \IP{M^*}{X} + \epsilon$, where $M^* \in \R^{m \times n}$ and $\epsilon \sim \N(0, \sigma^2)$ is independent from $X$. For all $i \in [N]$, let $(X_i,y_i)$ be i.i.d. samples from this true model. Consider the estimator $\hat{y} = \IP{UV^T}{X}$, where $U \in \R^{m \times R}$ and $V \in \R^{n \times R}$. 
Let $\delta \in (0,1]$ be fixed.  Define the non-convex problem 
\be
\NC^{\sf MS}_{\mu_N}(( U,V)) := 
\frac{1}{2N}\sum_{i=1}^{N} \big( y_i - \IP{UV^T}{X_i} \big)^2 + \l \sum_{j=1}^{R}\nmm{\u_j}_2\nmm{\v_j}_2,
\ee
and define $\NC^{MS}_{\mu}((U,V))$ similarly with the sum over $i$ replaced by expectation taken over $(X,y)$. 

Let $(U,V)$ be a stationary point of $\NC^{\sf MS}_{\mu_N}(( U,V))$.  
Suppose there exists $C_{UV}, B_u, B_v > 0$ such that $\nmm{UV^T}_2 \leq C_{UV} \nmm{M^*}_*$, and for 
all $j \in [R]$, $\nmm{\u_j}_2 \leq B_u$, $\nmm{\v_j}_2 \leq B_v$.
Then, with probability at least $1 - \delta$, it holds that
\begin{align}
\bigg|\NC_{\mu}^{\sf MS}((U, V))) - \NC_{\mu_N}^{\sf MS}((U, V))\bigg|
&\lesssim 
\nonumber  \nmm{M^*}_*\left[\nmm{\frac{1}{N}\sum_{i=1}^{N}(y_i - \IP{UV^T}{X_i})X_i}_2-\l\right] \\
\nonumber  &+ C_{UV}^2\nmm{M^*}_*^2 \times
\nonumber \sqrt{\frac{R
\log\left(R(C_{UV}\hspace{-2pt}+\hspace{-2pt}B_{u}B_v)\right)(m+n)\log(N)\hspace{-2pt}+\hspace{-2pt}\log(1/\delta)}{N}}.
\end{align}
\end{corollary}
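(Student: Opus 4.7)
The plan is to instantiate Theorem~\ref{thm:master} (using its general version, Theorem~\ref{thm:gen_master}, to accommodate the unbounded Gaussian inputs) for the matrix sensing setup. I first identify the framework components: $W_j = (\u_j,\v_j)$, $\phi(W_j)(X) = \IP{\u_j\v_j^\top}{X}$, and $\theta(W_j) = \nmm{\u_j}_2\nmm{\v_j}_2$, so $\Phi_R(\{W_j\})(X) = \IP{UV^\top}{X}$. The variational form of the nuclear norm yields $\Omega(\IP{M}{\cdot}) = \nmm{M}_*$, and its polar is the operator norm of the measure-averaged gradient; for the squared loss,
\begin{equation*}
\Omega^\circ_{\mu_N}\!\left(-\tfrac{1}{\l}\nabla_{\hat Y}\ell\right) = \tfrac{1}{\l}\left\|\tfrac{1}{N}\sum_{i=1}^{N}(y_i - \IP{\hat U\hat V^\top}{X_i})X_i\right\|_2.
\end{equation*}
Proposition~\ref{prop:regular} then gives $\Omega(f_\mu^*) \leq \nmm{M^*}_*$ (since any factorization of $M^*$ is admissible, and the infimum of $\sum \nmm{\u_j}_2\nmm{\v_j}_2$ over such factorizations equals $\nmm{M^*}_*$). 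Multiplying $\l\,\Omega(f_\mu^*)[\Omega^\circ_{\mu_N}(\cdot)-1]$ reproduces the first bracketed term of the corollary exactly.

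Next, I verify the assumptions and compute the problem-specific constants. Assumptions~\ref{ass:a1}--\ref{ass:a2} hold since $\theta$ is PSD and both $\phi,\theta$ are positively homogeneous under $\u\mapsto\beta\u$; Assumption~\ref{ass:a4} holds with $L=1$ for the squared loss; Assumption~\ref{ass:a5} holds for the Gaussian ensemble with $n_X = mn$, $\sigma_X = O(1)$, $\nmL{g}\lesssim \nmm{M^*}_F+1$, and $\sigma_{Y\mid X}=\sigma$; and Assumption~\ref{ass:a7} follows from the stated bounds $\nmm{\hat\u_j}_2\leq B_u$, $\nmm{\hat\v_j}_2\leq B_v$, $\nmm{\hat U\hat V^\top}_2\leq C_{UV}\nmm{M^*}_*$, which lets me take $\gamma \lesssim C_{UV}\nmm{M^*}_*$. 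Direct computation gives $L_\phi = O(1)$ (since $\nmm{\u\v^\top}_F = \nmm{\u}_2\nmm{\v}_2\leq 1$) and $r_\theta = O(1)$ (after the balanced rescaling $\nmm{\u}_2=\nmm{\v}_2$ permitted by Assumption~\ref{ass:a2}); on the convex set $\C = \{(X,\epsilon): \nmF{X}\leq g,\ |\epsilon|\leq g\}$, bilinearity gives $\tilde L_\phi,\tilde L_\Phi,B_\Phi,B_\ell$ as polynomials in $g, C_{UV}, B_u, B_v, \nmm{M^*}_F$. Since $\dim(\mathcal{W}) = m+n$ per factor with $R$ factors, the dimension count in the statistical term becomes $R(m+n)$, and substituting the computed constants into $\log(\gamma\epsilon_2 r_\theta/L_\phi)$ produces the $\log(R(C_{UV}+B_uB_v))$ factor in the corollary.

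The main technical obstacle is controlling the remainder $B(\C)$ from Assumption~\ref{ass:a9}, which arises because Gaussian inputs are unbounded. Each of the convex-gap, polar-gap, and equilibrium-gap corrections reduces, after expanding the squared loss, to a centered bilinear form of the type $\IP{M}{\mathbb{E}[(XX^\top - \P_{\C}(X)\P_{\C}(X)^\top)\mathbf{1}_A]}$ for some matrix $M$ depending on $M^*$ and $\hat U\hat V^\top$. Applying Lemma~\ref{lemma:proj_gauss} to vectorized $X$ (ambient dimension $mn$) bounds each correction by $g\,e^{-g^2/2}$ times a polynomial in the problem constants; choosing $g \asymp \sqrt{\log N}$ suppresses $B(\C)$ below the statistical error, while a standard $\chi^2$-tail bound on $\nmF{X_i}$ and $|\epsilon_i|$ across all $N$ samples gives $\delta_\C \leq \delta/2$. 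Plugging everything into Theorem~\ref{thm:gen_master} and combining the two $\delta/2$ contributions yields the claimed bound with the sample complexity $\tilde O(R(m+n)/N)$.
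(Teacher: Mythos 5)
Your proposal instantiates Theorem~\ref{thm:gen_master} for the matrix-sensing case using exactly the same route as the paper: the same identification of $\phi,\theta$, the same computation of $\Omega(f_\mu^*)\le\nmm{M^*}_*$ via Proposition~\ref{prop:regular}, the same polar evaluation, the same choice of convex truncation set $\C$ together with Lemma~\ref{lemma:proj_gauss} to control $B(\C)$, and the same logarithmic scaling of $g$ to suppress the truncation bias. The details you sketch (constants $L_\phi$, $r_\theta$, $\tilde L_\Phi$, $\epsilon_1,\epsilon_2$, and the dimension count $R(m+n)$) match the paper's derivation up to inessential bookkeeping, so the proposal is correct and essentially identical to the paper's own proof.
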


\begin{proof}
We set the following to obtain a generalization bound from Theorem \ref{thm:gen_master} for the case of matrix sensing. First,
\be
\ell(Y, \hat{Y}) = \frac{1}{2}\nmm{Y - \hat{Y}}_2^2 \implies (\alpha, L) = (0, 1);
\ee
\be
\phi(W) = \IP{\u\v^T}{X};
\ee
\be
\theta(W) = \nmm{\u}_2\nmm{\v}_2.
\ee
\textbf{Estimating $\Omega(f_{\mu}^*)$}:
Since, $M^*$ is the true matrix the regularizer at globally optimal solution can be upper bounded by Proposition 
\ref{prop:regular},
\be
\Omega(f_{\mu}^*) \leq \nmm{M^*}_*.
\ee
\textbf{Estimating $\Omega_{\mu_N}^{\circ}(\cdot)$}: Now we move on to compute the polar.
We have
\begin{align*}
\Omega_{\mu_N}^{\circ}\left(-\frac{1}{\l}\nabla_{\hat{Y}}\ell(g, \Phi_r(\{W_j\}))\right)
&= \Omega_{\mu_N}^{\circ}\left(\frac{1}{\l}(g - \Phi_r(\{W_j\}))\right)\\
&= \sup_{\nmm{\u} \leq 1; \nmm{\v} \leq 1}\frac{1}{N \l}\sum_{i=1}^{N}\IP{Y_i - \IP{UV^T}{X_i}}{\u^TX_i\v}\\
&= \sup_{\nmm{\u} \leq 1; \nmm{\v} \leq 1}\frac{1}{N \l}\IP{\v}{\sum_{i=1}^{N}(Y_i - \IP{UV^T}{X_i})^T\u^TX_i}\\
&= \sup_{\nmm{\u} \leq 1}\frac{1}{N\l}\nmm{\sum_{i=1}^{N}(Y_i - \IP{UV^T}{X_i})\u^TX_i}\\
&= \sup_{\nmm{\u} \leq 1}\frac{1}{N\l}\nmm{\sum_{i=1}^{N}(Y_i - \IP{UV^T}{X_i})X_i^T\u}\\
&= \frac{1}{\l}\nmm{\frac{1}{N}\sum_{i=1}^{N}(Y_i - \IP{UV^T}{X_i})X_i}_2.
\end{align*}

\textbf{Defining $\F_{\theta}$}: Next, we estimate the relevant constants. 
First we estimate the constants from Assumption \ref{ass:a3},
suppose that $\B := \{(\u, \v): \nmm{\u}_2 \leq 1, \nmm{\v}_2 \leq 1\}$
\be
\F_{\theta} := \{\u\v^T: \nmm{\u}_2\nmm{\v}_2 \leq 1\} \cap \B
\ee

\textbf{Estimating $L_{\phi}$}: The Lipschitz constant $L_{\phi}$ in the function $\F_{\theta}$ is
$L_{\phi} = \sup_{(\u, \v) \in \F_{\theta}}\nmL{\IP{\u\v^T}{.}} = 1$. 

\textbf{Estimating $r_{\theta}$}: We have that from A.M-G.M inequality,

\be
\nmm{\u}_2\nmm{\v}_2 \leq \frac{1}{{2}}[\nmm{\u}^2_2 + \nmm{\v}^2_2].
\ee
Now for any $(\u, \v) \in \F_{\theta}$ we have
that $0.5[\nmm{\u}_2 + \nmm{\v}_2] \leq 1$. Therefore, $\forall (\u, \v) \in \F_{\theta}$ we have
\be
\nmm{\u}_2\nmm{\v}_2 \leq 
\frac{1}{{2}}[\nmm{\u}^2_2 + \nmm{\v}^2_2]
\leq \sqrt{\frac{1}{{2}}[\nmm{\u}^2_2 + \nmm{\v}^2_2]}
\ee
Then we need that $\F_{\theta} \subseteq \mathbb{B}(r_{\theta})$, then  must be $r_{\theta} = \frac{1}{\sqrt{2}}$.

\textbf{Defining $\F_{\mathcal{W}}$}: From the corollary's assumptions we have that,
$\B_R := \left\{(\u, \v): \nmm{\u}_2 \leq B_u, \nmm{\v} \leq B_v\right\}$; our hypothesis class is defined as
\be
\F_{\mathcal{W}} := \left\{\{(\u_j, \v_j)\}: \nmL{\IP{UV^T}{\cdot}} = \nmm{UV^T}_2 \leq \gamma\right\} \cap \B_R.
\ee

As $\gamma \geq \Omega(f_{\mu}^*) L_{\phi}$, we choose $\gamma = C_{UV}\nmm{M^*}_*$ for $C_{UV} \geq 1$.
We have that
\be
\F_{\mathcal{W}} = \left\{\{(\u_j, \v_j)\}: \nmL{\IP{UV^T}{\cdot}} = \nmm{UV^T}_2 \leq C_{UV}\nmm{M^*}_*, \nmm{\u_j} \leq B_u, \nmm{\v_j} \leq B_v\right\}.
\ee

\textbf{Estimating $\epsilon_0$:}
From the data generating mechanism we have $\nmL{g} = \nmm{M^*}_2$, $\sigma_X = 1$, $\sigma_{Y|X} = \sigma$.
Then we have the following constants from Theorem \ref{thm:gen_master}:
\be
\epsilon_0 = 16\gamma^2\sigma_X^2\min\left\{1,
\frac{L}{4}\left[1 + \frac{\nmL{g}^2}{\gamma^2}\left(1 + \frac{\sigma_{Y|X}^2}{\sigma_X^2}\right)\right]\right\},
\ee
which evaluates to
\be
\epsilon_0 = 16C_{UV}^2\nmm{M^*}_*^2\min\left\{1,
\frac{1 + \sigma^2}{4C_{UV}^2}\right\} .
\ee
From the corollary assumption we have that $C_{UV} \leq 0.5\sqrt{1+\sigma^2}$, which implies that 
\be
\epsilon_0 = 4(1+\sigma^2)\nmm{M^*}_*^2.
\ee

\textbf{Estimating $\epsilon_1$:}
Similarly, we evaluate
\be
\epsilon_1 = 16\gamma^2\sigma_X^2\max\left\{1,
\frac{L}{4}\left[1 + \frac{\nmL{g}^2}{\gamma^2}\left(1 + \frac{\sigma_{Y|X}^2}{\sigma_X^2}\right)\right]\right\},
\ee
obtaining
\be
\epsilon_1 = 16C_{UV}^2\nmm{M^*}_*^2\max\left\{1,
\frac{1 + \sigma^2}{4C_{UV}^2}\right\} .
\ee
From corollary assumption we have that $C_{UV} \leq 0.5\sqrt{1+\sigma^2}$ which gives
\be
\epsilon_1 = 16C_{UV}^2\nmm{M^*}_*^2.
\ee

\textbf{Defining convex set $\C$:}
Consider a convex set $\C = \mathbb{B}(g) = \{X: \nmm{\vecc(X)}_2 \leq g\}$. 

First and foremost we need to estimate $\delta_{\C}$ for the following inequality to hold:
\be
P(\cap_{i=1}^{N}X_i \in \C) \geq 1 - \delta_{\C}.
\ee

The probability of $X \in \C = \mathbb{B}(g)$ is equivalent to saying the
probability of the event when $\nmm{\vecc(X)}_2 \leq g$. Since, $X_{ij} \sim \N(0, 1/(m \times n))$ 
as a consequence of Bernstein's Inequality \citep[Corollary 2.8.3]{vershynin_high-dimensional_2018} we have that
for any $t \geq 0$,
\be
P(\left| \nmm{\vecc(X)}_2 -1 \right|\leq t) \geq 1 - 2\exp\left(-cn_Xt^2\right)
\ee
for some constant $c \geq 0$. Now  we have
\be
P(\nmm{\vecc(X)}_2 \leq g) \begin{cases}
\geq 1 - 2\exp\left(-cn_X(g-1)^2\right) & \text{ if }g \geq 1\\
\leq 2\exp\left(-cn_X(g-1)^2\right) & \text{ otherwise. }
\end{cases}
\ee
We consider the case where $g \geq 1$, then we have that
\be
P(\cap_{i=1}^{N}X_i \in \C) = P(\cap_{i=1}^{N}\nmm{\vecc(X)}_2 \leq g) \geq 1 - \underbrace{2N\exp\left(-cn_X(g-1)^2\right)}_{=\delta_{\C}}.
\ee
We have that $\delta_{\C} = 2N\exp\left(-cn_X(g-1)^2\right)$.

Now we evaluate $B_{\ell}, B_{\Phi}, \tilde{L}_{\Phi}, \tilde{L}_{\phi}$.

\textbf{Estimating $B_{\Phi}$}:
Recall that $r_{\theta} = \frac{1}{\sqrt{2}}$. Then we have
\begin{eqnarray}
B_{\Phi} &=& \sup_{Z \in \C, \{(\u_j, \v_j)\} \in \F_{\mathcal{W}}}\nmm{\IP{UV^T}{Z}}\\
&=& \sup_{Z \in \C, \{(\u_j, \v_j)\} \in \F_{\mathcal{W}}}\nmm{\IP{\vecc(UV^T)}{\vecc(Z)}}\\
&=& g\sup_{\{(\u_j, \v_j)\} \in \F_{\mathcal{W}}}\nmm{\vecc(UV^T)}_2\\
&=& g\sup_{\{(\u_j, \v_j)\} \in \F_{\mathcal{W}}}\nmm{\sum_{j=1}^{R}\vecc(\u_j\v_j^T)}_2\\
&=& gR\sup_{\{(\u_j, \v_j)\} \in \F_{\mathcal{W}}}\nmm{\vecc(\u_j\v_j^T)}_2\\
&=& gR\sup_{\{(\u_j, \v_j)\} \in \F_{\mathcal{W}}}\nmm{\u_j\v_j^T}_F\\
&=& gR\sup_{\{(\u_j, \v_j)\} \in \F_{\mathcal{W}}}\nmm{\u_j}_2\nmm{\v_j}_2\\
&=& gB_uB_vR.
\end{eqnarray}

\textbf{Estimating $B_{\ell}$}:
Similarly, we have
\begin{eqnarray}
B_{\ell} &=& \sup_{Z \in \C, \{(\u_j, \v_j)\} \in \F_{\mathcal{W}}}\nmm{\IP{UV^T - M^*}{Z}}\\
&=& g\sup_{\{(\u_j, \v_j)\} \in \F_{\mathcal{W}}}\nmm{\vecc(UV^T-M^*)}_2\\
&\leq&g[\nmm{M^*}_F + B_uB_vR].
\end{eqnarray}

\textbf{Estimating $\tilde{L}_{\Phi}$}:
Now, we compute the Lipschitz constant with respect to $U, V$.
We have that
\begin{eqnarray}
\tilde{L}_{\Phi} &=& \sup_{Z \in \C, (U, V), (U', V') \in \F_{\mathcal{W}}}\frac{\nmm{\IP{UV^T-U'V'^T}{Z}}}{\max_{j}\sqrt{\nmm{\u_j-\u_j'}^2
+ \nmm{\v_j-\v_j'}^2}}\\
&=& g\sup_{(U, V), (U', V') \in \F_{\mathcal{W}}}\frac{\nmm{UV^T-U'V'^T}_F}{\max_{j}\sqrt{\nmm{\u_j-\u_j'}^2
+ \nmm{\v_j-\v_j'}^2}}\\
&=& gR\sup_{(U, V), (U', V') \in \F_{\mathcal{W}}}\frac{\nmm{\u_j\v_j^T - \u_j'\v_j'^T}_F}{\sqrt{\nmm{\u_j-\u_j'}^2
+ \nmm{\v_j-\v_j'}^2}}\\
&=& gR\sup_{(U, V), (U', V') \in \F_{\mathcal{W}}}\frac{\nmm{(\u_j-\u_j')\v_j^T - \u_j'(\v_j'-\v_j)^T}_F}{\sqrt{\nmm{\u_j-\u_j'}^2
+ \nmm{\v_j-\v_j'}^2}}\\
&\leq& gR\sup_{(U, V), (U', V') \in \F_{\mathcal{W}}}\frac{\nmm{(\u_j-\u_j')}_2\nmm{\v_j}_2 + \nmm{\u_j'}_2\nmm{(\v_j'-\v_j)}_2}{\sqrt{\nmm{\u_j-\u_j'}^2
+ \nmm{\v_j-\v_j'}^2}}\\
&\leq& gR\sup_{(U, V), (U', V') \in \F_{\mathcal{W}}}\sqrt{\nmm{\v_j}_2^2 + \nmm{\u_j'}_2^2}\\
&=& g\sqrt{B_u^2 + B_v^2}R.
\end{eqnarray}

\textbf{Estimating $\tilde{L}_{\phi}$}:
Similarly we get $\tilde{L}_{\phi} = g\sqrt{B_u^2 + B_v^2}$. 

\textbf{Estimating $\epsilon_2$}:
Recall that
\be
\epsilon_2 = \max\{8B_{\ell}\tilde{L}_{\Phi}, 8\tilde{L}_{\Phi}[B_{\ell} + B_{\Phi}L], 32\Omega(f_{\mu}^*)\tilde{L}_{\phi}\max\{B_{\ell}, LB_{\Phi}\}, 4\tilde{L}_{\Phi}B_{\Phi}\}.
\ee
From all the constants computed earlier, we have that
\be
\epsilon_2 = k_1g^2R^2(\nmm{M^*}_F + B_uB_v)^2
\ee
for some constant $k_1 \geq 0$.

Next we move on estimating $B(\C)$ we need to analyze three terms:
\\
\textbf{The First Term}: We define the first term via
\begin{align*}
T_1 := \sup_{\{W_j\} \in \F_{\mathcal{W}}}\left|\nmm{f_{\mu}^* \circ \P_{\C} - \Phi_r(\{W_j\}) \circ \P_{\C}}_{\mu}^2
-\nmm{f_{\mu}^* - \Phi_r(\{W_j\})}_{\mu}^2\right|.\end{align*}
For fixed $(U,V)$, we have
\begin{align*}
\bigg|&\nmm{f_{\mu}^* \circ \P_{\C} - \Phi_r(\{W_j\}) \circ \P_{\C}}_{\mu}^2
-\nmm{f_{\mu}^* - \Phi_r(\{W_j\})}_{\mu}^2\bigg| \\
&= \left|\mathbb{E}\left[\IP{M^* - UV^T}{\P_{\C}(X)}^2 - \IP{M^* - UV^T}{X}^2\right]\right|\\
&= \bigg|\mathbb{E}\bigg[\IP{\vecc(M^* - UV^T)\vecc(M^* - UV^T)^T}{\vecc(\P_{\C}(X))\vecc(\P_{\C}(X))^T} \\
&-\IP{\vecc(M^* - UV^T)\vecc(M^* - UV^T)^T}{vec(X)vec(X)^T}\bigg]\bigg|\\
&= \left|\IP{\vecc(M^* - UV^T)\vecc(M^* - UV^T)^T}{\mathbb{E}\left[\vecc(\P_{\C}(X))\vecc(\P_{\C}(X))^T-vec(X)vec(X)^T\right]}\right|
\end{align*}
From Lemma \ref{lemma:proj_gauss},  taking $g \geq 1$,
\be
\begin{split}
\Big|\nmm{f_{\mu}^* \circ \P_{\C} - \Phi_r(\{W_j\}) \circ \P_{\C}}_{\mu}^2
&-\nmm{f_{\mu}^* - \Phi_r(\{W_j\})}_{\mu}^2\Big|\\
& \leq 
ge^{-g^2/2}\nmm{\vecc(M^* - UV^T)\vecc(M^* - UV^T)^T}_2,
\end{split}
\ee
whereupon further simplifying, we obtain
\begin{eqnarray}
\left|\nmm{f_{\mu}^* \circ \P_{\C} - \Phi_r(\{W_j\}) \circ \P_{\C}}_{\mu}^2
-\nmm{f_{\mu}^* - \Phi_r(\{W_j\})}_{\mu}^2\right|
\leq ge^{-g^2/2}\nmm{M^* - UV^T}_F^2.
\end{eqnarray}

Now, applying triangular inequality and taking the supremum, we obtain
\begin{eqnarray}\label{eq:b1_ms}
T_1 \leq ge^{-g^2/2}(\nmm{M^*}_F + RB_uB_v)^2.
\end{eqnarray}
\textbf{The Second Term:} We define the second term via 
\be
\begin{split}
T_2 := \sup_{\{W_j\} \in \F_{\mathcal{W}}, W' \in \F_{\theta}}&\Big|\IP{\nabla_{\hat{Y}}\ell\left(g \circ \P_{\C}, \Phi_r(\{W_j\}) \circ \P_{\C}\right)}{\phi(W') \circ \P_{\C}}_{\mu}\\
&- \IP{\nabla_{\hat{Y}}\ell\left(g, \Phi_r(\{W_j\})\right)}{\phi(W')}_{\mu}\Big|.
\end{split}
\ee
We have
\begin{align*}
\bigg|\IP{\nabla_{\hat{Y}}&\ell\left(g \circ \P_{\C}, \Phi_r(\{W_j\}) \circ \P_{\C}\right)}{\phi(W') \circ \P_{\C}}_{\mu}
- \IP{\nabla_{\hat{Y}}\ell\left(g, \Phi_r(\{W_j\})\right)}{\phi(W')}_{\mu}\bigg| \\
&= 
\left|\mathbb{E}\left[\IP{UV^T-M^*}{\P_{\C}(X)}\IP{\u\v^T}{\P_{\C}(X)}-\IP{UV^T-M^*}{X}\IP{\u\v^T}{X}\right]\right| \\
&=  \left|\IP{\vecc(M^*-UV^T)\vecc(\u\v^T)^T}{\mathbb{E}\left[\vecc(X)\vecc(X)^T-\vecc(\P_{\C}(X))\vecc(\P_{\C}(X))^T\right]}\right|.
\end{align*}
As a consequence of Lemma \ref{lemma:proj_gauss} we have
\bd
\left|\IP{\nabla_{\hat{Y}}\ell\left(g \circ \P_{\C}, \Phi_r(\{W_j\}) \circ \P_{\C}\right)}{\phi(W') \circ \P_{\C}}_{\mu}
- \IP{\nabla_{\hat{Y}}\ell\left(g, \Phi_r(\{W_j\})\right)}{\phi(W')}_{\mu}\right|
\ed
\be
\leq ge^{-g^2/2}\nmm{\vecc(M^*-UV^T)\vecc(\u\v^T)^T}_2 = ge^{-g^2/2}\nmm{M^*-UV^T}_F\nmm{\u\v^T}_F.
\ee

Now we apply supremum over $(\u, \v) \in \F_{\theta}$ and then $(U,V)$
obtaining
\be\label{eq:b2_ms}
T_2 \leq ge^{-g^2/2}\left[\nmm{M^*}_F + RB_uB_v\right].
\ee

\textbf{The Third Term:} We define
\be
\begin{split}
T_3 := \sup_{\{W_j\} \in \F_{\mathcal{W}}}&\Big|\IP{\nabla_{\hat{Y}}\ell\left(g \circ \P_{\C}, \Phi_r(\{W_j\}) \circ \P_{\C}\right)}{\Phi_r(\{W_j\}) \circ \P_{\C}}_{\mu}\\
&- \IP{\nabla_{\hat{Y}}\ell\left(g, \Phi_r(\{W_j\})\right)}{\Phi_r(\{W_j\})}_{\mu}\Big|.
\end{split}
\ee
Similarly to the earlier item, we rewrite the above as
\be
  \left|\IP{\vecc(M^*-UV^T)\vecc(UV^T)^T}{\mathbb{E}\left[\vecc(X)\vecc(X)^T-\vecc(\P_{\C}(X))\vecc(\P_{\C}(X))^T\right]}\right|
\ee
As a consequence of Lemma \ref{lemma:proj_gauss} we have
\bd
\left|\IP{\nabla_{\hat{Y}}\ell\left(g \circ \P_{\C}, \Phi_r(\{W_j\}) \circ \P_{\C}\right)}{\Phi_r(\{W_j\}) \circ \P_{\C}}_{\mu}
- \IP{\nabla_{\hat{Y}}\ell\left(g, \Phi_r(\{W_j\})\right)}{\Phi_r(\{W_j\})}_{\mu}\right|
\ed
\be
\leq ge^{-g^2/2}\nmm{\vecc(M^*-UV^T)\vecc(UV^T)^T}_2 = ge^{-g^2/2}\nmm{M^*-UV^T}_F\nmm{UV^T}_F.
\ee
Finally, we apply supremum over $(U, V) \in \F_{\mathcal{W}}$, obtaining
\be\label{eq:b3_ms}
T_3 \leq ge^{-g^2/2}B_uB_vR\left[\nmm{M^*}_F + RB_uB_v\right].
\ee
Now combining equations \eqref{eq:b1_ms}, \eqref{eq:b2_ms}, \eqref{eq:b3_ms} we obtain that
\be
B(\C)
\leq ge^{-g^2/2}\left[\alpha(\nmm{M^*}_F + RB_uB_v)^2 + \nmm{M^*}_F + RB_uB_v + B_uB_vR\left[\nmm{M^*}_F + RB_uB_v\right]\right]
\ee
We further upper bound for simplicity as
\be
B(\C)
\leq 4ge^{-g^2/2}(\nmm{M^*}_F + RB_uB_v)^2.
\ee

From Theorem \ref{thm:gen_master} we have that 
\begin{align}
\frac{1}{n_Y}\left|\NC_{\mu}(\{W_j\}) - \NC_{\mu_N}(\{W_j\})\right| &\lesssim
\frac{\l}{n_Y} \Omega(f_{\mu}^*)\left[\Omega_{\mu_N}^{\circ}\left(-\frac{1}{\l}\nabla_{\hat{Y}}\ell\left(g, \Phi_r(\{W_j\})\right)\right)-1\right] \\
&\quad 
+  \frac{4}{n_Y}ge^{-g^2/2}\{ \nmm{M^*}_F + RB_uB_v\}^2 + 16C_{UV}^2\nmm{M^*}_*^2 \times \Big(
\\ &\quad \left.
\sqrt{\frac{R{{{\sf dim}(\mathcal{W})}}
log\left({C_{UV}\nmm{M^*}_* k_1g^2R^2(\nmm{M^*}_F + B_uB_v)^2\frac{1}{\sqrt{2}}}\right) \log(N)+ \log(1/\delta)}{N}}\right)
\end{align}
holds true w.p at least $1- \delta - 2N\exp\left(-cn_X(g-1)^2\right)$.

Now choose
\be
g = 1 + \mathcal{O}\left(\sqrt{\log(\sqrt{NR^{100}}) + \log(1/\delta)}\right).
\ee
Then we get that
\begin{align}
\frac{1}{n_Y}&\left|\NC_{\mu}(\{W_j\}) - \NC_{\mu_N}(\{W_j\})\right|\\
&\lesssim
\frac{\l}{n_Y} \Omega(f_{\mu}^*)\left[\Omega_{\mu_N}^{\circ}\left(-\frac{1}{\l}\nabla_{\hat{Y}}\ell\left(g, \Phi_r(\{W_j\})\right)\right)-1\right] \\
&\quad +  \frac{4}{n_Y}\{\nmm{M^*}_F + RB_uB_v\}^2\frac{\delta\sqrt{\log(NR^{100}) + \log(1/\delta)}}{NR^{100}} \\
&\quad + \scalebox{0.9}{$16C_{UV}^2\nmm{M^*}_*^2\sqrt{\frac{R(m+n)
\log\left({C_{UV}\nmm{M^*}_* k_1\left[\log(NR) + \log(1/\delta)\right]R^2(\nmm{M^*}_F + B_uB_v)^2\frac{1}{\sqrt{2}}}\right)\log(N)+ \log(1/\delta)}{N}}$}
\end{align}
holds true w.p at least $1- \delta$. Now ignoring \textit{loglog} terms and keeping the right most term
because of the dominance, we obtain 
\bd
\frac{1}{n_Y}\left|\NC_{\mu}(\{W_j\}) - \NC_{\mu_N}(\{W_j\})\right| \lesssim
\frac{\l}{n_Y} \Omega(f_{\mu}^*)\left[\Omega_{\mu_N}^{\circ}\left(-\frac{1}{\l}\nabla_{\hat{Y}}\ell\left(g, \Phi_r(\{W_j\})\right)\right)-1\right] 
\ed
\be
+ C_{UV}^2\nmm{M^*}_*^2\sqrt{\frac{R
log\left(R(C_{UV} + B_uB_v)\right)(m+n)\log(N)+ \log(1/\delta)}{N}}
\ee
holds true w.p at least $1- \delta$.
\end{proof}

\subsection{Structured Matrix Sensing}\label{sec:apdx_sms}
Next, we move on to a slightly more generalized matrix sensing problem through which we impose certain structure
in the factor $U$. Consider an atomic set $\U$ that represents the set of structured columns, and suppose that $U$ consists of columns that are affine combinations of the atoms in $\U$. We consider a gauge function $\gamma_{\U}(\cdot)$ which is defined via
\be
\gamma_{\U}(\u) := \inf\left\{t, t\geq 0{\text{ such that }}\u \in tconv(\U)\right\}
\ee
For instance, $\U$ can be the intersection of $L_2$ unit ball and $L_{1}$ unit ball, which 
induces $UV^T$ to be low-rank and $U$ to be sparse. 
Imposing such structures has been well studied for convex problems
by \cite{chandrasekaran-et-al-fcm12}. \cite{bach-arxiv13} analyzed
such structures for non-convex matrix factorization problems. However, their work was focused primarily on the optimization guarantees 
whereas our result below provides
generalization/recovery guarantees for structured matrix sensing problems.  We have the following corollary.

\begin{corollary}[Structured matrix sensing]\label{crl:smatsen}
Consider the true model for $(X,y)$, where $X\in\R^{m\times n}$ is a random matrix with i.i.d. entries $X_{lk} \sim \N(0, \frac{1}{mn})$ and $y = \IP{U^*{V^*}^T}{X} + \epsilon$, where $U^* \in \R^{m \times R^*}$, $V^* \in \R^{n \times R^*}$ and $\epsilon \sim \N(0, \sigma^2)$ is independent from $X$. For all $i \in [N]$, let $(X_i,y_i)$ be i.i.d. samples from this true model. Consider the estimator $\hat{y} = \IP{UV^T}{X}$, where $U \in \R^{m \times R}$ and $V \in \R^{n \times R}$. 
Let $\delta \in (0,1]$ be fixed.  Define the non-convex problem with the atomic set, $\U$
\be
\begin{split}
\NC^{\sf SMS}_{\mu_N}(( U,V)) &:= 
\frac{1}{2N}\sum_{i=1}^{N} \big( y_i - \IP{UV^T}{X_i} \big)^2 
+ \l \sum_{j=1}^{R}\gamma_{\U}(\u_j)\nmm{\v_j}_2,
\end{split}
\ee
and define $\NC^{SMS}_{\mu}((U,V))$ similarly with the sum over $i$ replaced by expectation taken over $(X,y)$.  Here $\gamma_{\U}(\u) := \inf\left\{t; t \geq 0, \u \in tconv(\U)\right\}$ for some specified atomic set, $\U$.  Define
\begin{align}
K_1 := \sum_{j=1}^{r^*}\gamma_{\U}(\u^*_j)\nmm{\v^*_j}_2; 
K_2 := \sup_{\nmm{\u} \leq 1}\gamma_{\U}(\u).
\end{align}

Let $(U,V)$ be a stationary point of $\NC^{\sf SMS}_{\mu_N}(( U,V))$.  
Suppose there exists $C_{UV}, B_u, B_v > 0$ such that $\nmm{UV^T}_2 \leq C_{UV} K_1$, and for 
all $j \in [R]$, $\nmm{\u_j}_2 \leq B_u$, $\nmm{\v_j}_2 \leq B_v$.
Then, with probability at least $1 - \delta$, it holds that
\begin{align}
&\bigg|\NC_{\mu}^{\sf SMS}((U, V))) - \NC_{\mu_N}^{\sf SMS}((U, V))\bigg|
\lesssim
K_1\left[K_2\nmm{\frac{1}{N}\sum_{i=1}^{N}(y_i - \IP{UV^T}{X_i})X_i}_2-\l\right] \\
\nonumber  & \hspace{30pt} + C_{UV}^2K_1^2 \sqrt{\frac{R
\log\left(R(C_{UV}\hspace{-2pt}+\hspace{-2pt}B_{u}B_v)\right)(m+n)\log(N)\hspace{-2pt}+\hspace{-2pt}\log(1/\delta)}{N}}.
\end{align}
\end{corollary}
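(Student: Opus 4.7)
The plan is to invoke the general master theorem, Theorem \ref{thm:gen_master}, following the template of the proof of Corollary \ref{crl:matsen} for low-rank matrix sensing. The factor map $\phi(\u,\v)(X)=\IP{\u\v^T}{X}$ and the squared loss are unchanged; the only modification is the regularizer $\theta(\u,\v)=\gamma_{\U}(\u)\nmm{\v}_2$ in place of $\nmm{\u}_2\nmm{\v}_2$. Since $\gamma_{\U}$ is $1$-homogeneous, $\phi$ and $\theta$ remain balanced-homogeneous of degree $p=1$ (Assumption \ref{ass:a2}), while Assumptions \ref{ass:a1} and \ref{ass:a4} are immediate.

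The key ingredients to assemble are $\Omega(f_{\mu}^*)$ and the polar $\Omega_{\mu_N}^{\circ}$. By Proposition \ref{prop:regular}, one has $\Omega(f_{\mu}^*)\le \sum_{j=1}^{r^*}\gamma_{\U}(\u_j^*)\nmm{\v_j^*}_2 = K_1$. For the polar, maximizing first over $\v$ with $\nmm{\v}_2\le 1$ gives
\be
\Omega_{\mu_N}^{\circ}\!\left(-\tfrac{1}{\l}\nabla_{\hat{Y}}\ell(g,\Phi_r(\{W_j\}))\right)
= \tfrac{1}{\l}\sup_{\gamma_{\U}(\u)\le 1}\nmm{Z^T\u}_2,\quad Z:=\tfrac{1}{N}\sum_{i=1}^{N}(y_i-\IP{UV^T}{X_i})X_i.
\ee
Comparing $\gamma_{\U}$ to the Euclidean norm through $K_2$ bounds the supremum by $K_2\nmm{Z}_2/\l$, which yields the optimization-error term in the claimed bound. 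The Lipschitz constants $L_{\phi},\tilde{L}_{\phi},\tilde{L}_{\Phi}$, the radii $r_{\theta},B_{\Phi},B_{\ell}$, and the data-model constants $(\sigma_X,\sigma_{Y|X},\nmL{g})$ are then computed exactly as in the matrix-sensing proof, with $K_1$ substituting for $\nmm{M^*}_*$ whenever the nuclear norm of the ground truth appears, and with $B_u,B_v$ controlling the column norms at any stationary point.

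To handle the sub-Gaussian input distribution I would take the convex set $\C=\{X:\nmm{\vecc(X)}_2\le g\}$ and bound $\delta_{\C}$ via the Bernstein-type Gaussian chi-square concentration used in the proof of Corollary \ref{crl:matsen}. Each of the three components of $B(\C)$ (the norm, polar, and equilibrium gaps) reduces to an inner product of the form $\IP{\vecc(M^*-UV^T)\vecc(\cdot)^T}{\mathbb{E}[\vecc(X)\vecc(X)^T-\vecc(\P_{\C}(X))\vecc(\P_{\C}(X))^T]}$, and Lemma \ref{lemma:proj_gauss} bounds each by $g e^{-g^2/2}$ times an appropriate Frobenius-norm product involving $K_1$ and $RB_uB_v$. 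Plugging these estimates into Theorem \ref{thm:gen_master} and choosing $g = 1 + \mathcal{O}(\sqrt{\log(N/\delta)})$ absorbs $B(\C)$ into the statistical-error term at the claimed $\tilde{\mathcal{O}}(\sqrt{R(m+n)/N})$ rate, while keeping $\delta_{\C}\lesssim\delta$.

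The main obstacle is the polar computation: unlike matrix sensing, where $\sup_{\nmm{\u}\le 1}\nmm{Z^T\u}_2=\nmm{Z}_2$ directly, here one must pass from the gauge ball $\{\gamma_{\U}(\u)\le 1\}$ to the Euclidean ball, and this translation is precisely what the constant $K_2$ quantifies. A secondary complication is that the constraints $\nmm{\u_j}_2\le B_u$, $\nmm{\v_j}_2\le B_v$ are $\ell_2$ bounds that do not directly mesh with the gauge level set $\{\gamma_{\U}(\u)\nmm{\v}_2\le 1\}$, so one must verify separately that $\F_{\theta}\cap\B$ has finite Euclidean radius $r_{\theta}$ and that $\tilde{L}_{\phi},\tilde{L}_{\Phi}$ depend only on $B_u,B_v$; after this bookkeeping, the covering-number estimate from Lemma \ref{lemma:cn_bigclass} and the remainder of the argument proceed identically to the matrix-sensing case.
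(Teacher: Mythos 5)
Your proposal matches the paper's proof: the paper explicitly computes only $\Omega(f_{\mu}^*)\le K_1$ via Proposition \ref{prop:regular} and the polar bound $\sup_{\gamma_{\U}(\u)\le 1}\frac{1}{N\l}\nmm{Z^T\u}_2 \le K_2\cdot\frac{1}{N\l}\nmm{Z}_2$, then states ``the rest of the proof is the same as that of low-rank matrix sensing,'' deferring the Lipschitz constants, $\F_{\theta}$, $\C$, $B(\C)$, and the choice of $g$ exactly as you describe, with $K_1$ taking the place of $\nmm{M^*}_*$ throughout. Your elaboration of the deferred steps and your note that the $B_u,B_v$ bookkeeping still needs to mesh with the gauge level set are both consistent with what the paper invokes implicitly.
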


\textbf{Remarks:} Similar to matrix sensing, the sample complexity required for consistency is only that $N \gtrsim R(m+n)$ up to logarithmic terms, assuming a global minimum is found.
The sample complexity is similar to low-rank matrix sensing (ignoring the scale and logarithmic dependency). To the best of our knowledge, this problem has not been studied from a statistical perspective, and our sample complexities match the corresponding convex
slightly structured matrix sensing of \cite{kakade-et-al-nips08}.
Unlike low-rank matrix sensing, the main technical challenge is to compute the polar/supremum term in the optimization error. In general, such a computation is  NP-hard when the atomic
set $\U$ has non-negative atoms \citep{hendrickx-arxiv10}.

\begin{proof}
The proof is similar to that of Corollary \ref{crl:matsen},
except for the computation of the polar. Therefore,  we only compute the
polar. 

\textbf{Estimating $\Omega(f_{\mu}^*)$}:
Since $M^*$ is the true matrix the globally optimal solution would be $M^*$; therefore, from Proposition \ref{prop:regular} we have
\be
\Omega(f_{\mu}^*) \leq \left(\sum_{j=1}^{r^*}\gamma_{\U}(\u^*_j)\nmm{\v^*_j}_2\right).
\ee

\textbf{Estimating $\Omega_{\mu_N}^{\circ}(\cdot)$}:
Now we move on to compute the polar.
\begin{align*}
\Omega_{\mu_N}^{\circ}\left(-\frac{1}{\l}\nabla_{\hat{Y}}\ell(g, \Phi_r(\{W_j\}))\right)
&= \Omega_{\mu_N}^{\circ}\left(\frac{1}{\l}(g - \Phi_r(\{W_j\}))\right)
\Omega_{\mu_N}^{\circ}\left(-\frac{1}{\l}\nabla_{\hat{Y}}\ell(g, \Phi_r(\{W_j\}))\right) \\
&= \Omega_{\mu_N}^{\circ}\left(\frac{1}{\l}(g - \Phi_r(\{W_j\}))\right)\\
&= \sup_{\gamma_{\U}(\u) \leq 1; \nmm{\v} \leq 1}\frac{1}{N \l}\sum_{i=1}^{N}\IP{Y_i - \IP{UV^T}{X_i}}{\u^TX_i\v}\\
&= \sup_{\gamma_{\U}(\u) \leq 1; \nmm{\v} \leq 1}\frac{1}{N \l}\IP{\v}{\sum_{i=1}^{N}(Y_i - \IP{UV^T}{X_i})^T\u^TX_i}\\
&= \sup_{\gamma_{\U}(\u) \leq 1}\frac{1}{N\l}\nmm{\sum_{i=1}^{N}(Y_i - \IP{UV^T}{X_i})\u^TX_i}\\
&= \sup_{\gamma_{\U}(\u) \leq 1}\frac{1}{N\l}\nmm{\sum_{i=1}^{N}(Y_i - \IP{UV^T}{X_i})X_i^T\u}.
\end{align*}
This yields
\be
\Omega_{\mu_N}^{\circ}\left(-\frac{1}{\l}\nabla_{\hat{Y}}\ell(g, \Phi_r(\{W_j\}))\right)
\leq \left[\sup_{\nmm{\u} \leq 1}\gamma_{\U}(\u)\right]
\frac{1}{N\l}\nmm{\sum_{i=1}^{N}(Y_i - \IP{UV^T}{X_i})X_i^T}_2.
\ee

The rest of the proof is the same as that of low-rank matrix sensing (see
section \ref{sec:apdx_low_mat}).
\end{proof}

\subsection{Two-Layer Linear NN}\label{sec:apdx_2lnn}

Next, we consider the closely related problem of 2-Layer Linear Neural Networks, which is essentially a multi-dimensional
matrix sensing problem; this is also referred to as non-convex linear regression. 
In practice, this approach has seemed to have better linear convergence \citep{arora-et-al-arxiv18}
and generalization capabilities \citep{allen-zhu-colt20} than vanilla linear regression. 
Corollary \ref{crl:2lnn} provides generalization error 
upper bounds.

\begin{corollary}[2-Layer Linear Neural Network] \label{crl:2lnn}  Consider the true model for $(\x,\y)$, where $\x \sim \N(0, (1/n)I_{n}) \in \R^{n}$, $\y = U^*{V^*}^T\x + \epsilon$, where $U^* \in \R^{m \times R^*}$,
$V^* \in \R^{n \times R^*}$, and $\epsilon \sim \N(0, (\sigma^2/m)I_{m}) \in \R^{m}$ independent from $\x$. For all $i\in [N]$, let $(\x_i,\y_i)$ be i.i.d. samples from this true model. Consider the estimator $\hat{\y} = UV^T\x$, where $U \in \R^{m \times R}, V \in \R^{n \times R}$. Let $\delta \in (0,1]$ be fixed.  Define the non-convex problem
\be
\begin{split}
\NC_{\mu_N}^{\sf {2LNN}}( (U,V)) := \frac{1}{2N}\sum_{i=1}^{N}\nmeusq{\y_i - U[V^T\x_i]_+}
+ \frac{\l}{2}\left(\nmF{U}^2 + \nmF{V}^2\right),
\end{split}
\ee
and define $\NC^{\sf {2LNN}}_{\mu}((U,V))$ similarly with the sum over $i$ replaced by expectation taken over $(\x, \y)$. 

Let $(U,V)$ be a stationary point of $\NC^{\sf {2LNN}}_{\mu_N}(( U,V))$. 
Suppose there exists $C_{UV}, B_u, B_v > 0$ such that $\nmm{UV^T}_2 \leq C_{UV}\left[\nmF{U^*}^2 + \nmF{V^*}^2\right]$, and for 
all $j \in [R]$, $\nmm{\u_j}_2 \leq B_u$, $\nmm{\v_j}_2 \leq B_v$.
Then, with probability at least $1 - \delta$, it holds that
\begin{align}
&\frac{1}{m}\left|\NC_{\mu}^{\sf {2LNN}}((U, V)) - \NC_{\mu_N}^{\sf {2LNN}}((U, V))\right| \lesssim
\nonumber \frac{1}{2m}\left[\nmF{U^*}^2+\nmF{V^*}^2\right]\left[\frac{1}{N}\sum_{i=1}^{N}\nmm{\y_i - \hat{\y}_i}_2\nmm{\x_i}_2 - \l\right] \\
\nonumber &\hspace{10pt}C_{UV}^2\left[\nmF{U^*}^2 + \nmF{V^*}^2\right]^2\sqrt{\frac{R
\log\left(R\left(C_{UV} + B_{u}^2 + B_{v}^2\right)\right)(m+n)\log(N)+ \log(1/\delta)}{N}}.
\end{align}
\end{corollary}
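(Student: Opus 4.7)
The plan is to apply Theorem \ref{thm:gen_master} by instantiating the framework with the factor map $\phi((\u,\v))(\x) = \u\v^T\x$ and regularizer $\theta((\u,\v)) = \tfrac{1}{2}(\|\u\|_2^2 + \|\v\|_2^2)$, then computing each problem-specific quantity $\Omega(f_\mu^*)$, $\Omega_{\mu_N}^\circ(\cdot)$, $L_\phi$, $r_\theta$, $B_\Phi$, $B_\ell$, $\tilde L_\Phi$, $\tilde L_\phi$, $\epsilon_1$, $\epsilon_2$, and $B(\C)$, mirroring the strategy of Corollary \ref{crl:matsen}. Balanced homogeneity (Assumption \ref{ass:a2}) holds with $p=2$ when both $\u$ and $\v$ are simultaneously scaled, and the constraint set $\F_\theta = \{(\u,\v): \|\u\|_2^2 + \|\v\|_2^2 \leq 2\}$ sits in an $L_2$ ball of radius $r_\theta = \sqrt{2}$. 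The induced regularizer $\Omega$ coincides with the variational form of the nuclear norm, so Proposition \ref{prop:regular} gives $\Omega(f_\mu^*) \leq \tfrac{1}{2}(\|U^*\|_F^2 + \|V^*\|_F^2)$. The loss satisfies Assumption \ref{ass:a4} with $\alpha = 0$ and $L = 1$.

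Next I compute the polar. Under the constraint $\|\u\|_2^2 + \|\v\|_2^2 \leq 2$, the supremum $\sup \u^T M \v$ equals $\|M\|_{op}$ (attained at the top singular vectors with equal norms), so
\begin{align*}
\Omega_{\mu_N}^\circ\!\Big(\!-\tfrac{1}{\l}\nabla_{\hat Y}\ell(g,\Phi_r)\Big)
= \tfrac{1}{\l}\Big\|\tfrac{1}{N}\textstyle\sum_{i=1}^N (\y_i - \hat{\y}_i)\x_i^T\Big\|_{op}
\leq \tfrac{1}{\l N}\textstyle\sum_{i=1}^N \|\y_i - \hat{\y}_i\|_2 \|\x_i\|_2,
\end{align*}
which, multiplied by $\l \Omega(f_\mu^*)$ and rescaled by $1/m$, yields the stated optimization error. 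This step is essentially identical to matrix sensing but for the multi-output spectral-norm computation.

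For the statistical constants, $L_\phi = \sup_{(\u,\v)\in\F_\theta}\|\u\v^T\|_{op} = 1$; the data-generating parameters are $\sigma_X = 1$, $\sigma_{Y|X} = \sigma$, $\|g\|_{\textrm{Lip}} = \|U^*{V^*}^T\|_{op}$; and the choice $\gamma = C_{UV}(\|U^*\|_F^2 + \|V^*\|_F^2)$ satisfies $\gamma \geq \Omega(f_\mu^*) L_\phi$. The boundedness and Lipschitz constants on $\C$ are obtained by the standard outer-product identity $\u\v^T - \u'\v'^T = \u(\v-\v')^T + (\u-\u')\v'^T$, giving $\tilde L_\phi \lesssim g$, $\tilde L_\Phi \lesssim g R(B_u + B_v)$, $B_\Phi \lesssim g R B_u B_v$, $B_\ell \lesssim g(\|U^*\|_F\|V^*\|_F + RB_uB_v + 1)$. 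Substituting these into the expressions for $\epsilon_1$ and $\epsilon_2$ in Theorem \ref{thm:gen_master} produces a polynomial scaling in $g, R, B_u, B_v$ inside the logarithm, which becomes the $\log(R(m+n)(C_{UV}+B_u^2+B_v^2))$ factor after the final choice of $g$.

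Finally, I select $\C = \{(X,\epsilon): \|\x\|_2 \leq g,\ \|\epsilon\|_2 \leq g\}$ and bound $\delta_\C$ via Bernstein's inequality applied to both $\|\x\|_2$ and $\|\epsilon\|_2$. The bound $B(\C)$ decomposes into three pieces of the form $\langle M, \mathbb{E}[\vecc(X)\vecc(X)^T - \vecc(\P_\C(X))\vecc(\P_\C(X))^T]\rangle$, each controlled by Lemma \ref{lemma:proj_gauss}; an analogous term for the noise decays exponentially by the same Gaussian tail argument. Choosing $g = 1 + \mathcal{O}\!\big(\sqrt{\log(NR^{100}) + \log(1/\delta)}\big)$ as in Corollary \ref{crl:matsen} makes $\delta_\C + B(\C)/m$ negligible relative to the statistical-error term and absorbs the dependence into $\log N$. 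The main technical obstacle is the multi-output extension: whereas Corollary \ref{crl:matsen} only needed to project the input $\x$, here I must simultaneously control projections of $\x$ and $\epsilon$ while keeping $B_\ell$ at its stated scale; however, since both are isotropic Gaussians, Lemma \ref{lemma:proj_gauss} applies component-wise and yields the identical exponentially-decaying tail, so the final rate matches that of low-rank matrix sensing up to the replacement $\|M^*\|_* \leadsto \tfrac{1}{2}(\|U^*\|_F^2 + \|V^*\|_F^2)$ and an extra factor of $1/m$ from the output dimension.
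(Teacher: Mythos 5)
Your proposal is correct and follows essentially the same route as the paper's proof in \textsection\ref{sec:apdx_2lnn}: instantiate Theorem~\ref{thm:gen_master} with $\phi((\u,\v))(\x)=\u\v^T\x$ and $\theta((\u,\v))=\tfrac12(\|\u\|_2^2+\|\v\|_2^2)$, bound $\Omega(f_\mu^*)$ via Proposition~\ref{prop:regular}, compute the polar as an operator-norm supremum and relax it to $\tfrac{1}{N}\sum_i\|\y_i-\hat\y_i\|_2\|\x_i\|_2$, estimate $L_\phi,r_\theta,\tilde L_\phi,\tilde L_\Phi,B_\Phi,B_\ell,\epsilon_1,\epsilon_2$, control the projection error $B(\C)$ with Lemma~\ref{lemma:proj_gauss} term by term, and choose $g=1+\mathcal O(\sqrt{\log N + \log(1/\delta)})$.

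A few of your intermediate constants differ from the paper's, but this has no effect on the final rate (everything except $\Omega(f_\mu^*)$ and $\epsilon_1$ lands inside a logarithm), and in a couple of places your values are arguably cleaner. You take $r_\theta=\sqrt2$; the paper reports $r_\theta=1/\sqrt2$, but given $\tfrac12(\|\u\|^2+\|\v\|^2)\le1$ implies $\|(\u,\v)\|_2\le\sqrt2$, your value is the consistent one. Similarly you estimate $\tilde L_\phi\lesssim g$, using the fact that $\F_\theta$ already enforces $\|\u\|,\|\v\|\le1$, whereas the paper uses $g\sqrt{B_u^2+B_v^2}$ (a valid but slacker bound). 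You also explicitly include $\epsilon$ in the convex set $\C$, whereas the paper's displayed $\C$ for this corollary (and for matrix sensing) only constrains the input; your treatment is the more careful reading of Assumption~\ref{ass:a9}. Conversely, the paper's $B_\Phi=g\gamma$ (via $\|UV^T\|_2\le\gamma$) is tighter than your $gRB_uB_v$, though both suffice. Your choice $g=1+\mathcal O(\sqrt{\log(NR^{100})+\log(1/\delta)})$ is overkill relative to the paper's $g=1+\mathcal O(\sqrt{\log N+\log(1/\delta)})$ but is harmless.
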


Similar to matrix sensing, we require that $N \gtrsim R(m+n)$, with $\frac{R(m+n)}{N} \to 0$ for consistency at a global minimum.  This matches classical results for (convex) linear regression.

\begin{proof}

To obtain a generalization bound from Theorem \ref{thm:gen_master} for this setting, we set the following problem
parameters:
\be
\ell(Y, \hat{Y}) = \frac{1}{2}\nmm{Y - \hat{Y}} \implies (\alpha, L) = (0, 1);
\ee
\be
\phi(W) = \IP{\v}{\x}\u;
\ee
\be
\theta(W) = \frac{1}{2}\left[\nmm{\u}_2^2 + \nmm{\v}_2^2\right].
\ee

\textbf{Estimating $\Omega(f_{\mu}^*)$}:
From Proposition \ref{prop:regular} we have that
\be
\Omega(f_{\mu}^*)
\leq \frac{\nmF{U^*}^2 + \nmF{V^*}^2}{2}
\ee
\textbf{Choosing $\F_{\theta}$}:
\be
\F_{\theta} 
:= \{(\u, \v): \nmm{\u}^2 + \nmm{\v}^2 \leq 2, \nmm{\u}_2 \leq 1, \nmm{\v}_2 \leq 1\}.
\ee
\textbf{Estimating $L_{\phi}$}:
\be
L_{\phi} = \sup_{(\u, \v) \in \F_{\theta}}\nmL{\u\v^T(\cdot)} = \sup_{(\u, \v) \in \F_{\theta}}\nmm{\u\v^T}_2
= 1.
\ee

\textbf{Estimating $r_{\theta}$}:
For any $(\u, \v) \in \F_{\theta}$, we have that,
\be
\frac{\nmm{\u}^2 + \nmm{\v}^2}{2} \leq \sqrt{\frac{\nmm{\u}^2 + \nmm{\v}^2}{2}}
\implies \F_{\theta} \subseteq \mathbb{B}(1/\sqrt{2}).
\ee
Then we have $r_{\theta} = 1/\sqrt{2}$.

\textbf{Choosing $\F_{\mathcal{W}}$}:
\be
\F_{\mathcal{W}} 
:= \{(U, V): \nmm{UV^T}_2 \leq \gamma, \nmm{\u_j} \leq B_u,
\nmm{\v_j} \leq B_v\}.
\ee
As $\gamma \geq \Omega(f_{\mu}^*) L_{\phi} = \frac{\nmF{U^*}^2 + \nmF{V^*}^2}{2}$,  we may take $\gamma = C_{UV}\left[ \frac{\nmF{U^*}^2 + \nmF{V^*}^2}{2}\right]$ for some $C_{UV}$.
We have that
\be
\F_{\mathcal{W}} = \left\{\{(\u_j, \v_j)\}: \nmL{\IP{UV^T}{\cdot}} = \nmm{UV^T}_2 \leq C_{UV}\left[ \frac{\nmF{U^*}^2 + \nmF{V^*}^2}{2}\right], \nmm{\u_j} \leq B_u, \nmm{\v_j} \leq B_v\right\}.
\ee

\textbf{Estimating $\epsilon_0$}:
From the data generating mechanism we have $\nmL{g} = \nmm{M^*}_2$, $\sigma_X = 1$, $\sigma_{Y|X} = \sigma$,
which yields the following constants from Theorem \ref{thm:gen_master}:
\be
\epsilon_0 = 16\gamma^2\sigma_X^2\min\left\{1,
\frac{L}{4}\left[1 + \frac{\nmL{g}^2}{\gamma^2}\left(1 + \frac{\sigma_{Y|X}^2}{\sigma_X^2}\right)\right]\right\},
\ee
which evaluates to when $C_{UV} \leq 0.5\sqrt{(1+\sigma^2)}$
\be
\epsilon_0 = 8C_{UV}^2\left[ \nmF{U^*}^2 + \nmF{V^*}^2\right]\min\left\{1,
\frac{1 + \sigma^2}{4C_{UV}^2}\right\} = 2\left[ \nmF{U^*}^2 + \nmF{V^*}^2\right].
\ee

\textbf{Estimating $\epsilon_1$}:
Similarly, we evaluate
\be
\epsilon_1 = 16\gamma^2\sigma_X^2\max\left\{1,
\frac{L}{4}\left[1 + \frac{\nmL{g}^2}{\gamma^2}\left(1 + \frac{\sigma_{Y|X}^2}{\sigma_X^2}\right)\right]\right\},
\ee
obtaining
\be
\epsilon_1 = 8C_{UV}^2\left[ \nmF{U^*}^2 + \nmF{V^*}^2\right].
\ee

\textbf{Choosing the convex set $\C$}:
Consider a convex set $\C = \mathbb{B}(g) = \{X: \nmm{\vecc(X)}_2 \leq g\}$.

First and foremost we need to estimate $\delta_{\C}$ for the following inequality to hold:
\be
P(\cap_{i=1}^{N}X_i \in \C) \geq 1 - \delta_{\C}.
\ee
The probability of $\x \in \C = \mathbb{B}(g)$ is equivalent to saying the
probability of the event when $\nmm{\x}_2 \leq g$. Since, $x_{i} \sim \N(0, 1/n)$ 
as a consequence of Bernstein's Inequality \citep[Corollary 2.8.3]{vershynin_high-dimensional_2018} we have that
for any $t \geq 0$,
\be
P(\left|\nmm{\x}_2 -1 \right|\leq t) \geq 1 - 2\exp\left(-cn_Xt^2\right)
\ee
for some constant $c \geq 0$. Now we have
\be
P(\nmm{\x}_2 \leq g) \begin{cases}
\geq 1 - 2\exp\left(-cn_X(g-1)^2\right) & \text{ if }g \geq 1\\
\leq 2\exp\left(-cn_X(g-1)^2\right) & \text{ otherwise }
\end{cases}
\ee
We consider the case where $g \geq 1$, then we have that
\be
P(\cap_{i=1}^{N}X_i \in \C) = P(\cap_{i=1}^{N}\nmm{\x}_2 \leq g) \geq 1 - \underbrace{2N\exp\left(-cn_X(g-1)^2\right)}_{=\delta_{\C}}.
\ee
We have that $\delta_{\C} = 2N\exp\left(-cn(g-1)^2\right)$.

Now we evaluate $B_{\ell}, B_{\Phi}, \tilde{L}_{\Phi}, \tilde{L}_{\phi}$.

\textbf{Estimating $B_{\Phi}$:} We have
\begin{eqnarray}
B_{\Phi} &=& \sup_{\z \in \C, \{(\u_j, \v_j)\} \in \F_{\mathcal{W}}}\nmm{UV^T\z}\\
&=& g\sup_{\{(\u_j, \v_j)\} \in \F_{\mathcal{W}}}\nmm{UV^T}_2\\
&=& g\gamma
\end{eqnarray}

\textbf{Estimating $B_{\ell}$:}
Similarly, we have
\begin{eqnarray}
B_{\ell} &=& \sup_{\z \in \C, \{(\u_j, \v_j)\} \in \F_{\mathcal{W}}}\nmm{(UV^T - U^*{V^*}^T)\z}\\
&=& g\sup_{\{(\u_j, \v_j)\} \in \F_{\mathcal{W}}}\nmm{UV^T-U^*{V^*}^T}_2\\
&=& g(\nmm{U^*{V^*}^T}_2+\gamma).
\end{eqnarray}

\textbf{Estimating $\tilde{L}_{\Phi}$:}
Now, we compute the Lipschitz constant with respect to $U, V$. We have
\begin{eqnarray}
\tilde{L}_{\Phi} &=& \sup_{\z \in \C, (U, V), (U', V') \in \F_{\mathcal{W}}}\frac{\nmm{(UV^T-U'V'^T)\z}}{\max_{j}\sqrt{\nmm{\u_j-\u_j'}^2
+ \nmm{\v_j-\v_j'}^2}}\\
&=& g\sup_{(U, V), (U', V') \in \F_{\mathcal{W}}}\frac{\nmm{UV^T-U'V'^T}_2}{\max_{j}\sqrt{\nmm{\u_j-\u_j'}^2
+ \nmm{\v_j-\v_j'}^2}}\\
&\leq& g\sup_{(U, V), (U', V') \in \F_{\mathcal{W}}}\frac{\nmm{UV^T-U'V'^T}_F}{\max_{j}\sqrt{\nmm{\u_j-\u_j'}^2
+ \nmm{\v_j-\v_j'}^2}}\\
&=& g\sup_{(U, V), (U', V') \in \F_{\mathcal{W}}}\frac{\nmm{\sum_{j=1}^{R}\u_j\v_j^T - \u_j'\v_j'^T}_F}{\max_{j}\sqrt{\nmm{\u_j-\u_j'}^2
+ \nmm{\v_j-\v_j'}^2}}\\
&=& gR\sup_{(U, V), (U', V') \in \F_{\mathcal{W}}}\frac{\nmm{\u_j\v_j^T - \u_j'\v_j'^T}_F}{\sqrt{\nmm{\u_j-\u_j'}^2
+ \nmm{\v_j-\v_j'}^2}}\\
&=& gR\sup_{(U, V), (U', V') \in \F_{\mathcal{W}}}\frac{\nmm{(\u_j-\u_j')\v_j^T - \u_j'(\v_j'-\v_j)^T}_2}{\sqrt{\nmm{\u_j-\u_j'}^2
+ \nmm{\v_j-\v_j'}^2}}\\
&\leq& gR\sup_{(U, V), (U', V') \in \F_{\mathcal{W}}}\frac{\nmm{(\u_j-\u_j')}_2\nmm{\v_j}_2 + \nmm{\u_j'}_2\nmm{(\v_j'-\v_j)}_2}{\sqrt{\nmm{\u_j-\u_j'}^2
+ \nmm{\v_j-\v_j'}^2}}\\
&=& gR\sup_{(U, V), (U', V') \in \F_{\mathcal{W}}}\sqrt{\nmm{\v_j}_2^2 + \nmm{\u_j'}_2^2}\\
&=& g\sqrt{B_u^2 + B_v^2}R.
\end{eqnarray}

\textbf{Estimating $\tilde{L}_{\phi}$:}
Similarly we get $\tilde{L}_{\phi} = g\sqrt{B_u^2 + B_v^2}$.

\textbf{Estimating $\epsilon_2$:}
Recall that
\be
\epsilon_2 = \max\{8B_{\ell}\tilde{L}_{\Phi}, 8\tilde{L}_{\Phi}[B_{\ell} + B_{\Phi}L], 32\Omega(f_{\mu}^*)\tilde{L}_{\phi}\max\{B_{\ell}, LB_{\Phi}\}, 4\tilde{L}_{\Phi}B_{\Phi}\}.
\ee
From all the constants computed earlier, we have that
\be
\epsilon_2 = k_1g^2R^2C_{UV}^2(\nmF{U^*}^2 + \nmF{V^*}^2)\sqrt{B_u^2 + B_u^2}
\ee
for some constant $k_1 \geq 0$.

Next, we move on to estimating $B(\C)$. We need to analyze three terms:
\\
\textbf{The First Term:} Define
\begin{align*}
T_1 := \sup_{\{W_j\} \in \F_{\mathcal{W}}}\left|\nmm{f_{\mu}^* \circ \P_{\C} - \Phi_r(\{W_j\}) \circ \P_{\C}}_{\mu}^2
-\nmm{f_{\mu}^* - \Phi_r(\{W_j\})}_{\mu}^2\right|
\end{align*}
We have
\be
\left|\nmm{f_{\mu}^* \circ \P_{\C} - \Phi_r(\{W_j\}) \circ \P_{\C}}_{\mu}^2
-\nmm{f_{\mu}^* - \Phi_r(\{W_j\})}_{\mu}^2\right| = 
\ee
\be
= \left|\mathbb{E}\left[\nmm{(U^*{V^*}^T - UV^T)\P_{\C}(\x)}^2 - \nmm{(U^*{V^*}^T - UV^T)\x}^2\right]\right|
\ee
\be
\begin{split}
= \Big|\mathbb{E}&\Big[\IP{(U^*{V^*}^T - UV^T)(U^*{V^*}^T - UV^T)^T}{(\P_{\C}(\x))(\P_{\C}(\x))^T} \\
&- \IP{(U^*{V^*}^T - UV^T)(U^*{V^*}^T - UV^T)^T}{\x\x^T}\Big]\Big|
\end{split}
\ee
\be
= \left|\IP{(U^*{V^*}^T - UV^T)(U^*{V^*}^T - UV^T)^T}{\mathbb{E}\left[(\P_{\C}(\x))(\P_{\C}(\x))^T-\x\x^T\right]}\right|
\ee
From Lemma \ref{lemma:proj_gauss} we obtain that (taking $g \geq 1$)
\bd
\left|\nmm{f_{\mu}^* \circ \P_{\C} - \Phi_r(\{W_j\}) \circ \P_{\C}}_{\mu}^2
-\nmm{f_{\mu}^* - \Phi_r(\{W_j\})}_{\mu}^2\right|
\ed
\be
\leq ge^{-g^2/2}\nmm{(U^*{V^*}^T - UV^T)(U^*{V^*}^T - UV^T)^T}_2,
\ee
on further simplifying, we get
\begin{eqnarray}
\left|\nmm{f_{\mu}^* \circ \P_{\C} - \Phi_r(\{W_j\}) \circ \P_{\C}}_{\mu}^2
-\nmm{f_{\mu}^* - \Phi_r(\{W_j\})}_{\mu}^2\right|
\leq ge^{-g^2/2}\nmm{U^*{V^*}^T - UV^T}_2^2.
\end{eqnarray}
Now applying triangular inequality and taking the supremum, we obtain
\begin{eqnarray}\label{eq:b1_mf}
T_1 
\leq ge^{-g^2/2}(\nmm{U^*{V^*}^T}_2 + \gamma)^2,
\end{eqnarray}

\textbf{The Second Term}: Define
\be
\begin{split}
T_2 := \sup_{\{W_j\} \in \F_{\mathcal{W}}, W' \in \F_{\theta}}&\Big|\IP{\nabla_{\hat{Y}}\ell\left(g \circ \P_{\C}, \Phi_r(\{W_j\}) \circ \P_{\C}\right)}{\phi(W') \circ \P_{\C}}_{\mu}\\
&- \IP{\nabla_{\hat{Y}}\ell\left(g, \Phi_r(\{W_j\})\right)}{\phi(W')}_{\mu}\Big|.
\end{split}
\ee
We have
\bd
\left|\IP{\nabla_{\hat{Y}}\ell\left(g \circ \P_{\C}, \Phi_r(\{W_j\}) \circ \P_{\C}\right)}{\phi(W') \circ \P_{\C}}_{\mu}
- \IP{\nabla_{\hat{Y}}\ell\left(g, \Phi_r(\{W_j\})\right)}{\phi(W')}_{\mu}\right|
= 
\ed
\be
\left|\mathbb{E}\left[\IP{(UV^T-U^*{V^*}^T)\P_{\C}(\x)}{\u\v^T\P_{\C}(\x)}-\IP{(UV^T-U^*{V^*}^T)\x}{\u\v^T\x}\right]\right|
\ee
is the same as
\be
=  \left|\IP{(U^*{V^*}^T-UV^T)(\u\v^T)^T}{\mathbb{E}\left[\x\x^T-(\P_{\C}(\x))(\P_{\C}(\x))^T\right]}\right|
\ee
As a consequence of Lemma \ref{lemma:proj_gauss} we have
\bd
\left|\IP{\nabla_{\hat{Y}}\ell\left(g \circ \P_{\C}, \Phi_r(\{W_j\}) \circ \P_{\C}\right)}{\phi(W') \circ \P_{\C}}_{\mu}
- \IP{\nabla_{\hat{Y}}\ell\left(g, \Phi_r(\{W_j\})\right)}{\phi(W')}_{\mu}\right|
\ed
\be
\leq ge^{-g^2/2}\nmm{(U^*{V^*}^T-UV^T)(\u\v^T)^T}_2 = ge^{-g^2/2}\nmm{U^*{V^*}^T-UV^T}_F\nmm{\u\v^T}_F.
\ee

Now we apply supremum over $(\u, \v) \in \F_{\theta}$, and then over $(U,V) \in \F_{\mathcal{W}}$, yielding
\be\label{eq:b2_mf}
T_2 \leq ge^{-g^2/2}\left[\nmm{U^*{V^*}^T}_2 + \gamma\right].
\ee
\textbf{The Third Term:} Define 
\be
\begin{split}
T_3 := \sup_{\{W_j\} \in \F_{\mathcal{W}}}&\Big|\IP{\nabla_{\hat{Y}}\ell\left(g \circ \P_{\C}, \Phi_r(\{W_j\}) \circ \P_{\C}\right)}{\Phi_r(\{W_j\}) \circ \P_{\C}}_{\mu}\\
&- \IP{\nabla_{\hat{Y}}\ell\left(g, \Phi_r(\{W_j\})\right)}{\Phi_r(\{W_j\})}_{\mu}\Big|.
\end{split}
\ee

Similarly to the earlier item, we rewrite the above as
\be
=  \left|\IP{(U^*{V^*}^T-UV^T)(UV^T)^T}{\mathbb{E}\left[\x\x^T-(\P_{\C}(\x))(\P_{\C}(\x))^T\right]}\right|
\ee
As a consequence of Lemma \ref{lemma:proj_gauss} we have
\bd
\left|\IP{\nabla_{\hat{Y}}\ell\left(g \circ \P_{\C}, \Phi_r(\{W_j\}) \circ \P_{\C}\right)}{\Phi_r(\{W_j\}) \circ \P_{\C}}_{\mu}
- \IP{\nabla_{\hat{Y}}\ell\left(g, \Phi_r(\{W_j\})\right)}{\Phi_r(\{W_j\})}_{\mu}\right|
\ed
\be
\leq ge^{-g^2/2}\nmm{(U^*{V^*}^T-UV^T)(UV^T)^T}_2 \leq ge^{-g^2/2}\nmm{U^*{V^*}^T-UV^T}_2\nmm{UV^T}_2.
\ee

Finally, we apply supremum over $(U, V) \in \F_{\mathcal{W}}$, obtaining
\be\label{eq:b3_mf}
T_3 \leq ge^{-g^2/2}\gamma\left[\nmm{U^*{V^*}^T}_2 + \gamma\right].
\ee

Now combining equations \eqref{eq:b1_mf}, \eqref{eq:b2_mf}, \eqref{eq:b3_mf} we obtain that
\be
B(\C)
\leq ge^{-g^2/2}\left[\alpha(\nmm{U^*{V^*}^T}_2 + \gamma)^2 + \nmm{U^*{V^*}^T}_2 + \gamma + \gamma\left[\nmm{U^*{V^*}^T}_2 + \gamma\right]\right].
\ee
We further upper bound for simplicity as
\be 
B(\C)
\leq ge^{-g^2/2}(1+\gamma)(\nmm{U^*{V^*}^T}_2 + \gamma).
\ee

From Theorem \ref{thm:gen_master} we have that 
\bd
\frac{1}{n_Y}\left|\NC_{\mu}(\{W_j\}) - \NC_{\mu_N}(\{W_j\})\right| \lesssim
\frac{\l}{n_Y} \Omega(f_{\mu}^*)\left[\Omega_{\mu_N}^{\circ}\left(-\frac{1}{\l}\nabla_{\hat{Y}}\ell\left(g, \Phi_r(\{W_j\})\right)\right)-1\right] 
\ed
\bd
+  \frac{4}{n_Y}ge^{-g^2/2}(1+\gamma)(\nmm{U^*{V^*}^T}_2 + \gamma) + 8C_{UV}^2\left[\nmF{U^*}^2 + \nmF{V^*}^2\right]^2 \times \Big(
\ed
\be
\left.
\sqrt{\frac{R(m+n)
\log\left(k_1g^2R^2C_{UV}^2(\nmF{U^*}^2 + \nmF{V^*}^2)\sqrt{B_u^2 + B_u^2}C_{UV}\left[\nmF{U^*}^2 + \nmF{V^*}^2\right]\right)\log(N)+ \log(1/\delta)}{N}}\right)
\ee
holds true w.p at least $1- \delta - 2N\exp\left(-cn_X(g-1)^2\right)$.

Now choose
\be
g = 1 + \mathcal{O}\left(\sqrt{\log(N) + \log(1/\delta)}\right).
\ee
Now ignoring \textit{loglog} terms and keep the right most term
because of the dominance,
\bd
\frac{1}{n_Y}\left|\NC_{\mu}(\{W_j\}) - \NC_{\mu_N}(\{W_j\})\right| \lesssim
\frac{\l}{n_Y} \Omega(f_{\mu}^*)\left[\Omega_{\mu_N}^{\circ}\left(-\frac{1}{\l}\nabla_{\hat{Y}}\ell\left(g, \Phi_r(\{W_j\})\right)\right)-1\right] 
\ed
\be
+ C_{UV}^2\left[\nmF{U^*}^2 + \nmF{V^*}^2\right]^2\sqrt{\frac{R
log\left(R\left(C_{UV} + B_{u}^2 + B_{v}^2\right)\right)(m+n)\log(N)+ \log(1/\delta)}{N}}
\ee

holds true w.p at least $1- \delta$.

\end{proof}

\subsection{Two-Layer ReLU NN}\label{sec:apdx_2rln}

Next, we present and prove the generalization bound for the two-layer ReLU neural network. This is one
step ahead of all the linear models that were discussed earlier. Similarly to the Gaussian projections
discussed in matrix sensing, we discuss ReLU projection results that will be used in the main proof.

\begin{lemma}[ReLU projection 1]\label{lemma:proj_relu_1}
Consider $U_1, U_2 \in \R^{m \times r}$,
$V_1, V_2 \in \R^{n \times r}$. Denote,
convex set $\C = \mathbb{B}(g)$ that is $g$-radius hyper
sphere, then we have that
\be
\begin{split}
\Big|\mathbb{E}\Big[\nmm{U_1[V_1^T\P_{\C}(\x)]_+ &- U_2[V_2^T\P_{\C}(\x)]_+}^2
-\nmm{U_1[V_1^T\x]_+ - U_2[V_2^T\x]_+}^2
\Big]\Big| \\ 
&\leq 2ge^{-g^2/2}[\nmm{U_1}_F^2\nmm{V_1}_F^2 + 
\nmm{U_2}_F^2\nmm{V_2}_F^2].
\end{split}
\ee
\end{lemma}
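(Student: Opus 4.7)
The plan is to mirror the proof of Lemma~\ref{lemma:proj_gauss}: localize to the tail event, establish a deterministic pointwise bound using Lipschitz properties of the maps $\z \mapsto U_i[V_i^T\z]_+$, and then integrate the bound against the Gaussian density of $\x$.

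First I would observe that $\P_{\C}(\x) = \x$ whenever $\x \in \C$, so the integrand inside the expectation is identically zero on $\C$, and only the event $\E^c := \{\nmm{\x}_2 > g\}$ contributes; moreover $\nmm{\P_{\C}(\x)}_2 = g$ on $\E^c$. Writing $f(\z) := \nmm{U_1[V_1^T\z]_+ - U_2[V_2^T\z]_+}^2$, the quantity to bound becomes $\left|\mathbb{E}\left[(f(\P_{\C}(\x)) - f(\x))\mathbf{1}_{\E^c}(\x)\right]\right|$.

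Next I would derive a deterministic pointwise bound on $f$. Since the ReLU vanishes at zero and is $1$-Lipschitz componentwise, $\nmm{[V_i^T\z]_+}_2 \leq \nmm{V_i^T\z}_2 \leq \nmm{V_i}_F\nmm{\z}_2$, which combined with $\nmm{U_i[V_i^T\z]_+}_2 \leq \nmm{U_i}_F\nmm{[V_i^T\z]_+}_2$ and the elementary inequality $\nmm{a - b}_2^2 \leq 2\nmm{a}_2^2 + 2\nmm{b}_2^2$ yields $f(\z) \leq 2\nmm{\z}_2^2\bigl(\nmm{U_1}_F^2\nmm{V_1}_F^2 + \nmm{U_2}_F^2\nmm{V_2}_F^2\bigr)$. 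Applying this with $\z = \x$ and with $\z = \P_{\C}(\x)$ and using $|f(\P_{\C}(\x)) - f(\x)| \leq f(\P_{\C}(\x)) + f(\x)$, I obtain on $\E^c$ the pointwise bound $2(g^2 + \nmm{\x}_2^2)\bigl[\nmm{U_1}_F^2\nmm{V_1}_F^2 + \nmm{U_2}_F^2\nmm{V_2}_F^2\bigr]$.

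Finally I would integrate this bound against the Gaussian density over $\E^c$: since $g^2 + \nmm{\x}_2^2 \leq 2\nmm{\x}_2^2$ on $\E^c$, the problem reduces to bounding $\mathbb{E}[\nmm{\x}_2^2 \mathbf{1}_{\nmm{\x}_2 > g}]$, which is handled by the same tail/complementary-error-function estimate used in the proof of Lemma~\ref{lemma:proj_gauss} and yields the desired factor $g e^{-g^2/2}$. The main obstacle is cosmetic rather than conceptual: all nonlinearity and matrix-norm considerations are absorbed in the Lipschitz step of the pointwise bound, after which the randomness is handled identically to the linear case already treated in Lemma~\ref{lemma:proj_gauss}, so the remaining work is bookkeeping to track the universal constant $2$ on the right-hand side.
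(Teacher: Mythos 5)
The proof you propose is structurally different from the paper's and, as written, does not prove the lemma with the stated constant $2$. The paper's proof hinges on a structural observation you never use: since $\P_{\C}$ is projection onto a ball centered at the origin, it merely rescales $\x$ and therefore preserves signs, so $\mathbf{1}_{\{\v^T\x \geq 0\}} = \mathbf{1}_{\{\v^T\P_{\C}(\x) \geq 0\}}$ for every $\v$. This lets the paper expand the squared ReLU norm into a sum of inner products of fixed matrices against $\x\x^T\mathbf{1}_{\A}$ and $\P_{\C}(\x)\P_{\C}(\x)^T\mathbf{1}_{\A}$ \emph{with the same indicator set} $\A$, so the difference reduces term-by-term to exactly the quantity controlled by Lemma~\ref{lemma:proj_gauss}. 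The cancellation between $f(\P_{\C}(\x))$ and $f(\x)$ is then handled at the level of $\x\x^T - \P_{\C}(\x)\P_{\C}(\x)^T$, whose expectation Lemma~\ref{lemma:proj_gauss} bounds by $ge^{-g^2/2}$ per term.

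Your route throws this cancellation away at the outset. Bounding $|f(\P_{\C}(\x)) - f(\x)| \leq f(\P_{\C}(\x)) + f(\x)$ and then using the crude Lipschitz estimate $f(\z) \leq 2\nmm{\z}_2^2\bigl(\nmm{U_1}_F^2\nmm{V_1}_F^2 + \nmm{U_2}_F^2\nmm{V_2}_F^2\bigr)$ gives, on $\E^c$, the pointwise bound $4\nmm{\x}_2^2[\cdots]$, and you then need $\mathbb{E}[\nmm{\x}_2^2\mathbf{1}_{\E^c}]$. That is \emph{not} the quantity Lemma~\ref{lemma:proj_gauss} controls: Lemma~\ref{lemma:proj_gauss} bounds $\mathbb{E}[(\nmm{\x}_2^2 - g^2)\mathbf{1}_{\E^c}] \leq ge^{-g^2/2}$, whereas $\mathbb{E}[\nmm{\x}_2^2\mathbf{1}_{\E^c}] = \mathbb{E}[(\nmm{\x}_2^2 - g^2)\mathbf{1}_{\E^c}] + g^2\,\mathbb{P}(\E^c)$, and the second piece $g^2\,\mathbb{P}(\E^c) \sim \sqrt{2/\pi}\,ge^{-g^2/2}$ is of the same order as the first. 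So your approach gives a bound of the right \emph{rate} $ge^{-g^2/2}$, but with a constant on the order of $4\sqrt{2/\pi} + 4 \approx 7$, not the $2$ in the statement. Since the lemma is consumed downstream only up to $\lesssim$, your argument would be acceptable for the corollary, but it does not prove the lemma as stated, and it obscures the reason the ReLU nonlinearity causes no extra trouble here (sign invariance under radial projection). If you want the tight constant, you need the indicator-matching observation.
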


\begin{proof}
First, we re-write
\begin{eqnarray}
\nmm{U_1[V_1^T\x]_+ - U_2[V_2^T\x]_+}^2
&=& \nmm{\sum_{j=1}^{r}\u_{j1}[\v_{j1}^T\x]_+ - \u_{j2}[\v_{j2}^T\x]_+}^2\\
&=& \nmm{\sum_{j=1}^{r}\u_{j1}\v_{j1}^T\x{\mathbf{1}_{\v_{j1}^T\x \geq 0}} - \u_{j2}\v_{j2}^T\x{\mathbf{1}_{\v_{j1}^T\x \geq 0}}}^2
\end{eqnarray}
\be\label{eq:re_1}
\begin{split}
= \sum_{j=1}^{r}\sum_{l=1}^{r}\Big[
\IP{(\u_{j1}\v_{j1}^T)^T(\u_{j1}\v_{j1}^T)}{\x\x^T{\mathbf{1}_{\v_{j1}^T\x \geq 0}}}
& +\IP{(\u_{j2}\v_{j2}^T)^T(\u_{j2}\v_{j2}^T)}{\x\x^T{\mathbf{1}_{\v_{j2}^T\x \geq 0}}}\\
&-2\IP{(\u_{j2}\v_{j2}^T)^T(\u_{j1}\v_{j1}^T)}{\x\x^T{\mathbf{1}_{\v_{j1}^T\x \geq 0}}{\mathbf{1}_{\v_{j2}^T\x \geq 0}}}
\Big].
\end{split}
\ee

Note that ${\mathbf{1}_{\v^T\x > 0}} = {\mathbf{1}_{\v^T\P_{\C}(\x) > 0}}$. Similarly, we have
\be\label{eq:re_2}
\begin{split}
\nmm{U_1[V_1^T\P_{\C}(\x)]_+ - U_2[V_2^T\P_{\C}(\x)]_+}^2 &= \sum_{j=1}^{r}\sum_{l=1}^{r}\Big[
\IP{(\u_{j1}\v_{j1}^T)^T(\u_{j1}\v_{j1}^T)}{\P_{\C}(\x)\P_{\C}(\x)^T{\mathbf{1}_{\v_{j1}^T\x \geq 0}}}\\
& +\IP{(\u_{j2}\v_{j2}^T)^T(\u_{j2}\v_{j2}^T)}{\P_{\C}(\x)\P_{\C}(\x)^T{\mathbf{1}_{\v_{j2}^T\x \geq 0}}}\\
& -2\IP{(\u_{j2}\v_{j2}^T)^T(\u_{j1}\v_{j1}^T)}{\P_{\C}(\x)\P_{\C}(\x)^T{\mathbf{1}_{\v_{j1}^T\x \geq 0}}{\mathbf{1}_{\v_{j2}^T\x \geq 0}}}\Big].
\end{split}
\ee

Now computing the difference between equations \eqref{eq:re_1},
and \eqref{eq:re_2} we get
\bd
\left|\mathbb{E}\left[\nmm{U_1[V_1^T\P_{\C}(\x)]_+ - U_2[V_2^T\P_{\C}(\x)]_+}^2-\nmm{U_1[V_1^T\x]_+ - U_2[V_2^T\x]_+}^2\right]\right| = 
\ed
\be
\begin{split}
\Big|
\sum_{j=1}^{r}\sum_{l=1}^{r}\mathbb{E}\Big[&
\IP{(\u_{j1}\v_{j1}^T)^T(\u_{j1}\v_{j1}^T)}{(\P_{\C}(\x)\P_{\C}(\x)^T - \x\x^T){\mathbf{1}_{\v_{j1}^T\x \geq 0}}}\\
&+\IP{(\u_{j2}\v_{j2}^T)^T(\u_{j2}\v_{j2}^T)}{(\P_{\C}(\x)\P_{\C}(\x)^T - \x\x^T){\mathbf{1}_{\v_{j2}^T\x \geq 0}}}\\
&-2\IP{(\u_{j2}\v_{j2}^T)^T(\u_{j1}\v_{j1}^T)}{(\P_{\C}(\x)\P_{\C}(\x)^T - \x\x^T){\mathbf{1}_{\v_{j1}^T\x \geq 0}}{\mathbf{1}_{\v_{j2}^T\x \geq 0}}}\Big]\Big|.
\end{split}
\ee

After applying Lemma \ref{lemma:proj_gauss} and the triangular inequality
we obtain, for $g\geq 1$,
\be
\begin{split}
\Big|\mathbb{E}\Big[\nmm{U_1[V_1^T\P_{\C}(\x)]_+ &- U_2[V_2^T\P_{\C}(\x)]_+}^2-\nmm{U_1[V_1^T\x]_+ - U_2[V_2^T\x]_+}^2\Big]\Big|\\
&\overset{(a)}{\leq} ge^{-g^2/2}\sum_{j=1}^{r}\Big[\sum_{l=1}^{r}\nmm{\u_{j1}}^2\nmm{\v_{j1}}^2\\
&+ 2\nmm{\u_{j1}}\nmm{\u_{j2}}\nmm{\v_{j1}}\nmm{\v_{j2}}
+ \nmm{\u_{j2}}^2\nmm{\v_{j2}}^2\Big]\\
&\overset{(b)}{\leq} 2ge^{-g^2/2}\sum_{j=1}^{r}\sum_{l=1}^{r}\Big[\nmm{\u_{j1}}^2\nmm{\v_{j1}}^2
+ \nmm{\u_{j2}}^2\nmm{\v_{j2}}^2\Big]\\
&\overset{(c)}{\leq} 2ge^{-g^2/2}[\nmm{U_1}_F^2\nmm{V_1}_F^2 
+ \nmm{U_2}_F^2\nmm{V_2}_F^2].
\end{split}
\ee
In (a) we apply triangular inequality and apply Lemma \ref{lemma:proj_gauss}, 
(b) we use the identity that $a^2 + 2ab +b^2 = (a+b)^2$, and 
(c) we use the identity that $(a^2c^2 + b^2d^2) \leq (a^2 + b^2)(c^2 + d^2)$. This completes the proof.

\end{proof}

\begin{lemma}[ReLU projection 2]\label{lemma:proj_relu_2}
Consider $U_1, U_2 \in \R^{m \times r}$,
$V_1, V_2 \in \R^{n \times r}$. Denote the
convex set $\C = \mathbb{B}(g)$; that is, the $g$-radius hyper
sphere. Then we have that
\be
\begin{split}
\Big|&\mathbb{E}\Big[\IP{U_1[V_1^T\P_{\C}(\x)]_+ - U_2[V_2^T\P_{\C}(\x)]_+}{U'_1[{V'_1}^T\P_{\C}(\x)]_+ 
- U'_2[{V'_2}^T\P_{\C}(\x)]_+}\\
& -\IP{U_1[V_1^T\x]_+ - U_2[V_2^T\x]_+}{U'_1[{V'_1}^T\x]_+ - U'_2[{V'_2}^T\x]_+}
\Big]\Big| \\
&\leq 2ge^{-g^2/2}[\nmF{U_1}\nmF{U'_1}\nmF{V_1}\nmF{V'_1} + 
\nmF{U_2}\nmF{U'_2}\nmF{V_2}\nmF{V'_2}].
\end{split}
\ee
\end{lemma}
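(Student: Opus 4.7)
The proof follows the same template as Lemma \ref{lemma:proj_relu_1}, with the added complication that we must handle a bilinear quantity (an inner product of two ReLU differences) rather than a single squared norm. The plan has four steps.

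First, I would expand the inner product by bilinearity and by the columnwise ReLU into a quadruple sum over column indices $j, l \in [r]$ and branch labels $p, p' \in \{1, 2\}$:
\begin{align*}
&\langle U_1[V_1^T\x]_+ - U_2[V_2^T\x]_+,\; U'_1[{V'_1}^T\x]_+ - U'_2[{V'_2}^T\x]_+\rangle \\
&\qquad = \sum_{j,l}\sum_{p,p'}(-1)^{p+p'}\,\langle \u_{jp}, \u'_{lp'}\rangle\, [\v_{jp}^T\x]_+\,[{\v'}_{lp'}^T\x]_+.
\end{align*}
Using $[\v^T\x]_+ = (\v^T\x)\mathbf{1}_{\v^T\x \geq 0}$ together with $(\v^T\x)({\v'}^T\x) = \langle {\v'}\v^T, \x\x^T\rangle$, I would rewrite each summand as the Frobenius inner product of the rank-one matrix $\langle \u_{jp}, \u'_{lp'}\rangle\,{\v'}_{lp'}\v_{jp}^T$ against $\x\x^T$, modulated by the indicator $\mathbf{1}_{\v_{jp}^T\x \geq 0}\mathbf{1}_{{\v'}_{lp'}^T\x \geq 0}$.

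Next, I would exploit the key observation used in Lemma \ref{lemma:proj_relu_1}: projection onto the centered ball $\C = \mathbb{B}(g)$ is a nonnegative radial rescaling, so $\mathbf{1}_{\v^T\x \geq 0} = \mathbf{1}_{\v^T\P_{\C}(\x) \geq 0}$ for every $\v$. Hence the indicator factors agree on both sides of the claimed difference, and only $\x\x^T$ versus $\P_{\C}(\x)\P_{\C}(\x)^T$ changes between the two expectations. Applying Lemma \ref{lemma:proj_gauss} term by term, each contribution is bounded in absolute value by $ge^{-g^2/2}\cdot|\langle \u_{jp}, \u'_{lp'}\rangle|\cdot\|{\v'}_{lp'}\v_{jp}^T\|_2$, which Cauchy--Schwarz bounds by $ge^{-g^2/2}\|\u_{jp}\|\|\u'_{lp'}\|\|\v_{jp}\|\|{\v'}_{lp'}\|$.

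Finally, I would sum over $j, l$ and over $p, p'$, using Cauchy--Schwarz in the form $\sum_j\|\u_{jp}\|\|\v_{jp}\| \leq \|U_p\|_F\|V_p\|_F$ to collapse the column sums into Frobenius norms, and then apply AM--GM to the two cross terms ($p \neq p'$) to absorb them into the two diagonal ones, recovering the stated bound with its factor of two. The main obstacle is bookkeeping: the expansion produces four sign-labeled inner products, each a double sum over $j, l$, so one must apply the indicator-preservation identity uniformly across all sixteen groups and track the operator-norm collapse carefully at both the $\langle \u, \u'\rangle$ stage and the Frobenius-collapse stage. No individual step is technically deep--the work is essentially the bilinear version of Lemma \ref{lemma:proj_relu_1}--but care is needed to avoid index errors and to ensure the final AM--GM reduction matches the claimed form.
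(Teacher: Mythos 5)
Your proposal follows the same template as the paper (which only says the proof is ``similar to Lemma \ref{lemma:proj_relu_1}''): expand by bilinearity into a sum over column and branch indices, use the fact that radial projection onto $\mathbb{B}(g)$ preserves the ReLU activation pattern so the indicators agree, apply Lemma \ref{lemma:proj_gauss} term by term, and collapse the column sums into Frobenius norms via Cauchy--Schwarz. That much is sound, and matches the intended route.

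The gap is in the last reduction. Write $a_p := \nmF{U_p}\nmF{V_p}$ and $b_p := \nmF{U'_p}\nmF{V'_p}$ for $p\in\{1,2\}$. After the triangle inequality, Lemma \ref{lemma:proj_gauss}, and the Cauchy--Schwarz collapse, the bound you actually obtain is
\begin{align}
ge^{-g^2/2}\,(a_1+a_2)(b_1+b_2) \;=\; ge^{-g^2/2}\bigl[a_1b_1 + a_2b_2 + a_1b_2 + a_2b_1\bigr].
\end{align}
You claim AM--GM then absorbs the cross terms into the diagonal ones to yield $2ge^{-g^2/2}(a_1b_1+a_2b_2)$. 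But $a_1b_2 + a_2b_1 \leq a_1b_1 + a_2b_2$ is equivalent to $(a_1-a_2)(b_1-b_2)\geq 0$, which is \emph{not} an AM--GM inequality and is simply false in general (take $a_1\gg a_2$, $b_1\ll b_2$). The trick works in Lemma \ref{lemma:proj_relu_1} only because there $b_p=a_p$, so the cross term $2a_1a_2 \leq a_1^2+a_2^2$ is genuine AM--GM. In the bilinear setting no such forcing exists. What your approach correctly yields is the bound $ge^{-g^2/2}(a_1+a_2)(b_1+b_2)$; the stated bound $2ge^{-g^2/2}(a_1b_1+a_2b_2)$ does not follow from it. In the paper's applications of this lemma the primed factors coincide across the two branches ($U'_1=U'_2$, $V'_1=V'_2$, or one of them is zero), so $b_1=b_2$ (or $b_2=0$) and the two bounds agree; under those hypotheses the proof closes. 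As written, though, the AM--GM step is a genuine error, and either the lemma's hypotheses need to be narrowed to the cases actually used, or the conclusion should be stated as $ge^{-g^2/2}(a_1+a_2)(b_1+b_2)$.
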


\begin{proof}
The proof is similar to the proof of Lemma \ref{lemma:proj_relu_1}.
\end{proof}

\begin{corollary}[Two-Layer ReLU Neural Network]  Consider the true model for $(\x,\y)$, where $\x \sim \N(0, (1/n)I_{n}) \in \R^{n}$, $\y = U^*[{V^*}^T\x]_+ + \epsilon$, where $U^* \in \R^{m \times {R^*}}$, $V^* \in \R^{n \times {R^*}}$, and $\epsilon \sim \N(0, (\sigma^2/m)I_{m}) \in \R^{m}$ independent from $\x$. For all $i\in [N]$, let $(\x_i,\y_i)$ be i.i.d. samples from this true model. Consider the estimator $\hat{\y} = U[V^T\x]_+$, where $U \in \R^{m \times R}, V \in \R^{n \times R}$. Let $\delta \in (0,1]$ be fixed.  Define the non-convex problem
\be
\begin{split}
\NC_{\mu_N}^{\sf {ReLU}}( (U,V)) := \frac{1}{2N}\sum_{i=1}^{N}\nmeusq{\y_i - U[V^T\x_i]_+}
+ \frac{\l}{2}\left(\nmF{U}^2 + \nmF{V}^2\right),
\end{split}
\ee
and define $\NC^{\sf {ReLU}}_{\mu}((U,V))$ similarly with the sum over $i$ replaced by expectation taken over $(\x, \y)$. 

Let $(U,V)$ be a stationary point of $\NC^{\sf {ReLU}}_{\mu_N}(( U,V))$. 
Suppose there exists $C_{UV}, B_u, B_v > 0$ such that $\nmm{UV^T}_2 \leq C_{UV}\left[\nmF{U^*}^2 + \nmF{V^*}^2\right]$, and for 
all $j \in [R]$, $\nmm{\u_j}_2 \leq B_u$, $\nmm{\v_j}_2 \leq B_v$.
Then with probability at least $1 - \delta$, it holds that
\begin{align}
&\frac{1}{m}\left|\NC_{\mu}^{\sf {ReLU}}((U, V)) - \NC_{\mu_N}^{\sf {ReLU}}((U, V))\right| \lesssim
\nonumber \frac{1}{2m}\left[\nmF{U^*}^2+\nmF{V^*}^2\right]\left[\frac{1}{N}\sum_{i=1}^{N}\nmm{\y_i - \hat{\y}_i}_2\nmm{\x_i}_2\hspace{-1pt}-\hspace{-1pt}\l\right] \\
\nonumber &+ C_{UV}^2\left[\nmF{U^*}^2 + \nmF{V^*}^2\right] 
\nonumber \left[\hspace{-2pt}\frac{R(m+n)
\log\left(R(m\hspace{-2pt}+\hspace{-2pt}n)(C_{UV}\hspace{-2pt}+\hspace{-2pt}B_u^2\hspace{-2pt}+\hspace{-2pt}B_v^2)\right)\log(N) + \log(1/\delta)}{N}\right]^{1/2}.
\end{align}
\end{corollary}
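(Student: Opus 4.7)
The plan is to instantiate Theorem \ref{thm:gen_master} for the ReLU setting, following the template used for low-rank matrix sensing and the two-layer linear network (Corollary \ref{crl:2lnn}), with the key new ingredient being the ReLU projection Lemmas \ref{lemma:proj_relu_1} and \ref{lemma:proj_relu_2} to control $B(\C)$. First I would identify the problem-specific objects: $\ell(Y,\hat Y)=\tfrac{1}{2}\|Y-\hat Y\|_2^2$ giving $(\alpha,L)=(0,1)$; factor map $\phi(\u,\v)(\x)=\u[\v^T\x]_+$, which is $2$-positively homogeneous in $(\u,\v)$; and regularizer $\theta(\u,\v)=\tfrac{1}{2}(\|\u\|_2^2+\|\v\|_2^2)$, also $2$-homogeneous. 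The data model gives $\|g\|_{Lip}\leq \|U^*\|_2\|V^*\|_2$ (ReLU is $1$-Lipschitz), $\sigma_X=1$, $\sigma_{Y|X}=\sigma$.

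Next, I would evaluate the constants feeding into the master theorem. By Proposition \ref{prop:regular}, $\Omega(f_\mu^*)\leq \tfrac{1}{2}(\|U^*\|_F^2+\|V^*\|_F^2)$. For the polar, mirroring the computation in the two-layer linear case, a short calculation using duality over $\v$ gives
\begin{equation}
\Omega_{\mu_N}^\circ\!\left(-\tfrac{1}{\l}\nabla_{\hat Y}\ell(g,\Phi_r)\right)=\tfrac{1}{\l}\sup_{\|\u\|\leq 1,\|\v\|\leq 1}\tfrac{1}{N}\sum_i \langle \y_i-\hat\y_i,\u\rangle[\v^T\x_i]_+\leq \tfrac{1}{\l N}\sum_i \|\y_i-\hat\y_i\|_2\|\x_i\|_2,
\end{equation}
where the last bound follows from Cauchy–Schwarz and $|[\v^T\x]_+|\leq \|\x\|_2$ for $\|\v\|\leq 1$. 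The set $\F_\theta=\{(\u,\v):\tfrac12(\|\u\|^2+\|\v\|^2)\leq 1\}$ has $L_\phi=1$ and $r_\theta=1/\sqrt 2$ exactly as in the linear case. Choosing $\gamma=C_{UV}[\|U^*\|_F^2+\|V^*\|_F^2]/2$ ensures $\gamma\geq\Omega(f_\mu^*)L_\phi$, and the calculations of $\epsilon_1$, $\epsilon_2$ proceed identically to Corollary \ref{crl:2lnn}, yielding $\epsilon_1=\mathcal O(C_{UV}^2(\|U^*\|_F^2+\|V^*\|_F^2))$ and $\epsilon_2=\tilde{\mathcal O}(g^2 R^2 C_{UV}^2(\|U^*\|_F^2+\|V^*\|_F^2)\sqrt{B_u^2+B_v^2})$ after plugging in $B_\Phi=g\gamma$, $B_\ell=g(\|U^*{V^*}^T\|_2+\gamma)$, and $\tilde L_\phi,\tilde L_\Phi$ bounded by $g\sqrt{B_u^2+B_v^2}$ (respectively multiplied by $R$ for $\tilde L_\Phi$); these Lipschitz estimates carry over unchanged because ReLU is $1$-Lipschitz, so $\|\u[\v^T\x]_+-\u'[{\v'}^T\x]_+\|\leq \|\u-\u'\|\,\|\v\|\,\|\x\|+\|\u'\|\,\|\v-\v'\|\,\|\x\|$.

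The main technical step, and the only one genuinely new relative to the linear case, is bounding $B(\C)$ for $\C=\mathbb{B}(g)$. Here I would deploy Lemmas \ref{lemma:proj_relu_1} and \ref{lemma:proj_relu_2} in place of the bare Gaussian projection Lemma \ref{lemma:proj_gauss}. Concretely, the norm gap term $T_1$ is controlled directly by Lemma \ref{lemma:proj_relu_1} applied with $(U_1,V_1)=(U^*,V^*)$ and $(U_2,V_2)=(U,V)$, giving $T_1\lesssim g e^{-g^2/2}[\|U^*\|_F^2\|V^*\|_F^2+\|U\|_F^2\|V\|_F^2]$. The polar and equilibrium gap terms $T_2,T_3$ are handled by Lemma \ref{lemma:proj_relu_2} with appropriate choices (for $T_2$ one of the factor pairs is the rank-one $(\u,\v)\in \F_\theta$). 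These estimates all decay like $g e^{-g^2/2}$, so choosing $g=1+\mathcal O(\sqrt{\log N+\log(1/\delta)})$ ensures the $\delta_\C$ contribution and $B(\C)$ are both $\tilde{\mathcal O}(1/\mathrm{poly}(N))$, i.e.\ absorbed into the statistical error.

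Finally, I would plug everything into the conclusion of Theorem \ref{thm:gen_master} with $n_Y=m$ and $\alpha=0$, collect the logarithmic factors inside $\log(\gamma\epsilon_2 r_\theta/L_\phi)$ to obtain $\log(R(m+n)(C_{UV}+B_u^2+B_v^2))$, and simplify. The hard part I expect is not the algebraic bookkeeping but verifying that the ReLU projection lemmas indeed give the right scaling for the equilibrium and polar gaps; in particular one must carefully expand the cross terms $\langle \nabla\ell(g\circ\P_\C,\Phi_r\circ\P_\C),\phi(W')\circ\P_\C\rangle$ because the ReLU activation couples the sign indicators $\mathbf 1_{\v^T\x\geq 0}$ across $U^*,V^*,U,V,\u,\v$, and one needs the key observation (used in the lemma's proof) that $\mathbf 1_{\v^T\x\geq 0}=\mathbf 1_{\v^T\P_\C(\x)\geq 0}$ so that only the quadratic factor $\x\x^T$ versus $\P_\C(\x)\P_\C(\x)^T$ differs, allowing Lemma \ref{lemma:proj_gauss} to be invoked term by term.
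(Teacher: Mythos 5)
Your proposal follows essentially the same path as the paper's proof: instantiate Theorem \ref{thm:gen_master} with the $(\alpha,L)=(0,1)$ squared loss, the $2$-homogeneous $\phi(\u,\v)=\u[\v^T\cdot]_+$ and $\theta(\u,\v)=\tfrac12(\|\u\|^2+\|\v\|^2)$, bound $\Omega(f_\mu^*)$ via Proposition \ref{prop:regular}, compute the polar by duality over $(\u,\v)$, estimate all the Lipschitz/boundedness constants on the truncated set $\C=\mathbb B(g)$, and — the one new ingredient over the linear case — control $B(\C)$ with the ReLU projection Lemmas \ref{lemma:proj_relu_1} and \ref{lemma:proj_relu_2}, exploiting $\mathbf 1_{\v^T\x\geq 0}=\mathbf 1_{\v^T\P_\C(\x)\geq 0}$ exactly as the paper does. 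The only cosmetic deviation is your slightly sharper $B_\ell=g(\|U^*{V^*}^T\|_2+\gamma)$ versus the paper's cruder $2g\gamma$, which are interchangeable here since $\|U^*{V^*}^T\|_2\le\gamma$; everything else, including the choice $g=1+\mathcal O(\sqrt{\log N+\log(1/\delta)})$ and the collapse of the log factors, matches.
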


\begin{proof}

To obtain a generalization bound from Theorem \ref{thm:gen_master} for this setting, we set the following problem
parameters:
\be
\ell(Y, \hat{Y}) = \frac{1}{2}\nmm{Y - \hat{Y}} \implies (\alpha, L) = (0, 1)
\ee
\be
\phi(W) = [\IP{\v}{\x}]_+\u;
\ee
\be
\theta(W) = \frac{1}{2}\left[\nmm{\u}_2^2 + \nmm{\v}_2^2\right].
\ee

\textbf{Estimating $\Omega(f_{\mu}^*)$}:
From Proposition \ref{prop:regular} we have that
\be
\Omega(f_{\mu}^*)
\leq \frac{\nmF{U^*}^2 + \nmF{V^*}^2}{2}
\ee
\textbf{Choosing $\F_{\theta}$}:
\be
\F_{\theta} 
:= \{(\u, \v): \nmm{\u}^2 + \nmm{\v}^2 \leq 2, \nmm{\u}_2 \leq 1, \nmm{\v}_2 \leq 1\}.
\ee
\textbf{Estimating $L_{\phi}$}:
The Lipschtiz constant $L_{\phi}$ in the function $\F_{\theta}$ is $L_{\phi} := \sup_{\nmm{\u} \leq 1,
\nmm{\v} \leq 1}\nmL{\u[\v^T.]_+} \leq \sup_{\nmm{\u} \leq 1,
\nmm{\v} \leq 1}\nmm{\u}\nmm{\v} = 1$.

\textbf{Estimating $r_{\theta}$}:
For any $(\u, \v) \in \F_{\theta}$, we have that,
\be
\frac{\nmm{\u}^2 + \nmm{\v}^2}{2} \leq \sqrt{\frac{\nmm{\u}^2 + \nmm{\v}^2}{2}}
\implies \F_{\theta} \subseteq \mathbb{B}(1/\sqrt{2}).
\ee
Then we have $r_{\theta} = 1/\sqrt{2}$.

\textbf{Choosing $\F_{\mathcal{W}}$}:
From the corollary's assumptions we have that $\B_R := \{(\u, \v): \nmm{\u}_2 \leq B_u, \nmm{\v}_2 \leq B_v\}$;
our hypothesis class is defined as
\be
\F_{\mathcal{W}} 
:= \{(U, V): \nmL{U[V^T.]_+} \leq \nmm{U}_2\nmm{V}_2 \leq \gamma, \nmm{\u_j} \leq B_u,
\nmm{\v_j} \leq B_v\}.
\ee

From Proposition \ref{prop:regular}, we have that,
$\Omega(f_{\mu}^*) \leq \frac{1}{2}\left[\nmF{U^*}^2 + \nmF{V^*}^2\right]$. 
As we require $\gamma \geq \Omega(f_{\mu}^*) L_{\phi} = \frac{1}{2}\left[\nmF{U^*}^2 + \nmF{V^*}^2\right]$,  
we set $\gamma = C_{UV}\frac{\left[\nmF{U^*}^2 + \nmF{V^*}^2\right]}{2}$.
\be
\begin{split}
\F_{\mathcal{W}} = \Big\{\{(\u_j, \v_j)\}: \nmL{U[V^T]_+{\cdot}} &= \nmm{UV^T}_2 \leq \frac{C_{UV}}{2}\left[\nmF{U^*}^2 + \nmF{V^*}^2\right], \\
& \nmm{\u_j} \leq B_u, \nmm{\v_j} \leq B_v\Big\}.
\end{split}
\ee

\textbf{Estimating $\Omega_{\mu_N}^{\circ}(\cdot)$}: We have
\begin{eqnarray}
\Omega_{\mu_N}^{\circ}\left(-\frac{1}{\l}\nabla_{\hat{Y}}\ell(g, \Phi_r(\{W_j\}))\right)
&=& \Omega_{\mu_N}^{\circ}\left(\frac{1}{\l}(g - \Phi_r(\{W_j\}))\right)\\
&=& \hspace{-8pt} \sup_{\nmm{\u} \leq 1; \nmm{\v} \leq 1}\frac{1}{N \l}\sum_{i=1}^{N}\IP{Y_i - U[V^T\x_i]_+}{\u[\v^T\x_i]_+}\\
&=& \sup_{\nmm{\v} \leq 1}\frac{1}{N \l}\sum_{i=1}^{N}[\v^T\x_i]_+\nmm{Y_i - \hat{Y}_i}_2\\
&\leq& \frac{1}{N \l}\sum_{i=1}^{N}\sup_{\nmm{\v} \leq 1}[\v^T\x_i]_+\nmm{Y_i - \hat{Y}_i}_2\\
&=& \frac{1}{N \l}\sum_{i=1}^{N}\nmm{\x_i}_2\nmm{Y_i - \hat{Y}_i}_2.
\end{eqnarray}

\textbf{Estimating $\epsilon_0$}:
From the data generating mechanism we have $\nmL{g} \leq \nmm{U^*}_2\nmm{V^*}_2 \leq \frac{1}{2}\left[\nmF{U^*} + \nmF{V^*}\right]$, $\sigma_X = 1$, $\sigma_{Y|X} = \sigma$. Then we have the following constants from Theorem \ref{thm:gen_master}:
\be
\epsilon_0 = 16\gamma^2\sigma_X^2\min\left\{1,
\frac{L}{4}\left[1 + \frac{\nmL{g}^2}{\gamma^2}\left(1 + \frac{\sigma_{Y|X}^2}{\sigma_X^2}\right)\right]\right\},
\ee
which evaluates to
\be
\begin{split}
\epsilon_0 = 8C_{UV}^2\left[\nmF{U^*}^2 + \nmF{V^*}^2\right]  \min\left\{1, \frac{(1+\sigma^2)}{4C_{UV}^2}\right\}.
\end{split}
\ee
Let $C_{UV} \leq 0.5\sqrt{1+\sigma^2}$ then we have
\be
\epsilon_0  = 2(1+\sigma^2)\left[\nmF{U^*}^2 + \nmF{V^*}^2\right].
\ee

\textbf{Estimating $\epsilon_1$}:
Similarly,
\be
\epsilon_1 = 16\gamma^2\sigma_X^2\max\left\{1,
\frac{L}{4}\left[1 + \frac{\nmL{g}^2}{\gamma^2}\left(1 + \frac{\sigma_{Y|X}^2}{\sigma_X^2}\right)\right]\right\},
\ee
obtaining

\be
\epsilon_1 = 8C_{UV}^2\left[\nmF{U^*}^2 + \nmF{V^*}^2\right].
\ee

\textbf{Defining a convex set $\C$}:
Consider a convex set $\C = \mathbb{B}(g) = \{\x: \nmm{\x}_2 \leq g\}$. 

First and foremost we need to estimate, $\delta_{\C}$ for the following inequality to hold:
\be
P(\cap_{i=1}^{N}\x_i \in \C) \geq 1 - \delta_{\C}.
\ee
The probability of $\x \in \C = \mathbb{B}(g)$ is equivalent to saying the
probability of the event when $\nmm{\x}_2 \leq g$. Since, $x_{i} \sim \N(0, 1/n)$ 
as a consequence of Bernstein's Inequality \citep[Corollary 2.8.3]{vershynin_high-dimensional_2018} we have that,
for any $t \geq 0$,
\be
P(\left|\nmm{\x}_2 -1 \right|\leq t) \geq 1 - 2\exp\left(-cn_Xt^2\right),
\ee
for some constant $c \geq 0$. Now  we have
\be
P(\nmm{\x}_2 \leq g) \begin{cases}
\geq 1 - 2\exp\left(-cn_X(g-1)^2\right) & \text{ if }g \geq 1\\
\leq 2\exp\left(-cn_X(g-1)^2\right) & \text{ otherwise. }
\end{cases}
\ee
We consider the case where $g \geq 1$ yielding
\be
P(\cap_{i=1}^{N}\x_i \in \C) = P(\cap_{i=1}^{N}\nmm{\x}_2 \leq g) \geq 1 - \underbrace{2N\exp\left(-cn_X(g-1)^2\right)}_{=\delta_{\C}}.
\ee
We have that $\delta_{\C} = 2N\exp\left(-cn(g-1)^2\right)$.

Now we evaluate $B_{\ell}, B_{\Phi}, \tilde{L}_{\Phi}, \tilde{L}_{\phi}$.

\textbf{Estimating $B_{\Phi}$}:
We have
\begin{eqnarray}
B_{\Phi} &=& \sup_{\z \in \C, \{(\u_j, \v_j)\} \in \F_{\mathcal{W}}}\nmm{U[V^T\z]_+}_2\\
&\leq& \sup_{\z \in \C, \{(\u_j, \v_j)\} \in \F_{\mathcal{W}}}\nmm{U}_2\nmm{[V^T\z]_+}_2\\
&\leq& \sup_{\z \in \C, \{(\u_j, \v_j)\} \in \F_{\mathcal{W}}}\nmm{U}_2\nmm{V}_2\nmm{\z}_2\\
&=& g\gamma.
\end{eqnarray}

\textbf{Estimating $B_{\ell}$}:
Similarly, we have
\begin{eqnarray}
B_{\ell} &=& \sup_{\z \in \C, \{(\u_j, \v_j)\} \in \F_{\mathcal{W}}}\nmm{U[V^T\z]_+ - U^*[{V^*}^T\z]}_2\\
&=& 2g\gamma
\end{eqnarray}

\textbf{Estimating $\tilde{L}_{\Phi}$}:
Now, we compute the Lipschitz constant with respect to $U, V$. We have
\begin{eqnarray}
\tilde{L}_{\Phi} &=& \sup_{\z \in \C, (U, V), (U', V') \in \F_{\mathcal{W}}}\frac{\nmm{U[V^T\z]_+-U'[V'^T\z]_+}}{\max_{j}\sqrt{\nmm{\u_j-\u_j'}^2
+ \nmm{\v_j-\v_j'}^2}}\\
&=& R\sup_{\z \in \C, (U, V), (U', V') \in \F_{\mathcal{W}}}\frac{\nmm{\u[\v^T\z]_+-\u'[\v'^T\z]_+}}{\sqrt{\nmm{\u-\u'}^2
+ \nmm{\v-\v'}^2}}\\
&=& R\sup_{\z \in \C, (U, V), (U', V') \in \F_{\mathcal{W}}}\frac{\nmm{(\u-\u')[\v^T\z]_+-\u'[[\v'^T\z]_+-[\v^T\z]_+]}}{\sqrt{\nmm{\u-\u'}^2
+ \nmm{\v-\v'}^2}}\\
&\leq&
R\sup_{\z \in \C, (U, V), (U', V') \in \F_{\mathcal{W}}}\frac{\nmm{(\u-\u')[\v^T\z]_+} + \nmm{\u'[[\v'^T\z]_+-[\v^T\z]_+]}}{\sqrt{\nmm{\u-\u'}^2
+ \nmm{\v-\v'}^2}}\\
&\leq&
R\sup_{\z \in \C, (U, V), (U', V') \in \F_{\mathcal{W}}}\frac{\nmm{(\u-\u')[\v^T\z]_+} + \nmm{\u'[[\v'^T\z]_+-[\v^T\z]_+]}}{\sqrt{\nmm{\u-\u'}^2+ \nmm{\v-\v'}^2}}\\
&\leq&
gR\sup_{(U, V), (U', V') \in \F_{\mathcal{W}}}\frac{B_v\nmm{(\u-\u')} + B_u\nmm{\v-\v'}}{\sqrt{\nmm{\u-\u'}^2+ \nmm{\v-\v'}^2}}\\
&=& g\sqrt{B_u^2 + B_v^2}R.
\end{eqnarray}

\textbf{Estimating $\tilde{L}_{\phi}$}:
Similarly we get $\tilde{L}_{\phi} = g\sqrt{B_u^2 + B_v^2}$. 

\textbf{Estimating $\epsilon_2$}:
Recall that
\be
\epsilon_2 = \max\{8B_{\ell}\tilde{L}_{\Phi}, 8\tilde{L}_{\Phi}[B_{\ell} + B_{\Phi}L], 32\Omega(f_{\mu}^*)\tilde{L}_{\phi}\max\{B_{\ell}, LB_{\Phi}\}, 4\tilde{L}_{\Phi}B_{\Phi}\}.
\ee

From all the constants computed earlier, we have that
\be
\epsilon_2 = k_1g^2R^2C_{UV}\sqrt{B_u^2 + B_v^2}\left[\nmF{U^*}^2 + \nmF{V^*}^2\right],
\ee
for some constant $k_1 \geq 0$.

Next, we move on to estimating $B(\C)$. We need to analyze three terms:
\\
\textbf{The First Term:} Define
\be
T_1 := \sup_{\{W_j\} \in \F_{\mathcal{W}}}\left|\nmm{f_{\mu}^* \circ \P_{\C} - \Phi_r(\{W_j\}) \circ \P_{\C}}_{\mu}^2
-\nmm{f_{\mu}^* - \Phi_r(\{W_j\})}_{\mu}^2\right|.
\ee
For a fixed $(U, V)$, we have
\be
\begin{split}
&\Big|\nmm{f_{\mu}^* \circ \P_{\C} - \Phi_r(\{W_j\}) \circ \P_{\C}}_{\mu}^2
- \nmm{f_{\mu}^* - \Phi_r(\{W_j\})}_{\mu}^2\Big|\\
&= \Big|\mathbb{E}\Big[\nmm{U^*[{V^*}^T\P_{\C}(\x)]_+ - U[V^T\P_{\C}(\x)]_+}^2 
- \nmm{U^*[{V^*}^T\x)_+ - U[V^T\x]_+}^2\Big]\Big|.
\end{split}
\ee
From Lemma \ref{lemma:proj_relu_1} taking $g \geq 1$ we have
\be
\begin{split}
\Big|\nmm{f_{\mu}^* \circ \P_{\C} - \Phi_r(\{W_j\}) \circ \P_{\C}}_{\mu}^2
&-\nmm{f_{\mu}^* - \Phi_r(\{W_j\})}_{\mu}^2\Big|\\
&\leq 2ge^{-g^2/2}\left[\nmF{U^*}^2\nmF{V^*}^2 + \nmF{U}^2\nmF{V}^2\right],
\end{split}
\ee
whereupon further simplifying, we obtain
\be
\begin{split}
\sup_{(U, V) \in \F_{\mathcal{W}}}\Big|\nmm{f_{\mu}^* \circ \P_{\C} &- \Phi_r(\{W_j\}) \circ \P_{\C}}_{\mu}^2
-\nmm{f_{\mu}^* - \Phi_r(\{W_j\})}_{\mu}^2\Big|\\
&\leq 2ge^{-g^2/2}(\nmF{U^*}^2\nmF{V^*}^2 + R^2\gamma^2).
\end{split}
\ee

Now, applying triangular inequality and taking the supremum we obtain
\be\label{eq:b1_rel}
T_2 \leq 2ge^{-g^2/2}(\nmF{U^*}^2\nmF{V^*}^2 + R^2\gamma^2).
\ee

\textbf{The Second Term}: Define
\be
\begin{split}
T_2 := \sup_{\{W_j\} \in \F_{\mathcal{W}}, W' \in \F_{\theta}}&\Big|\IP{\nabla_{\hat{Y}}\ell\left(g \circ \P_{\C}, \Phi_r(\{W_j\}) \circ \P_{\C}\right)}{\phi(W') \circ \P_{\C}}_{\mu}\\
&- \IP{\nabla_{\hat{Y}}\ell\left(g, \Phi_r(\{W_j\})\right)}{\phi(W')}_{\mu}\Big|.
\end{split}
\ee
For a fixed $(U, V), (\u, \v)$ we have
\bd
\left|\IP{\nabla_{\hat{Y}}\ell\left(g \circ \P_{\C}, \Phi_r(\{W_j\}) \circ \P_{\C}\right)}{\phi(W') \circ \P_{\C}}_{\mu}
- \IP{\nabla_{\hat{Y}}\ell\left(g, \Phi_r(\{W_j\})\right)}{\phi(W')}_{\mu}\right|
= 
\ed
\be
\begin{split}
\Big|\mathbb{E}\Big[\IP{U[V^T\P_{\C}(\x)]_+-U^*[{V^*}^T\P_{\C}(\x)]_+}{\u[\v^T\P_{\C}(\x)]_+}\\-\IP{U[V^T\P_{\C}(\x)]_+-U^*[{V^*}^T\x]_+}{\u[\v^T\x]_+}\Big]\Big|.
\end{split}
\ee

As a consequence of Lemma \ref{lemma:proj_relu_2} we have

\bd
\left|\IP{\nabla_{\hat{Y}}\ell\left(g \circ \P_{\C}, \Phi_r(\{W_j\}) \circ \P_{\C}\right)}{\phi(W') \circ \P_{\C}}_{\mu}
- \IP{\nabla_{\hat{Y}}\ell\left(g, \Phi_r(\{W_j\})\right)}{\phi(W')}_{\mu}\right|
\ed
\be
\leq 2ge^{-g^2/2}\left[\nmF{U}\nmF{V}\nmm{\u}\nmm{\v}
+ \nmF{U^*}\nmF{V^*}\nmm{\u}\nmm{\v}\right].
\ee

Now we apply supremum over $(\u, \v) \in \F_{\theta}$, obtaining
\be
T_2 \leq 2ge^{-g^2/2}\left[\nmF{U}\nmF{V}
+ \nmF{U^*}\nmF{V^*}\right].
\ee
Finally, we apply supremum over $(U, V) \in \F_{\mathcal{W}}$, obtaining
\be\label{eq:b2_rel}
T_2 \leq 2ge^{-g^2/2}\left[\nmF{U^*}\nmF{V^*} + R\gamma\right].
\ee

\textbf{The Third Term:} Define 
\be
\begin{split}
T_3 := \sup_{\{W_j\} \in \F_{\mathcal{W}}}&\Big|\IP{\nabla_{\hat{Y}}\ell\left(g \circ \P_{\C}, \Phi_r(\{W_j\}) \circ \P_{\C}\right)}{\Phi_r(\{W_j\}) \circ \P_{\C}}_{\mu}\\
&- \IP{\nabla_{\hat{Y}}\ell\left(g, \Phi_r(\{W_j\})\right)}{\Phi_r(\{W_j\})}_{\mu}\Big|.
\end{split}
\ee
For a fixed $(U, V)$ we can rewrite the above to
\be
\begin{split}
\Big|\mathbb{E}\Big[\IP{U[V^T\P_{\C}(\x)]_+-U^*[{V^*}^T\P_{\C}(\x)]_+}{U[V^T\P_{\C}(\x)]_+}\\
-\IP{U[V^T\P_{\C}(\x)]_+-U^*[{V^*}^T\x]_+}{U[V^T\x]_+}\Big]\Big|.
\end{split}
\ee
As a consequence of Lemma \ref{lemma:proj_gauss} we have
\bd
\left|\IP{\nabla_{\hat{Y}}\ell\left(g \circ \P_{\C}, \Phi_r(\{W_j\}) \circ \P_{\C}\right)}{\Phi_r(\{W_j\}) \circ \P_{\C}}_{\mu}
- \IP{\nabla_{\hat{Y}}\ell\left(g, \Phi_r(\{W_j\})\right)}{\Phi_r(\{W_j\})}_{\mu}\right|
\ed
\be
\leq 2ge^{-g^2/2}\nmF{U}\nmF{V}\left[\nmF{U}\nmF{V} + \nmF{U^*}\nmF{V^*}\right].
\ee

Finally, we apply supremum over $(U, V) \in \F_{\mathcal{W}}$, obtaining
\be\label{eq:b3_rel}
T_3 \leq 2ge^{-g^2/2}R\gamma\left[\nmF{U^*}\nmF{V^*} + R\gamma\right].
\ee

Now combining $T_1, T_2$ and $T_3$ 
from equations \eqref{eq:b1_rel}, \eqref{eq:b2_rel}, \eqref{eq:b3_rel} we have
\be
B(\C)
\leq 2ge^{-g^2/2}\left[\alpha(\nmF{U^*}^2\nmF{V^*} + R^2\gamma^2) + \nmF{U}\nmF{V} + R\gamma +\hspace{-2pt}R\gamma\left[\nmF{U^*}\nmF{V^*}\hspace{-2pt}+ R\gamma\right]\right].
\ee
We further upper bound for simplicity via
\be 
B(\C)
\leq 4Rge^{-g^2/2}\gamma\left[\nmF{U}\nmF{V} + \gamma\right].
\ee

From Theorem \ref{thm:gen_master} we have that 
\bd
\frac{1}{m}\left|\NC_{\mu}(\{W_j\}) - \NC_{\mu_N}(\{W_j\})\right| \lesssim
\frac{\l}{2m}\left[\nmF{U^*}^2 + \nmF{V^*}^2\right]\left[\sup_{\nmm{\v} \leq 1}\frac{1}{N}\sum_{i=1}^{N}[\v^T\x_i]_+\nmm{Y_i - U[V^T\x_i]_+}-\l\right] 
\ed
\bd
+  \frac{2}{m}Rge^{-g^2/2}C_{UV}\left[\nmF{U^*}^2 + \nmF{V^*}^2\right]\left[\nmF{U}\nmF{V} + C_{UV}\left[\nmF{U^*}^2 + \nmF{V^*}^2\right]\right]
+ C_{UV}^2\left[\nmF{U^*}^2 + \nmF{V^*}^2\right]^2
\ed
\be
\sqrt{\frac{R(m+n)
\log\left(k_1g^2R^2C_{UV}^2\sqrt{B_u^2 + B_v^2}\left[\nmF{U^*}^2 + \nmF{V^*}^2\right]^2\right)\log(N)+ \log(1/\delta)}{N}},
\ee

holds true w.p at least $1- \delta - 2N\exp\left(-cn_X(g-1)^2\right)$.

Now choose
\be
g = 1 + \mathcal{O}\left(\sqrt{\log(NR) + \log(1/\delta)}\right).
\ee
After ignoring all the log-log terms and using only dominant terms,
we have that
\bd
\frac{1}{m}\left|\NC_{\mu}(\{W_j\}) - \NC_{\mu_N}(\{W_j\})\right| \lesssim
\ed
\bd
\frac{\l}{2m}\left[\nmF{U^*}^2 + \nmF{V^*}^2\right]\left[\sup_{\nmm{\v} \leq 1}\frac{1}{N}\sum_{i=1}^{N}[\v^T\x_i]_+\nmm{Y_i - U[V^T\x_i]_+}-\l\right] 
\ed
\be
+ C_{UV}^2\left[\nmF{U^*}^2 + \nmF{V^*}^2\right]^2
\sqrt{\frac{R(m+n)
\log\left(R(m+n)(C_{UV} + B_{u}^2 + B_{v}^2)\right)\log(N)+ \log(1/\delta)}{N}},
\ee
holds true w.p at least $1- \delta$.
\end{proof}

\subsection{Multi-head Attention}\label{sec:apdx_mha}

Next, we move to applying Theorem \ref{thm:gen_master} to the single-layer multi-head attention 
problem. We require similar Gaussian projections arguments onto convex sets are needed to be established.
For this application, we require Gaussian projections
onto softmax, which is analyzed through Lemma \ref{lemma:prog_sftmax}. First,
we define the soft max operation, $\sigma_t(\cdot): \R^{n} \to \R^{n}$.
\be
[\sigma_t(\u)]_i := \exp(tu_i)/(\sum_{j=1}^{n}\exp(tu_j)),
\ee
where $t$ is called the temperature.

A discrete version of soft-max is known as hard-max that is defined as
\be
[\sigma(\u)]_i := {\mathbf{1}_{u_i = \max_{i}u_i}}.
\ee

Note that when $t \to \infty$, $\sigma_t(\u) \to \sigma(\u)$.

\begin{lemma}[Gaussian Softmax Projection] \label{lemma:prog_sftmax}
Let $X \in \R^{m \times n}$ and $X_{ij} \sim \N(0, 1/(mn))$
be independent random variables. Suppose
$\sigma_t(\cdot)$ is a softmax with temperature, $t$,
and $M$ is fixed matrix in $\R^{m \times n}$. Consider
a convex set $\C = \{X = (\x_1, \dots, \x_n): \forall j \in [n]; \nmm{\x_j} \leq g\}$ for $g \geq 1$. Then
\bd
\sup_{\z \in \F_{\z}}\left|\IP{M}{\mathbb{E}\left[X\sigma_t(X^T\z)
(X\sigma_t(X^T\z))^T - \P_{\C}(X)\sigma_t(\P_{\C}(X)^T\z)
(\P_{\C}(X)\sigma_t(\P_{\C}(X)^T\z))^T\right]}\right|
\ed
\be
\hspace{10pt}\leq c_1m^2\nmm{M}_Fg\exp\left(-c_2mg^2\right).
\ee
for some positive constant, $c_1, c_2$.
\end{lemma}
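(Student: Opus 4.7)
The plan is to exploit that the projection acts as the identity on the event $\E = \{X \in \C\} = \bigcap_{j=1}^n \{\|\x_j\|_2 \le g\}$, so that the argument of the expectation vanishes on $\E$. Consequently it suffices to control the contribution from $\E^c$. This is the same philosophy used in Lemma \ref{lemma:proj_gauss}, only adapted from a single Gaussian vector and a rank-$1$ outer product to a matrix whose $n$ columns are projected independently and then combined via the softmax.

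First I would apply Cauchy--Schwarz in the form $|\IP{M}{A}| \le \|M\|_F\,\|A\|_F$, reducing the problem to bounding $\E[\|\Delta(\z)\|_F \mathbf{1}_{\E^c}]$ where $\Delta(\z)$ is the matrix difference inside the expectation. Both terms of $\Delta(\z)$ are rank-one outer products $vv^T$ where $v$ is a convex combination of the columns of $X$ (resp.\ $\P_\C(X)$), since $\sigma_t(\cdot)$ outputs a probability vector. Hence $\|vv^T\|_F = \|v\|_2^2$, and
\begin{equation*}
\|X\sigma_t(X^T\z)\|_2 \;\le\; \sum_{j=1}^n [\sigma_t(X^T\z)]_j\,\|\x_j\|_2 \;\le\; \max_j \|\x_j\|_2,
\end{equation*}
with the analogous bound on the projection side giving $\max_j \|\P_\C(\x_j)\|_2 \le g$. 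By the triangle inequality $\|\Delta(\z)\|_F \le \max_j \|\x_j\|_2^2 + g^2$, a deterministic bound that is uniform in $\z \in \F_\z$, so the supremum over $\z$ disappears at this stage and no covering of $\F_\z$ is needed.

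Next I would bound $\E[(\max_j \|\x_j\|_2^2 + g^2)\mathbf{1}_{\E^c}]$ by splitting it into $g^2\P(\E^c)$ plus an integrated tail $\int_{g^2}^\infty \P(\max_j \|\x_j\|_2^2 > t)\,dt$ via the layer-cake identity. For the probability, a union bound gives $\P(\E^c) \le n\,\P(\|\x_1\|_2 > g)$; since the entries of $\x_1$ are i.i.d.\ $\N(0,1/(mn))$, a Laurent--Massart-type $\chi^2$ tail (as already cited in the proof of Corollary \ref{crl:matsen}) yields $\P(\|\x_1\|_2 > g) \le \exp(-c\,m g^2)$ for $g \ge 1$, absorbing the $n$ factor into the constant. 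The same tail bound plugged into the layer-cake integral produces an expression of the form $(\text{polynomial in }m,g)\cdot \exp(-c_2 m g^2)$, with the polynomial prefactor arising from differentiating the exponential tail.

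The main obstacle will be bookkeeping the polynomial prefactor carefully enough to arrive at the claimed $c_1 m^2 g$ envelope: the explicit Laurent--Massart inequality $\P(\chi^2_m \ge m + 2\sqrt{mx} + 2x) \le e^{-x}$ must be inverted to express the tail as an exponential in $m g^2$, and the resulting integral $\int_{g^2}^\infty t \cdot \exp(-c m t)\,dt$ then produces a $1/m$ factor that combines with the union-bound prefactor $n$ and the scale factor of $\max_j \|\x_j\|_2^2$ (at most of order $g^2$ plus correction terms of order $m/n$). Once this accounting is done and everything is loosely upper bounded using $g \ge 1$, the combined expression is $\lesssim \|M\|_F \cdot m^2 g \exp(-c_2 m g^2)$, which gives the stated bound after taking the supremum over $\z$ on the left-hand side (which is already trivial since the right-hand side was $\z$-free from step one).
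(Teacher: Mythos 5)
Your proposal is correct, and it takes a genuinely different — and cleaner — route than the paper. The paper's proof expands the rank-one outer product as a double sum $\sum_{i,j}\x_i\x_j^T a_t(\z,i,j)$ over pairs of columns, applies triangle inequality and Cauchy--Schwarz term by term, invokes a reverse Fatou argument to pull $\sup_\z$ inside the expectation, and then bounds each softmax weight by $1$ individually. The $m^2$ prefactor in the stated bound comes from this double sum (the paper's indexing is inconsistent here between $T$, $n$, and $m$). Your route instead uses the key structural observation that $\sigma_t(\cdot)$ outputs a probability vector, so $X\sigma_t(X^T\z)=\sum_j[\sigma_t(X^T\z)]_j\x_j$ is a \emph{convex combination} of columns and therefore $\|X\sigma_t(X^T\z)\|_2\le\max_j\|\x_j\|_2$; the projected version is bounded by $g$ by the same reasoning. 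This gives a deterministic, $\z$-free bound $\|\Delta(\z)\|_F\le\max_j\|\x_j\|_2^2+g^2$ in one step, eliminating both the index decomposition and the Fatou step. Your version also sidesteps a step in the paper that is not quite right as written: after reducing to $\E[\|\x_i\x_j^T\|_F\mathbf 1_{\E^c}]$, the paper factors this as $\E[\|\x_i\|_2\mathbf 1_{\E^c}]\,\E[\|\x_j\|_2\mathbf 1_{\E^c}]$ citing independence of $\x_i,\x_j$, but $\mathbf 1_{\E^c}$ depends on \emph{all} columns so the factorization is not justified by independence alone. Your layer-cake $+$ union-bound approach avoids this entirely. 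The only caution: your bookkeeping actually yields a bound with an $n$ in the exponent (since $\|\x_j\|_2^2\cdot mn\sim\chi^2_m$, the per-column tail is $\exp(-cmng^2)$), which is \emph{stronger} than the claim; you then only need the trivial remark that any resulting polynomial prefactor in $m,n,g$ is absorbed into $m^2 g\exp(-c_2mg^2)$ by halving the exponential constant, rather than chasing the literal prefactor as your final paragraph suggests.
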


\begin{proof}
Denote
\be
T := \sup_{\z \in \F_{\z}}\left|\IP{M}{\mathbb{E}\left[X\sigma_t(X^T\z)
(X\sigma_t(X^T\z))^T - \P_{\C}(X)\sigma_t(\P_{\C}(X)^T\z)
(\P_{\C}(X)\sigma_t(\P_{\C}(X)^T\z))^T\right]}\right|.
\ee
Firstly, we upper bound the earlier term by Cauchy-Schwartz inequality:
\be
T \leq \nmm{M}_F\hspace{-2pt}\sup_{\z \in \F_{\z}}\nmm{\mathbb{E}\left[X\sigma_t(X^T\z)
(X\sigma_t(X^T\z))^T\hspace{-2pt}-\hspace{-2pt}\P_{\C}(X)\sigma_t(\P_{\C}(X)^T\z)
(\P_{\C}(X)\sigma_t(\P_{\C}(X)^T\z))^T\right]}_F.
\ee
Denote $a_t(\z, i, j) = \sigma_t(X^T\z)_i\sigma_t(X^T\z)_j$ and $\tilde{a}_t(\z, i, j) = \sigma_t(\P_{\C}(X)^T\z)_i\sigma_t(\P_{\C}(X)^T\z)_j$,
\be
T \leq \nmm{M}_F\sup_{\z \in \F_{\z}}\nmm{\sum_{i=1}^{T}\sum_{j=1}^{T}\mathbb{E}\left[\x_i\x_j^Ta_t(\z, i, j) - \P_{\C}(\x_i)\P_{\C}(\x_j)^T\tilde{a}_t(\z, i, j)\right]}_F.
\ee
Now we apply triangular inequality,
\be
T \leq \nmm{M}_F\sup_{\z \in \F_{\z}}\sum_{i=1}^{T}\sum_{j=1}^{T}\nmm{\mathbb{E}\left[\x_i\x_j^Ta_t(\z, i, j) - \P_{\C}(\x_i)\P_{\C}(\x_j)^T\tilde{a}_t(\z, i, j)\right]}_F.
\ee
Now we apply Cauchy-Schwartz,
\be
T \leq \nmm{M}_F\sum_{i=1}^{m}\sum_{j=1}^{m}\sup_{\z \in \F_{\z}}\nmm{\mathbb{E}\left[\x_i\x_j^Ta_t(\z, i, j) - \P_{\C}(\x_i)\P_{\C}(\x_j)^T\tilde{a}_t(\z, i, j)\right]}_F.
\ee
We can upper bound the earlier term via taking a supremum over indices $i, j \in [T]$ then we have
\be
T \leq \nmm{M}_Fm^2\sup_{i, j}\sup_{\z \in \F_{\z}}\nmm{\mathbb{E}\left[\x_i\x_j^Ta_t(\z, i, j) - \P_{\C}(\x_i)\P_{\C}(\x_j)^T\tilde{a}_t(\z, i, j)\right]}_F.
\ee
Observe that argument inside the expectation is 0 on the event $X \in \C$, thereby we only have the case where
$X \notin \C$ then we have
\be
T \leq \nmm{M}_Fm^2\sup_{i, j}\sup_{\z \in \F_{\z}}\nmm{\mathbb{E}\left[\x_i\x_j^Ta_t(\z, i, j) - \P_{\C}(\x_i)\P_{\C}(\x_j)^T\tilde{a}_t(\z, i, j) {\mathbf{1}_{\E^{c}}}\right]}_F.
\ee
On application Cauchy-Schwartz identity again we have
\be
T \leq \nmm{M}_Fm^2\sup_{i, j}\sup_{\z \in \F_{\z}}\mathbb{E}\left[\nmm{\x_i\x_j^Ta_t(\z, i, j) - \P_{\C}(\x_i)\P_{\C}(\x_j)^T\tilde{a}_t(\z, i, j) }_F {\mathbf{1}_{\E^{c}}}\right].
\ee
We have that $\x \notin \C, \P_{\C}(\x) = g\x/\nmm{\x}$. By using this fact we have
\be
T \leq \nmm{M}_Fm^2\sup_{i, j}\sup_{\z \in \F_{\z}}\mathbb{E}\left[\nmm{\left(a_t(\z, i, j) - \frac{g^2}{\nmm{\x_i}\nmm{\x_j}}\tilde{a}_t(\z, i, j)\right)\x_i\x_j^T}_F {\mathbf{1}_{\E^{c}}}\right].
\ee
Now we recall Reverse Fatou's Lemma, for any function sequence, $f_n \in L^2(\mu)$, we have
\be
\limsup_{n \to \infty}\int f_nd\mu \leq \int \limsup_{n \to \infty}
f_nd\mu.
\ee
on applying this identity we have
\be
T \leq \nmm{M}_Fm^2\sup_{i, j}\mathbb{E}\left[\sup_{\z \in \F_{\z}}\left|a_t(\z, i, j) - \frac{g^2}{\nmm{\x_i}\nmm{\x_j}}\tilde{a}_t(\z, i, j)\right|\nmm{\x_i\x_j^T}_F {\mathbf{1}_{\E^{c}}}\right].
\ee
Observe that $a_t(\z, i, j) \leq 1$ and $\frac{g^2}{\nmm{\x_i}\nmm{\x_j}}\tilde{a}_t(\z, i, j) \leq 1$ when $X \notin \C$ therefore
we have
\be
T \leq \nmm{M}_Fm^2\sup_{i, j}\mathbb{E}\left[\nmm{\x_i\x_j^T}_F {\mathbf{1}_{\E^{c}}}\right].
\ee
Now since $\x_i$ and $\x_j$ are iid we have
\be
T \leq \nmm{M}_Fm^2\sup_{i, j}\mathbb{E}\left[\nmm{\x_i}_2{\mathbf{1}_{\E^{c}}}\right]\mathbb{E}\left[\nmm{\x_j}_2{\mathbf{1}_{\E^{c}}}\right] .
\ee

By Gaussian integral over norm for $g \geq 1$ we have
\be
T \lesssim m^2\nmm{M}_Fg\exp\left(-mg^2\right)
\ee

\end{proof}

With the above result on Gaussian softmax projection, we now state the corollary for 
the single-layer multi-head attention problem and its proof.

\begin{corollary}[Transformers]
Consider the true model for $(X,\y)$, where $X\in\R^{n \times T}$ is a random matrix with i.i.d. entries $X_{lk} \sim \N(0, {1}/{(nT)})$ and $\y = A^*X\b^* + \epsilon$, where $A^* \in \R^{m \times n}$, $\b^* \in \mathbb{S}^{T-1}$ and $\epsilon \sim \N(0, (\sigma^2/m)I_{m})$ is independent from $X$. For all $i \in [N]$, let $(X_i, \y_i)$ be i.i.d. samples from this true model. Consider the estimator
$\hat{\y} = \sum_{j=1}^{R}V_jX\sigma(X^T\z_j)$, 
$V_j \in \R^{n}, \z_j \in \R^{n}$.  Let $\delta \in (0,1]$ be fixed.  Define the non-convex problem

\begin{align}
\nonumber \NC_{\mu_N}^{\sf TF}\hspace{-2pt}(\{(V_j,\z_j)\})\hspace{-3pt} := \frac{1}{2N}\sum_{i=1}^{N}\nmeusq{\y_i-\sum_{j=1}^{R}V_jX_i\sigma_t(X_i^T\z_j)}
+ \l \sum_{j=1}^{R}\left[\nmF{V_j} + \delta_{\{\z: \nmm{\z}_2 \leq 1\}}(\z_j)\right],
\end{align}

where, $\sigma_t(\cdot)$ is softmax function with temperature $t$, for $k \in [T]$ defined $\sigma_t(\u)_k := \exp(tu_k)/\sum_{l=1}^{T}\exp(tu_l)$
and define $\NC_{\mu}^{\sf TF}( \{(V_j,\z_j)\})$ similarly with the sum over $i$ replaced by expectation taken over $(X, \y)$. 

Let $\{(V_j,\z_j)\}$ be a stationary point of $\NC^{\sf TF}_{\mu_N}(\{(V_j,\z_j)\})$. 
Suppose there exists $C_{V}, B_V > 0$ such that $\sum_{j=1}^{R}\nmF{V_j} \leq C_{V}\nmF{A^*}$, and for 
all $j \in [R]$, $\nmF{V_j} \leq B_V$.

Then with probability at least $1 - \delta$, it holds that

\begin{align}
&\frac{1}{m}\left|\NC_{\mu}^{\sf TF}( \{(V_j,\z_j)\}) - \NC_{\mu_N}^{\sf TF}( \{(V_j,\z_j)\})\right| \lesssim
\nonumber \frac{1}{2m}\nmF{A^*}\left[\frac{1}{N}\sum_{i=1}^{N}\nmm{\y_i - \hat{\y}_i}_2\nmm{X_i}_2-\l\right] \\
\nonumber & \hspace{50pt} + C_{V}^2\nmF{A^*}^2
\nonumber \sqrt{\hspace{-2pt}\frac{R(m\hspace{-2pt}+\hspace{-2pt}n)
\hspace{-2pt}\log\hspace{-2pt}\left(R(m\hspace{-2pt}+\hspace{-2pt}n)(C_{V}\hspace{-2pt}+\hspace{-2pt}B_V)\right)\hspace{-2pt}\log(N)\hspace{-2pt}+\hspace{-2pt}\log(1/\delta)}{N}}.
\end{align}
\end{corollary}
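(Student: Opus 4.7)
The plan is to instantiate the general master theorem, Theorem \ref{thm:gen_master}, by computing every problem-specific constant for the transformer model and then invoking the Gaussian softmax projection bound, Lemma \ref{lemma:prog_sftmax}, to control the residual quantity $B(\C)$.

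First I would identify the factor map $\phi(V,\z)(X) = VX\sigma_t(X^T\z)$ with regularizer $\theta(V,\z) = \nmF{V} + \delta_{\{\nmm{\z}_2 \leq 1\}}(\z)$. Assumption \ref{ass:a1} is trivial and Assumption \ref{ass:a2} holds with homogeneity degree $p=1$ acting only on the $V$-block, since $\theta$ and $\phi$ scale linearly in $V$ while leaving $\z$ fixed. The squared loss supplies $(\alpha,L) = (0,1)$ in Assumption \ref{ass:a4}, and Assumption \ref{ass:a5} is verified with $\sigma_X = 1$, $\sigma_{Y\mid X} = \sigma$, and $\nmL{g} \leq \nmm{A^*}_2 \leq \nmF{A^*}$ because $\nmm{\b^*}_2 = 1$. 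I would then use Proposition \ref{prop:regular} together with the unit-norm property of $\b^*$ to conclude $\Omega(f_\mu^*) \leq \nmF{A^*}$, with the direction of $\b^*$ absorbed into the induced representation of the target. On the hypothesis-class side, the Lipschitz constant in $\F_\theta$ satisfies $L_\phi \leq 1$ since $\nmm{\sigma_t(\cdot)}_1 = 1$, and taking $\gamma = C_V\nmF{A^*}$ ensures the condition $\gamma \geq \Omega(f_\mu^*)L_\phi$ required by Theorem \ref{thm:gen_master}.

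Next I would evaluate the polar. Writing $\nabla_{\hat Y}\ell = \hat\y - \y$ and using Cauchy--Schwarz in $V$ together with $\nmm{X\sigma_t(X^T\z)}_2 \leq \nmm{X}_2$ (a consequence of $\nmm{\sigma_t}_1 = 1$), I obtain
\[
\Omega^\circ_{\mu_N}\!\Bigl(-\tfrac{1}{\l}\nabla_{\hat Y}\ell(g, \hat\y)\Bigr) \leq \frac{1}{\l N}\sum_{i=1}^N \nmm{\y_i - \hat\y_i}_2\nmm{X_i}_2,
\]
which supplies the optimization-error term in the corollary. The remaining constants $B_\Phi,\,B_\ell,\,\tilde L_\phi,\,\tilde L_\Phi,\,r_\theta$ follow from elementary calculations on the convex set $\C = \{X : \max_k \nmm{\x_k}_2 \leq g\}$, using again $\nmm{X\sigma_t(X^T\z)} \leq \max_k\nmm{\x_k}$ and the Lipschitzness of $\sigma_t$ in its argument. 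After substitution these collapse into $\epsilon_1 \asymp C_V^2\nmF{A^*}^2$ and $\epsilon_2$ polynomial in $g, R, B_V, \nmF{A^*}$, all absorbed into a single logarithmic factor in the final bound.

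The main obstacle, and the reason the sample complexity is independent of $T$, is bounding $B(\C)$. Each of the three suprema defining $B(\C)$ in Assumption \ref{ass:a9} --- the norm gap, the polar gap, and the equilibrium gap --- reduces, via bilinearity of the inner product and the explicit form of $\nabla_{\hat Y}\ell$, to an expression of the form $\langle M,\,\mathbb{E}[X\sigma_t(X^T\z)(X\sigma_t(X^T\z))^\top - \P_\C(X)\sigma_t(\P_\C(X)^T\z)(\P_\C(X)\sigma_t(\P_\C(X)^T\z))^\top]\rangle$ for a matrix $M$ built from $A^*$, $V_j$, and the auxiliary $V'$. Applying Lemma \ref{lemma:prog_sftmax} uniformly over the remaining parameters yields $B(\C) \lesssim m^2 g e^{-c m g^2}\,\mathrm{poly}(\nmF{A^*}, C_V, B_V)$, while Bernstein's inequality applied columnwise gives $\delta_\C \leq 2NT\exp(-c n_X(g-1)^2)$. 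Choosing $g = 1 + \tilde\Theta(\sqrt{\log(NT) + \log(1/\delta)})$ renders both $\delta_\C$ and $B(\C)/m$ subdominant, and substituting everything into Theorem \ref{thm:gen_master} with ${\sf dim}(\mathcal{W}) = m+n$ and the covering-number logarithm absorbed into $\log(R(m+n)(C_V + B_V))$ after discarding $\log\log$ factors yields Equation \eqref{eq:gen_tf}.
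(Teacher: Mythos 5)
Your proposal is correct and follows essentially the same route as the paper: instantiate the constants in Theorem~\ref{thm:gen_master}, bound the polar by $\frac{1}{N\l}\sum_i \nmm{\y_i - \hat{\y}_i}_2\nmm{X_i}_2$ via Cauchy--Schwarz and $\nmm{\sigma_t(\cdot)}_1 = 1$, use Proposition~\ref{prop:regular} to get $\Omega(f_\mu^*) \leq \nmF{A^*}$, and control $B(\C)$ through the Gaussian softmax projection of Lemma~\ref{lemma:prog_sftmax}. A few small discrepancies worth flagging. First, your convex set $\C = \{X : \max_k \nmm{\x_k}_2 \leq g\}$ differs from the paper's $\C = \{X : \nmm{\x_j}_2 \leq g/\sqrt{T}\}$ by a $\sqrt{T}$ rescaling of $g$; this is purely a reparameterization and either choice works, though it shifts the form of the Bernstein exponent. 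Second, you include the union-bound factor $T$ in $\delta_\C$, which is in fact more careful than the paper's bookkeeping ($\delta_\C = 2N\exp(-cn(g-1)^2)$ with no $T$). Third, the exponential decay in $B(\C)$ from Lemma~\ref{lemma:prog_sftmax} should be in the number of tokens $T$, i.e.~$T^2 g e^{-c_2 T g^2}$, not in the output dimension $m$ as you wrote; this appears to be inherited from the lemma's own sloppy notation (it states the bound in terms of its ``$m$'', which plays the role of $T$ in the corollary). Fourth, your choice $g = 1 + \tilde{\Theta}(\sqrt{\log(NT) + \log(1/\delta)})$ differs from the paper's $g = 1 + \tilde{\mathcal{O}}(\tfrac{1}{T}\log(\cdot))$; both succeed, since the extra size of your $g$ only enters the final bound through a $\log\log$ factor that is discarded, and your choice actually has the virtue of making $\delta_\C$ small without implicitly requiring $T \lesssim \sqrt{n}$.
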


\begin{proof}

To obtain a generalization bound from Theorem \ref{thm:gen_master} for the case of matrix sensing, we set the following
problem parameters.
\be
\ell(Y, \hat{Y}) = \frac{1}{2}\nmm{Y - \hat{Y}} \implies (\alpha, L) = (0, 1);
\ee
\be
\phi(W) = VX\sigma_t(X^T\z);
\ee
\be
\theta(W) = \nmF{V} + \delta_{\z \in \mathbb{B}(1)}.
\ee

\textbf{Estimating $\Omega_{\mu_N}(\cdot)$}:
Now we move on to compute the polar:
\begin{eqnarray*}
\Omega_{\mu_N}^{\circ}\left(-\frac{1}{\l}\nabla_{\hat{Y}}\ell(g, \Phi_r(\{W_j\}))\right)
&=& \Omega_{\mu_N}^{\circ}\left(\frac{1}{\l}(g - \Phi_r(\{W_j\}))\right),\\
&=& \sup_{\nmF{V} \leq 1, \nmm{\z} \leq 1}\frac{1}{N \l}\sum_{i=1}^{N}\IP{Y_i - \hat{Y}_i}{VX_i\sigma_t(X^T\z)},\\
&=& \sup_{\nmF{V} \leq 1, \nmm{z} \leq 1}\frac{1}{N\l}\nmm{\sum_{i=1}^{N}\IP{(Y_i - \hat{Y}_i)(X_i\sigma_t(X^T\z))^T}{V}},\\
&=& \sup_{\nmF{V} \leq 1, \nmm{z} \leq 1}\frac{1}{N\l}\nmm{\sum_{i=1}^{N}\IP{(Y_i - \hat{Y}_i)(X_i\sigma_t(X^T\z))^T}{V}},
\end{eqnarray*}
\begin{eqnarray*}
\Omega_{\mu_N}^{\circ}\left(-\frac{1}{\l}\nabla_{\hat{Y}}\ell(g, \Phi_r(\{W_j\}))\right)
&\leq& \sup_{\nmm{z} \leq 1}\frac{1}{N\l}\nmm{\sum_{i=1}^{N}(Y_i - \hat{Y}_i)(X_i\sigma_t(X^T\z))^T}_F,\\
&\leq& \frac{1}{N\l}\sum_{i=1}^{N}\sup_{\nmm{z} \leq 1}\nmm{(Y_i - \hat{Y}_i)(X_i\sigma_t(X^T\z))^T}_F,\\
&\leq& \frac{1}{N\l}\sum_{i=1}^{N}\sup_{\nmm{z} \leq 1}\nmm{Y_i - \hat{Y}_i}_F\nmm{(X_i\sigma_t(X^T\z))^T}_F,\\
&\leq& \frac{1}{N\l}\sum_{i=1}^{N}\sup_{\nmm{z} \leq 1}\nmm{Y_i - \hat{Y}_i}_F\nmm{(X_i\sigma_t(X^T\z))^T}_F,\\
&\leq& \frac{1}{N\l}\sum_{i=1}^{N}\nmm{Y_i - \hat{Y}_i}_2\nmm{X_i}_2.
\end{eqnarray*}

\textbf{Choose $\F_{\theta}$}: From Assumptions
\ref{ass:a3} suppose that
\be
\F_{\theta} 
:= \{(V, \z): \nmm{V}_2 \leq 1, \nmm{\z}_2 \leq 1\};
\ee
\textbf{Computing $L_{\phi}$}:
The Lipschtiz constant $L_{\phi}$ in the function $\F_{\theta}$ is $L_{\phi} := \sup_{\nmm{V} \leq 1,
\nmm{\z} \leq 1}\nmL{V(\cdot)\sigma((\cdot)^T\z)} \leq \sup_{\nmm{V} \leq 1}\nmm{V} = 1$.

\textbf{Computing $r_{\phi}$}: Clearly, when $r_{\theta} = \sqrt{2}$ we have that $\F_{\theta} \subseteq \mathbb{B}(\sqrt{2})$.

\textbf{Choose $\F_{\mathcal{W}}$}: 
From the corollary's assumptions we have that, $\B_R := \{(V, \z): \nmm{V}_2 \leq B_V, \nmm{\z}_2 \leq 1\}$;
our hypothesis class is defined as

\be
\F_{\mathcal{W}} 
:= \{\{(V_j, \z_j)\}: \nmL{\sum_{j=1}^{r}V_j\cdot\sigma((\cdot)^T\z_j)} \leq \sum_{j=1}^{r}\nmF{V_j} \leq \gamma, \nmF{V_j} \leq B_V,
\nmm{\z} \leq 1\}.
\ee

From Proposition \ref{prop:regular} we have
$\Omega(f_{\mu}^*) \leq \nmF{A^*}$. We have
$\gamma \geq \Omega(f_{\mu}^*) L_{\phi} = \nmF{A^*}$,  then we set $\gamma = C_{V}\nmF{A^*}$.
We have that
\be
\begin{split}
\F_{\mathcal{W}} 
:= \Big\{\{(V_j, \z_j)\}&: \nmL{\sum_{j=1}^{r}V_j\cdot\sigma((\cdot)^T\z_j)} \leq \sum_{j=1}^{r}\nmF{V_j} \leq C_{V}\nmF{A^*}, \\
&\nmF{V_j} \leq B_V,
\nmm{\z} \leq 1\Big\}.
\end{split}
\ee

\textbf{Estimating $\epsilon_0$}:
From the data generating mechanism we have $\nmL{g} \leq \nmF{A^*}$, $\sigma_X = 1$, $\sigma_{Y|X} = \sigma$,
then we have the following constants from Theorem \ref{thm:gen_master}:
\be
\epsilon_0 = 16\gamma^2\sigma_X^2\min\left\{1,
\frac{L}{4}\left[1 + \frac{\nmL{g}^2}{\gamma^2}\left(1 + \frac{\sigma_{Y|X}^2}{\sigma_X^2}\right)\right]\right\},
\ee
this evaluates to,
\be
\epsilon_0 = 16(\sigma^2 + 1)\nmF{A^*}^2,
\ee
when $C_V \leq \sqrt{1+\sigma^2}$.

\textbf{Estimating $\epsilon_1$}:
Similarly, we evaluate
\be
\epsilon_1 = 16\gamma^2\sigma_X^2\max\left\{1,
\frac{L}{4}\left[1 + \frac{\nmL{g}^2}{\gamma^2}\left(1 + \frac{\sigma_{Y|X}^2}{\sigma_X^2}\right)\right]\right\},
\ee
obtaining,
\be
\epsilon_1 = 16C_V^2\nmF{A^*}^2,
\ee
when $C_V \leq \sqrt{1+\sigma^2}$.

\textbf{Choosing the convex set $\C$}:
Consider a convex set $\C = \{X = (\x_1, \dots, \x_T): \nmm{\x_j}_2 \leq g/\sqrt{T}\}$. 

First and foremost we need to estimate $\delta_{\C}$ for the inequality to hold:
\be
P(\cap_{i=1}^{N}\x_i \in \C) \geq 1 - \delta_{\C}.
\ee

The probability of $\x \in \C = \mathbb{B}(g)$ is equivalent to saying the
probability of the event when $\nmm{\x}_2 \leq g$. Since, $x_{i} \sim \N(0, 1/n)$ 
as a consequence of Bernstein's Inequality \citep[Corollary 2.8.3]{vershynin_high-dimensional_2018} we have that,
for any $t \geq 0$.

\be
P(\left|\nmm{\x}_2 -1 \right|\leq t) \geq 1 - 2\exp\left(-cnTt^2\right),
\ee
for some constant $c \geq 0$. Now we have
\be
P(\nmm{\x}_2 \leq g/\sqrt{T}) \begin{cases}
\geq 1 - 2\exp\left(-cn(g-1)^2\right) & \text{ if }g \geq 1\\
\leq 2\exp\left(-cn(g-1)^2\right) & \text{ otherwise }
\end{cases}.
\ee

We consider the case where $g \geq 1$. Then we have that
\be
P(\cap_{i=1}^{N}\x_i \in \C) = P(\cap_{i=1}^{N}\nmm{\x}_2 \leq g/\sqrt{T}) \geq 1 - \underbrace{2N\exp\left(-cn(g-1)^2\right)}_{=\delta_{\C}}.
\ee

We have that $\delta_{\C} = 2N\exp\left(-cn(g-1)^2\right)$.

\textbf{Estimating $B_{\Phi}$}:
\begin{eqnarray}
B_{\Phi} &=& \sup_{X \in \C, \{(V_j, \z_j)\} \in \F_{\mathcal{W}}}\nmm{\sum_{j=1}^{r}V_jX\sigma_t(X^T\z_j)}_2,\\
&\leq& R\sup_{X \in \C, \{(V_j, \z_j)\} \in \F_{\mathcal{W}}}\nmm{V_j}_F\nmm{X\sigma_t(X^T\z)}_2,\\
&\leq& R\sup_{X \in \C, \{(V_j, \z_j)\} \in \F_{\mathcal{W}}}\nmm{V_j}_F\nmm{X\sigma_t(X^T\z)}_2,\\
&\leq& gRB_V/\sqrt{T}.
\end{eqnarray}

\textbf{Estimating $B_{\ell}$}: we have

\begin{eqnarray}
B_{\ell} &=& \sup_{X \in \C, \{(V_j, \z_j)\} \in \F_{\mathcal{W}}}\nmm{\sum_{j=1}^{r}V_jX\sigma_t(X^T\z_j)-A^*XB^*},\\
&=& g\left[RB_V + \nmF{A^*}\right]/\sqrt{T}.
\end{eqnarray}

\textbf{Estimating $\tilde{L}_{\Phi}$}: We have 
\be
\tilde{L}_{\Phi} = gR\sqrt{B_V^2 + 1}/\sqrt{T}.
\ee

\textbf{Estimating $\tilde{L}_{\phi}$}:
Similarly we get $\tilde{L}_{\phi} = g\sqrt{B_V^2 + 1}/\sqrt{T}$ as we have only one slice of factor.

\textbf{Estimating $\epsilon_2$}:
Recall that,
\be
\epsilon_2 = \max\{8B_{\ell}\tilde{L}_{\Phi}, 8\tilde{L}_{\Phi}[B_{\ell} + B_{\Phi}L], 32\Omega(f_{\mu}^*)\tilde{L}_{\phi}\max\{B_{\ell}, LB_{\Phi}\}, 4\tilde{L}_{\Phi}B_{\Phi}\}.
\ee

From all the constants computed earlier, we have that,

\be
\epsilon_2 = k_1g^2R^2 B_V^2/T,
\ee
for some constant $k_1 \geq 0$.

Next, we move on estimating $B(\C)$ we need to analyze three terms

\textbf{The First Term} is defined via 

\be
\begin{split}
T_1 := \sup_{\{W_j\} \in \F_{\mathcal{W}}}\Big|&\nmm{f_{\mu}^* \circ \P_{\C} - \Phi_r(\{W_j\}) \circ \P_{\C}}_{\mu}^2\\
&-\nmm{f_{\mu}^* - \Phi_r(\{W_j\})}_{\mu}^2\Big|.
\end{split}
\ee

From Lemma \ref{lemma:prog_sftmax} we obtain that, taking $g \geq 1$
and further simplifying, we get,

\begin{eqnarray}\label{eq:b1_tf}
T_1
\leq c_1T^2ge^{-c_2Tg^2}\left[R^2\gamma^2 + \nmm{A^*}_2\right],
\end{eqnarray}

\textbf{The Second Term} is defined via

\be
\begin{split}
T_2 := \sup_{\{W_j\} \in \F_{\mathcal{W}}, W' \in \F_{\theta}}\Big|&\IP{\nabla_{\hat{Y}}\ell\left(g \circ \P_{\C}, \Phi_r(\{W_j\}) \circ \P_{\C}\right)}{\phi(W') \circ \P_{\C}}_{\mu}\\
&- \IP{\nabla_{\hat{Y}}\ell\left(g, \Phi_r(\{W_j\})\right)}{\phi(W')}_{\mu}\Big|.
\end{split}
\ee

As a consequence of Lemma \ref{lemma:prog_sftmax} we have

\be\label{eq:b2_tf}
T_2 \leq c_1T^2ge^{-c_2Tg^2}\left[\nmm{A^*}_2 + R\gamma\right].
\ee

\textbf{The Third Term} is defined via 

\be
\begin{split}
T_3 := \sup_{\{W_j\} \in \F_{\mathcal{W}}}\Big|&\IP{\nabla_{\hat{Y}}\ell\left(g \circ \P_{\C}, \Phi_r(\{W_j\}) \circ \P_{\C}\right)}{\Phi_r(\{W_j\}) \circ \P_{\C}}_{\mu} \\ 
&- \IP{\nabla_{\hat{Y}}\ell\left(g, \Phi_r(\{W_j\})\right)}{\Phi_r(\{W_j\})}_{\mu}\Big|.
\end{split}
\ee

As a consequence of Lemma \ref{lemma:prog_sftmax} we have
\be\label{eq:b3_tf}
T_3 \leq c_1T^2ge^{-c_2Tg^2}R\gamma\left[\nmm{A^*}_2 + R\gamma\right],
\ee
for some positive constant, $c_1, c_2$.

Now combining $T_1, T_2$, and $T_3$ from equations
\eqref{eq:b1_tf}, \eqref{eq:b2_tf}, \eqref{eq:b3_tf} we obtain that
\be
B(\C)
\leq c_1T^2ge^{-c_2Tg^2}\left[\alpha(\nmm{A^*}_2 + R^2\gamma^2) + \nmm{A^*}_2 + R\gamma + R\gamma\left[\nmm{A^*}_2 + R\gamma\right]\right].
\ee
We further upper bound for simplicity as
\be\label{eq:b_tf}
B(\C)
\leq 4Rc_1\sqrt{\frac{\log(T)}{T^5}}ge^{-c_2g^2}\gamma\left[\nmm{A^*}_2 + \gamma\right].
\ee

From Theorem \ref{thm:gen_master} we have that 
\bd
\frac{1}{m}\left|\NC_{\mu}(\{W_j\}) - \NC_{\mu_N}(\{W_j\})\right| \lesssim
\frac{1}{2m}\nmF{A^*}\left[\sup_{\nmm{z} \leq 1}\frac{1}{N}\nmm{\sum_{i=1}^{N}(Y_i-\hat{Y}_i)^T(X_i\sigma_t(X^T\z))}_F-\l\right] 
\ed
\bd
+  \frac{2}{m}Rc_1T^2ge^{-c_2Tg^2}C_V\left[\nmF{A^*}\right]\left[\nmm{A^*}_2 + \gamma\right]
+ C_V^2\left[\nmF{A^*}\right] 
\ed
\be
\times \sqrt{\frac{R(m+n)
\log\left({C_V^2\left[\nmF{U^*}^2 + \nmF{V^*}^2\right]^2 k_1g^2RB_V/T}\right)\log(N)+ \log(1/\delta)}{N}}.
\ee
holds true w.p at least $1- \delta - 2N\exp\left(-cn_X(g-1)^2\right)$.

Now choose
\be
g = 1 + \tilde{\mathcal{O}}\left(\frac{1}{T}\log\left(\frac{N}{R(m+n) + log(1/\delta)}\right)\right).
\ee
Then we have
\be
\frac{2}{m}Rc_1T^2ge^{-c_2Tg^2}C_V\left[\nmF{A^*}\right]\left[\nmm{A^*}_2 + \gamma\right]
\lesssim \sqrt{\frac{R(m+n)
\log\left({C_V^2\left[\nmF{U^*}^2 + \nmF{V^*}^2\right]^2 k_1g^2RB_V/T}\right)\log(N)+ \log(1/\delta)}{N}}.
\ee

Therefore we can upper bound the middle term to the right most leaving us behind
\bd
\frac{1}{m}\left|\NC_{\mu}(\{W_j\}) - \NC_{\mu_N}(\{W_j\})\right| \lesssim
\frac{1}{2m}\nmF{A^*}\left[\frac{1}{N}\sum_{i=1}^{N}\nmm{Y_i - \hat{Y}_i}_2\nmm{X_i}_2-\l\right] 
\ed
\be
+ C_V^2\left[\nmF{A^*}\right]
\sqrt{\frac{R(m+n)
\log\left(R(m+n)(C_V + B_V)\right)\log(N)+ \log(1/\delta)}{N}},
\ee
holds true w.p at least $1- \delta$.
\end{proof}

\newpage
\section{GOODS EVENTS}\label{sec:good_events}

In this section, we provide compute the probabilities of events defined
in the proof of Theorem \ref{thm:gen_master}. Recall the definition of our function classes:
\begin{align}
\F_{\theta}
&:= \left\{\{W_j\}: \nmL{\Phi_{R}(\{W_j\})} \leq \gamma,
\Theta_R(\{W_j\}) \leq \gamma/L_{\phi}\right\}; \\
\F_{\mathcal{W}}
&:= \left\{\{W_j\}: \nmL{\Phi_{R}(\{W_j\})} \leq \gamma,
\Theta_R(\{W_j\}) \leq \gamma/L_{\phi}\right\}; \\
\F_{\Phi} &:= \left\{\Phi_R(\zeta): \forall \zeta \in \F_{\mathcal{W}}\right\}
\end{align}

We define the below events:
\begin{align}
\E_{cvx}(\epsilon) &:= \{\forall \zeta \in \F_{\mathcal{W}}: \left|\CP_{\mu_N}(f_{\zeta}) - \CP_{\mu}(f_{\zeta})\right| \leq \epsilon
+ B_{nrm}(\C)\}; \\
\E_{eql}(\epsilon) &:= \{\forall \zeta \in \F_{\mathcal{W}}: \left|\IP{\nabla_{\hat{Y}}\ell\left(g, f_{\zeta}\right)}{f_{\zeta}}_{\mu_N}
- \IP{\nabla_{\hat{Y}}\ell\left(g, f_{\zeta}\right)}{f_{\zeta}}_{\mu}\right| \leq \epsilon + B_{eql}(\C)\};\\
\E_{plr}(\epsilon) &:= \{\forall \zeta \in \F_{\mathcal{W}}: \left|\Omega_{\mu_N}^{\circ}\left(\nabla_{\hat{Y}}\ell\left(g, f_{\zeta}\right)\right)
- \Omega_{\mu}^{\circ}\left(\nabla_{\hat{Y}}\ell\left(g, f_{\zeta}\right)\right)\right| \leq \epsilon
+ B_{plr}(\C)\}; \\
\E_{nrm}(\epsilon) &:= \{\forall \zeta \in \F_{\mathcal{W}}: \left|\nmm{f_{\mu}^*-f_{\zeta}}_{\mu_N}^2 - \nmm{f_{\mu}^*-f_{\zeta}}_{\mu}^2\right| \leq \epsilon
+ B_{nrm}(\C)\}.
\end{align}

In each of the sections below, we discuss the technical analysis to estimate 
the probability of the events, $\E_{cvx}(\epsilon)$, $\E_{eql}(\epsilon)$, 
$\E_{plr}(\epsilon)$ and $\E_{nrm}(\epsilon)$.

\subsection{Concentration of Norms}

In this section, we upper bound the probability of the event, $\E_{nrm}(\epsilon)$
through Lemma \ref{lemma:cnc_nrms}.

\begin{lemma}[Concentration of Norms]\label{lemma:cnc_nrms}
Consider an $n_X$-dimensional sub-Gaussian vector $X \sim SG(0, (\sigma_X^2/n_X)I)$,
and set of functions $f_{\zeta}: \mathbb{R}^{n_X} \to \R$ as parameterized by $\zeta \in \F_{\mathcal{W}}$.
Let $\C$ be some convex obeying ${P}(\bigcap_{i=1}^{N}X_i \in \C) \geq 1 - \delta_{\C}$
for i.i.d samples $\{X_i\}_{i=1}^{N}$. Assume that for any fixed, $\zeta, \zeta' \in \F_{\mathcal{W}}$,
and fixed $Z \in \C$, we have
\be
\nmm{f_{\zeta}(Z) - f_{\zeta'}(Z)} \leq \tilde{L}_{\Phi}d(\zeta_1, \zeta_2)\text{ and }\nmm{f_{\zeta}(Z)} \leq B_{\Phi}.
\ee
Denote,
\be
B_{nrm}(\C) := \sup_{\zeta \in \F_{\mathcal{W}}}\left|\nmm{f_{\mu}^* \circ \P_{\C} - f_{\zeta} \circ \P_{\C}}_{\mu}^2
-\nmm{f_{\mu}^* - f_{\zeta}}_{\mu}^2\right|,
\ee
where $\P_{\C}(\cdot)$ denotes the Euclidean projection onto the set $\C$. Define,
\be
K := 64n_Y\gamma^2\sigma_X^2.
\ee
Then for any 
$\epsilon \in [0, K]$,
\bd
\mathbb{P}\left(\sup_{\zeta \in \F_{\mathcal{W}}}\left|\nmm{f_{\mu}^*-f_{\zeta}}_{\mu_N}^2 - \nmm{f_{\mu}^*-f_{\zeta}}_{\mu}^2\right| \geq \epsilon + B_{nrm}(\C)\right)
\ed
\be
\leq \delta_{\C} + c\exp\left(\log(C_{\F_{\mathcal{W}}}\left(\frac{\epsilon}{4\tilde{L}_{\Phi}B_{\Phi}}\right))-N\frac{\epsilon^2}{K^2}\right).
\ee
for some positive constant, $c$
and $C_{\F_{\mathcal{W}}}(\nu)$ is the $\nu$-net
covering number of the set $\F_{\mathcal{W}}$.
\end{lemma}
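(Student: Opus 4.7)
The plan is to reduce to a uniform empirical-process statement on the convex set $\C$ and then pay for the ``tail'' through the quantity $B_{nrm}(\C)$. Concretely, let $\E := \bigcap_{i=1}^N\{X_i \in \C\}$, which holds with probability at least $1-\delta_{\C}$ by hypothesis. On $\E$, every empirical sample is already in $\C$, so $\P_{\C}(X_i)=X_i$ and hence
\begin{align*}
\nmm{f_\mu^*-f_\zeta}_{\mu_N}^2
&= \nmm{(f_\mu^*-f_\zeta)\circ\P_{\C}}_{\mu_N}^2.
\end{align*}
Subtracting and adding $\nmm{(f_\mu^*-f_\zeta)\circ\P_{\C}}_\mu^2$, the triangle inequality gives
\begin{align*}
\left|\nmm{f_\mu^*-f_\zeta}_{\mu_N}^2-\nmm{f_\mu^*-f_\zeta}_\mu^2\right|
\le \left|\nmm{(f_\mu^*-f_\zeta)\circ\P_{\C}}_{\mu_N}^2-\nmm{(f_\mu^*-f_\zeta)\circ\P_{\C}}_\mu^2\right| + B_{nrm}(\C),
\end{align*}
by the very definition of $B_{nrm}(\C)$. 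Thus it suffices to control the first term uniformly over $\zeta \in \F_{\mathcal{W}}$.

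For the uniform control, my plan is a standard covering plus Hoeffding argument, tailored to the projected functions so that everything in sight is bounded on $\C$. Let $\Zcal_\zeta(X):=\nmeusq{(f_\mu^*-f_\zeta)\circ\P_{\C}(X)}$, which is deterministically bounded (using $\nmm{f_\zeta(Z)}\le B_\Phi$ on $\C$ plus the analogous bound for $f_\mu^*$ derived from $\gamma \ge \Omega(f_\mu^*)L_\phi$ and the sub-Gaussian concentration of $\nmm{X}$) by a constant of order $K = 64 n_Y\gamma^2\sigma_X^2$. For any two parameters $\zeta,\zeta'\in\F_{\mathcal{W}}$ and any $X$, the Lipschitz hypothesis yields $|f_\zeta(\P_{\C}(X))-f_{\zeta'}(\P_{\C}(X))|\le \tilde L_\Phi d(\zeta,\zeta')$, and an $a^2-b^2=(a-b)(a+b)$ identity gives
\begin{align*}
|\Zcal_\zeta(X)-\Zcal_{\zeta'}(X)| \le 4 B_\Phi \tilde L_\Phi \, d(\zeta,\zeta').
\end{align*}
Let $\{\zeta^{(k)}\}$ be a $\nu$-net of $\F_{\mathcal{W}}$ with cardinality $C_{\F_{\mathcal{W}}}(\nu)$. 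For each $\zeta$, choose $\zeta^{(k)}$ within $\nu$, so discretization incurs at most $8B_\Phi\tilde L_\Phi\nu$ on both $\mu_N$ and $\mu$. Choosing $\nu = \epsilon/(4\tilde L_\Phi B_\Phi)$ balances this against the target accuracy.

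For each fixed $\zeta^{(k)}$ in the net, $\{\Zcal_{\zeta^{(k)}}(X_i)\}_{i=1}^N$ are i.i.d., bounded in magnitude by $\lesssim K$, so Hoeffding's inequality gives
\begin{align*}
\mathbb{P}\left(\left|\tfrac{1}{N}\sum_{i=1}^N\Zcal_{\zeta^{(k)}}(X_i)-\E\Zcal_{\zeta^{(k)}}(X)\right|\ge \tfrac{\epsilon}{2}\right)
\le 2\exp\!\left(-c\,N\epsilon^2/K^2\right).
\end{align*}
A union bound over the net and the two (discretization + concentration) contributions yields the claimed inequality, after combining with the $\delta_{\C}$ loss from Step~1. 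The main obstacle I anticipate is verifying that $f_\mu^*\circ\P_{\C}$ inherits a boundedness compatible with $K=64n_Y\gamma^2\sigma_X^2$; this is handled by invoking $\gamma\ge \Omega(f_\mu^*)L_\phi$ so that $f_\mu^*$ is $\gamma$-Lipschitz on $\R^{n_X}$ and hence $\gamma\cdot\mathrm{diam}(\C)$-bounded after projection, which at the relevant scale for sub-Gaussian inputs matches the stated constant $K$. The constraint $\epsilon\in[0,K]$ in the statement is precisely the regime where this boundedness pays off and the Hoeffding tail is meaningful.
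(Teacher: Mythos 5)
Your decomposition into a high-probability event $\E$, a $B_{nrm}(\C)$ price for replacing the population norm by its projected version, and a uniform concentration step over a cover is the same scaffold as the paper's argument (which goes through Lemma~\ref{lemma:conc_unf}), and your Lipschitz computation $|\Zcal_\zeta - \Zcal_{\zeta'}| \le 4B_\Phi\tilde L_\Phi\,d(\zeta,\zeta')$ matches theirs exactly. The genuine gap is in the pointwise concentration step, where you invoke Hoeffding with a boundedness constant of order $K$.

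The claim that $\Zcal_\zeta(X) = \nmeusq{(f_\mu^*-f_\zeta)\circ\P_{\C}(X)}$ is deterministically bounded by something of order $K = 64 n_Y\gamma^2\sigma_X^2$ is not justified and is generally false. After projection, the natural deterministic bound is $\nmm{(f_\mu^*-f_\zeta)(\P_\C(X))}^2 \le (\text{const})\,\gamma^2\,\mathrm{diam}(\C)^2$ (or $\le (2B_\Phi)^2$ via the boundedness hypothesis), and $\mathrm{diam}(\C)$ is a free parameter of the lemma, chosen in the applications to grow like $1+\sqrt{\log N}$ so that $\delta_\C$ is small. Nothing in the hypotheses forces $\gamma^2\,\mathrm{diam}(\C)^2 \lesssim n_Y\gamma^2\sigma_X^2$, and in the regimes actually used the two sides scale differently. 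Hoeffding would then give $\exp(-cN\epsilon^2/M^2)$ with $M \sim \gamma^2\mathrm{diam}(\C)^2$ (or $\sim B_\Phi^2$), not with $K$; the lemma's stated constant $K$ would simply not appear.

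The paper avoids this by not using boundedness at all in the pointwise step. Since $f_\mu^*-f_\zeta$ is $2\gamma$-Lipschitz and $X$ is Lipschitz-concentrated sub-Gaussian with proxy $\sigma_X^2/n_X$, the vector $f_\mu^*(X)-f_\zeta(X)$ is sub-Gaussian with proxy $\sim 4\gamma^2\sigma_X^2$, and therefore $\nmm{f_\mu^*(X)-f_\zeta(X)}^2$ is sub-exponential with parameter $\sim n_Y\gamma^2\sigma_X^2$. Bernstein's inequality then yields the two-regime tail $\exp(-cN\min\{\epsilon^2/K^2,\epsilon/K\})$ whose constant depends only on the sub-exponential parameter and not on $\mathrm{diam}(\C)$; restricting $\epsilon\in[0,K]$ collapses the $\min$ to $\epsilon^2/K^2$. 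The same argument applies to the projected version $\Zcal_\zeta$ because $\P_\C$ is $1$-Lipschitz, so the projected function inherits the identical sub-Gaussian (hence sub-exponential) parameter. To make your proof correct, replace the Hoeffding step by this Bernstein/sub-exponential concentration on $\Zcal_\zeta$; the rest of your argument — the event $\E$, the $B_{nrm}(\C)$ offset, the $\nu$-net and union bound — goes through unchanged.
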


\begin{proof}
If $X \in \R^{n_X} \sim SG\left(\frac{\sigma_X^2}{n_X}I_{n_X \times n_X}\right)$, The function map,
$\nmm{f_{\mu}^* - f_{\zeta}}$ has Lipschitz constant of
$\nmL{f_{\mu}^*} + \nmL{f_{\zeta}} \leq 2\gamma$; as $f_{\zeta}, f_{\mu}^* \in \F_{\Phi}$. Therefore 
from Theorem 5.1.4 in \cite{vershynin_high-dimensional_2018} we have that,
$f_{\mu}^*(X) - f_{\zeta}(X) \sim SG\left(4\gamma^2\sigma_X^2I_{n_Y \times n_Y}\right)$.
Thus, $\nmm{f_{\mu}^*(X) - f_{\zeta}(X)}^2 \sim SE\left(4n_Y\gamma^2\sigma_X^2\right)$

Now, applying the concentration inequality for sub-exponential from Theorem 2.8.1 \cite{vershynin_high-dimensional_2018} 
for a fixed $\zeta \in \F_{\mathcal{W}}$, we have that
\begin{align}
\mathbb{P}\left(\left|\nmm{f_{\mu}^*-f_{\zeta}}_{\mu_N}^2 - \nmm{f_{\mu}^*-f_{\zeta}}_{\mu}^2\right| \geq \epsilon\right) \leq C\exp\left(-N\min\left\{\frac{\epsilon^2}{16n_Y^2\gamma^4\sigma_X^4}, \frac{\epsilon}{4n_Y\gamma^2 \sigma_X^2}\right\}\right),
\end{align}
for some positive constant, $C \geq 0$. We use Lemma \ref{lemma:conc_unf} for applying the
concentration bounds.
Now set
\begin{align}
g_{\theta} = \nmm{f_{\mu}^* - f_{\zeta}}^2.
\end{align}
We need to check if the function, $g$, is Lipschitz on some metric and convex set $\C \subseteq \R^{n_X}$,
choose for any $Z \in \C$. We have $P(\bigcap_{i=1}^{N}X_i \in \C) \geq 1 - \delta_{\C}$. Recall that
\begin{enumerate}
\item $\forall \zeta_1, \zeta_2 \in \F_{\mathcal{W}}: \nmm{f_{\zeta_1}(Z) - f_{\zeta_2}(Z)}
\leq \tilde{L}_{\Phi} d(\zeta_1, \zeta_2)$, for all $Z \in \C$.
\item $\forall \zeta \in \F_{\mathcal{W}}: \nmm{f_{\zeta}(Z)} \leq B_{\Phi}$, for all $Z \in \C$.
\item For a fixed $\zeta \in \F_{\mathcal{W}}$,
\be
\left|\mathbb{E}\left[\nmm{f_{\mu}^*(\P_{\C}(X)) - f_{\zeta}(\P_{\C}(X))}^2
-\nmm{f_{\mu}^*(X) - f_{\zeta}(X)}^2\right]\right| \leq B_{nrm}(\C).
\ee
\end{enumerate}

By exploiting the above items we have
\begin{eqnarray*}
\left|g_{\theta_1} - g_{\theta_2}\right|
&=& \left|\nmm{f_{\mu}^* - f_{\zeta_1}}^2 - \nmm{f_{\mu}^* - f_{\zeta_2}}^2\right|,
= \left|\IP{2f_{\mu}^* - (f_{\zeta_1} + f_{\zeta_2})}{f_{\zeta_1} - f_{\zeta_2}}\right|,\\
&\leq& \nmm{2f_{\mu}^* - (f_{\zeta_1} + f_{\zeta_2})}^{\circ}\nmm{f_{\zeta_1} - f_{\zeta_2}},\\
&\leq& 4\tilde{L}_{\Phi}B_{\Phi}d(\zeta_1, \zeta_2).
\end{eqnarray*}

Then we have that for covering number $C_{\F_{\mathcal{W}}}(\nu) = \N(\F_{\mathcal{W}}, d(., .), \nu)$,
and $K = 4\tilde{L}_{\Phi}B_{\Phi}$
\bd
\mathbb{P}\left(\sup_{\zeta \in \F_{\mathcal{W}}}\left|\nmm{f_{\mu}^*-f_{\zeta}}_{\mu_N}^2 - \nmm{f_{\mu}^*-f_{\zeta}}_{\mu}^2\right| \geq \epsilon + B_{nrm}\right)
\ed
\be
\leq \delta_{\C} + C\exp\left(\log(C_{\F_{\mathcal{W}}}\left(\frac{\epsilon}{4\tilde{L}_{\Phi}B_{\Phi}}\right))-N\min\left\{\frac{\epsilon^2}{256n_Y^2\gamma^4\sigma_X^4}, \frac{\epsilon}{64n_Y\gamma^2 \sigma_X^2}\right\}\right).
\ee
We conclude the result by choosing $\epsilon \in [0, 64n_Y\gamma^2\sigma_X^2]$.
\end{proof}

\subsection{Concentration of Convex functions}

In this section, we upper bound the probability of the event, $\E_{cvx}(\epsilon)$
through Lemma \ref{lemma:cnc_cvx_func}. In this, we consider 
strongly and smooth convex function (see assumption \ref{ass:a4}) through
Taylor expansion of the function is always bounded quadratically. Lemma \ref{lemma:cnc_nrms}
plays an important role in establishing Lemma \ref{lemma:cnc_cvx_func}.

\begin{lemma}[Concentration of Convex functions]\label{lemma:cnc_cvx_func}
Consider an $n_X$-dimensional sub-Gaussian vector $X \sim SG(0, (\sigma_X^2/n_X)I)$,
and set of functions $f_{\zeta}: \mathbb{R}^{n_X} \to \R$ as parameterized by $\zeta \in \F_{\mathcal{W}}$.
Let $\C$ be some convex obeying ${P}(\bigcap_{i=1}^{N}X_i \in \C) \geq 1 - \delta_{\C}$
for i.i.d samples $\{X_i\}_{i=1}^{N}$. Assume that for any fixed, $\zeta, \zeta' \in \F_{\mathcal{W}}$,
and fixed $Z \in \C$, we have
\be
\nmm{f_{\zeta}(Z) - f_{\zeta'}(Z)} \leq \tilde{L}_{\Phi}d(\zeta_1, \zeta_2)\text{ and }\nmm{f_{\zeta}(Z)} \leq B_{\Phi}.
\ee
Denote
\be
B_{nrm}(\C) := \sup_{\zeta \in \F_{\mathcal{W}}}\left|\nmm{f_{\mu}^* \circ \P_{\C} - f_{\zeta} \circ \P_{\C}}_{\mu}^2
-\nmm{f_{\mu}^* - f_{\zeta}}_{\mu}^2\right|.
\ee
where $\P_{\C}(\cdot)$ denotes the Euclidean projection onto the set $\C$. Define
\be
K := n_YL\left[(\gamma^2 + \nmL{g}^2)\sigma_X^2 + \nmL{g}^2\sigma_{Y|X}^2\right].
\ee
Then for any 
$\epsilon \in [0, K]$,
\begin{align}
\mathbb{P}\left(\sup_{{\zeta} \in \F_{\mathcal{W}}} \left|C_{\mu_N}(f_{\zeta}) - C_{\mu}(f_{\zeta})\right| \geq \epsilon + B_{nrm}(\C)\right)
\end{align}
\bd
\leq 
\delta_{\C} + 2\exp\left(\log\left(C_{\F_{\mathcal{W}}}\left(\frac{\epsilon}{2B_{\ell}\tilde{L}_{\Phi}}\right)\right)-cN\left(\frac{\epsilon}{K}\right)^2 \right),
\ed
for some positive constant, $c$
and $C_{\F_{\mathcal{W}}}(\nu)$ is the $\nu$-net
covering number of the set $\F_{\mathcal{W}}$.
\end{lemma}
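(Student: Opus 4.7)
\textbf{Proof plan for Lemma \ref{lemma:cnc_cvx_func}.} Since the explicit regularization $\lambda \Omega(f_\zeta)$ appears identically in $\CP_{\mu_N}(f_\zeta)$ and $\CP_\mu(f_\zeta)$, the terms cancel and the task reduces to uniformly controlling the empirical-process deviation $\sup_{\zeta \in \F_{\mathcal{W}}} | \tfrac1N\sum_i \ell(Y_i, f_\zeta(X_i)) - \mathbb{E}[\ell(Y, f_\zeta(X))] |$. The plan follows the same three-step template as Lemma \ref{lemma:cnc_nrms}: (i) pointwise sub-exponential concentration, (ii) union bound over a $\nu$-net of $\F_{\mathcal{W}}$, (iii) restriction to the good event $\bigcap_i (X_i,\epsilon_i) \in \C$ to control unboundedness on the tail, paying $\delta_\C$ and the bias term $B_{nrm}(\C)$.

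For step (i), fix $\zeta$ and write $\ell(Y, f_\zeta(X))$ as a function of $Z := (X,\epsilon)$. Using the assumption $\ell(Y,Y) = \textrm{const}$ together with $L$-smoothness and bi-Lipschitz smoothness of $\nabla_{\hat Y}\ell$ (Assumption~\ref{ass:apdx_a2}), I will bound $\ell(Y, f_\zeta(X))$ above and below by an affine-plus-quadratic function of $Y - f_\zeta(X)$, so concentrating $\ell(Y, f_\zeta(X))$ reduces to concentrating $\|Y - f_\zeta(X)\|_2^2$. Since $g$ is $\|g\|_{Lip}$-bi-Lipschitz and $f_\zeta$ is $\gamma$-Lipschitz, the map $Z \mapsto Y - f_\zeta(X)$ is Lipschitz in $Z=(X,\epsilon)$ with constant $\sqrt{(\gamma+\|g\|_{Lip})^2 + \|g\|_{Lip}^2}$; by Assumption~\ref{ass:a5} the concentration of Lipschitz functions of sub-Gaussian/Gaussian vectors then shows that $Y - f_\zeta(X)$ is sub-Gaussian with a proxy of order $(\gamma^2+\|g\|_{Lip}^2)\sigma_X^2 + \|g\|_{Lip}^2\sigma_{Y|X}^2$. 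The square of a sub-Gaussian vector is sub-exponential (Vershynin, Chapter~2), and Bernstein then yields the pointwise bound with sub-exponential parameter precisely of order $K/n_Y$.

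For step (ii), I use the bound $|\ell(Y, f_\zeta(X)) - \ell(Y, f_{\zeta'}(X))| \leq \sup_{\hat Y} \|\nabla_{\hat Y}\ell(Y,\hat Y)\|\,\|f_\zeta(X) - f_{\zeta'}(X)\| \leq B_\ell \tilde L_\Phi\, d(\zeta,\zeta')$ on the good event, where $B_\ell$ comes from Assumption~\ref{ass:a9}. Combining this with the pointwise Bernstein bound through a standard chaining/union bound over an $\epsilon/(2 B_\ell \tilde L_\Phi)$-net of $\F_{\mathcal{W}}$ produces the $\log C_{\F_{\mathcal{W}}}(\epsilon/(2B_\ell \tilde L_\Phi))$ factor in the exponent; the restriction of $\epsilon$ to $[0,K]$ is what ensures the sub-exponential deviation falls in the Gaussian tail regime $\exp(-cN(\epsilon/K)^2)$ rather than the linear tail regime. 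For step (iii), I invoke Lemma~\ref{lemma:conc_unf} (as was done for $\E_{nrm}$), replacing $X$ by $\P_\C(X)$ in the argument wherever necessary. The resulting bias between the projected and unprojected expectations of $\ell(Y, f_\zeta(X))$ is, by the smoothness-to-quadratic reduction in step (i), exactly of the form captured by $B_{nrm}(\C)$, so the bias absorbs into the $+B_{nrm}(\C)$ term in the probability statement and the failure probability that $\bigcap_i (X_i,\epsilon_i) \notin \C$ contributes the $\delta_\C$ summand.

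The main obstacle is executing the quadratic envelope in step (i) cleanly: because $\nabla_{\hat Y}\ell$ is only bi-Lipschitz (not bounded globally), the first-order Taylor term $\nabla_{\hat Y}\ell(Y,Y)\cdot(f_\zeta(X)-Y)$ is itself unbounded a priori, and I will need to bound it by a Lipschitz function of $(X,\epsilon)$ so that the resulting bound on $\ell(Y,f_\zeta(X))$ is genuinely sub-exponential with the stated parameter $K$. Once this envelope is in place, the identification of the bias term with $B_{nrm}(\C)$ is immediate from the same quadratic structure, and the covering-number step parallels Lemma~\ref{lemma:cnc_nrms} verbatim.
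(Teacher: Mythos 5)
Your plan matches the paper's proof in structure: (i) observe that the regularizer cancels so it suffices to bound $\sup_{\zeta}|\ell(g,f_\zeta)_{\mu_N}-\ell(g,f_\zeta)_{\mu}|$, (ii) reduce the loss to a quadratic form in $Y - f_\zeta(X)$ whose concentration is sub-exponential with parameter of order $K$, (iii) combine with the Lipschitz-in-$\zeta$ bound $|\ell(g,f_{\zeta_1})-\ell(g,f_{\zeta_2})| \le B_\ell \tilde L_\Phi\, d(\zeta_1,\zeta_2)$ and invoke Lemma~\ref{lemma:conc_unf} with a $\nu$-net of $\F_{\mathcal{W}}$ and the truncation set $\C$, paying $\delta_{\C}$ and the bias term $B_{nrm}(\C)$.

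The ``main obstacle'' you flag, however, is a false alarm, and recognizing why is exactly what closes the argument. You worry that the first-order Taylor term $\langle \nabla_{\hat{Y}}\ell(Y,Y),\, f_\zeta(X)-Y\rangle$ is unbounded and must be separately tamed. The paper avoids this entirely by an exact (not envelope-based) second-order Taylor expansion with integral remainder, expanded about the point $\hat{Y}_0 = Y = g(X,\epsilon)$, and then using the fact that $\nabla_{\hat{Y}}\ell(Y,Y)=\mathbf{0}$. Under Assumption~\ref{ass:apdx_a2} the loss is constant on the diagonal and convex in $\hat{Y}$, so the diagonal is a minimizer of $\ell(Y,\cdot)$; with $\ell\in\C^1$ this forces the gradient to vanish there (this is the intended reading of the assumption, and trivially holds for the quadratic loss used in every application). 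Consequently the zeroth- and first-order terms drop out identically and one is left with
\[
\ell(g, f_\zeta) \;=\; \Big\langle \int_0^1 t\,\nabla^2_{\hat{Y}}\ell\big(g,\, g+t(f_\zeta-g)\big)\,dt,\; (f_\zeta-g)(f_\zeta-g)^{\!\top}\Big\rangle ,
\]
a bounded-Hessian quadratic form in the sub-Gaussian vector $f_\zeta - g$, which is sub-exponential with parameter precisely of order $K$. There is no residual linear term to bound. Your fallback --- treating the linear term as an inner product of two Lipschitz (hence sub-Gaussian) quantities --- would produce a correct but looser bound with different constants, and is unnecessary once the choice of base point $\hat{Y}_0 = Y$ is exploited. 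Replace the ``envelope'' step with this exact expansion and the rest of your plan goes through as written, giving the paper's argument.
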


\begin{proof}
Recall the definitions of the convex functions:
\begin{align}
C_{\mu_N}(f) := \ell(g, f)_{\mu_N} + \l \Omega(f)\text{, and }
C_{\mu}(f) := \ell(g, f)_{\mu} + \l \Omega(f).
\end{align}
The difference between these two terms is
\begin{align}
\left|C_{\mu_N}(f) - C_{\mu}(f)\right| = \left|\ell(g, f)_{\mu_N} - \ell(g, f)_{\mu}\right|.
\end{align}

From assumption \ref{ass:a4}, $\ell(., .)$
is second-order differentiable in the second argument. By 2nd-order Taylor's theorem, we have
\be
\ell(Y, \hat{Y}) = \ell(Y, \hat{Y}_0) + \IP{\nabla_{\hat{Y}}\ell(Y, \hat{Y}_0)}{\hat{Y} - \hat{Y}_0}
+ \IP{\int_{0}^{1}t\nabla_{\hat{Y}}^2\ell(Y, \hat{Y}_0 + t(\hat{Y} - \hat{Y}_0))dt}{(\hat{Y} - \hat{Y}_0)(\hat{Y} - \hat{Y}_0)^T}.
\ee

Now choose $Y = \hat{Y}_0 = g(X(\omega), E(\omega))$, and $\hat{Y} = f_{\zeta}(X(\omega))$. As 
$\ell(Y, Y) = 0$, and $\nabla_{\hat{Y}}\ell(Y, Y) = {\bf{0}}$. Plugging these parameters in the Taylor expansion
we have that (ignoring the inputs, $(X(\omega), E(\omega))$ for simplicity),
\begin{align}
\ell(g, f) 
= \IP{\int_{0}^{1}t\nabla_{\hat{Y}}^2\ell(g, g + t(f_{\zeta} - g))dt}{(f_{\zeta}-g)(f_{\zeta}-g)^T}.
\end{align}

Now we apply expectation over the measure $\mu_N$, and $\mu$ respectively on the above
equality. Then we have that
\begin{align}
\ell(g, f_{\zeta})_{\mu} &= \IP{\int_{0}^{1}t\nabla_{\hat{Y}}^2\ell(g, g + t(f_{\zeta}-g))dt}{(f_{\zeta} - g)(f_{\zeta}-g)^T}_{\mu};\\
\ell(g, f_{\zeta})_{\mu_N} &= \IP{\int_{0}^{1}t\nabla_{\hat{Y}}^2\ell(g, g + t(f_{\zeta}-g))dt}{(f_{\zeta} - g)(f_{\zeta}-g)^T}_{\mu_N}.
\end{align}
Since $f_{\zeta}$ and $g$ are Lipschitz functions and the inputs are sub-Gaussian, we have that $f_{\zeta}-g$ is a 
sub-Gaussian vector. As a consequence of Lemma 2.7.6 from \cite{vershynin_high-dimensional_2018} we obtain that
$(f_{\zeta}-g)(f_{\zeta}-g)^T$ follows a sub-exponential distribution, whose concentration is well-studied.

As a consequence of assumption \ref{ass:a4} the hessian is bounded, i.e,
$\alpha I \preceq \nabla_{\hat{Y}}^2\ell(., .) \preceq L I$.
We can argue that the product of a bounded RV and sub-exponential RV is sub-exponential.
Recall Item (iii) from Proposition 2.7.1 of \cite{vershynin_high-dimensional_2018}.
The random variable $Z$ is sub-exponential iff
\begin{align}
\mathbb{E}_{Z}\left[e^{\l |Z|}\right] \leq e^{\l K}; \forall \l \in [0, 1/K],
\end{align}
for some positive constant, $ K \geq 0$.

We now verify if $\IP{H(\x, \z)}{\x\x^T}$ is sub-exponential. 
Given that $\x \sim SG(\sigma_X^2/n_x I_{n_x \times n_x})$, $\z$ is a R.V.
Suppose $A \preceq H(\x, \z) \preceq B$ \textit{a.s}. Then we have that
\begin{eqnarray}
\mathbb{E}_{\x, \z}\left[e^{\l\left|\IP{H(\x, \z)}{\x\x^T}\right|}\right]
&\leq& \mathbb{E}_{\x, \z}\left[e^{\l\nmm{H(\x, \z)}_2\nmm{\x\x^T}_2}\right],\\
&\leq&  \mathbb{E}_{\x, \z}\left[e^{\l\max\{\rho(A), \rho(B)\}\nmm{\x}_2^2}\right],\\
\end{eqnarray}
where, $\rho(A)$ is the spectral radius of the matrix, $A$. Since, $\x \sim SG(\sigma_X^2/n_x I_{n_x \times n_x})$,
we have $\nmm{\x}^2 \sim SE(\sigma_X^2)$. Then,
\bd
\mathbb{E}_{\x, \z}\left[e^{\l\left|\IP{H(\x, \z)}{\x\x^T}\right|}\right]
\leq e^{\l\max\{\rho(A), \rho(B)\} \sigma_X^2},
\ed
implies that $\IP{H(\x, \z)}{\x\x^T} \sim SE(\max\{\rho(A), \rho(B)\}\sigma_X^2)$. From this analysis
we have
\bd
\IP{\int_{0}^{1}t\nabla_{\hat{Y}}^2\ell(g, g + t(f_{\zeta}-g))dt}{\underbrace{(f_{\zeta} - g)(f_{\zeta}-g)^T}_{\sim SE\left(\left[(\nmL{f_{\zeta}}^2 + \nmL{g}^2)\sigma_X^2 + \nmL{g}^2\sigma_{Y|X}^2\right]I_{n_y \times n_y}\right)}}
\ed
\be
\sim SE\left(n_Y\frac{L}{2}\left[(\nmL{f_{\zeta}}^2 + \nmL{g}^2)\sigma_X^2 + \nmL{g}^2\sigma_{Y|X}^2\right]\right).
\ee

For convex functions, we know that $ 0 \leq \alpha \leq L $.
As a consequence, we have 
$\frac{\alpha}{2}I \preceq \int_{0}^{1}t\nabla_{\hat{Y}}^2\ell(g, g + t(f_{\zeta}-g))dt \preceq \frac{L}{2}I$.
Now we apply sub-exponential concentration for a fixed ${\zeta} \in \F_{\mathcal{W}}$, yielding
\bd
\mathbb{P}\left(\left|C_{\mu_N}(f_{\zeta}) - C_{\mu}(f_{\zeta})\right| \geq \epsilon\right)
\ed
\bd
\leq 
2\exp\Big(-cN\min\Big\{\left(\frac{2\epsilon}{n_YL\left[(\gamma^2 + \nmL{g}^2)\sigma_X^2 + \nmL{g}^2\sigma_{Y|X}^2\right]}\right)^2, 
\ed
\be
\frac{2\epsilon}{n_YL\left[(\gamma^2 + \nmL{g}^2)\sigma_X^2 + \nmL{g}^2\sigma_{Y|X}^2\right]}\Big\}\Big),
\ee
for some positive constant, $c \geq 0$.

Next, we move on to obtain a uniform concentration for all $\zeta \in \F_{\mathcal{W}}$.
Now we apply covering argument from Lemma \ref{lemma:conc_unf}, and set
\begin{align}
g_{\theta} = \ell(g, f_{\zeta}).
\end{align}

We need to check if the function, $g$, is Lipschitz on some metric and convex set $\C \subseteq \R^{n_X}$,
choose for any $Z \in \C$. We have $P(\bigcap_{i=1}^{N}X_i \in \C) \geq 1 - \delta_{\C}$. Recall that
\begin{enumerate}
\item $\forall \zeta_1, \zeta_2 \in \F_{\mathcal{W}}: \nmm{f_{\zeta_1}(Z) - f_{\zeta_2}(Z)}
\leq \tilde{L}_{\Phi} d(\zeta_1, \zeta_2)$, for all $Z \in \C$.
\item $\forall \zeta \in \F_{\mathcal{W}}: \nmm{\nabla_{\hat{Y}}\ell(g(Z), f_{\zeta}(Z))} \leq B_{\ell}$, for all $Z \in \C$.
\item For a fixed $\zeta \in \F_{\mathcal{W}}$,
\be
\left|\mathbb{E}\left[\nmm{f_{\mu}^*(\P_{\C}(X)) - f_{\zeta}(\P_{\C}(X))}^2
-\nmm{f_{\mu}^*(X) - f_{\zeta}(X)}^2\right]\right| \leq B_{nrm}(\C).
\ee
\end{enumerate}

From Taylor expansion we have that,
\begin{eqnarray*}
|g_{\theta_1} - g_{\theta_2}|
&=& |\IP{\int_{t}\nabla_{\hat{Y}}\ell(g, f_{\zeta_1} + t (f_{\zeta_2} - f_{\zeta_1}))dt}{f_{\zeta_1}-f_{\zeta_2}}|,\\
&\leq& \nmm{\int_{t}\nabla_{\hat{Y}}\ell(g, f_{\zeta_1} + t (f_{\zeta_2} - f_{\zeta_1}))dt}\nmm{f_{\zeta_2} - f_{\zeta_1}},\\
&\leq& B_{\ell}\nmm{f_{\zeta_2} - f_{\zeta_1}},\\
&\leq& B_{\ell}\tilde{L}_{\Phi}d(\zeta_1, \zeta_2).
\end{eqnarray*}

From Lemma \ref{lemma:conc_unf} we have
\begin{align}
\mathbb{P}\left(\sup_{{\zeta} \in \F_{\mathcal{W}}} \left|C_{\mu_N}(f_{\zeta}) - C_{\mu}(f_{\zeta})\right| \geq \epsilon\right)
\end{align}
\bd
\leq 
\delta_{\C} + 2\exp\left(\log\left(C_{\F_{\mathcal{W}}}\left(\frac{\epsilon}{2B_{\ell}\tilde{L}_{\Phi}}\right)\right)-cN\min\left\{\left(\frac{\epsilon}{n_YL\left[(\gamma^2 + \nmL{g}^2)\sigma_X^2 + \nmL{g}^2\sigma_{Y|X}^2\right]}\right)^2, \right. \right.
\ed
\be
\left. \left. \frac{\epsilon}{n_YL\left[(\gamma^2 + \nmL{g}^2)\sigma_X^2 + \nmL{g}^2\sigma_{Y|X}^2\right]}\right\}\right),
\ee
for some positive constant, $c$. 
Now restrict $\epsilon \in \left[0, n_YL\left[(\gamma^2 + \nmL{g}^2)\sigma_X^2 + \nmL{g}^2\sigma_{Y|X}^2\right]\right]$.
This completes our proof.
\end{proof}

\subsection{Concentration of Equilibria}

In this section, we upper bound the probability of the event, $\E_{eql}(\epsilon)$
through Lemma \ref{lemma:cnc_eql}. We first present the concentration of bi-Lipschtiz functions
for sub-Gaussian inputs.

\begin{proposition}\label{prop:bisubg}
Let $X \sim SG(\frac{\sigma_X^2}{n_x}I_{n_x \times n_x})$ and $Y|X \sim SG(\frac{\sigma_{Y|X}^2}{n_y}I_{n_y \times n_y})$, where $\sigma_{Y|X}$ is independent of $X = x$, 
then for any Lipschitz function, $\phi: \mathcal{Z} \to \R$ and a function, $f: \X \times \Y \to \mathcal{Z}$,
satisfies the following. Then $\phi(f(X, Y)) \sim SG\left(4\nmL{\phi}^2\nmL{f}^2\left[\sigma_X^2 + \sigma_{Y|X}^2\right]\right)$.
\begin{align}
\nmm{f(X_2, Y_2) - f(X_1, Y_1)} \leq \nmL{f}\left[\nmm{X_2 - X_1} + \nmm{Y_2 - Y_1}\right].
\end{align}
\end{proposition}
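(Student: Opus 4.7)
The strategy is to peel off the randomness in $Y$ and $X$ successively by conditioning on $X$, using the tower property of conditional expectations to combine the resulting moment generating function bounds. Write $Z := \phi(f(X,Y))$ and split
\[
Z - \mathbb{E}[Z] \;=\; \bigl(Z - \mathbb{E}[Z \mid X]\bigr) \;+\; \bigl(\mathbb{E}[Z \mid X] - \mathbb{E}[Z]\bigr).
\]
The first summand will be controlled using the conditional sub-Gaussianity of $Y \mid X$, and the second using the unconditional sub-Gaussianity of $X$ from Assumption \ref{ass:a5}. Each summand is bounded by verifying an appropriate Lipschitz property of the map to which we apply concentration.

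\textbf{Inner fluctuation.} First I would observe that, by composing the bi-Lipschitz bound on $f$ with the Lipschitz bound on $\phi$, the map $y \mapsto \phi(f(x,y))$ is $\nmL{\phi}\nmL{f}$-Lipschitz for every fixed $x$. Because $Y \mid X$ is $1$-Lipschitz concentrated sub-Gaussian with proxy variance $\sigma_{Y|X}^2/n_y$ and this rate is uniform in $x$, the standard Herbst / Lipschitz concentration yields, for all $\lambda \in \mathbb{R}$,
\[
\mathbb{E}\!\left[\exp\!\bigl(\lambda\,(Z - \mathbb{E}[Z\mid X])\bigr)\,\Big|\,X\right] \;\le\; \exp\!\left(\tfrac{1}{2}\lambda^{2}\cdot 2\,\nmL{\phi}^{2}\nmL{f}^{2}\sigma_{Y|X}^{2}\right),
\]
uniformly in $X$ (the extra factor of $2$ absorbs the constant from the tail-to-MGF conversion).

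\textbf{Outer fluctuation.} Next, define $g_{*}(x) := \mathbb{E}[\phi(f(x,Y)) \mid X = x]$. I would show that $g_{*}$ is itself $\nmL{\phi}\nmL{f}$-Lipschitz: by Jensen's inequality and an appropriate coupling of $Y \mid X = x_1$ and $Y \mid X = x_2$ (which is supplied, in the setting of the paper, by the additive noise representation $Y = g(X,\epsilon)$ with $\epsilon \perp X$), one has $|g_{*}(x_1) - g_{*}(x_2)| \le \mathbb{E}_{\epsilon}\bigl|\phi(f(x_1,g(x_1,\epsilon))) - \phi(f(x_2,g(x_2,\epsilon)))\bigr|$, and applying the bi-Lipschitz bound term-by-term yields Lipschitz constant at most $2\nmL{\phi}\nmL{f}$ (the factor $2$ enters through the sum-of-norms form of the bi-Lipschitz inequality). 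Applying the concentration hypothesis for $X$ to the Lipschitz function $g_{*}$,
\[
\mathbb{E}\!\left[\exp\!\bigl(\lambda\,(g_{*}(X) - \mathbb{E}[g_{*}(X)])\bigr)\right] \;\le\; \exp\!\left(\tfrac{1}{2}\lambda^{2}\cdot 2\,\nmL{\phi}^{2}\nmL{f}^{2}\sigma_{X}^{2}\right).
\]

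\textbf{Combining and main obstacle.} Finally, by the tower property,
\[
\mathbb{E}\!\left[e^{\lambda(Z-\mathbb{E} Z)}\right] = \mathbb{E}\!\left[e^{\lambda(g_{*}(X)-\mathbb{E} g_{*}(X))}\,\mathbb{E}\!\left[e^{\lambda(Z-g_{*}(X))}\,\Big|\,X\right]\right],
\]
and substituting the two MGF bounds derived above and rearranging yields an overall MGF bound of the form $\exp\!\bigl(2\lambda^{2}\nmL{\phi}^{2}\nmL{f}^{2}(\sigma_{X}^{2}+\sigma_{Y|X}^{2})\bigr)$, which is precisely the claim that $\phi(f(X,Y))$ is sub-Gaussian with proxy $4\,\nmL{\phi}^{2}\nmL{f}^{2}[\sigma_{X}^{2}+\sigma_{Y|X}^{2}]$. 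The main obstacle is establishing the Lipschitz bound on $g_{*}$: unlike the inner step, which only uses the Lipschitz constant of $\phi\circ f$ in the second argument, this step requires a coupling of the conditional laws $Y \mid X = x_1$ and $Y \mid X = x_2$. In the paper's setting the additive-noise hypothesis supplies such a coupling, but in a fully general $Y\mid X$ setup one must instead invoke transport/Wasserstein arguments to control how much $g_{*}$ can move with $x$.
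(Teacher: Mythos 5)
Your route is genuinely different from the paper's, and it is the more standard one. The paper proceeds by symmetrization: it introduces an independent copy $(X',Y')$, applies Jensen to move the expectation inside the exponential, peels off the Lipschitz bound on $\phi$ and the bi-Lipschitz bound on $f$, and then decouples $\nmm{X-X'}$, $\nmm{Y-Y'}$ via the triangle inequality into marginal norms $\nmm{X}+\nmm{X'}+\nmm{Y}+\nmm{Y'}$, finishing by factorizing the MGF and absorbing the uncentered mean terms $\mathbb{E}[\nmm{X}]$, $\mathbb{E}[\nmm{Y}]$ into a constant $K$. Your proof instead decomposes $Z-\mathbb{E} Z$ into the conditional martingale increments $(Z-\mathbb{E}[Z\mid X]) + (\mathbb{E}[Z\mid X]-\mathbb{E} Z)$, bounds each via a Lipschitz-function argument, and combines by the tower property. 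The trade-off is instructive: the paper's symmetrization avoids any need to show that the conditional mean $g_*(x)=\mathbb{E}[\phi(f(x,Y))\mid X=x]$ is Lipschitz --- the triangle-inequality decoupling discards the joint dependence entirely --- but at the price of the floating constant $K$ in front of the MGF, which is strictly speaking incompatible with the quoted sub-Gaussian inequality unless one re-centers more carefully. Your conditional decomposition yields the clean additive variance-proxy bound $\tau_1^2+\tau_2^2$ with no stray multiplicative constant, but requires the coupling you correctly flag.

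One technical point to watch: in the outer step, with the coupling $Y=g(X,\epsilon)$, $\epsilon\perp X$, the Lipschitz constant of $g_*$ is $\nmL{\phi}\nmL{f}\bigl(1+\nmL{g}\bigr)$ rather than $2\nmL{\phi}\nmL{f}$ --- both $x$ and $g(x,\epsilon)$ move with $x$, and the change in the second argument is controlled by $\nmL{g}$, not by $1$. The proposition's statement hides this dependence, but when it is actually invoked (see the proof of Lemma \ref{lemma:cnc_eql}) the resulting proxy variance carries $\nmL{g}^2$ explicitly, so your derivation surfaces a dependence that the proposition's statement leaves implicit and that the paper's symmetrization route sweeps into the loose constant $K$. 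Modulo this bookkeeping, your plan is sound and, arguably, cleaner than the one the paper carries out.
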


\begin{proof}
Let us compute the moments of the random variable $\phi(f(X, Y))$ for any Lipschitz function, $\phi: \mathcal{Z} \to \R$.

By symmetrization, we have
\bd
\mathbb{E}_{X, Y}\left[\exp(\l[\phi(f(X, Y)) - \mathbb{E}_{X, Y}[\phi(f(X, Y))]])\right]
\ed
\be
= \mathbb{E}_{X, Y}\left[\exp(\l[\phi(f(X, Y)) - \mathbb{E}_{X', Y'}[\phi(f(X', Y'))])\right]
\ee
\be
=\mathbb{E}_{X, Y}\left[\exp(\l\mathbb{E}_{X', Y'}[\phi(f(X, Y)) - \phi(f(X', Y'))])\right].
\ee
By Jensen's inequality we have
\be
\leq\mathbb{E}_{X, Y, X', Y'}\left[\exp(\l[\phi(f(X, Y)) - \phi(f(X', Y'))])\right],
\ee
By Lipschitz continuity we have
\be
\leq\mathbb{E}_{X, Y, X', Y'}\left[\exp(\nmL{\phi}\l[\nmm{f(X, Y) - f(X', Y')}]\right].
\ee
By construction, we have.
\be
\leq\mathbb{E}_{X, Y, X', Y'}\left[\exp(\nmL{\phi}\nmL{f}\l[\nmm{X-X'} + \nmm{Y-Y'}]\right].
\ee
By Cauchy-Schwartz's inequality we have
\be
\leq\mathbb{E}_{X, Y, X', Y'}\left[\exp(\nmL{\phi}\nmL{f}\l[\nmm{X}+\nmm{X'} + \nmm{Y} +\nmm{Y'}]\right].
\ee
As the symmetrized random variables are independent,
\be
\leq\mathbb{E}_{X, Y}\left[\exp(2\nmL{\phi}\nmL{f}\l[\nmm{X}+\nmm{Y'}]\right].
\ee
Now perform conditional expectation,
\be
\leq\mathbb{E}_{X}\mathbb{E}_{Y|X}\left[\exp(2\nmL{\phi}\nmL{f}\l[\nmm{X}+\nmm{Y}]\right],
\ee
\be
\leq\mathbb{E}_{X}\exp(2\nmL{\phi}\nmL{f}\l[\nmm{X}])\mathbb{E}_{Y|X}\left[\exp(2\nmL{\phi}\nmL{f}\l[\nmm{Y}]\right],
\ee
\be
\leq \exp\left(\frac{\l^2}{2}4\nmL{\phi}^2\nmL{f}^2\left[\sigma_X^2 + \sigma_{Y|X}^2\right] + \l[\mathbb{E}_{X}[\nmm{X}] + \mathbb{E}_{Y|X}[\nmm{Y}]]\right),
\ee
\be
\leq K \exp\left(\frac{\l^2}{2}4\nmL{\phi}^2\nmL{f}^2\left[\sigma_X^2 + \sigma_{Y|X}^2\right]\right).
\ee
for some constant, $K \geq 0$.

This implies that, $\phi(f(X, Y)) \sim SG\left(4\nmL{\phi}^2\nmL{f}^2\left[\sigma_X^2 + \sigma_{Y|X}^2\right]\right)$.
\end{proof}

With the above result, we now state and prove the probability of the event, $\E_{eql}(\epsilon)$.

\begin{lemma}[Concentration of Equilibria]\label{lemma:cnc_eql}
Consider an $n_X$-dimensional sub-Gaussian vector $X \sim SG(0, (\sigma_X^2/n_X)I)$,
and set of functions $f_{\zeta}: \mathbb{R}^{n_X} \to \R$ as parameterized by $\zeta \in \F_{\mathcal{W}}$.
Let $\C$ be some convex obeying ${P}(\bigcap_{i=1}^{N}X_i \in \C) \geq 1 - \delta_{\C}$
for i.i.d samples $\{X_i\}_{i=1}^{N}$. Assume that for any fixed, $\zeta_1, \zeta_2 \in \F_{\mathcal{W}}$,
and fixed $Z \in \C$, we have
\be
\nmm{\nabla_{\hat{Y}}\ell(g(Z), f_{\zeta}(Z))} \leq B_{\ell}\text{,  }
\nmm{f_{\zeta}(Z)} \leq B_{\Phi}\text{, and}
\ee
\be
\nmm{f_{\zeta}(Z)-f_{\zeta'}(Z)} \leq \tilde{L}_{\Phi}d(\zeta, \zeta').
\ee
In addition, we have that,
\be
\sup_{\zeta \in \F_{\mathcal{W}}}\left|\mathbb{E}\left[\IP{\nabla_{\hat{Y}}\ell\left(g \circ \P_{\C}, f_{\zeta} \circ \P_{\C}\right)}{f_{\zeta'} \circ \P_{\C}}_{\mu}
- \IP{\nabla_{\hat{Y}}\ell\left(g, f_{\zeta}\right)}{f_{\zeta'}}_{\mu}\right]\right| = B_{eql}(\C).
\ee
Define 
\be
K := 4n_y\gamma\nmL{\nabla_{\hat{Y}}\ell}\sigma_X\sqrt{(\gamma^2 + \nmL{g}^2)\sigma_X^2 + \nmL{g}^2\sigma_{E|X}^2}.
\ee
Then for any $\epsilon \in \left[0, K\right]$,
\bd
\mathbb{P}\left(\sup_{\zeta \in \F_{\mathcal{W}}}\left|\IP{\nabla_{\hat{Y}}\ell(g, f_{\zeta})}{f_{\zeta}}_{\mu_N} - \IP{\nabla_{\hat{Y}}\ell(g, f_{\zeta})}{f_{\zeta}}_{\mu}\right| \geq \epsilon + B_{eql}(\C)\right) \leq
\ed
\begin{align}
\delta_{\C} + c\exp\left(\log\left(\C_{\F_{\mathcal{W}}}\left(\frac{\epsilon}{2\tilde{L}_{\Phi}\left[B_{\ell} + B_{\Phi}L\right]}\right)\right)-N\frac{\epsilon^2}{K^2}\right),
\end{align}
for some positive constant, $c$ and $C_{\F_{\mathcal{W}}}(\nu)$ is the $\nu$-net
covering number of the set $\F_{\mathcal{W}}$..
\end{lemma}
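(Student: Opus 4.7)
\textbf{Proof Plan for Lemma \ref{lemma:cnc_eql}.} The strategy is to combine a pointwise sub-Gaussian concentration bound with a covering-number argument over $\F_{\mathcal{W}}$, while absorbing the projection error into $B_{eql}(\C)$. First, I would fix $\zeta \in \F_{\mathcal{W}}$ and analyze the random variable $h_\zeta(X,\epsilon) := \IP{\nabla_{\hat{Y}}\ell(g(X,\epsilon), f_{\zeta}(X))}{f_{\zeta}(X)}$. By Assumption \ref{ass:a4} the gradient $\nabla_{\hat{Y}}\ell$ is $L$-Lipschitz, and by Assumption \ref{ass:a5} the map $g$ is $\nmL{g}$-Lipschitz in $(X,\epsilon)$, while $\nmL{f_\zeta} \leq \gamma$. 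Composing these yields that $h_\zeta$ is a Lipschitz function of $(X,\epsilon)$ with constant of order $\gamma \nmL{\nabla_{\hat Y}\ell}\sqrt{\gamma^2 + \nmL{g}^2}$ times the appropriate scale. Invoking Proposition \ref{prop:bisubg}, since $X \sim SG(\sigma_X^2/n_X)$ and $\epsilon \sim \N(0, \sigma_{Y|X}^2/n_E)$, I obtain that $h_\zeta(X,\epsilon)$ is sub-Gaussian with a proxy variance of order $n_Y \gamma^2 \nmL{\nabla_{\hat Y}\ell}^2 \sigma_X^2 \left[(\gamma^2 + \nmL{g}^2)\sigma_X^2 + \nmL{g}^2 \sigma_{Y|X}^2\right]$, which matches $K^2$ in the statement.

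Second, I would apply the standard sub-Gaussian Hoeffding-type concentration inequality to the empirical average $\frac{1}{N}\sum_{i=1}^N h_\zeta(X_i,\epsilon_i)$ against its expectation, giving a pointwise bound of the form $c \exp(-N\epsilon^2/K^2)$. Then I would promote this to a uniform bound over $\zeta \in \F_{\mathcal{W}}$ by invoking the covering number argument from Lemma \ref{lemma:conc_unf}. The required Lipschitz constant of $\zeta \mapsto h_\zeta(Z)$ on $Z \in \C$ is computed via
\begin{align*}
|h_{\zeta_1}(Z) - h_{\zeta_2}(Z)| &\leq \nmm{\nabla_{\hat Y}\ell(g, f_{\zeta_1}) - \nabla_{\hat Y}\ell(g, f_{\zeta_2})}\nmm{f_{\zeta_1}} \\
&\quad + \nmm{\nabla_{\hat Y}\ell(g, f_{\zeta_2})} \nmm{f_{\zeta_1} - f_{\zeta_2}} \\
&\leq (L B_{\Phi} + B_{\ell})\tilde{L}_{\Phi}\, d(\zeta_1, \zeta_2),
\end{align*}
so the covering radius scales as $\epsilon/(2\tilde{L}_{\Phi}[B_\ell + B_\Phi L])$, matching the statement.

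Third, I must handle the fact that the above sub-Gaussian analysis relies on the boundedness and Lipschitz properties assumed only on $\C$. Here the plan is to pass through the projection $\P_{\C}$: decompose
\[
\IP{\nabla_{\hat Y}\ell(g, f_\zeta)}{f_\zeta}_{\mu_N} - \IP{\nabla_{\hat Y}\ell(g, f_\zeta)}{f_\zeta}_{\mu}
\]
by inserting and subtracting the projected quantities $\IP{\nabla_{\hat Y}\ell(g\circ\P_\C, f_\zeta\circ\P_\C)}{f_\zeta\circ\P_\C}_\mu$. The discrepancy between the projected and unprojected versions of the expectation over $\mu$ is by definition at most $B_{eql}(\C)$. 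On the empirical side, on the high-probability event $\bigcap_{i=1}^N \{(X_i,\epsilon_i) \in \C\}$ (of probability at least $1 - \delta_{\C}$), the projected and unprojected empirical averages coincide, so the sub-Gaussian concentration derived above applies directly to the projected quantities.

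The main obstacle is the careful juggling of the projected versus unprojected expectations: the sub-Gaussianity derived from Proposition \ref{prop:bisubg} applies cleanly to $h_\zeta \circ \P_\C$ which may not equal $h_\zeta$ globally, so one must simultaneously use (i) the convex-set event to identify empirical averages, and (ii) the bias term $B_{eql}(\C)$ to compare population expectations. Once these pieces are in place, combining the pointwise tail bound with the union bound over the covering net gives the claimed inequality, with the range $\epsilon \in [0, K]$ arising from the transition between the quadratic and linear regimes of the sub-exponential tail (or equivalently, ensuring that the sub-Gaussian bound dominates in the relevant range).
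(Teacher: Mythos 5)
Your overall blueprint (pointwise concentration, covering argument via Lemma \ref{lemma:conc_unf}, projection/truncation handling through $B_{eql}(\C)$ and $\delta_{\C}$, Lipschitz-in-$\zeta$ constant $\tilde{L}_{\Phi}[B_\ell + B_\Phi L]$) matches the paper's proof, and your $\epsilon/(2\tilde{L}_\Phi[B_\ell+B_\Phi L])$ covering radius is exactly right. However, there is a genuine gap in the first step.

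You claim that $h_\zeta(X,\epsilon) := \IP{\nabla_{\hat{Y}}\ell(g(X,\epsilon), f_{\zeta}(X))}{f_{\zeta}(X)}$ is ``a Lipschitz function of $(X,\epsilon)$'' and then invoke Proposition \ref{prop:bisubg} to conclude it is sub-Gaussian. This is not correct. The quantity $h_\zeta$ is the inner product of two factors, each of which is (at best) Lipschitz in $(X,\epsilon)$ and hence grows linearly in $\nmm{X}$; their inner product therefore grows \emph{quadratically} and cannot be globally Lipschitz. Consequently Proposition \ref{prop:bisubg}, whose hypothesis is a Lipschitz composition, does not apply to $h_\zeta$ directly, and $h_\zeta$ is not sub-Gaussian --- the relevant example is $X\mapsto X^2$ for scalar Gaussian $X$, which is chi-squared (sub-exponential) rather than sub-Gaussian. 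Trying to push through a Hoeffding-type tail for $h_\zeta$ as you propose would not be justified.

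The paper instead applies Proposition \ref{prop:bisubg} \emph{separately to each factor}: $\nabla_{\hat{Y}}\ell(g, f_\zeta)$ is shown to be a sub-Gaussian vector with proxy variance $4\nmL{\nabla_{\hat{Y}}\ell}^2[(\nmL{f_\zeta}^2+\nmL{g}^2)\sigma_X^2+\nmL{g}^2\sigma_{E|X}^2]I_{n_Y}$, and $f_\zeta(X)$ is sub-Gaussian with proxy variance $\nmL{f_\zeta}^2\sigma_X^2 I_{n_Y}$. It then invokes Proposition \ref{prop:sg_se}(3), stating that the inner product of two sub-Gaussian vectors is \emph{sub-exponential}, yielding $h_\zeta \sim SE(K/2)$ with the $K$ in the statement. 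The Bernstein-type sub-exponential tail (Vershynin, Theorem 2.8.1) then gives a bound of the form $\exp(-N\min\{\epsilon^2/K^2, \epsilon/K\})$, and the restriction $\epsilon \in [0,K]$ selects the quadratic regime so that the exponent simplifies to $-N\epsilon^2/K^2$. You do hint at this regime transition in your closing parenthetical, but the chain of reasoning you actually wrote down --- $h_\zeta$ Lipschitz $\Rightarrow$ sub-Gaussian via Proposition \ref{prop:bisubg} $\Rightarrow$ Hoeffding --- is not valid and needs to be replaced by the two-factor sub-Gaussian plus inner-product-is-sub-exponential argument above.
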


\begin{proof}
From Assumptions \ref{ass:a1}-\ref{ass:a5}, we have that for any $g_1, f_1, g_2, f_2 \in L^2(\mu)$
\begin{align}
\nmm{\nabla_{\hat{Y}}\ell(g_2(X(\omega), &E(\omega)), f_2(X(\omega))) - \nabla_{\hat{Y}}\ell(g_1(X(\omega), E(\omega)), f_1(X(\omega)))} \\
&\leq \nmL{\nabla_{\hat{Y}}\ell}\left[\nmm{g_2(X(\omega), E(\omega)) - g_1(X(\omega), E(\omega))} + \nmm{f_2(X(\omega)) - f_1(X(\omega))}\right].
\end{align}

Since $X(\omega)$ and $E(\omega)$ are Lipschitz concentrated R.Vs, it holds that
\be
g(X, E)|E \sim SG(\nmL{g}^2\sigma_X^2I_{n_y \times n_y}),
g(X, E)|X \sim SG(\nmL{g}^2\sigma_{E|X}^2I_{n_y \times n_y}),
\ee
\be
\text{and }f_{\zeta}(X) \sim SG(\nmL{f_{\zeta}}^2\sigma_X^2I_{n_y \times n_y}).
\ee
From Proposition \ref{prop:bisubg} we have
\bd
\nabla_{\hat{Y}}\ell(g(X(\omega), E(\omega)), f_{\zeta}(X(\omega))) \sim 
\ed
\be
SG\left(4\nmL{\nabla_{\hat{Y}}\ell}^2\left[(\nmL{f_{\zeta}}^2 + \nmL{g}^2)\sigma_X^2 + \nmL{g}^2\sigma_{E|X}^2\right]I_{n_y}\right).
\ee

Now we have the inner product between two sub-Gaussian random variables from Proposition \ref{prop:sg_se}
we have that the result is sub-exponential, i.e.,
\bd
\IP{\underbrace{\nabla_{\hat{Y}}\ell(g(X(\omega), E(\omega)), f_{\zeta}(X(\omega)))}_{\sim SG\left(4\nmL{\nabla_{\hat{Y}}\ell}^2\left[(\nmL{f_{\zeta}}^2 + \nmL{g}^2)\sigma_X^2 + \nmL{g}^2\sigma_{E|X}^2\right]I_{n_y \times n_y}\right)}}{\underbrace{f_{\zeta}(X(\omega))}_{\sim SG(\nmL{f_{\zeta}}^2\sigma_X^2I_{n_y})}}
\ed
\begin{align}
\sim SE\left(2n_y\nmL{\nabla_{\hat{Y}}\ell}\nmL{f_{\zeta}}\sigma_X\sqrt{(\nmL{f_{\zeta}}^2 + \nmL{g}^2)\sigma_X^2 + \nmL{g}^2\sigma_{E|X}^2}\right).
\end{align}

The class of functions, $f_{\zeta}$ for $\zeta \in \mathcal{F}_{\mathcal{W}}$, 
has bounded Lipschitz constant $\gamma$.
As a consequence of the sub-exponential concentration bound from Theorem 2.8.1 in \cite{vershynin_high-dimensional_2018}, we have that
for a fixed $\zeta \in \F_{\mathcal{W}}$, 
\begin{align}\label{eq:eqm_f}
\mathbb{P}\left(\left|\IP{\nabla_{\hat{Y}}\ell(g, f_{\zeta})}{f_{\zeta}}_{\mu_N} - \IP{\nabla_{\hat{Y}}\ell(g, f_{\zeta}}{f_{\zeta}}_{\mu}\right| \geq \epsilon\right) \leq C\exp\left(-N\min\left\{\frac{\epsilon^2}{K^2}, \frac{\epsilon}{K}\right\}\right).
\end{align}
where $K := 2n_y\gamma\nmL{\nabla_{\hat{Y}}\ell}\sigma_X\sqrt{(\gamma^2 + \nmL{g}^2)\sigma_X^2 + \nmL{g}^2\sigma_{E|X}^2}$
and some positive constant, $C$.

Now we move on to providing a uniform concentration in the inequality \eqref{eq:eqm_f}.
We will apply uniform concentration result from Lemma \ref{lemma:conc_unf}, for this set:
\begin{align}
g_{\theta} = \IP{\nabla_{\hat{Y}}\ell(g, f_{\zeta})}{f_{\zeta}}.
\end{align}

Recall the below items:
\begin{enumerate}
\item For a fixed $Z \in \C$ we have $\forall \zeta \in \F_{\mathcal{W}}:
\nmm{\nabla_{\hat{Y}}\ell(g(Z), f_{\zeta}(Z))} \leq B_{\ell}$.
\item For a fixed $Z \in \C$ we have $\forall \zeta \in \F_{\mathcal{W}}:
\nmm{f_{\zeta}(Z)} \leq B_{\Phi}$.
\item For a fixed $Z \in \C$ we have $\forall \zeta, \zeta' \in \F_{\mathcal{W}}:
\nmm{f_{\zeta}(Z)-f_{\zeta'}(Z)} \leq \tilde{L}_{\Phi}d(\zeta, \zeta')$.
\item For a any $\hat{Y}_1, \hat{Y}_2 \in \R^{n_Y}$ we have  $\nmm{\nabla_{\hat{Y}}\ell(Y, \hat{Y}_1)
-\nabla_{\hat{Y}}\ell(Y, \hat{Y}_2)} \leq L\nmm{\hat{Y}_1 - \hat{Y}_2}$.
\item For a fixed $\zeta \in \F_{\mathcal{W}}$,
\be
\sup_{\zeta \in \F_{\mathcal{W}}}\left|\mathbb{E}\left[\IP{\nabla_{\hat{Y}}\ell\left(g \circ \P_{\C}, f_{\zeta} \circ \P_{\C}\right)}{f_{\zeta} \circ \P_{\C}}_{\mu}
- \IP{\nabla_{\hat{Y}}\ell\left(g, f_{\zeta}\right)}{f_{\zeta}}_{\mu}\right]\right| = B_{eql}(\C).
\ee
\end{enumerate}

Now we check the Lipschitz continuity of the function 
$g_{\theta}$:
\begin{eqnarray*}
|g_{\theta_1} - g_{\theta_2}|
&=& \left|\IP{\nabla_{\hat{Y}}\ell(g, f_{\zeta_1})}{f_{\zeta_1}} - \IP{\nabla_{\hat{Y}}\ell(g, f_{\zeta_2})}{f_{\zeta_2}}\right|,\\
&=&  \left|\IP{\nabla_{\hat{Y}}\ell(g, f_{\zeta_1})}{f_{\zeta_1} - f_{\zeta_2}} - \IP{\nabla_{\hat{Y}}\ell(g, f_{\zeta_2})-\nabla_{\hat{Y}}\ell(g, f_{\zeta_1})}{f_{\zeta_2}}\right|,\\
&\leq&
\left|\IP{\nabla_{\hat{Y}}\ell(g, f_{\zeta_1})}{f_{\zeta_1} - f_{\zeta_2}}\right| + 
\left|\IP{\nabla_{\hat{Y}}\ell(g, f_{\zeta_2})-\nabla_{\hat{Y}}\ell(g, f_{\zeta_1})}{f_{\zeta_2}}\right|,\\
&\leq& \nmm{\nabla_{\hat{Y}}\ell(g, f_{\zeta_1})}
\nmm{f_{\zeta_1} - f_{\zeta_2}}
+ \nmm{\nabla_{\hat{Y}}\ell(g, f_{\zeta_2})-\nabla_{\hat{Y}}\ell(g, f_{\zeta_1})}\nmm{f_{\zeta_2}},\\
&\leq& B_{\ell}\nmm{f_{\zeta_1} - f_{\zeta_2})}
+ B_{\Phi}L\nmm{f_{\zeta_1} - f_{\zeta_2}},\\
&\leq& \tilde{L}_{\Phi}\left[B_{\ell} + B_{\Phi}L\right]d(\zeta_1, \zeta_2).
\end{eqnarray*}
Then from Lemma \ref{lemma:conc_unf} we have that
\begin{align}
\mathbb{P}&\left(\sup_{\zeta \in \F_{\mathcal{W}}}\left|\IP{\nabla_{\hat{Y}}\ell(g, f_{\zeta})}{f_{\zeta}}_{\mu_N} - \IP{\nabla_{\hat{Y}}\ell(g, f_{\zeta})}{f_{\zeta}}_{\mu}\right| \geq \epsilon + B_{eql}(\C)\right)  \\ &\leq
\delta_{\C} + C\exp\left(\log\left(\C_{\F_{\mathcal{W}}}\left(\frac{\epsilon}{2\tilde{L}_{\Phi}\left[B_{\ell} + B_{\Phi}L\right]}\right)\right)-N\min\left\{\frac{\epsilon^2}{4K^2}, \frac{\epsilon}{2K}\right\}\right).
\end{align}

\end{proof}

\subsection{Concentration of Polar}
In this section, we compute the probability of the occurrence of the event, $\E_{plr}(\epsilon)$ through Lemma \ref{lemma:cnc_plr}.
The analysis of $\E_{plr}(\epsilon)$ resembles to that of $\E_{eql}(\epsilon)$ following similar arguments.

\begin{lemma}[Concentration of Polar]\label{lemma:cnc_plr}
Consider an $n_X$-dimensional sub-Gaussian vector $X \sim SG(0, (\sigma_X^2/n_X)I)$,
and set of functions $f_{\zeta}: \mathbb{R^{n_X}} \to \R$ as parameterized by $\zeta \in \F_{\mathcal{W}}$.
Let $\C$ be some convex obeying ${P}(\bigcap_{i=1}^{N}X_i \in \C) \geq 1 - \delta_{\C}$
for i.i.d samples $\{X_i\}_{i=1}^{N}$. Assume that for any fixed, $\zeta_1, \zeta_2 \in \F_{\mathcal{W}}$,
$\zeta'_1, \zeta'_2 \in \F_{\theta}$, and fixed $Z \in \C$, we have
\be
\nmm{\nabla_{\hat{Y}}\ell(g(Z), f_{\zeta}(Z))} \leq B_{\ell}\text{,  }
\nmm{f_{\zeta}(Z)} \leq B_{\Phi}\text{, }
\ee
\be
\nmm{f_{\zeta_1}(Z)-f_{\zeta_2}(Z)} \leq \tilde{L}_{\Phi}d(\zeta_1, \zeta_2)\text{, and }
\nmm{f_{\zeta_1'}(Z)-f_{\zeta_2'}(Z)} \leq \tilde{L}_{\phi}d(\zeta_1', \zeta_2').
\ee
In addition, we have that,
\be
\sup_{\zeta \in \F_{\mathcal{W}}, \zeta' \in \F_{\theta}}\left|\mathbb{E}\left[\IP{\nabla_{\hat{Y}}\ell\left(g \circ \P_{\C}, f_{\zeta} \circ \P_{\C}\right)}{f_{\zeta'} \circ \P_{\C}}_{\mu}
- \IP{\nabla_{\hat{Y}}\ell\left(g, f_{\zeta}\right)}{f_{\zeta'}}_{\mu}\right]\right| = B_{plr}(\C).
\ee
Define 
\be
K := 4n_Y\nmL{\nabla_{\hat{Y}}\ell}L_{\phi}\sigma_X\sqrt{(\gamma^2 + \nmL{g}^2)\sigma_X^2 + \nmL{g}^2\sigma_{E|X}^2}.
\ee
Then for any $\epsilon \in \left[0, K\right]$,
\bd
\mathbb{P}\left(\sup_{\zeta \in \F_{\mathcal{W}}}: \left|\Omega_{\mu_N}^{\circ}\left(\nabla_{\hat{Y}}\ell\left(g, f_{\zeta}\right)\right)
- \Omega_{\mu}^{\circ}\left(\nabla_{\hat{Y}}\ell\left(g, f_{\zeta}\right)\right)\right| \geq \epsilon + B_{plr}(\C)\right) 
\ed
\be
\begin{split}
\leq 
\delta_{\C} + c\exp\Bigg(&\log\left(\C_{\F_{\mathcal{W}}}\left(\frac{\epsilon}{8\max\{\tilde{L}_{\phi}B_{\ell}, L\tilde{L}_{\Phi}B_{\Phi}\}}\right)\right) \\
& + \log\left(\C_{\F_{\theta}}\left(\frac{\epsilon}{8\max\{\tilde{L}_{\phi}B_{\ell}, L\tilde{L}_{\Phi}B_{\Phi}\}}\right)\right) -N\frac{\epsilon^2}{K^2}\Bigg),
\end{split}
\ee
for some positive constant, $c$ and $C_{\F_{\mathcal{W}}}(\nu)$ (and $C_{\F_{\theta}}(\nu)$) is the $\nu$-net
covering number of the set $\F_{\mathcal{W}} $(and $\F_{\theta}$).
\end{lemma}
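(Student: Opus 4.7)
\noindent
\textbf{Proof plan for Lemma \ref{lemma:cnc_plr}.} The plan is to mirror the structure of the equilibria lemma (Lemma \ref{lemma:cnc_eql}) but to handle the extra supremum baked into the polar by lifting the uniform bound from $\F_{\mathcal{W}}$ to the product space $\F_{\mathcal{W}}\times\F_{\theta}$.

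First, I would rewrite the polar as an explicit supremum, $\Omega_{q}^{\circ}(\nabla_{\hat Y}\ell(g,f_{\zeta})) = \sup_{\zeta'\in\F_{\theta}}\langle \nabla_{\hat Y}\ell(g,f_{\zeta}),\,f_{\zeta'}\rangle_{q}$, and then use the elementary inequality $|\sup_a F(a)-\sup_a G(a)|\le \sup_a|F(a)-G(a)|$ to reduce the quantity of interest to
\begin{align}
\sup_{\zeta\in\F_{\mathcal{W}},\,\zeta'\in\F_{\theta}}\Big|\langle \nabla_{\hat Y}\ell(g,f_{\zeta}),f_{\zeta'}\rangle_{\mu_N}-\langle \nabla_{\hat Y}\ell(g,f_{\zeta}),f_{\zeta'}\rangle_{\mu}\Big|.
\end{align}
This converts a one-sided quantity into a symmetric supremum that I can analyze by covering.

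Second, for a fixed $(\zeta,\zeta')$, I would reproduce the sub-Gaussian computation used in Lemma \ref{lemma:cnc_eql}: under Assumption \ref{ass:a5} combined with the Lipschitz continuity of $\nabla_{\hat Y}\ell$ and of $f_{\zeta}$, the vector $\nabla_{\hat Y}\ell(g(X,E),f_{\zeta}(X))$ is sub-Gaussian with proxy variance of order $\nmL{\nabla_{\hat Y}\ell}^{2}[(\gamma^{2}+\nmL{g}^{2})\sigma_{X}^{2}+\nmL{g}^{2}\sigma_{E|X}^{2}]$, and $f_{\zeta'}(X)$ (for $\zeta'\in\F_{\theta}$) is sub-Gaussian with proxy variance of order $L_{\phi}^{2}\sigma_{X}^{2}$. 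By Proposition \ref{prop:sg_se}, their inner product is sub-exponential with the parameter $K$ stated in the lemma, so Bernstein's inequality (Theorem 2.8.1 of \cite{vershynin_high-dimensional_2018}) yields the per-pair tail bound $\exp(-cN\min\{(\epsilon/K)^{2},\epsilon/K\})$, which simplifies on the restricted range $\epsilon\in[0,K]$.

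Third, I would upgrade to uniform concentration via a double covering. Setting $h(\zeta,\zeta'):=\langle \nabla_{\hat Y}\ell(g,f_{\zeta}),f_{\zeta'}\rangle$, I add and subtract $\langle \nabla_{\hat Y}\ell(g,f_{\zeta_{1}}),f_{\zeta'_{2}}\rangle$ and use boundedness on $\C$ together with the two Lipschitz properties from Assumption \ref{ass:a9} to obtain
\begin{align}
|h(\zeta_{1},\zeta'_{1})-h(\zeta_{2},\zeta'_{2})|\le B_{\ell}\tilde L_{\phi}\,d(\zeta'_{1},\zeta'_{2})+L\,B_{\Phi}\tilde L_{\Phi}\,d(\zeta_{1},\zeta_{2}),
\end{align}
so $h$ is $\max\{\tilde L_{\phi}B_{\ell},L\tilde L_{\Phi}B_{\Phi}\}$-Lipschitz in the $\ell_{\infty}$ product metric on $\F_{\mathcal{W}}\times\F_{\theta}$. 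Then I would apply the generic uniform-concentration device of Lemma \ref{lemma:conc_unf} with the product covering number bounded by $\C_{\F_{\mathcal{W}}}(\nu)\cdot\C_{\F_{\theta}}(\nu)$, and absorb the projection bias $B_{plr}(\C)$ that arises from restricting the concentration step to the event $\bigcap_{i}(X_{i},\epsilon_{i})\in\C$ (which fails with probability at most $\delta_{\C}$). Collecting these pieces yields the claimed bound on $\mathbb{P}(\E_{plr}^{c})$.

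The main obstacle is the bookkeeping in step three: the two covering radii arise from different Lipschitz constants ($\tilde L_{\phi}B_{\ell}$ for the $\zeta'$-direction and $L\tilde L_{\Phi}B_{\Phi}$ for the $\zeta$-direction), and I need to package them as a single scale $8\max\{\tilde L_{\phi}B_{\ell},L\tilde L_{\Phi}B_{\Phi}\}$ that appears inside both $\C_{\F_{\mathcal{W}}}$ and $\C_{\F_{\theta}}$. Keeping the sub-exponential constants compatible with the covering radius on the range $\epsilon\in[0,K]$ — so that the $\min\{\cdot,\cdot\}$ in Bernstein collapses to the Gaussian branch $(\epsilon/K)^{2}$ — is the only subtle part; the rest is a direct parallel to the equilibria proof.
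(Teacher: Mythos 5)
Your proposal is correct and follows the same route as the paper's proof: reduce the polar gap to a uniform supremum over $\F_{\mathcal{W}}\times\F_{\theta}$ via the elementary sup-difference inequality, establish per-pair sub-exponential concentration through Propositions \ref{prop:bisubg} and \ref{prop:sg_se}, verify the joint Lipschitz estimate $|g_{\theta_1}-g_{\theta_2}|\le B_\ell\tilde L_\phi\,d(\zeta_1',\zeta_2')+LB_\Phi\tilde L_\Phi\,d(\zeta_1,\zeta_2)$, and invoke Lemma \ref{lemma:conc_unf} with the product covering number. No meaningful departures from the paper's argument.
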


\begin{proof}
Recall the definition of the polar in Equation \ref{def:polar_func}:
\begin{align}
\Omega_{\mu_N}^{\circ}\left(\nabla_{\hat{Y}}\ell\left(g, f_{\zeta}\right)\right) 
&:= \sup_{\zeta' \in \F_{\theta}}\IP{\nabla_{\hat{Y}}\ell\left(g, f_{\zeta}\right)}{f_{\zeta'}}_{\mu_N},\\
\Omega_{\mu}^{\circ}\left(\nabla_{\hat{Y}}\ell\left(g, f_{\zeta}\right)\right) 
&:= \sup_{\zeta' \in \F_{\theta}}\IP{\nabla_{\hat{Y}}\ell\left(g, f_{\zeta}\right)}{f_{\zeta'}}_{\mu}.
\end{align}
Now, by taking the difference between the above two polars, we have
\be
\begin{split}
\Big|\Omega_{\mu_N}^{\circ}\left(\nabla_{\hat{Y}}\ell\left(g, f_{\zeta}\right)\right)
&- \Omega_{\mu}^{\circ}\left(\nabla_{\hat{Y}}\ell\left(g, f_{\zeta}\right)\right)\Big| \\
&= \left|\sup_{\zeta' \in \F_{\theta}}\IP{\nabla_{\hat{Y}}\ell\left(g, f_{\zeta}\right)}{f_{\zeta'}}_{\mu_N}
-\sup_{\zeta' \in \F_{\theta}}\IP{\nabla_{\hat{Y}}\ell\left(g, f_{\zeta}\right)}{f_{\zeta'}}_{\mu}
\right|.
\end{split}
\ee

Denote, $\zeta_{\mu}'^* = arg\sup_{\zeta' \in \F_{\theta}}\IP{\nabla_{\hat{Y}}\ell\left(g, f_{\zeta}\right)}{f_{\zeta'}}_{\mu}$
and $\zeta_{\mu_N}'^* = arg\sup_{\zeta' \in \F_{\theta}}\IP{\nabla_{\hat{Y}}\ell\left(g, f_{\zeta}\right)}{f_{\zeta'}}_{\mu_N}$,
then, by definition we have that

\bd
-\IP{\nabla_{\hat{Y}}\ell\left(g, f_{\zeta}\right)}{f_{\zeta_{\mu}'^*}}_{\mu_N} + \IP{\nabla_{\hat{Y}}\ell\left(g, f_{\zeta}\right)}{f_{\zeta_{\mu}'^*}}_{\mu} 
\leq \Omega_{\mu_N}^{\circ}\left(\nabla_{\hat{Y}}\ell\left(g, f_{\zeta}\right)\right)
- \Omega_{\mu}^{\circ}\left(\nabla_{\hat{Y}}\ell\left(g, f_{\zeta}\right)\right) \leq
\ed
\be
\IP{\nabla_{\hat{Y}}\ell\left(g, f_{\zeta}\right)}{f_{\zeta_{\mu}'^*}}_{\mu_N} - \IP{\nabla_{\hat{Y}}\ell\left(g, f_{\zeta}\right)}{f_{\zeta_{\mu}'^*}}_{\mu}.
\ee
Applying modulus on both sides we obtain
\be\label{eq:polar_max}
\begin{split}
\Big|\Omega_{\mu_N}^{\circ}\Big(\nabla_{\hat{Y}}\ell\Big(g, f_{\zeta}\Big)\Big)
&- \Omega_{\mu}^{\circ}\Big(\nabla_{\hat{Y}}\ell\Big(g, f_{\zeta}\Big)\Big)\Big|\\
&\leq \max\Big\{\Big|\IP{\nabla_{\hat{Y}}\ell\Big(g, f_{\zeta}\Big)}{f_{\zeta_{\mu}'^*}}_{\mu_N} - \IP{\nabla_{\hat{Y}}\ell\Big(g, f_{\zeta}\Big)}{f_{\zeta_{\mu}'^*}}_{\mu} \Big|, \\ 
&\Big|\IP{\nabla_{\hat{Y}}\ell\Big(g, f_{\zeta}\Big)}{f_{\zeta_{\mu_N}'^*}}_{\mu_N}
- \IP{\nabla_{\hat{Y}}\ell\Big(g, f_{\zeta}\Big)}{f_{\zeta_{\mu_N}'^*}}_{\mu}\Big|\Big\}\\
&\leq \sup_{\zeta' \in \F_{\theta}}\Big|
\IP{\nabla_{\hat{Y}}\ell\Big(g, f_{\zeta}\Big)}{f_{\zeta'}}_{\mu_N} - \IP{\nabla_{\hat{Y}}\ell\Big(g, f_{\zeta}\Big)}{f_{\zeta'}}_{\mu}\Big|.
\end{split}
\ee

Now, we have to compute a lower bound on
\be\label{eq:prob_polar}
\mathbb{P}\left(\sup_{\zeta' \in \F_{\theta}}\left|
\IP{\nabla_{\hat{Y}}\ell\left(g, f_{\zeta}\right)}{f_{\zeta'}}_{\mu_N} - \IP{\nabla_{\hat{Y}}\ell\left(g, f_{\zeta}\right)}{f_{\zeta'}}_{\mu}\right| \leq \epsilon\right).
\ee

The computation of Equation \eqref{eq:prob_polar} is similar to that of Lemma \ref{lemma:cnc_eql}.
We can re-write the concentration of polars and apply the monotonicty of probability
in inequality \eqref{eq:polar_max} by doing this we have
\bd
\mathbb{P}\left(\sup_{ \zeta \in \F_{\mathcal{W}}}: \left|\Omega_{\mu_N}^{\circ}\left(\nabla_{\hat{Y}}\ell\left(g, f_{\zeta}\right)\right)
- \Omega_{\mu}^{\circ}\left(\nabla_{\hat{Y}}\ell\left(g, f_{\zeta}\right)\right)\right| \geq \epsilon\right)
\ed
\be
\leq 
\mathbb{P}\left(\sup_{\zeta \in \F_{\Phi}, \zeta' \in \F_{\theta}}: \left|
\IP{\nabla_{\hat{Y}}\ell\left(g, f_{\zeta}\right)}{f_{\zeta'}}_{\mu_N} - \IP{\nabla_{\hat{Y}}\ell\left(g, f_{\zeta}\right)}{f_{\zeta'}}_{\mu}\right| \leq \epsilon\right).
\ee

As the data follow the sub-Gaussian distribution
from Proposition \ref{lemma:bisubg} and Proposition \ref{prop:sg_se}, we have
\bd
\IP{\underbrace{\nabla_{\hat{Y}}\ell\left(g, f_{\zeta}\right)}_{\sim SG\left(4\nmL{\nabla_{\hat{Y}}\ell}^2\left[(\nmL{f_{\zeta}}^2 + \nmL{g}^2)\sigma_X^2 + \nmL{g}^2\sigma_{E|X}^2\right]I_{n_Y \times n_Y}\right)}}{\underbrace{f_{\zeta'}}_{\sim SG(\nmL{f_{\zeta'}}^2\sigma_X^2I_{n_Y \times n_Y})}}_{\mu_N}
\ed
\begin{align}
\sim SE\left(2n_Y\nmL{\nabla_{\hat{Y}}\ell}\nmL{f_{\zeta'}}\sigma_X\sqrt{(\nmL{f_{\zeta}}^2 + \nmL{g}^2)\sigma_X^2 + \nmL{g}^2\sigma_{E|X}^2}\right).
\end{align}

From Assumption \ref{ass:a7} the class, $\F_{\Phi}$ has Lipschitz constant at most, $\gamma$.
From the assumption \ref{ass:a3} $\F_{\theta}$ has a Lipschitz constant at most
$\gamma_{\theta}$. Therefore, the inner product described above is concentrated as a consequence
of Theorem 2.8.1 from \cite{vershynin_high-dimensional_2018}.
Now for a fixed $\zeta \in \F_{\mathcal{W}}, \zeta' \in \F_{\theta}$, we have that 
\begin{align}
\mathbb{P}\left(\left|
\IP{\nabla_{\hat{Y}}\ell\left(g, f_{\zeta}\right)}{f_{\zeta'}}_{\mu_N} - \IP{\nabla_{\hat{Y}}\ell\left(g, f_{\zeta}\right)}{f_{\zeta'}}_{\mu}\right| \leq \epsilon\right)
\leq C\exp\left(-N\min\left\{\frac{\epsilon^2}{K^2}, \frac{\epsilon}{K}\right\}\right).
\end{align}
where, $K = 2n_Y\nmL{\nabla_{\hat{Y}}\ell}L_{\phi}\sigma_X\sqrt{(\gamma^2 + \nmL{g}^2)\sigma_X^2 + \nmL{g}^2\sigma_{E|X}^2}$.

Now we utilize Lemma \ref{lemma:conc_unf} to have this concentration uniformly for all, $\zeta \in \F_{\mathcal{W}}, 
\zeta' \in \F_{\theta}$. Set
\begin{align}
g_{\theta} = \IP{\nabla_{\hat{Y}}\ell(g, f_{\zeta})}{f_{\zeta'}}.
\end{align}
Recall the below items:
\begin{enumerate}
\item For a fixed $Z \in \C$ we have $\forall \zeta \in \F_{\mathcal{W}}:
\nmm{\nabla_{\hat{Y}}\ell(g(Z), f_{\zeta}(Z))} \leq B_{\ell}$.
\item For a fixed $Z \in \C$ we have $\forall \zeta \in \F_{\mathcal{W}}:
\nmm{f_{\zeta}(Z)} \leq B_{\Phi}$.
\item For a fixed $Z \in \C$ we have $\forall \zeta, \zeta' \in \F_{\mathcal{W}}:
\nmm{f_{\zeta}(Z)-f_{\zeta'}(Z)} \leq \tilde{L}_{\Phi}d(\zeta, \zeta')$.
\item For a fixed $Z \in \C$ we have $\forall \zeta, \zeta' \in \F_{\theta}:
\nmm{f_{\zeta}(Z)-f_{\zeta'}(Z)} \leq \tilde{L}_{\phi}d(\zeta, \zeta')$.
\item For a any $\hat{Y}_1, \hat{Y}_2 \in \R^{n_Y}$ we have  $\nmm{\nabla_{\hat{Y}}\ell(Y, \hat{Y}_1)
-\nabla_{\hat{Y}}\ell(Y, \hat{Y}_2)} \leq L\nmm{\hat{Y}_1 - \hat{Y}_2}$.
\item For a fixed $\zeta \in \F_{\mathcal{W}}, \zeta' \in \F_{\theta}$,
\be
\sup_{\zeta \in \F_{\mathcal{W}}, \zeta' \in \F_{\theta}}\left|\mathbb{E}\left[\IP{\nabla_{\hat{Y}}\ell\left(g \circ \P_{\C}, f_{\zeta} \circ \P_{\C}\right)}{f_{\zeta'} \circ \P_{\C}}_{\mu}
- \IP{\nabla_{\hat{Y}}\ell\left(g, f_{\zeta}\right)}{f_{\zeta'}}_{\mu}\right]\right| = B_{plr}(\C).
\ee
\end{enumerate}

Now we check the Lipschitzness of $g$:
\begin{eqnarray*}
|g_{\theta_1} - g_{\theta_2}| &=& 
\left|\IP{\nabla_{\hat{Y}}\ell(g, f_{\zeta_1})}{f_{\zeta'_1}} - \IP{\nabla_{\hat{Y}}\ell(g, f_{\zeta_2})}{f_{\zeta'_2}}\right|\\
&=& \left|\IP{\nabla_{\hat{Y}}\ell(g, f_{\zeta_1})}{f_{\zeta'_1}-f_{\zeta'_2}} - \IP{\nabla_{\hat{Y}}\ell(g, f_{\zeta_2})-\nabla_{\hat{Y}}\ell(g, f_{\zeta_1})}{f_{\zeta'_2}}\right|,\\
&\leq& \left|\IP{\nabla_{\hat{Y}}\ell(g, f_{\zeta_1})}{f_{\zeta'_1}-f_{\zeta'_2}}\right| +
\left|\IP{\nabla_{\hat{Y}}\ell(g, f_{\zeta_2})-\nabla_{\hat{Y}}\ell(g, f_{\zeta_1})}{f_{\zeta'_2}}\right|,\\
&\leq& \nmm{\nabla_{\hat{Y}}\ell(g, f_{\zeta_1})}
\nmm{f_{\zeta'_1}-f_{\zeta'_2}},
+ \nmm{\nabla_{\hat{Y}}\ell(g, f_{\zeta_2})-\nabla_{\hat{Y}}\ell(g, f_{\zeta_1})}\nmm{f_{\zeta'_2}},\\
&\leq& B_{\ell}\tilde{L}_{\phi}d(\zeta'_1, \zeta'_2) + B_{\Phi}L\tilde{L}_{\Phi}d(\zeta_1, \zeta_2),\\
&\leq& 2\max\{\tilde{L}_{\phi}B_{\ell}, L\tilde{L}_{\Phi}B_{\Phi}\}\max\{d(\zeta'_1, \zeta'_2), d(\zeta_1, \zeta_2)\}.
\end{eqnarray*}

Now, we have a product of two metric spaces whose metric
is a maximum of individual metrics, therefore simply
we can upper bound the covering number by product
of these two metric spaces.,i.e.,
\be
\N(\F_{\mathcal{W}} \times \F_{\theta}, \nmm{\cdot}_{\infty, d(., .)}, \nu)
\leq \N(\F_{\mathcal{W}}, d(., .), \nu)\N(\F_{\theta}, d(., .), \nu).
\ee
From Lemma \ref{lemma:conc_unf} we have that
\begin{align}
\mathbb{P}\left(\sup_{\zeta \in \F_{\mathcal{W}}, \zeta' \in \F_{\theta}}: \left|
\IP{\nabla_{\hat{Y}}\ell\left(g, f_{\zeta}\right)}{f_{\zeta'}}_{\mu_N} - \IP{\nabla_{\hat{Y}}\ell\left(g, f_{\zeta}\right)}{f_{\zeta'}}_{\mu}\right| \geq \epsilon + B_{plr}(\C)\right)
\end{align}
\be
\begin{split}
\leq 
\delta_{\C} + C\exp\Big(&\log\left(\C_{\F_{\mathcal{W}}}\left(\frac{\epsilon}{8\tilde{L}_{\Phi}\max\{B_{\ell}, LB_{\Phi}\}}\right)\right) \\
&+ \log\left(\C_{\F_{\theta}}\left(\frac{\epsilon}{8\tilde{L}_{\Phi}\max\{B_{\ell}, LB_{\Phi}\}}\right)\right) -N\min\left\{\frac{\epsilon^2}{4K^2}, \frac{\epsilon}{2K}\right\}\Big).
\end{split}
\ee
This completes our result.
\end{proof}

\section{NUMERICAL EXPERIMENTS}\label{sec:apdx_ms}

\bfig[ht!] \label{fig:simulations}
\centering
\includegraphics[width=\textwidth]{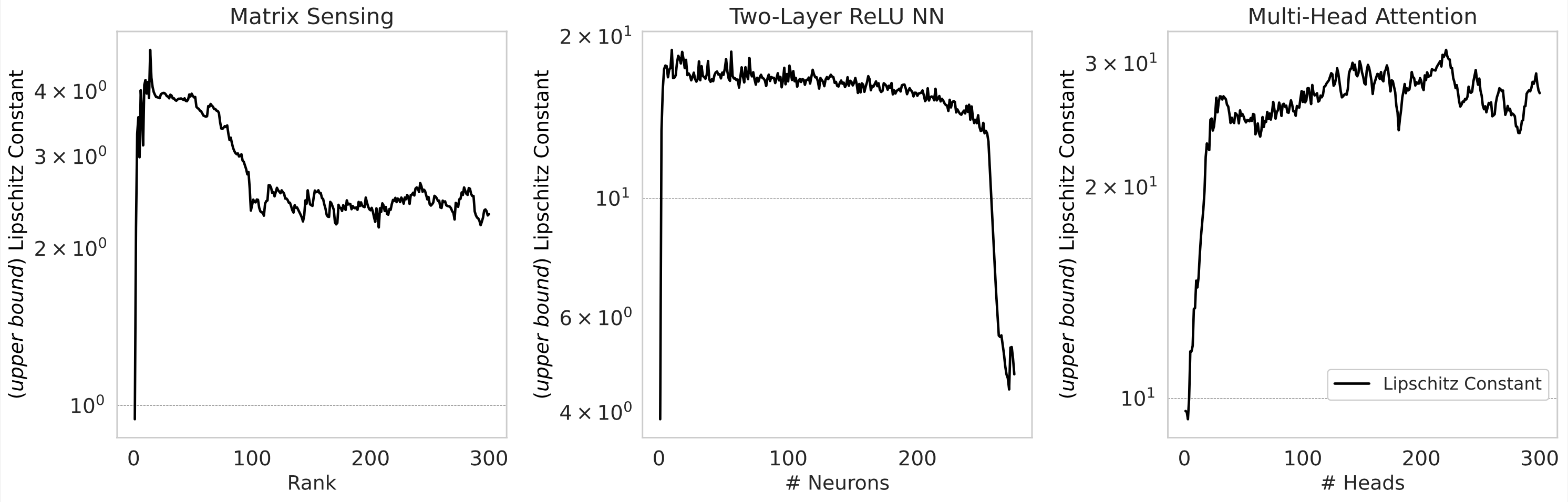}
\caption{Numerical simulations of the Lipschitz constant (or upper bound thereof) obtained for different model widths $(r)$.}
\efig

In this section, we present numerical simulations for the problems of low-rank matrix sensing,
two-layer ReLU neural networks, and multi-head attention. In each simulation shown in 
Figure \ref{fig:simulations}, we generated data using a teacher model with random initialization
of parameters, $Y = \Phi_{r^*}(\{W_j\})(X) + \epsilon$, where $r^* = 64$, 
$X \sim \N(0, (\sigma_X^2/n_X)I)$, and $\epsilon \in \N(0, (\sigma_E^2/n_Y)I)$. We used gradient descent to
reach a stationary point for each \( R \) (rank, number of neurons, or number of heads), starting
from 1 and increasing up to 300. The first factor was initialized with small-scale random values.
For each subsequent factor, we initialized the new factor with the supremum obtained from the 
polar equation \eqref{eq:polar}, following the algorithm in \cite{haeffele-vidal-arxiv15,haeffele-vidal-tpami20}. 

In each problem shown in Figure \ref{fig:simulations}, we plot the upper bounds on the Lipschitz constant for these problems. For matrix sensing, the Lipschitz constant is 
trivially upper-bounded by $\nmm{\mathbf{UV}^T}_2$; for the ReLU neural network, it is upper-bounded 
by $\nmm{\mathbf{U}\|_2 \|\mathbf{V}}_2$; and for multi-head attention, it is upper-bounded 
by $\sum_{j=1}^{r} \nmm{\mathbf{V_j}}_2$. We can observe from Figure \ref{fig:simulations}
upper bounds on the Lipschitz constants are uniformly bounded, indicating that our 
Assumption \ref{ass:a7} is realistic and holds empirically. 

We conjecture that it is possible to show that the Lipschitz constants are uniformly bounded
for any stationary. However, the analysis of this is beyond the scope of this work. 
Similar analyses based on gradient descent can be found in \cite{oymak-mahdi-icml19}.

\section{OTHER RELATED WORKS}

In this section, we provide a comprehensive study of the related works of the applications
that are of the concern in this work.

\textbf{Statistical Learning Theory (SLT):}  SLT provides a theoretical framework for analyzing
generalization error, often producing results of the form
\eqref{eq:slt}. The seminal work by \cite{vapnik-nature00} established a systematic approach to deriving bounds
of this nature. Over time, various approaches in SLT have attempted to estimate $\epsilon(\F, N, \delta)$, as
summarized in Table \ref{tab:slt_framework}. A recurring challenge in these bounds is the need to quantify the
“capacity” of the model’s hypothesis class, which is particularly difficult for DNNs.

While there have been attempts to estimate the VC-dimension, such as those based on the norm of the parameters
\citep{neyshabur-et-al-nips17}, the resulting bounds heavily depend on the norm of the parameters. Consequently,
it remains unclear how to accurately estimate the sample complexity of models when varying the depth or width of
DNNs. More recent work, such as \cite{imaizumi-schmidt-hieber-tit23}, presents bounds that are tight but still 
dependent on the norm of the weights, assuming that the SGD iterates converge to a specific class of parameters.

Another line of research by \cite{muthukumar-sulam-colt23} explores bounds that leverage the sparsity of feed-
forward neural networks. However, there is still a lack of data-dependent bounds that do not rely on capacity
estimates for models trained on random labels.

\be\label{eq:slt}
\mathbb{P}\left(\sup_{f \in \F}\left|\mathbb{E}_{X, Y}\left[\ell(Y, f(X))\right] - \frac{1}{N}\sum_{i=1}^{N}\ell(Y_i, f(X_i))\right|
\leq \epsilon(\F, \delta, N)\right) \geq 1 - \delta
\ee

\begin{table}[h!]
\centering
\begin{tabular}{|c|c|c|}
\hline
Description & $\epsilon(\F, N, \delta)$\\
\hline
\hline
Vapnik-Chernoviks Dimension, \citep{vapnik-nature00} & $\sqrt{\frac{\textrm{VCdim}(\F) - \log(\delta)}{N}}$\\
\hline
Rademacher Complexity, \citep{bartlett-medelson-jmlr01} & $R_N(\F) + \sqrt{\frac{-\log(\delta)}{N}}$\\
\hline
PAC-Bayes Bounds, \citep{mcallester-colt99} & $\frac{KL(Q||P) - \log(\delta)}{N}$\\
\hline
Gaussian Complexity, \citep{bartlett-medelson-jmlr01} & $G_N(\F) + \sqrt{\frac{-\log(\delta)}{N}}$\\
\hline
Information-theoretic Bounds, \citep{he-et-al-arxi24} & $\frac{1}{N}\sum_{i=1}^{N}\sqrt{I(W;(X_i, Y_i))}$\\
\hline
Algorithmic Stability, \citep{feldman-vonrdrak-arxiv19} & $\beta + \sqrt{\frac{-\log(\delta)}{N}}$\\
\hline
\end{tabular}
\caption{SLT frameworks (in chronological order)}
\label{tab:slt_framework}
\end{table}

\textbf{Matrix recovery:} This is a fundamental problem in signal processing, where we seek to recover 
a matrix by indirect measurements, like random measurements, and random entry access. We typically have limited measurements;
the problem itself is ill-posed when reconstructing the matrix. However, if the underlying matrix has certain special
structures like low-rankedness, or sparsity in entries, the problem becomes tractable so as to reconstruct the true matrix.
In practice, the problem tends to have low-rankedness, therefore having immense literature in this area, our work
also presents such results, considering the optimization. 

Let, $Y_i = \IP{M^*}{X_i} + \epsilon \in \R$, where, $X_i \in \R^{m \times n} (m \geq n)$ is Gaussian
entried matrix, $\epsilon \sim \N(0, \sigma^2)$ and $M^* \in \R^{m \times n}$ is a $r^*$-rank matrix. Consider the below
problem

\begin{table*}[h]
\centering
\renewcommand{\arraystretch}{1.5} 
\setlength{\tabcolsep}{8pt} 
\begin{tabular}{c|c}
\begin{minipage}{0.45\textwidth}
\centering
\small 
\begin{equation}\label{eq:opt_MS_r}
\begin{aligned}
\min_{M \in \R^{m \times n}} \quad \text{rank}(M)\\
\text{s.t. } \quad \|Y_i - \IP{M}{X_i} \| \leq \delta
\end{aligned}
\end{equation}
\end{minipage}
&
\begin{minipage}{0.45\textwidth}
\centering
\small 
\begin{equation}\label{eq:opt_MS_nuc}
\begin{aligned}
\min_{M \in \R^{m \times n}} \quad \|M\|_*\\
\text{s.t. } \quad \|Y_i - \IP{M}{X_i} \| \leq \delta
\end{aligned}
\end{equation}
\end{minipage}
\\
\hline
\begin{minipage}{0.45\textwidth}
\centering
\small 
\begin{equation}\label{eq:opt_MS_bm}
\begin{aligned}
\min_{r \in \mathbb{N}, U \in \R^{m \times r}, V \in \R^{n \times r}} \quad \|UV^T\|_*\\
\text{s.t. } \quad \|Y_i - \IP{UV^T}{X_i} \| \leq \delta
\end{aligned}
\end{equation}
\end{minipage}
&
\begin{minipage}{0.45\textwidth}
\centering
\small 
\begin{equation}\label{eq:opt_MS_bm_eq}
\begin{aligned}
\min_{r \in \mathbb{N}, U \in \R^{m \times r}, V \in \R^{n \times r}} \quad \frac{1}{2}\left(\|U\|_F^2 + \|V\|_F^2\right)\\
\text{s.t. } \quad \|Y_i - \IP{UV^T}{X_i} \| \leq \delta
\end{aligned}
\end{equation}
\end{minipage}
\\
\end{tabular}
\caption{Optimization problems for matrix sensing}
\label{tab:ms_opt}
\end{table*}

\be\label{eq:opt_MS}
\min_{r \in \mathbb{N}, U \in \R^{m \times r}, V \in \R^{n \times r}} \nmm{Y_i - \IP{UV^T}{X_i}}^2 + \frac{\l}{2}\left[\nmF{U}^2 + \nmF{V}^2\right]
\ee

The optimization problem in \eqref{eq:opt_MS_r} is non-convex due to its rank-minimization nature, an NP-HARD problem. 
However, under certain specific conditions on the measurement matrices $X_i$, the convex relaxation \eqref{eq:opt_MS_nuc}
can recover solutions to \eqref{eq:opt_MS_r}, as demonstrated in \cite{recht-et-al-cdc08}. Solving the convex program 
\eqref{eq:opt_MS_nuc} requires computing the Singular Value Decomposition (SVD), which has a computational complexity of
$\mathcal{O}(mn^2)$.

To mitigate this computational burden, the Burer-Monteiro (BM) factorization \citep{burer-monterio-mp03} is employed, yielding
the bilinear factorization in the non-convex program \eqref{eq:opt_MS_bm}. This approach is more efficient than \eqref{eq:opt_MS_nuc} 
because it introduces an implicit rank constraint, $rank(UV^T) \leq \min(n, r)$, which reduces the runtime of SVD to 
$\mathcal{O}((m+n)r^2)$. Additionally, the equivalence between the nuclear norm and the sum of Frobenius norms, as shown 
by \cite{paris-et-al-nips20}, further accelerates the optimization process, reducing the complexity to $\mathcal{O}((m+n)r)$.

While the BM factorization program \eqref{eq:opt_MS_bm} is non-convex, in contrast to the convex program \eqref{eq:opt_MS_nuc},
gradient descent (GD) algorithms typically guarantee only local minima for non-convex optimization problems \citep{reddy-arxiv23}.
However, \cite{ge-et-al-nips17} has proven that the program \eqref{eq:opt_MS_bm} has no spurious local minima, and any local 
minimum is indeed a global minimum. Numerous studies \citep{jia-et-al-nips23} have explored the optimization landscapes and the
convergence to global minima.

Our work primarily focuses on the generalization capabilities of the BM factorization program \eqref{eq:opt_MS}, which 
represents the Lagrangian form of the program \eqref{eq:opt_MS_bm}.  Table \ref{tab:lr_ms} summarizes
the results from the literature that provide matrix recovery guarantees; from this we 
can suggest there are no bounds in the literature for low-rank matrix recovery with nuclear norm regularization
under noisy settings with generic parameterization. Our work presents results first of its kind.

\begin{table*}[h]
\label{tab:lr_ms}
\centering
\renewcommand{\arraystretch}{1.5} 
\setlength{\tabcolsep}{6pt} 
\resizebox{\textwidth}{!}{
\begin{tabular}{|c|c|c|c|}
\hline
\textbf{Measurement Type} & \textbf{Scenario} & \textbf{Reference} & \textbf{Result} \\
\hline
\textbf{Exact}
& Under-Parameterized ($r < r^*$) & N/A & N/A \\
\cline{2-4}
& Exactly-Parameterized ($r = r^*$) & N/A & Not directly available. \\
\cline{2-4}
& Over-Parameterized ($r > r^*$) & \citep{stoger-soltanolkotabi-arxiv22} & $\nmm{UU^T - M^*}_F \lesssim {r^*}^{1/8}(r-r^*)^{3/8}$ when $r \in (r^*, 2r^*)$. \\
\cline{2-4}
& Generic Parameterization ($r \geq 1$) & \citep{jin-et-al-icml23} & GD learns rank incrementally, $\nmm{M^* - UU^T}_F \lesssim \alpha^{\frac{1}{C_2\k_*^2}}$, but analysis is algorithmic. \\
\cline{2-4}
& SDP Relaxation (\textit{Full SDP Matrix}) & N/A & Not directly available. \\
\hline
\textbf{Noisy}
& Under-Parameterized ($r < r^*$) & N/A & N/A \\
\cline{2-4}
& Exactly-Parameterized ($r = r^*$) & \citep{ma-et-al-fcm20} & $\nmm{M^* - UU^T}_F \lesssim \sqrt{\frac{\log(m)}{N}}$ under RIP assumptions $\delta_{4r^*} \leq 0.1$. \\
\cline{3-4}
& & \citep{negahban-wainwright-arxiv09} & $\nmm{\hat{M} - M}_F \lesssim \sqrt{r^*\frac{m + n}{N}}$. \\
\cline{2-4}
& Over-Parameterized ($r > r^*$) & \citep{ma-et-al-fcm20} & $\nmm{M^* - UU^T}_F \lesssim \sqrt{\delta_{r+r^*}\nmm{M^*}_2}$. \\
\cline{2-4}
& Generic Parameterization ($r \geq 1$) & N/A & N/A \\
\cline{2-4}
& SDP Relaxation (\textit{Full SDP Matrix}) & \citep{candes-yaniv-arxiv10} & $\nmm{\hat{M} - M^*}_F \lesssim \sqrt{nr^*/N}$ under RIP assumptions. \\
\cline{3-4}
& & \citep{koltchinskii-et-al-annals11} & $\nmm{\hat{M} - M}_F \lesssim \frac{mnr^*\log(N)}{N}$ under uniform noisy measurements. \\
\hline
\end{tabular}
}
\caption{Summary of Related Works on Matrix Recovery. N/A is an acronym for "Not Available".}
\label{tab:related_works}
\end{table*}

\textbf{Transformers:} The remarkable success of Large Language Models (LLMs) \citep{gemini} can 
largely be attributed
to their foundational architecture—Transformers \citep{vaswani-et-al-arxiv23}. The optimization dynamics of 
Transformers have been a subject of extensive recent research \citep{bordlen-arxiv24}, \citep{singh-arxiv23},
\citep{yang-arxiv22}, \citep{tian-arxiv23}, \citep{nichani-arxiv24}. Although Transformers exhibit impressive 
generalization capabilities in practical applications \citep{zhou-arxiv2024}, there is still a significant 
gap in the theoretical analysis of their generalization error.

To apply classical SLT bounds, one must determine the capacities of the 
function classes induced by Transformers. Previous attempts, such as in \citep{edelman-arxiv2022}, have 
made progress but were limited to scenarios where input data is bounded. In contrast, our work extends 
these results to settings where the inputs are not necessarily bounded.

Another line of research \citep{li-arxiv23b}, \citep{deora-arxiv2023} has provided bounds that depend on
step sizes and initialization choices for Gradient Descent (GD). For instance, \cite{li-arxiv23b} offered 
bounds within the context of in-context learning \citep{zhang-arxiv23}, yet without evaluating the
capacities of the stable algorithms used to train these Transformers.

In the broader literature, existing studies on generalization bounds often rely on strong assumptions, 
such as (i) bounded input data, (ii) algorithmic stability in some defined sense, and (iii) Lipschitz 
continuity of the loss function (which does not hold globally for mean squared error). Our results address
these limitations by providing near-tight sample complexity bounds, offering a more comprehensive 
understanding of generalization in Transformer models.

\newpage
\section{PRELIMINARIES}

This section provides preliminaries of convex analysis and concentration of measure.

\subsection{Convex Functions}

\begin{definition}[$L^2$ functions]\label{def:l2}
A function $f : \X \to \Y$ is said to be square integrable on measure $\mu$, i.e., $L^2(\mu)$ if and only if,
\begin{align}
\IP{f}{f}_{\mu} = \int_{x \in \X}\IP{f(x)}{f(x)}_{\Y}d\mu(x) < \infty.
\end{align}
\end{definition}

\begin{definition}[Convex Set, \citep{rockafellar_convex_1970}]\label{def:cvx_set}
A set $\C$ is said to be convex if and only if $\forall f, g \in \C$, $\alpha  f + (1-\alpha) g \in \C; \forall \alpha \in [0, 1]$.
\end{definition}

\begin{definition}[Convex functions, \citep{rockafellar_convex_1970}]\label{def:cvx_func}
A function, $\Omega$ is said to be convex if and only if $dom(\Omega)$ is convex and $\forall f, g \in dom(\Omega)$ and 
any $\alpha \in [0, 1]$.
\begin{align}
\Omega(\alpha f + (1-\alpha) g) \leq \alpha \Omega (f) + (1-\alpha) \Omega(g).
\end{align}
\end{definition}

\begin{definition}[Gauge function, \citep{rockafellar_convex_1970}]\label{def:guage}
The gauge function or the Minkowski functional is defined in a set $\C \in L^2(\mu)$ for a point $f$ as follows,
\begin{align}
\sigma_{\C}(f) := \inf\left\{t \geq 0;\text{ such that }f \in t\textrm{conv}(\C)\right\}.
\end{align}
\end{definition}

\begin{definition}[Polar Set, \citep{rockafellar_convex_1970}]\label{def:polar_set}
The polar set of any set $\C \subseteq L^2(\mu)$ is given be
\begin{align}
\C^{\circ} := \left\{g \in L^2(\mu):\text{ such that }\IP{g}{f}_{\mu} \leq 1; \forall f \in \C\right\}.
\end{align}
\end{definition}

\begin{proposition}[Polar Properties]\label{prop:polar_prop}
\end{proposition}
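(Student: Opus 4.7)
The plan is to establish the standard polar-duality properties for the induced regularizer $\Omega_q(\cdot)$ and its polar $\Omega_q^\circ(\cdot)$. Based on how Proposition \ref{prop:polar_prop} is invoked earlier (in the proof of Lemma \ref{lemma:opt_gap} to assert $\IP{f}{g}_q \leq \Omega_q(f)\Omega_q^\circ(g)$, and in the characterization of optimal factorizations cited from \cite{haeffele2017global}), the conclusions to prove are: (i) the Fenchel--Young/Cauchy--Schwarz type inequality above; (ii) $\Omega_q^\circ$ is itself convex and positively one-homogeneous on $L^2(q)$; and (iii) a bipolar identity $\Omega_q^{\circ\circ} = \Omega_q$ under lower semicontinuity.

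First I would prove the Fenchel--Young inequality, since this is the property actively used in the rest of the paper. Given any $f \in L^2(q)$ with $\Omega_q(f) < \infty$, pick an $\epsilon$-optimal factorization $(r,\{W_j\})$ satisfying $f = \Phi_r(\{W_j\})$ and $\Theta_r(\{W_j\}) \leq \Omega_q(f) + \epsilon$. By Assumption \ref{ass:a2}, for each $j$ with $\theta(W_j) > 0$ I can choose a non-negative scaling $\beta_j = \theta(W_j)^{-1/p}$ (acting only on the homogeneous sub-parameters $\mathbf{k}_j$) to produce $\hat{W}_j$ with $\theta(\hat{W}_j) = 1$ and $\phi(W_j) = \theta(W_j)\,\phi(\hat{W}_j)$. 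Then
\begin{align*}
\IP{g}{f}_q = \sum_{j=1}^{r} \theta(W_j)\,\IP{g}{\phi(\hat{W}_j)}_q \leq \Theta_r(\{W_j\})\sup_{\theta(W)\leq 1}\IP{g}{\phi(W)}_q,
\end{align*}
which equals $\Theta_r(\{W_j\})\,\Omega_q^\circ(g)$. Sending $\epsilon \to 0$ gives $\IP{g}{f}_q \leq \Omega_q(f)\,\Omega_q^\circ(g)$. The edge cases $\Omega_q(f) = 0$ (forcing $f \equiv 0$ via Assumption \ref{ass:a2} and the boundedness clause there) and $\Omega_q(f) = \infty$ are handled separately, with the latter trivial whenever $\Omega_q^\circ(g) > 0$.

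Next I would verify convexity and one-homogeneity of $\Omega_q^\circ$. Both follow immediately from the defining formula $\Omega_q^\circ(g) = \sup_{\theta(W)\leq 1}\IP{g}{\phi(W)}_q$: a pointwise supremum of linear functionals in $g$ is convex, and factoring a non-negative scalar out of the inner product gives positive homogeneity. For the bipolar identity, one direction $\Omega_q \geq \Omega_q^{\circ\circ}$ follows from step one by applying Fenchel--Young with $\phi(W)$ (for $\theta(W)\leq 1$) as the test function and taking the supremum. The reverse inequality requires a Hahn--Banach separation argument in the Hilbert space $L^2(q)$, leveraging Proposition \ref{prop:ir_cvx} (convexity of $\Omega_q$) together with its lower semicontinuity.

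The main obstacle is the bipolar identity in the infinite-dimensional setting: the current framework does not explicitly assert lower semicontinuity of $\Omega_q$, and separation arguments may fail if $\F_\Phi$ is not closed in $L^2(q)$. For the Fenchel--Young inequality and the homogeneity/convexity of $\Omega_q^\circ$ alone, the proof is essentially mechanical once the rescaling permitted by Assumption \ref{ass:a2} is carefully set up. Since these are the only polar properties actually invoked downstream, I would state the bipolar identity either under an added closure/lsc hypothesis or defer it to standard references, keeping the proof self-contained for the two properties the paper genuinely relies upon.
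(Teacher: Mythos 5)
The proposition you are being asked to compare against is in fact a placeholder in the paper's source: it has no statement and no proof. The paper invokes it only once, in the proof of Lemma~\ref{lemma:opt_gap}, to assert the Fenchel--Young/Cauchy--Schwarz type inequality $\IP{f}{g}_q \leq \Omega_q(f)\,\Omega_q^{\circ}(g)$. You have therefore had to reverse-engineer the intended content, and you have done so correctly: the inequality is the property that does downstream work, and your proof of it is sound. The rescaling $\beta_j = \theta(W_j)^{-1/p}$ on the homogeneous sub-parameters (valid by Assumption~\ref{ass:a2}) puts each active factor on the $\theta \leq 1$ slab, the linearity of the inner product together with $\IP{g}{\phi(\hat W_j)}_q \leq \Omega_q^\circ(g)$ gives $\IP{g}{f}_q \leq \Theta_r(\{W_j\})\,\Omega_q^\circ(g)$, and sending $\epsilon \to 0$ recovers the infimum. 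Your handling of the degenerate cases is also right: the boundedness clause of Assumption~\ref{ass:a2} ensures $\|\phi(W)(X)\| \lesssim \theta(W)$, which forces $\Omega_q(f) = 0 \Rightarrow f \equiv 0$ and rules out factors with $\theta(W_j) = 0$ but $\phi(W_j) \neq 0$ from causing trouble. Your observation that the bipolar identity $\Omega_q^{\circ\circ} = \Omega_q$ would require lower semicontinuity (which the paper never asserts, and which need not hold if the image $\F_\Phi$ is not closed in $L^2(q)$) is a genuine caveat, and it is appropriate that you would state the bipolar result conditionally or defer it, since it is not used anywhere in the argument. The only thing I would add to your write-up is an explicit remark that $\Omega_q^\circ(g) \geq 0$ always (take $W$ with $\theta(W) = 0$, $\phi(W) = 0$, which exists by scaling $\mathbf{k} \to 0$), so the right-hand side of Fenchel--Young is well-defined as a product in $[0,\infty]$.
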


\begin{definition}[Polar function, \citep{rockafellar_convex_1970}]\label{def:polar_func}
The polar function of any gauge function, $\sigma$ defined in the set $\C \subseteq L^2(\mu)$ is given be
\begin{align}
\sigma^{\circ}_{\C}(g) := \sigma_{\C^{\circ}}(g).
\end{align}
\end{definition}

\begin{definition}[Fenchal dual, \citep{rockafellar_convex_1970}]\label{def:dual}
The fenchal-dual for any $\mu$-measurable function, $\Omega$ evaluated at $g \in L^2(\mu)$ is defined by,
\begin{align}
\Omega^*(g) := \sup_{f \in L^2(\mu)} \IP{g}{f}_{\mu} - \Omega(f).
\end{align}
\end{definition}

\begin{lemma}[First Convexity, \citep{rockafellar_convex_1970}]\label{lemma:fo_cvx}
Any function $\Omega$ that is first-order differentiable, $\Omega \in \C^1$ is convex if and only if for any $f, g \in dom(\Omega)$
\begin{align}
\Omega(f) \geq \Omega(g) + \IP{\nabla \Omega(g)}{f-g}_{\mu}.
\end{align}
\end{lemma}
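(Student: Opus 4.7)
The plan is to establish both directions of the equivalence via standard convex-analysis arguments, with the only subtlety being that we are working in a possibly infinite-dimensional $L^2(\mu)$ space rather than $\mathbb{R}^n$, so directional derivatives must be interpreted accordingly.

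For the forward direction, I would assume $\Omega$ is convex and $C^1$. Fix $f, g \in \mathrm{dom}(\Omega)$ and $t \in (0,1]$. Since $\mathrm{dom}(\Omega)$ is convex, the point $g + t(f-g) = tf + (1-t)g$ lies in the domain, and convexity (Definition \ref{def:cvx_func}) gives $\Omega(g + t(f-g)) \leq t\Omega(f) + (1-t)\Omega(g)$. Subtracting $\Omega(g)$ and dividing by $t>0$ yields
\begin{equation}
\frac{\Omega(g + t(f-g)) - \Omega(g)}{t} \leq \Omega(f) - \Omega(g).
\end{equation}
Letting $t \downarrow 0$, the left side converges to the directional derivative $\IP{\nabla \Omega(g)}{f-g}_\mu$ by $C^1$-differentiability, giving the desired inequality.

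For the reverse direction, I would assume the gradient inequality holds for all $f,g \in \mathrm{dom}(\Omega)$ and derive convexity. Fix $f_1, f_2 \in \mathrm{dom}(\Omega)$ and $\alpha \in [0,1]$, and set $h := \alpha f_1 + (1-\alpha)f_2$, which lies in $\mathrm{dom}(\Omega)$ since the domain is convex. Applying the hypothesis twice at the base point $h$, once with $f = f_1$ and once with $f = f_2$, gives
\begin{align}
\Omega(f_1) &\geq \Omega(h) + \IP{\nabla \Omega(h)}{f_1 - h}_\mu, \\
\Omega(f_2) &\geq \Omega(h) + \IP{\nabla \Omega(h)}{f_2 - h}_\mu.
\end{align}
Multiplying the first by $\alpha$, the second by $1-\alpha$, and summing, the inner-product terms collapse because $\alpha(f_1 - h) + (1-\alpha)(f_2 - h) = h - h = 0$, yielding $\alpha \Omega(f_1) + (1-\alpha)\Omega(f_2) \geq \Omega(h)$, which is exactly the convexity inequality.

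The main (very mild) obstacle is the infinite-dimensional setting: I need $\nabla \Omega(g)$ to be an element of $L^2(\mu)$ representing the Fréchet/Gâteaux derivative so that $\IP{\nabla\Omega(g)}{f-g}_\mu$ is well-defined and equals the limit of the difference quotient. This is already built into the assumption $\Omega \in C^1$ together with the inner product structure $\IP{\cdot}{\cdot}_\mu$ introduced in Definition \ref{def:l2}, so no additional machinery is needed. Everything else is a direct transcription of the finite-dimensional proof.
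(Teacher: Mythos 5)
Your proof is correct and is exactly the standard argument for this classical characterization, which the paper itself does not prove but simply cites to Rockafellar; both directions (difference-quotient limit for necessity, doubling the gradient inequality at the convex combination for sufficiency) are the textbook route, and your remark that $\C^1$ in the $L^2(\mu)$ setting means the G\^ateaux/Fr\'echet derivative is represented by an element of $L^2(\mu)$ is the right way to handle the infinite-dimensional setting. The only point worth flagging is that in the ``if'' direction the convexity of $\mathrm{dom}(\Omega)$ cannot be deduced from the gradient inequality and must be taken as a standing hypothesis (as it implicitly is in the paper's Definition of convex functions), which you assert but should state explicitly.
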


\begin{lemma}[Strongly Convex, \citep{rockafellar_convex_1970}]\label{lemma:strong_cvx}
Any function $\Omega$ that is first-order differentiable, $\Omega \in \C^1$ is said to be $\l(\geq 0)$-strongly convex 
if and only if for any $f, g \in dom(\Omega)$
\begin{align}
\Omega(f) \geq \Omega(g) + \IP{\nabla \Omega(g)}{f-g}_{\mu} + \frac{\l}{2}\nmm{f-g}_{\mu}^2.
\end{align}
\end{lemma}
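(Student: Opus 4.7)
The plan is to establish the equivalence as a two-sided implication between the first-order inequality stated in the lemma and the ``strong midpoint'' characterization obtained by strengthening Definition \ref{def:cvx_func}: namely, that for all $f,g \in \mathrm{dom}(\Omega)$ and $\alpha \in [0,1]$,
\[
\Omega(\alpha f + (1-\alpha) g) \leq \alpha \Omega(f) + (1-\alpha)\Omega(g) - \frac{\lambda}{2}\alpha(1-\alpha)\|f-g\|_\mu^2.
\]
Equivalently (and the form I will actually leverage), this says the shifted functional $\Omega - \tfrac{\lambda}{2}\|\cdot\|_\mu^2$ is convex in the sense of Definition \ref{def:cvx_func}. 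The proof then splits into the two natural directions, each of which reduces to an application of Lemma \ref{lemma:fo_cvx} after an algebraic reduction.

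For the forward direction (strong convexity $\Rightarrow$ first-order inequality), I would introduce the auxiliary functional $\Psi(f) := \Omega(f) - \tfrac{\lambda}{2}\|f\|_\mu^2$. Expanding $\|\alpha f + (1-\alpha)g\|_\mu^2$ and combining with the strong midpoint inequality, the cross terms telescope and one verifies that $\Psi$ satisfies the ordinary convexity inequality from Definition \ref{def:cvx_func}. Since $\Omega \in \mathcal{C}^1$ implies $\Psi \in \mathcal{C}^1$, Lemma \ref{lemma:fo_cvx} applied to $\Psi$ yields $\Psi(f) \geq \Psi(g) + \langle \nabla \Psi(g), f-g \rangle_\mu$. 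Substituting $\nabla \Psi(g) = \nabla \Omega(g) - \lambda g$ and expanding $\|f\|_\mu^2 - \|g\|_\mu^2 - 2\langle g, f-g\rangle_\mu = \|f-g\|_\mu^2$ delivers exactly the stated first-order inequality.

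For the reverse direction, I would pick any $f,g \in \mathrm{dom}(\Omega)$ and $\alpha \in [0,1]$, set $h := \alpha f + (1-\alpha) g$, and apply the hypothesized first-order inequality twice, once anchored at $h$ evaluating $f$ and once anchored at $h$ evaluating $g$:
\begin{align*}
\Omega(f) &\geq \Omega(h) + \langle \nabla\Omega(h), f-h\rangle_\mu + \tfrac{\lambda}{2}\|f-h\|_\mu^2,\\
\Omega(g) &\geq \Omega(h) + \langle \nabla\Omega(h), g-h\rangle_\mu + \tfrac{\lambda}{2}\|g-h\|_\mu^2.
\end{align*}
Taking the convex combination with weights $\alpha$ and $1-\alpha$ and adding, the inner product terms cancel because $\alpha(f-h) + (1-\alpha)(g-h) = 0$. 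The remaining quadratic terms simplify using $\|f-h\|_\mu^2 = (1-\alpha)^2\|f-g\|_\mu^2$ and $\|g-h\|_\mu^2 = \alpha^2\|f-g\|_\mu^2$, giving the coefficient $\alpha(1-\alpha)^2 + (1-\alpha)\alpha^2 = \alpha(1-\alpha)$, which is precisely the strong midpoint characterization.

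I expect no substantial obstacle here; the only care needed is the algebraic bookkeeping in the forward direction when passing between $\Omega$ and its quadratic perturbation $\Psi$, and in verifying that the gradient identity $\nabla\Psi = \nabla\Omega - \lambda \,\mathrm{Id}$ transfers cleanly in the $L^2(\mu)$ inner product structure used throughout the paper. Since $\Omega \in \mathcal{C}^1$ is assumed and the norm $\|\cdot\|_\mu$ is induced by the inner product $\langle \cdot,\cdot\rangle_\mu$, this identification is immediate, and the argument closes.
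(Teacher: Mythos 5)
Your argument is correct, but note that the paper itself offers no proof of this statement: it appears in the preliminaries as a textbook characterization cited to Rockafellar, and in effect it \emph{is} the paper's working definition of $\lambda$-strong convexity (``is said to be''), so there is no in-paper proof to compare against. What you prove is the standard equivalence between the first-order inequality and the other common definition (the strengthened midpoint inequality, equivalently convexity of $\Psi = \Omega - \tfrac{\lambda}{2}\nmm{\cdot}_{\mu}^2$), and both directions are sound: the forward direction correctly reduces to Lemma \ref{lemma:fo_cvx} applied to $\Psi$, using the identity $\nmm{f}_\mu^2 - \nmm{g}_\mu^2 - 2\IP{g}{f-g}_\mu = \nmm{f-g}_\mu^2$ and the Fr\'echet-derivative identification $\nabla\Psi = \nabla\Omega - \lambda\,\mathrm{Id}$, which is unproblematic in the $L^2(\mu)$ inner-product setting; the reverse direction's cancellation of the gradient terms at $h=\alpha f+(1-\alpha)g$ and the coefficient computation $\alpha(1-\alpha)^2+(1-\alpha)\alpha^2=\alpha(1-\alpha)$ are both right. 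The only caveat is definitional rather than mathematical: since the paper never fixes an independent definition of strong convexity, your proof should be read as establishing consistency of the stated first-order condition with the usual midpoint-based definition, which is exactly what a careful reader would want here.
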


\begin{definition}[Lipschitz Continuous]
A function $f: \X \to \Y$ is said to be Lipschitz continuous with Lipschitz constant $\nmL{f}$ if for any $x_2, x_2 \in \X$
\begin{align}
\nmm{f(x_1) - f(x_2)}_{\Y} \leq \nmL{f}\nmm{x_1-x_2}_{\X}.
\end{align}
\end{definition}
\textbf{Remark}: Lipschitz constant, $\nmL{f}$ is not a norm but only a semi-norm.
Because $\nmL{f} = 0$, it implies that $f$ can be any constant function.
\begin{definition}[Lipschitz Smooth]
A first-order differentiable function $f: \X \to \Y \in \C^1$ is said to be lipchtiz smooth if $\nabla f$
is Lipschitz continous.
\end{definition}

\begin{definition}[$(L, \l)$ convex function]\label{def:strosmot_cvx}
A first-order differentiable function $f: \X \to \Y \in \C^1$ is said to be 
$(L, \l)$ convex if and only if $f$ is $L$-Lipschitz smooth and $\l$-strongly convex, here $L \geq \l \geq 0$.
\end{definition}

\begin{proposition}[Properties of Lipschitz] The below are few properties of Lipschitz functions,
\begin{enumerate}
\item If function $f: \X \to \Y \in \C^{1}$ then $\sup_{x \in \X}\frac{\nmm{\IP{\nabla f(x)}{x}}_{\Y}}{\nmm{x}_{\X}} = \nmL{f}$.
\item If convex function $f: \X \to \Y \in \C^{1}$ is $L$-Lipschitz smooth then,
\begin{align}
f(x_0) + \IP{\nabla f(x_0)}{x-x_0}_{\Y} \leq f(x) \leq f(x_0) + \IP{\nabla f(x_0)}{x-x_0}_{\Y} + \frac{L}{2}\nmm{x-x_0}_{\X}^2.
\end{align}
\item If convex function $f: \X \to \Y \in \C^{1}$ is $(L, \l)$ convex then,
\be
\begin{split}
f(x_0) + \IP{\nabla f(x_0)}{x-x_0}_{\Y} &+ \frac{\l}{2}\nmm{x-x_0}_{\X}^2 \leq f(x) \\
&\quad  \leq f(x_0) + \IP{\nabla f(x_0)}{x-x_0}_{\Y} + \frac{L}{2}\nmm{x-x_0}_{\X}^2.
\end{split}
\ee
\end{enumerate}
\end{proposition}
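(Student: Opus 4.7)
The plan is to establish the three items in sequence, invoking the lemmas already stated in the preliminaries wherever possible, so that only item 1 and the right half of item 2 require genuine computation.

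For item 1, I interpret $\IP{\nabla f(x)}{x}$ as the action of the Jacobian on the argument vector, so that the supremum $\sup_{x \in \X} \nmm{\IP{\nabla f(x)}{x}}_\Y/\nmm{x}_\X$ coincides with $\sup_x \nmm{\nabla f(x)}_{op}$ when the quotient is taken over all nonzero directions. I would prove the identity by two matching inequalities. For $\nmL{f} \le \sup_x \nmm{\nabla f(x)}_{op}$, apply the fundamental theorem of calculus to a $\C^1$ map: for arbitrary $x, y \in \X$,
\begin{equation*}
f(y) - f(x) = \int_0^1 \nabla f(x + t(y-x))\,(y-x)\,dt,
\end{equation*}
and take $\Y$-norms with Jensen's inequality to pull out $\nmm{y-x}_\X$ and $\sup_z \nmm{\nabla f(z)}_{op}$. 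For the reverse inequality, use the definition of the Fr\'echet derivative: for each $x$ and each unit direction $u$,
\begin{equation*}
\nmm{\nabla f(x)\,u}_\Y = \lim_{t \to 0^+} \tfrac{1}{t}\nmm{f(x+tu) - f(x)}_\Y \le \nmL{f}.
\end{equation*}

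For item 2, the left inequality is exactly the first-order characterization of convexity, so I invoke Lemma \ref{lemma:fo_cvx} directly. The right inequality is the classical descent lemma, which I would derive from the telescoping identity
\begin{equation*}
f(y) - f(x_0) - \IP{\nabla f(x_0)}{y - x_0}_\Y = \int_0^1 \IP{\nabla f(x_0 + t(y-x_0)) - \nabla f(x_0)}{y-x_0}_\Y \, dt,
\end{equation*}
followed by Cauchy--Schwarz and the $L$-Lipschitz property of $\nabla f$ (which gives the integrand the bound $L t \nmm{y-x_0}_\X^2$), and then integration to produce the factor $L/2$. For item 3, both inequalities are corollaries: the left is Lemma \ref{lemma:strong_cvx} by definition of $\lambda$-strong convexity, and the right is item 2 applied under the $L$-smoothness hypothesis, so no new argument is required.

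The main obstacle is notational rather than mathematical. Item 1 as written is ambiguous when $f$ is genuinely vector-valued, since then $\nabla f(x)$ is a Jacobian and $\IP{\nabla f(x)}{x}$ must be read as Jacobian-vector product rather than a literal inner product; one should fix this convention before computing. Once it is fixed, everything reduces to the fundamental theorem of calculus plus Cauchy--Schwarz plus the two lemmas already in the appendix, so the proof is largely bookkeeping.
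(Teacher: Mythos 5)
The paper states this proposition in its preliminaries without any proof, so there is nothing of the authors' to compare against; judged on its own, your handling of items 2 and 3 is correct. The two lower bounds are exactly Lemma \ref{lemma:fo_cvx} and Lemma \ref{lemma:strong_cvx}, and your derivation of the upper (descent) bound from the integral identity, Cauchy--Schwarz, and the $L$-Lipschitzness of $\nabla f$ (giving the integrand bound $Lt\nmm{x-x_0}_{\X}^2$ and the factor $L/2$ after integration) is the standard argument, valid provided the segment $[x_0,x]$ lies in $\X$, i.e.\ $\X$ is convex.

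The genuine gap is in item 1. As written, the quantity under the supremum is the \emph{radial} directional derivative $\nmm{\IP{\nabla f(x)}{x}}_{\Y}/\nmm{x}_{\X}$, with the direction tied to the base point $x$; what you prove instead is $\sup_{x}\nmm{\nabla f(x)}_{op}=\nmL{f}$, which is a different statement, and the two need not coincide. Concretely, take the convex set $\X=\{x\in\R^2:\ x_1\geq 10,\ |x_2|\leq 1\}$ and $f(x)=x_2$: then $\nmL{f}=1$, while $\nmm{\IP{\nabla f(x)}{x}}/\nmm{x}=|x_2|/\nmm{x}\leq 1/\sqrt{101}$ for every $x\in\X$, so the displayed identity in item 1 fails as literally stated (smooth, globally Lipschitz counterexamples on all of $\R^2$ also exist, via a shear localized far from the origin). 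Your reinterpretation --- taking the quotient over all directions $u$ rather than only $u=x$ --- is therefore not mere notational bookkeeping but a correction of the statement, and it should be flagged as such, together with the hypotheses your two inequalities actually need: convexity of $\X$ for the fundamental-theorem-of-calculus step, and interiority of $x$ for the limit $\nmm{\nabla f(x)u}_{\Y}=\lim_{t\to 0^+}\nmm{f(x+tu)-f(x)}_{\Y}/t\leq\nmL{f}$. With that amendment, your proof of the operator-norm version is correct.
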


\subsection{Concentration of Measure}

\begin{definition}[Greater than or approximately equal to] The inequality $f \gtrsim g$ means that
$\exists C > 0$ such that $f \geq Cg$.
\end{definition}

\begin{definition}[Sub-Gaussianity]
A random variable, $X$ is said to be sub-Gaussian with proxy variance, $\sigma^2$ if the following is satisfied,
\begin{align}
\mathbb{E}_{X}\left[ e^{t}[X-\mathbb{E}[X]] \right] \leq \exp\left({-\frac{t^2\sigma^2}{2}}\right); \forall t \geq 0.
\end{align}
We denote, $X \sim SG(\sigma^2)$.
\end{definition}

\begin{definition}[Sub-exponential]
A random variable $X$ is said to be subexponential with the proxy parameter $\l$ if the following is satisfied
\begin{align}
\mathbb{E}_{X}\left[e^{t[X-\mathbb{E}[X]]}\right] \leq \exp\left({-\frac{t\l}{2}}\right); \forall t \geq 0.
\end{align}
We denote $X \sim SE(\l)$.
\end{definition}

\begin{proposition}[Properties of Sub-Gaussianity and Sub-exponential]\label{prop:sg_se}
Let $X, Y$ be two random variables that need not be independent.
\begin{enumerate}
\item $X \in \R^{n} \sim SG\left(\frac{\sigma_X^2}{n}I_{n \times n}\right)$ if and only if $\nmm{X}^2 \sim SE(\sigma_X^2)$.
\item If $X \in \R^{n} \sim SG\left(\frac{\sigma_X^2}{n}I_{n \times n}\right)$, then for any Lipschitz function $\phi: \X \to \mathbb{R}$,
$\phi(X) \sim SG(\nmL{\phi}^2\sigma_X^2/n)$.
\item If $X \in \R^{n} \sim SG\left(\frac{\sigma_X^2}{n}I_{n \times n}\right)$, and $Y \in \R^{n} \sim SG\left(\frac{\sigma_Y^2}{n}I_{n \times n}\right)$, then $\IP{X}{Y} \sim SE(\sigma_X\sigma_Y)$. 
\end{enumerate}
\end{proposition}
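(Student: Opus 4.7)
The plan is to verify each of the three items in turn, leveraging the Lipschitz--concentration characterization of vector sub-Gaussianity that the paper has adopted in Assumption \ref{ass:a5}: $X \sim SG(\sigma^2/n\cdot I_n)$ means $f(X)-\mathbb{E}[f(X)]$ is scalar sub-Gaussian with proxy $\sigma^2/n$ for every $1$-Lipschitz $f:\R^n\to\R$. Under this reading, item 2 is essentially the definition: any $L_\phi$-Lipschitz $\phi$ factors as $L_\phi\cdot(\phi/L_\phi)$, and $\phi/L_\phi$ is $1$-Lipschitz, so $\phi(X)-\mathbb{E}[\phi(X)]$ inherits sub-Gaussianity with proxy variance $L_\phi^2\sigma_X^2/n$ by the scaling identity $\mathbb{E}[e^{t(L_\phi Z)}]=\mathbb{E}[e^{(tL_\phi)Z}]$.

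For item 1, I will handle the two directions separately. For ($\Rightarrow$): the Euclidean norm $\|\cdot\|:\R^n\to\R$ is $1$-Lipschitz, so item 2 yields $\|X\|-\mathbb{E}\|X\|\sim SG(\sigma_X^2/n)$; equivalently, $\sqrt{n}(\|X\|-\mathbb{E}\|X\|)$ is sub-Gaussian with proxy $\sigma_X^2$. A short MGF computation (write $\|X\|^2 = (\|X\|-\mathbb{E}\|X\|+\mathbb{E}\|X\|)^2$, expand, and use $\mathbb{E}\|X\|\lesssim\sigma_X$, $\mathrm{Var}\|X\|\lesssim\sigma_X^2/n$) then upgrades this to the sub-exponential MGF bound $\mathbb{E}\exp(t(\|X\|^2-\mathbb{E}\|X\|^2))\leq\exp(ct^2\sigma_X^4)$ valid for $|t|\leq c'/\sigma_X^2$, which is the desired $SE(\sigma_X^2)$ statement. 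For ($\Leftarrow$), I will take any unit vector $v$, observe that $\langle v, X\rangle^2 \leq \|X\|^2$, and bootstrap sub-exponentiality of $\|X\|^2$ into a sub-Gaussian bound on $\langle v, X\rangle$ via Markov's inequality applied to $e^{\lambda\langle v, X\rangle^2}$, then invoke the equivalence between bounded Gaussian MGF of linear marginals and Lipschitz concentration (Proposition 2.5.2 / Theorem 2.6.3 of Vershynin) to recover the vector-sub-Gaussian property.

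For item 3, I will use the polarization identity $\langle X,Y\rangle=\tfrac14(\|X+Y\|^2-\|X-Y\|^2)$. Since the MGF definition is stable under sums (as the paper uses it in a ``need not be independent'' sense, I interpret SG-ness of $X\pm Y$ as holding with proxy $(\sigma_X+\sigma_Y)^2/n$ via Minkowski on Lipschitz proxies), item 1 gives that $\|X\pm Y\|^2\sim SE((\sigma_X+\sigma_Y)^2)$, so the difference lies in $SE((\sigma_X+\sigma_Y)^2)$. To sharpen the proxy from $(\sigma_X+\sigma_Y)^2$ to the claimed $\sigma_X\sigma_Y$, I will instead argue directly: for any $t\in\R$, AM--GM gives $|t|\,\|X\|\|Y\|\leq\tfrac{|t|\sigma_X\sigma_Y}{2}\bigl(\|X\|^2/\sigma_X^2+\|Y\|^2/\sigma_Y^2\bigr)$, so Cauchy--Schwarz on expectations yields
\begin{equation*}
\mathbb{E}e^{t\langle X,Y\rangle}\leq\mathbb{E}e^{|t|\|X\|\|Y\|}\leq\sqrt{\mathbb{E}e^{|t|\sigma_X\sigma_Y\|X\|^2/\sigma_X^2}\cdot\mathbb{E}e^{|t|\sigma_X\sigma_Y\|Y\|^2/\sigma_Y^2}},
\end{equation*}
and each factor is bounded via item 1 for $|t|\sigma_X\sigma_Y$ in the sub-exponential admissible range, yielding $\langle X,Y\rangle\in SE(\sigma_X\sigma_Y)$ up to absolute constants absorbed into the definition.

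The main obstacle I anticipate is the reverse implication of item 1: the paper's loose statement hides the fact that $SE$-ness of $\|X\|^2$ does not by itself yield Lipschitz concentration of $X$ without an intermediate step connecting the sub-exponential tail of $\|X\|^2$ to linear-marginal sub-Gaussianity and then to Lipschitz concentration. Threading this needle cleanly (and tracking the precise $1/n$ normalization through the $\chi^2$-style calculation so that $\|X\|^2$ has proxy $\sigma_X^2$ rather than $n\sigma_X^2/n=\sigma_X^2$ but $X$ has proxy $\sigma_X^2/n$) will be the most delicate accounting; everything else is a bookkeeping exercise using standard MGF manipulations.
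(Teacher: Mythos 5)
The paper itself offers no proof of Proposition \ref{prop:sg_se}: it is stated in the Preliminaries as a collection of standard facts (essentially quoted from Vershynin), and in the places where it is invoked (Lemmas \ref{lemma:cnc_nrms}, \ref{lemma:cnc_cvx_func}, \ref{lemma:cnc_eql}, \ref{lemma:cnc_plr}) only item 2, item 3, and the \emph{forward} half of item 1 are ever used. Measured against that, most of your proposal is sound: item 2 is indeed immediate from the Lipschitz-concentration definition by rescaling $\phi/\nmL{\phi}$; your forward argument for item 1 (norm is $1$-Lipschitz, then expand $\|X\|^2$ around $\mathbb{E}\|X\|$ using $\mathbb{E}\|X\|\lesssim\sigma_X$, which requires the centering the paper implicitly assumes) is the standard route and works up to absolute constants; and your item 3 via $|\IP{X}{Y}|\le\nmm{X}\,\nmm{Y}$, AM--GM and Cauchy--Schwarz on the MGF is a legitimate alternative to the usual coordinate-wise $\psi_1$-norm argument, delivering the correct scale $\sigma_X\sigma_Y$ on the admissible range $|t|\lesssim 1/(\sigma_X\sigma_Y)$ (you should also note $|\mathbb{E}\IP{X}{Y}|\le\sigma_X\sigma_Y$ to pass to the centered MGF, and that this route only gives the exponential tail, which is all SE requires).

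The genuine gap is the reverse implication of item 1, and it is not merely delicate --- under the paper's dimension-normalized definition it is false, so no proof strategy can close it. Your proposed bootstrap, $\IP{v}{X}^2\le\nmm{X}^2$ plus Markov on $e^{\lambda\IP{v}{X}^2}$, can only ever certify a directional proxy of order $\sigma_X^2$ (the scale of $\nmm{X}^2$ itself), not the claimed $\sigma_X^2/n$; the factor of $n$ is lost in that single inequality and cannot be recovered by any MGF manipulation. A concrete counterexample: let $X=\sigma_X\,\epsilon\, e_1$ with $\epsilon$ a Rademacher sign. Then $\nmm{X}^2=\sigma_X^2$ is deterministic, hence trivially $SE(\sigma_X^2)$, yet the $1$-Lipschitz marginal $h(x)=x_1$ satisfies $|h(X)-\mathbb{E}h(X)|=\sigma_X$ almost surely, so the Lipschitz-concentration bound $c\exp(-n\epsilon^2/(2\sigma_X^2))$ fails for large $n$, and even in the weaker directional sense $\IP{e_1}{X}$ is sub-Gaussian only with proxy $\sigma_X^2$, not $\sigma_X^2/n$. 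The cited equivalences (Vershynin Prop.\ 2.5.2 / Thm.\ 2.6.3) do not help either: bounded linear marginals do not imply Lipschitz concentration for dependent coordinates. The honest fix is to prove (and use) only the forward direction, which is all the paper's arguments need.
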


\begin{lemma}[Uniform concentration of function]\label{lemma:conc_unf}
Consider an $n_X$-dimensional vector $X$, and
a parameterized function, $g_{\theta}: \X \to \R$, where $\theta \in \F_{\theta}$.
Let $\C$ be some convex set obeying $P(\cap_{i=1}^{N}X_i \in \C) \geq 1 - \delta_{\C}$. 
Assume that for any fixed $\theta_1, \theta_1 \in \F_{\theta}$ and any $Z \in \C$
we have
\be
\left|g_{\theta_1}(Z) - g_{\theta_2}(Z)\right| \leq Kd(\theta_1, \theta_2).
\ee
In addition, suppose that for any fixed $\theta \in \F_{\theta}$, we have
\be
\left|\mathbb{E}\left[g_{\theta}(\P_{\C}(X)) - g_{\theta}(X)\right]\right| \leq B,
\ee
where $\P_{\C}(\cdot)$ denotes the Euclidean projection onto the set $\C$. Finally,
suppose that for any fixed $\theta$ and $\epsilon \in [-t, t]$ it holds that \begin{align}
P\left(\left|\int_{\omega}(g_{\theta} \circ \P_{\C})d\mu_N(\omega) -
\int_{\omega}(g_{\theta} \circ \P_{\C})d\mu(\omega)\right| \geq \epsilon\right)
\leq \delta(\varepsilon),
\end{align}

Then for any $\epsilon \in [-t, t]$,
\be
P\left(
\sup_{\theta \in \F_{\theta}}\left|\int_{\omega}g_{\theta}d\mu_N(\omega) - \int_{\omega}g_{\theta}d\mu(\omega)\right|
\geq \epsilon + B
\right) \leq \N(\F_{\theta}, d(., .), \epsilon/(2K))\delta(\epsilon/4)+\delta_{\C}.
\ee
\end{lemma}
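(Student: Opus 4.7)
\textbf{Proof plan for Lemma \ref{lemma:conc_unf}.}

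The plan is to combine a truncation-to-$\C$ argument with a standard covering-number $\epsilon$-net. The key observation is that the Lipschitz hypothesis on $g_\theta$ is only assumed on the convex set $\C$, but by composing with the Euclidean projection $\P_\C$ we obtain a globally Lipschitz-in-$\theta$ family $g_\theta \circ \P_\C$ (since $\P_\C(X) \in \C$ for every $X$), which is the right object on which to run a chaining argument.

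First, I introduce the good event $A := \bigcap_{i=1}^N \{X_i \in \C\}$; by hypothesis $P(A^c) \leq \delta_\C$. Define the two centered processes
\begin{equation*}
F_\theta := \int g_\theta \, d\mu_N - \int g_\theta \, d\mu, \qquad
\tilde F_\theta := \int (g_\theta \circ \P_\C) \, d\mu_N - \int (g_\theta \circ \P_\C) \, d\mu.
\end{equation*}
On $A$, every sample lies in $\C$, so $\P_\C(X_i) = X_i$ and the two empirical integrals coincide; hence on $A$,
\begin{equation*}
F_\theta - \tilde F_\theta = \int g_\theta \, d\mu - \int (g_\theta \circ \P_\C)\, d\mu, \qquad |F_\theta - \tilde F_\theta| \leq B
\end{equation*}
by the bias hypothesis. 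Consequently $\{\sup_\theta |F_\theta| \geq \epsilon + B\} \cap A \subseteq \{\sup_\theta |\tilde F_\theta| \geq \epsilon\}$, which reduces the problem to controlling the supremum of $\tilde F_\theta$ on all of $\F_\theta$.

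Second, I run an $\epsilon$-net argument. Let $\{\theta_1,\dots,\theta_M\}$ be a minimal $\epsilon/(2K)$-covering of $\F_\theta$ in the metric $d$, so $M = \N(\F_\theta, d, \epsilon/(2K))$. For arbitrary $\theta \in \F_\theta$, pick $\theta_j$ with $d(\theta,\theta_j) \leq \epsilon/(2K)$. Because $\P_\C$ always lands in $\C$, the Lipschitz hypothesis yields the \emph{pointwise} (not merely in-expectation) bound $|g_\theta(\P_\C(Z)) - g_{\theta_j}(\P_\C(Z))| \leq K d(\theta,\theta_j)$ for every $Z$. Integrating against $\mu_N$ and $\mu$ and applying the triangle inequality produces a uniform chaining estimate $|\tilde F_\theta - \tilde F_{\theta_j}| \leq 2K\, d(\theta,\theta_j) \leq \epsilon$ (and a sharper $\epsilon/2$ after halving the mesh, if one wants to land on the stated $\delta(\epsilon/4)$ argument). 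Combined with the pointwise concentration hypothesis applied to each $g_{\theta_j} \circ \P_\C$ (which is valid because the hypothesis is stated precisely for the projected functions), a union bound over the net gives $P\bigl(\max_j |\tilde F_{\theta_j}| \geq \epsilon/4\bigr) \leq M\,\delta(\epsilon/4)$.

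Finally I assemble: on $A$,
\begin{equation*}
\sup_\theta |F_\theta| \;\leq\; \sup_\theta |\tilde F_\theta| + B \;\leq\; \max_j |\tilde F_{\theta_j}| + (\epsilon - \epsilon/4) + B,
\end{equation*}
so $\{\sup_\theta |F_\theta| \geq \epsilon + B\} \subseteq A^c \cup \{\max_j |\tilde F_{\theta_j}| \geq \epsilon/4\}$, and a final union bound yields the claimed $\N(\F_\theta, d, \epsilon/(2K))\,\delta(\epsilon/4) + \delta_\C$. The hardest part here is merely bookkeeping: tracking that the Lipschitz constant $K$ only buys us control on $\C$, so the covering must be done on the \emph{composed} process $g_\theta \circ \P_\C$, and paying the bias $B$ exactly once for the mismatch between $g_\theta$ and its projected version under $\mu$ (the mismatch under $\mu_N$ is already absorbed by restricting to the event $A$). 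No deep tools are needed beyond the hypotheses, a union bound, and the triangle inequality.
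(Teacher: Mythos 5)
Your proposal is correct in substance and follows essentially the same route as the paper's proof: define the projected process $g_\theta \circ \P_{\C}$, restrict to the good event where all $N$ samples lie in $\C$ (so the empirical integrals coincide and the bias $B$ is paid once for the population term), run an $\epsilon$-net/union-bound argument over $\F_{\theta}$ using the Lipschitz-in-$\theta$ property, which is legitimate after projection, and assemble. The only caveat is constant bookkeeping: with mesh $\epsilon/(2K)$ the chaining error is exactly $\epsilon$, so the $\epsilon/4$ threshold at the net points in your display does not follow as written and needs a finer mesh (or equivalent rescaling) — a looseness the paper's own proof shares, resolving it with an unexplained ``rescaling'' step.
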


\begin{proof}
The proof technique is similar to that of \citep[Lemma 6]{li-wei-arxiv23} but
includes more general parameter sets $\F_{\theta}$. Let us define
\be
h_{\theta}(X) := g_{\theta}(\P_{\C}(X)),
\ee
from the assumptions in the lemma, we have that,
\begin{align}
P\left(\left|\int_{\omega}h_{\theta}d\mu_N(\omega) -
\int_{\omega}h_{\theta}d\mu(\omega)\right| \geq \epsilon\right)
\leq \delta(\varepsilon),
\end{align}

Next, we must establish uniform concentration overall $\theta \in \E_{\theta}$.
Let us construct a $\nu$-net for $\F_{\theta}$. 
For any $\theta' \in \N_{\nu}(\F_{\theta}, d(., .))$, $\theta \in \F_{\theta}$ from
the triangular inequality, and
as 
\be
|h_{\theta} - h_{\theta'}| = |h_{\theta}(X) - h_{\theta'}(X)| = |g_{\theta}(\P_{\C}(X)) - g_{\theta'}(\P_{\C}(X))| \leq 
Kd(\theta, \theta').
\ee
Then for any, $X$ we have that that,
\begin{align}
h_{\theta'} - Kd(\theta, \theta')
\leq h_{\theta} \leq h_{\theta'} + Kd(\theta, \theta').
\end{align}

Integrating with respect to the measure $\mu_N$, we obtain
\begin{align}
\int_{\omega}h_{\theta'}d\mu_N(\omega) - Kd(\theta, \theta')
\leq \int_{\omega}h_{\theta}d\mu_N(\omega) \leq \int_{\omega}h_{\theta'}d\mu_N(\omega) + L_Xd(\theta, \theta').
\end{align}
Similarly for the measure $\mu$ we obtain
\begin{align}
\implies
\int_{\omega}h_{\theta'}d\mu(\omega) - Kd(\theta, \theta')
\leq \int_{\omega}h_{\theta}d\mu(\omega) \leq \int_{\omega}h_{\theta'}d\mu(\omega) + Kd(\theta, \theta').
\end{align}
Now, subtracting the above equations, we obtain
\bd
\int_{\omega}h_{\theta'}d\mu_N(\omega) -
\int_{\omega}h_{\theta'}d\mu(\omega) - 2Kd(\theta, \theta') 
\ed
\bd
\leq 
\int_{\omega}h_{\theta}d\mu_N(\omega) -
\int_{\omega}h_{\theta}d\mu(\omega)
\leq 
\ed
\be
\int_{\omega}h_{\theta'}d\mu_N(\omega) -
\int_{\omega}h_{\theta'}d\mu(\omega) + 2Kd(\theta, \theta').
\ee
Now, take the absolute value on both sides. Later on, applying triangular inequality, we obtain
\begin{align}
\left|\int_{\omega}h_{\theta}d\mu_N(\omega) -
\int_{\omega}h_{\theta}d\mu(\omega)\right|
\leq 
\left|\int_{\omega}h_{\theta'}d\mu_N(\omega) -
\int_{\omega}h_{\theta'}d\mu(\omega)\right| + 2Kd(\theta, \theta').
\end{align}

Now choose, $\theta^*$ as $arg\sup_{\theta \in \F_{\theta}}\left|\int_{\omega}h_{\theta'}d\mu_N(\omega) -
\int_{\omega}h_{\theta'}d\mu(\omega)\right|$, then we have that,

\begin{align}
\sup_{\theta \in \F_{\theta}}\left|\int_{\omega}h_{\theta'}d\mu_N(\omega) -
\int_{\omega}h_{\theta'}d\mu(\omega)\right|
\leq 
\left|\int_{\omega}h_{\theta'}d\mu_N(\omega) -
\int_{\omega}h_{\theta'}d\mu(\omega)\right| + 2Kd(\theta^*, \theta').
\end{align}

Now choose any $\theta'$ that lies at-most $\nu$ from $\theta^*$ on the metric, $d(., .)$, i.e, $d(\theta', \theta^*) \leq \nu$, we have

\begin{align}
\sup_{\theta \in \F_{\theta}}\left|\int_{\omega}h_{\theta}d\mu_N(\omega) -
\int_{\omega}h_{\theta}d\mu(\omega)\right|
\leq 
\left|\int_{\omega}h_{\theta'}d\mu_N(\omega) -
\int_{\omega}h_{\theta'}d\mu(\omega)\right| +2K\nu.
\end{align}

By definition, we can bound the 
right hand term by the supremum,

\begin{align}
\sup_{\theta \in \F_{\theta}}\left|\int_{\omega}h_{\theta}d\mu_N(\omega) -
\int_{\omega}h_{\theta}d\mu(\omega)\right|
\leq 
2K\nu + \sup_{\theta' \in \N_{\nu}(\F_{\theta}, d(., .))}\left|\int_{\omega}h_{\theta'}d\mu_N(\omega) -
\int_{\omega}h_{\theta'}d\mu(\omega)\right|.
\end{align}
We apply the probability measure on both side, obtaining,
\bd
P\left(\sup_{\theta \in \F_{\theta}}\left|\int_{\omega}h_{\theta}d\mu_N(\omega) -
\int_{\omega}h_{\theta}d\mu(\omega)\right| \geq \epsilon\right)
\ed
\be
\leq 
P\left(\sup_{\theta' \in \N_{\nu}(\F_{\theta}, d(., .))}\left|\int_{\omega}h_{\theta'}d\mu_N(\omega) -
\int_{\omega}h_{\theta'}d\mu(\omega)\right| \geq \epsilon - 2K\nu\right),
\ee
the inequality is satisfied by the monotonicity of the probability measure. Now
we apply the union-argument for the $\nu$-net cover then we have
\bd
P\left(\sup_{\theta \in \F_{\theta}}\left|\int_{\omega}h_{\theta}d\mu_N(\omega) -
\int_{\omega}h_{\theta}d\mu(\omega)\right| \geq \epsilon\right)
\leq 
\ed
\be
P\left(\bigcup_{\theta' \in  \N_{\nu}(\F_{\theta}, d(., .))}\left|\int_{\omega}h_{\theta}d\mu_N(\omega) -
\int_{\omega}h_{\theta}d\mu(\omega)\right| \geq \epsilon - 2K\nu\right),
\ee
Now we upper bound the right side union term with summation, and then we have
\bd
P\left(\sup_{\theta \in \F_{\theta}}\left|\int_{\omega}h_{\theta}d\mu_N(\omega) -
\int_{\omega}h_{\theta}d\mu(\omega)\right| \geq \epsilon\right)
\leq 
\ed
\be
\sum_{\theta' \in  \N_{\nu}(\F_{\theta}, d(., .))}P\left(\left|\int_{\omega}h_{\theta}d\mu_N(\omega) -
\int_{\omega}h_{\theta}d\mu(\omega)\right| \geq \epsilon - 2K\nu\right),
\ee
Now we replace the summation with the $\nu$-covering number, $\N(\F_{\theta}, d(., .), \nu)$ obtaining
\begin{align}
P\left(\sup_{\theta \in \F_{\theta}}\left|\int_{\omega}h_{\theta}d\mu_N(\omega) -
\int_{\omega}h_{\theta}d\mu(\omega)\right| \geq \epsilon\right)
\leq \N(\F_{\theta}, d(., .), \nu)\delta(\epsilon
-2K\nu).
\end{align}
Now set $\nu = \epsilon/(2K)$ then we have
\begin{align}\label{eq:sup_proj_g}
\implies 
P\left(\sup_{\theta \in \F_{\theta}}\left|\int_{\omega}h_{\theta}d\mu_N(\omega) -
\int_{\omega}h_{\theta}d\mu(\omega)\right| \geq \epsilon\right)
\leq \N(\F_{\theta}, d(., .), \epsilon/(2K))\delta(\epsilon/2).
\end{align}

Now, we have established the uniform concentration for $h_{\theta}$. Next, we move onto 
relating $h_{\theta}$ with the desired function $h_{\theta}$.

Recall that 
\be
\left|\mathbb{E}\left[h_{\theta}(X) - g_{\theta}(X)\right]\right| \leq B.
\ee
As $P(\cap_{i=1}^{N}X_i \in \C) \geq 1 - \delta_{\C}$, we can safely claim 
$\int_{\omega}g_{\theta}d\mu_N(\omega) = \int_{\omega}h_{\theta}d\mu_N(\omega)$
with probability at least $1-\delta_{\C}$. We have 

\bd
\left|\int_{\omega}g_{\theta}d\mu_N(\omega) - \int_{\omega}g_{\theta}d\mu(\omega)\right|
= \left|\int_{\omega}h_{\theta}d\mu_N(\omega) - \int_{\omega}g_{\theta}d\mu(\omega)\right|
\ed
\be
\leq \left|\int_{\omega}h_{\theta}d\mu_N(\omega) - \int_{\omega}h_{\theta}d\mu(\omega)\right|
+ \left|\int_{\omega}h_{\theta}d\mu(\omega) - \int_{\omega}h_{\theta}d\mu(\omega)\right|,
\ee
\be\label{eq:proj_prob}
\implies \left|\int_{\omega}g_{\theta}d\mu_N(\omega) - \int_{\omega}g_{\theta}d\mu(\omega)\right|
\leq \left|\int_{\omega}h_{\theta}d\mu_N(\omega) - \int_{\omega}h_{\theta}d\mu(\omega)\right| + B,
\ee
with probability at least $1-\delta_{\C}$. Now we check the Lipschitzness of function $h_{\theta}$
in $\theta$ we have
\be
|h_{\theta_1}(X) - h_{\theta_2}(X)|
= |g_{\theta_1}(\P_{\C}(X)) - g_{\theta_2}(\P_{\C}(X))| \leq Kd(\theta_1, \theta_2).
\ee

Similarly, in expectation measure, we have that
\bd
|\mathbb{E}[h_{\theta_1}(X)] - \mathbb{E}[h_{\theta_2}(X)]|
= |\mathbb{E}[g_{\theta_1}(\P_{\C}(X))] - \mathbb{E}[g_{\theta_2}(\P_{\C}(X))]| 
\leq \mathbb{E}\left[|g_{\theta_1}(\P_{\C}(X)) - g_{\theta_2}(\P_{\C}(X))|\right]
\ed
\be
\leq Kd(\theta_1, \theta_2).
\ee

Consequently for any $\theta' \in \{\theta': d(\theta, \theta') \leq \epsilon/(2K)\}$, we have that
\be
\left|\int_{\omega}g_{\theta}d\mu_N(\omega) - \int_{\omega}g_{\theta}d\mu(\omega)\right| \leq
 \left|\int_{\omega}h_{\theta'}d\mu_N(\omega) - \int_{\omega}h_{\theta'}d\mu(\omega)\right| + \nu + B.
\ee
Now we choose 
$\theta = \theta^* = \sup_{\theta \in \F_{\theta}}\left|\int_{\omega}g_{\theta}d\mu_N(\omega) - \int_{\omega}g_{\theta}d\mu(\omega)\right|$ then,
\bd
\sup_{\theta \in \F_{\theta}}\left|\int_{\omega}g_{\theta}d\mu_N(\omega) - \int_{\omega}g_{\theta}d\mu(\omega)\right| = 
\left|\int_{\omega}g_{\theta^*}d\mu_N(\omega) - \int_{\omega}g_{\theta^*}d\mu(\omega)\right|
\ed
\be
\leq
 \left|\int_{\omega}h_{\theta'}d\mu_N(\omega) - \int_{\omega}h_{\theta'}d\mu(\omega)\right| + \epsilon + B.
\ee
We can take a supremum over $\theta'$ in the upper bound of the right side term; we have
\be
\sup_{\theta \in \F_{\theta}}\left|\int_{\omega}g_{\theta}d\mu_N(\omega) - \int_{\omega}g_{\theta}d\mu(\omega)\right|
\leq B + \epsilon
+  \sup_{\theta' \in \F_{\theta'}}\left|\int_{\omega}h_{\theta'}d\mu_N(\omega) - \int_{\omega}h_{\theta'}d\mu(\omega)\right|.
\ee
Then we use the inequality \eqref{eq:sup_proj_g} and \eqref{eq:proj_prob} we have that,
\be
\sup_{\theta \in \F_{\theta}}\left|\int_{\omega}g_{\theta}d\mu_N(\omega) - \int_{\omega}g_{\theta}d\mu(\omega)\right|
\leq 2\epsilon + B,
\ee
with probability at least $1 - \left[\N(\F_{\theta}, d(., .), \epsilon/(2K))\delta(\epsilon/2) + \delta_{\C}\right]$. Now rescaling we obtain that
\be
P\left(
\sup_{\theta \in \F_{\theta}}\left|\int_{\omega}g_{\theta}d\mu_N(\omega) - \int_{\omega}g_{\theta}d\mu(\omega)\right|
\geq \epsilon + B
\right) \leq \N(\F_{\theta}, d(., .), \epsilon/(2K))\delta(\epsilon/4)+\delta_{\C}.
\ee
\end{proof}


\end{document}